\renewcommand*{\backrefalt}[4]{%
    \ifcase #1 \footnotesize{(Not cited.)}%
    \or        \footnotesize{(Cited on page~#2.)}%
    \else      \footnotesize{(Cited on pages~#2.)}%
    \fi}
\newtheorem{theorem}{Theorem}[section]
\newtheorem{corollary}[theorem]{Corollary}
\newtheorem{lemma}[theorem]{Lemma}
\newtheorem{definition}{Definition}[section]
\newtheorem{assumption}[theorem]{Assumption}
\newcommand{\EE}{\mathbb{E}}
\newcommand{\sign}{\textnormal{sign}}
\newcommand{\RR}{\mathbf R}
\newcommand{\w}{\mathbf w}
\newcommand{\argmin}{\mathop{\rm argmin}}
\newcommand{\ba}{\begin{array}}
\newcommand{\ea}{\end{array}}
\newcommand{\red}{\color{red}}
\numberwithin{equation}{section}
\def\red#1{}\def\pb{}%\usepackage{fullpage}
\def\blue#1{\textcolor{black}{#1}}
\def\beq{\begin{equation} }\def\eeq{\end{equation} }\def\1{\mathbf{1}}\usepackage{bm} 
\newcommand{\Ab}{\bm{A}}
\newcommand{\bX}{\bm{X}}
\newcommand{\bY}{\bm{Y}}
\newcommand{\bZ}{\bm{Z}}
\newcommand{\cD}{\mathscr{D}}
\newcommand{\ub}{\mathbf{u}}
\def\cI{\mathcal{I}}
\newcommand{\Ib}{\mathbf{I}}
\newcommand{\cH}{\mathcal{H}}
\newcommand{\PP}{\mathbb{P}}
\newcommand{\cO}{O}
\newcommand{\vb}{\mathbf{v}}
\def\Pb{\mathbf{P}}
\newcommand{\eb}{\mathbf{e}}
\newcommand{\cT}{{\mathcal{T}}}
\newcommand{\cF}{{\mathcal{F}}}
\def\ab{\mathbf{a}}
\def\cS{\mathcal{S}}
\def\rB{\mathscr{B}}
\newcommand{\ud}{{\mathrm{d}}}
\def\u{u}
\def\cM{\mathcal{M}}
\def\cB{\mathcal{B}}
\def\cG{\mathcal{G}}
\begin{document}

%%%%%%% TITLE PAGE %%%%%%%%%%%%%%%%%%%%%%%%%%%%%%%%%%%%%%%%%%%%%%%%%%%

\begin{center}

{\bf{\LARGE{Stochastic Approximation for Online Tensorial Independent Component Analysis}}}

\vspace*{.2in}
{\large{
\begin{tabular}{cccc}
Chris Junchi Li$^{\diamond}$
&
Michael I.~Jordan$^{\diamond, \dagger}$ \\
\end{tabular}
}}

\vspace*{.2in}

\begin{tabular}{c}
Department of Electrical Engineering and Computer Sciences$^\diamond$\\
Department of Statistics$^\dagger$ \\ 
University of California, Berkeley
\end{tabular}

\vspace*{.2in}

\today

\vspace*{.2in}

\begin{abstract}
Independent component analysis (ICA) has been a popular dimension reduction tool in statistical machine learning and signal processing. In this paper, we present a convergence analysis for an online tensorial ICA algorithm, by viewing the problem as a nonconvex stochastic approximation problem. For estimating one component, we provide a dynamics-based analysis to prove that our online tensorial ICA algorithm with a specific choice of stepsize achieves a sharp finite-sample error bound. In particular, under a mild assumption on the data-generating distribution and a scaling condition such that $d^4/T$ is sufficiently small up to a polylogarithmic factor of data dimension $d$ and sample size $T$, a sharp finite-sample error bound of $\tilde{O}(\sqrt{d/T})$ can be obtained.
%Finally, we provide visualization and preliminary simulation results that validate our theory.
\end{abstract}
\end{center}

\paragraph{Keywords:}
Independent component analysis, tensor decomposition, non-Gaussianity, finite-sample error bound, online learning

\pb\section{Introduction}\label{sec:intro}

Independent Component Analysis (ICA) is a widely used dimension reduction method with diverse applications in the fields of statistical machine learning and signal processing \citep{HYVARINEN-KARHUNEN-OJA,stone2004independent,samworth2012independent}.
Let the data vector be modeled as $\bX = \Ab \bZ$, where $\Ab \equiv (\ab_1,\dots,\ab_d) \in \RR^{d \times d}$ is a full-rank mixing matrix whose columns are orthogonal components in $\RR^d$, and $\bZ \in \RR^d$ is a non-Gaussian latent random vector consisting of independent entries $\bZ = (Z_1, \dots, Z_d)^\top$.
The goal of ICA is to recover one or multiple columns among $(\ab_1,\dots,\ab_d)$ from independent observations of $\bX = (X_1, \dots, X_d)^\top$.
Following standard practice, we assume that the random vector $\bX$ has been whitened in the sense that it has zero mean and an identity covariance matrix \citep{HYVARINEN-KARHUNEN-OJA}, and we focus on the case where the distributions of $Z_1,\dots,Z_d$ share a fourth moment $\mu_4 \ne 3$ (i.e., they are of identical kurtosis, $\mu_4-3$).
These assumptions restrict the search of mixing matrix $\Ab$ to the space of orthogonal matrices and guarantee its identifiability up to signed permutations of its columns $(\ab_1, \dots, \ab_d)$ \citep{comon1994independent,frieze1996learning,HYVARINEN-KARHUNEN-OJA}.

In this paper, we study a stochastic algorithm that estimates independent components for streaming data.
Such an algorithm processes and discards one or a small batch of data observations at each iterate and enjoys reduced storage complexity.
To begin with, we cast the tensorial ICA as the problem of optimizing an objective function based on the fourth-order cumulant tensor over the unit sphere $\cD_1 \equiv \{\ub\in\RR^d: \|\ub\| = 1\}$, where $\|\cdot\|$ denotes the Euclidean norm.
The optimization problem is as follows:
\beq\label{eq:opt}
\begin{aligned}
&
\text{min }
- \sign(\mu_4 - 3) \EE(\ub^\top \bX)^4
\\&
\text{subject to }~
\ub\in \cD_1.
\end{aligned}\eeq
Such an objective function is referred to as a \emph{non-Gaussianity contrast function} \citep{HYVARINEN-KARHUNEN-OJA}.
The formulation \eqref{eq:opt} of ICA via its fourth-order cumulant tensor was initiated by \citet{comon1994independent} and \citet{frieze1996learning} in the batch case.
\blue{The landscape of objective \eqref{eq:opt} is featured by its nonconvexity, in the sense that it presents $2d$ local minimizers $\pm \ab_1, \dots, \pm \ab_d$ and exponentially many unstable stationary points in terms of $d$ \citep{ge2015escaping,li2016online,sun2015nonconvex}.\footnote{\blue{In contrast, the rank-one eigenvalue problem presents $2$ local minimizers and $2d-2$ unstable stationary points in the typical setting of distinct eigenvalues.}}
Here, by unstable stationary points we refer to those points with zero gradient and at least one negative Hessian direction, and hence includes the collection of saddle points and local maximizers.}
Algorithms that find local minimizers of \eqref{eq:opt} allow us to estimate the columns of the mixing matrix $\Ab$.
We analyze and discuss the following stochastic approximation method, which we refer to as \emph{online tensorial ICA}   
\citep{ge2015escaping,li2016online,wang2017scaling}.
Initialize a unit vector $\ub^{(0)}$ appropriately, at step $t = 1,2,\dots, T$ the algorithm processes an observation $\bX^{(t)}$ by performing the following update:
\beq\label{tensorSGD}
\ub^{(t)}
 = 
\Pi_1\left\{
\ub^{(t - 1)}
+
\eta^{(t)} \cdot \sign(\mu_4 - 3) \left((\ub^{(t - 1)})^\top \bX^{(t)}\right)^3 \bX^{(t)}
\right\}
,
\eeq
where $\eta^{(t)}$ is a positive stepsize, and the operator $\Pi_1[\bullet] := \bullet / \|\bullet\|$ projects a nonzero vector onto the unit sphere $\cD_1$ centered at the origin.

\blue{Due to the nonconvexity of the optimization problem \eqref{eq:opt}, a key issue that arises in analyzing the iteration \eqref{tensorSGD} is the avoidance of unstable stationary points including saddle points and local maximizers}.
In earlier work, \citet{ge2015escaping} introduced an additional artificial noise injection step to the algorithm and developed a hit-and-escape convergence analysis, obtaining a polynomial convergence rate for the online tensorial ICA problem and beyond (for a more general class of nonconvex optimization problems).
In contrast, we present a \blue{dynamics-based} analysis that requires no noise injection step.
We show that a uniform initialization on the unit sphere $\cD_1$ along with mild scaling conditions, is sufficient to ensure that the algorithm iterate enters a basin of attraction and finds a local (and also global) minimizer with high probability.

\paragraph{Overview of the Main Result and Contributions}
Our main result states that for estimating a single independent component, the iteration \eqref{tensorSGD} with uniform initialization and carefully chosen stepsize $\eta^{(t)}$ enters the basin of attraction of a uniformly drawn independent component $\pm \ab_\cI$ and achieves a convergence rate of $\tilde{O}(\sqrt{d/T})$ in terms of angle with respect to an independent component, up to a polylogarithmic factor of $d$ and $T$.
Such a convergence result holds under mild distributional assumptions and scaling conditions that involve the data dimension $d$ and sample size $T$.
Informally, this result is stated as follows.

\vspace{-.1in}
\begin{theorem}[Informal version of Corollary \ref{coro:two_phase} in \S\ref{sec:uniformsec}]\label{theo:informal}
Let the initial $\ub^{(0)}$ be  sampled uniformly from the unit sphere $\cD_1$, and let appropriate \blue{light-tailed} distributional assumptions hold.
Then, for any fixed positive $\epsilon \in (0, 1 / 5]$ satisfying the scaling condition:
$$
d
\ge
2\sqrt{2\pi e} \log\epsilon^{-1} + 1
,\qquad
C_{\ref{theo:informal}, T}
\log^8(C_{\ref{theo:informal}, T} \epsilon^{-1} d T)
\cdot
\frac{B^8}{|\mu_4 - 3|^2}
\cdot
\frac{d^4 \log^2 T}{T}
\le
\frac{\epsilon^2}{ \log^2\epsilon^{-1}}
,
$$
there exists a uniformly distributed random variable $\cI \in [d]$ such that with probability at least $1 - 5\epsilon$, iteration $\ub^{(t)}$ of \eqref{tensorSGD} with appropriate choice of stepsizes $\eta^{(t)}$ satisfies
$$
\left| \tan \angle\left(\ub^{(T)}, \ab_\cI\right) \right|
	\le
C_{\ref{theo:informal}, T} \log^{5 / 2}(C_{\ref{theo:informal}, T} \epsilon^{-1} d)
\cdot
\frac{B^4}{|\mu_4 - 3|}
\cdot
\sqrt{\frac{d \log^2 T}{T}}
,
$$
where $C_{\ref{theo:informal}, T}$ is a positive, absolute constant.
\end{theorem}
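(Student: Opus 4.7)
The plan is to reduce to a canonical coordinate system, split the trajectory into a warm-up (saddle-escape) phase followed by a local-convergence phase, and combine sharp martingale concentration on both parts.

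By rotational equivariance of \eqref{tensorSGD} --- both the map $\ub\mapsto((\ub^\top\bX)^3\bX)$ and the projection $\Pi_1$ are $O(d)$-covariant, and $\bX = \Ab\bZ$ with orthogonal $\Ab$ --- I may assume without loss of generality that $\Ab=\Ib$, so that the independent components are $\eb_1,\dots,\eb_d$. Writing $u_i^{(t)}=\eb_i^\top\ub^{(t)}$, independence of the $Z_i$ and $\EE Z_i^4=\mu_4$ give
\[
\EE\!\left[\sign(\mu_4-3)((\ub^{(t)})^\top\bX)^3\bX\,\big|\,\ub^{(t)}\right]
= |\mu_4-3|\sum_i(u_i^{(t)})^3\eb_i + 3\,\sign(\mu_4-3)\,\ub^{(t)},
\]
and the radial second term is absorbed by $\Pi_1$. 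Hence the population drift on $\cD_1$ points along $\sum_i (u_i^{(t)})^3\eb_i$, which magnifies the already-largest coordinate. A uniform draw from $\cD_1$ is the normalization of a standard Gaussian: standard Gaussian order-statistics estimates, together with the scaling $d\ge 2\sqrt{2\pi e}\log\epsilon^{-1}+1$, imply that with probability at least $1-\epsilon$ there is a unique argmax index $\cI\in[d]$ satisfying $|u_\cI^{(0)}|\gtrsim\sqrt{\log(d/\epsilon)/d}$ and a squared-coordinate gap $(u_\cI^{(0)})^2-\max_{j\ne\cI}(u_j^{(0)})^2 \gtrsim 1/(d\log(d/\epsilon))$, with $\cI$ marginally uniform on $[d]$ by symmetry.

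The warm-up phase tracks the ratios $r_j^{(t)}:=u_j^{(t)}/u_\cI^{(t)}$ for $j\ne\cI$. A Taylor expansion of \eqref{tensorSGD} around $\ub^{(t)}$ shows that
\[
\EE\!\left[\log|r_j^{(t+1)}|\,\big|\,\ub^{(t)}\right]
\le \log|r_j^{(t)}| - \eta^{(t)}|\mu_4-3|\bigl((u_\cI^{(t)})^2-(u_j^{(t)})^2\bigr) + \text{higher order},
\]
so $\log|r_j^{(t)}|$ is a supermartingale whose drift scales with the squared gap of the leading coordinate; crucially this drift is \emph{linear} in the squared gap, rather than quadratic in the coordinates themselves. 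A Freedman-type bound for the martingale part, exploiting the light-tail assumption which gives a sub-Weibull norm of $O(B^4)$ for $((\ub^\top\bX)^3\bX)$ via truncation at scale $B^4\,\mathrm{polylog}(dT)$, bounds the per-step conditional variance by $\tilde O((\eta^{(t)})^2 B^8)$. With a constant stepsize $\eta^{(t)}\asymp 1/(B^4 d\,\mathrm{polylog})$ and a union bound over $j\ne\cI$, after $T_{\mathrm I}=\tilde O(d^2/|\mu_4-3|)$ iterations the leading coordinate reaches $|u_\cI^{(T_{\mathrm I})}|\ge c$ for an absolute $c>0$, with failure probability at most $2\epsilon$. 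Matching the accumulated drift against the accumulated variance in $\log|r_j^{(t)}|$ is exactly what produces the scaling requirement $d^4/T\le \tilde O(\epsilon^2|\mu_4-3|^2/B^8)$ in the theorem.

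Once the iterate is in the basin $|u_\cI|\ge c$, the restricted Hessian in the orthogonal complement of $\eb_\cI$ has eigenvalues $\lesssim -\kappa|\mu_4-3|$ for an absolute $\kappa>0$. Switching to the diminishing stepsize $\eta^{(t)}=\Theta(1/(|\mu_4-3|(t-T_{\mathrm I})))$, a standard one-step analysis for projected SGD on the sphere applied to $v^{(t)}:=|\tan\angle(\ub^{(t)},\eb_\cI)|$ yields
\[
\EE\!\left[(v^{(t+1)})^2\,\big|\,\ub^{(t)}\right]
\le \bigl(1-2\kappa|\mu_4-3|\eta^{(t)}\bigr)(v^{(t)})^2 + C\,(\eta^{(t)})^2 B^8 d,
\]
unrolling to $\EE(v^{(T)})^2\lesssim B^8 d/(|\mu_4-3|^2 T)$. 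A Freedman concentration on the bounded (after truncation) increments, plus a localization argument that keeps the trajectory inside the basin (contributing the $\log^2 T$ factors), upgrades this to the desired high-probability bound $|\tan\angle(\ub^{(T)},\ab_\cI)| \le \tilde O(B^4|\mu_4-3|^{-1}\sqrt{d\log^2 T/T})$ with failure probability $3\epsilon$; combined with the $2\epsilon$ from the initialization/Phase~I event, this yields the claimed $1-5\epsilon$ statement. I expect the main obstacle to be the warm-up analysis: a naive Azuma bound on the raw coordinates $u_j^{(t)}$ is too loose to give the sharp $\sqrt{d/T}$ rate, and one must instead work with the log-ratio supermartingales under a uniform sub-Weibull truncation event, balancing noise against the initialization gap to obtain precisely the $d^4/T$ scaling.
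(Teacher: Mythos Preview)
Your two-phase plan, the reduction to $\Ab=\Ib$, and the idea of tracking coordinate ratios under martingale concentration all align with the paper's architecture. The Phase~II local analysis is also fine in spirit (the paper uses a second \emph{constant} stepsize $\eta_2\asymp\log T/T$ rather than a diminishing one, but either route reaches $\tilde O(\sqrt{d/T})$). The Phase~I argument, however, has two quantitative claims that fail and prevent the proof from closing.

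First, the initialization gap is overstated. The spacing between the two largest squared coordinates of a uniform point on $\cD_1$ does \emph{not} satisfy $(u_\cI^{(0)})^2-\max_{j\ne\cI}(u_j^{(0)})^2\gtrsim 1/(d\log(d/\epsilon))$ with probability $1-\epsilon$: the top uniform spacing $U_{(d)}-U_{(d-1)}\sim\mathrm{Beta}(1,d)$ gives $\PP(U_{(d)}-U_{(d-1)}\le\epsilon/d)\approx\epsilon$, and pushing through the $\log\chi^2$ quantile (paper's Lemmas~\ref{lemm:initW} and~\ref{lemm:pure_prob}) yields only
\[
(u_\cI^{(0)})^2-\max_{j\ne\cI}(u_j^{(0)})^2 \;\gtrsim\; \frac{\epsilon}{d\,\log\epsilon^{-1}\log d}.
\]
The $\epsilon$ in the numerator is unavoidable and is precisely the source of the $\epsilon^2/\log^2\epsilon^{-1}$ on the right of the scaling condition; without it your accounting of ``drift versus variance'' cannot reproduce the stated hypothesis.

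Second, the Phase~I stepsize $\eta\asymp 1/(B^4 d)$ is far too large, and the per-step variance bound $\tilde O(\eta^2 B^8)$ omits a factor of $d$. In the cold region $u_\cI^2\sim 1/d$, so the increment of $\log|r_j|$ (equivalently of $W_j:=(u_\cI^2-u_j^2)/u_j^2$) carries a factor $|u_\cI|^{-1}\sim\sqrt d$, giving per-step variance $\tilde O(\eta^2 B^8 d)$. Because the drift grows geometrically at rate $1+\eta|\mu_4-3|/d$, the effective number of independent noise contributions is $\sim d/(\eta|\mu_4-3|)$, and the fluctuation of the \emph{normalized} process is $\asymp B^4 d\sqrt{\eta/|\mu_4-3|}$ (this is exactly the paper's Lemma~\ref{lemm:hitt}). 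Requiring this to stay below the true gap $\sim\epsilon/\mathrm{polylog}$ forces $\eta\lesssim\epsilon^2|\mu_4-3|/(B^8 d^2)$, i.e.\ $\eta_1\asymp d\log T/T$ under the theorem's scaling condition --- not $\eta\asymp 1/d$. With your parameters the accumulated noise is of constant order while the gap is $O(\epsilon)$, so you cannot rule out a competitor overtaking $u_\cI$. The paper handles this by tracking the rescaled quantity $P_t^o W_k^{(t)}$ with $P_t^o=\prod_{s<t}(1+2\eta|\mu_4-3|(v_1^{(s)})^2)^{-1}$, whose martingale part has a convergent geometric variance sum, and by using a $\psi_{1/2}$-norm concentration inequality (Theorem~\ref{theo:concentration}) together with a reversed Gr\"onwall lemma (Lemma~\ref{lemm:grw_type}); your log-ratio supermartingale can be repaired along the same lines, but only after inserting the missing $d$ factor, the geometric normalization, and the $T$-dependent stepsize.
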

\blue{
To the best of our knowledge, this provides the first rigorous analysis of an online tensorial ICA algorithm that achieves an $\tilde{O}(\sqrt{d/T})$ finite-sample convergence rate, under mild distributional assumptions and scaling conditions.
The contributions of this work lie in several aspects:
\begin{enumerate}[label=(\roman*)]
\item
Partly adapting from the analysis of online principal component estimation in \citet{li2018near}, we provide a per-coordinate analysis of the warmly-initialized online tensorial ICA algorithm that achieves a sharp $\tilde{O}(\sqrt{d/T})$ convergence rate
[Theorem \ref{theo:finite_sample2}].
\item
In contradistinction to existing saddle-point-escaping analysis in \citet{ge2015escaping,jin2017escape,li2018near}, we developed a novel coordinate-pair analysis of the uniformly-initialized online tensorial ICA algorithm based on our dynamics-based characterization
[Theorem \ref{theo:finite_sample}].
\item
We combine these two analyses to conclude that a two-stage training procedure provides a finite-sample error bound of $\tilde O(\sqrt{d / T})$ for the uniform initialization case under a mild assumption on the data-generating distribution and also a scaling condition of $d^4 / T$ being sufficiently small, up to a polylogarithmic factor of $d, T$
[Theorem \ref{theo:informal}, presented formally in Corollary \ref{coro:two_phase}].
\end{enumerate}
}

\paragraph{Organization}
The rest of this paper is organized as follows.
\S\ref{sec:warmsec} presents our main convergence results and finite-sample error bounds for the warm initialization case for estimating one single component.
\S\ref{sec:uniformsec} presents the corresponding results for the uniform initialization case.
\S\ref{sec:related} discusses additional related literature.
\S\ref{sec:summary} summarizes our results.
Limited by space we relegate to the Appendix the proofs of main results, all secondary lemmas and technical results along with preliminary simulation results.

\begin{algorithm}[!tb]
\caption{Online Tensorial ICA}
\label{algo:ICA}
\begin{algorithmic}
\STATE Initialize $\ub^{(0)}$ and select stepsize $\eta^{(t)}$ appropriately (to elaborate later)
\FOR{$t = 1, 2, \dots$}
\STATE Draw one observation $\bX^{(t)}$ from streaming data, and update iteration $\ub^{(t)}$ via
\beq\label{tensorialSGD}
\ub^{(t)}
=
\Pi_1\left\{
\ub^{(t - 1)} + \eta \cdot \sign(\mu_4 - 3) \left((\ub^{(t - 1)})^\top \bX^{(t)}\right)^3 \bX^{(t)}
\right\}
\eeq
where $\Pi_1\{\bullet\} = \|\bullet\|^{-1} \bullet$ denotes the projection operator onto the unit sphere centered at the origin $\cD_1$
\ENDFOR
\end{algorithmic}
\end{algorithm}

\pb\section{Estimating a Single Component: Warm Initialization Case}\label{sec:warmsec}
For the purpose of estimating a single independent component, we introduce our settings and assumptions for tensorial ICA and its stochastic approximation algorithm \eqref{tensorSGD}, formally stated in Algorithm \ref{algo:ICA}.
Let the dimension $d \ge 2$, let $\bX$ be the data vector of which $\bX^{(1)}, \bX^{(2)}, \ldots \in \RR^d$ are independent draws, and assume the following for the distribution of $\bX$:

\begin{assumption}[Data vector distribution]\label{assu:distribution}
Let $\bX = \Ab \bZ$, where $\Ab\in \RR^{d\times d}$ is an orthogonal matrix with $\Ab \Ab^\top = \Ib$ and $\bZ \in \RR^{d}$ is a random vector satisfying 
\begin{enumerate}[label=(\roman*)]
\item\label{ass_fir}
The $Z_i, i=1,\dots,d$ are independent with identical $j$th-moment for $j=1,2,4$, denoted as $\mu_j \equiv \EE Z_i^j$;
\item\label{ass_sec}
The $\mu_1 = \EE Z_i = 0$, $\mu_2 = \EE Z_i^2 = 1$, $\mu_4 = \EE Z_i^4 \ne 3$;
\item\label{ass_sub}
For all $i \in [d]$, $Z_i$ has an Orlicz-$\psi_2$ norm bounded by $\sqrt{3/8} B$.\footnote{
A random variable $Z$ with mean 0 is light-tailed if there is a positive number $\sigma$ such that
$\EE \exp(\lambda Z) \le \exp(\sigma^2 \lambda^2 / 2)$ for all $\lambda \in \RR$.
\blue{The smallest possible $\sigma > 0$ is referred to as the Orlicz-$\psi_2$ norm or sub-Gaussian parameter \citep{wainwright2019high}.
Readers shall be warned that the term ``sub-Gaussian'' here indicates the light-tailed condition and should be distinguished from $\mu_4<3$ (resp.~$\mu_4>3$) of sub-Gaussianity (resp.~super-Gaussianity) in ICA \citep{stone2004independent}. }}
\end{enumerate} 
\end{assumption}
Note that Assumption \ref{assu:distribution}(i) requires the distribution for all independent components to admit identical first, second and fourth moments.
As indicated in Assumption \ref{assu:distribution}(ii), the data vectors are assumed to be whitened first in the sense that $\mu_2 = 1$.
The sign of our excess kurtosis $\mu_4 - 3$ determines the direction of stochastic gradient update, and, as  will be seen later, the magnitude of the excess kurtosis $|\mu_4 - 3|$ plays an important role in our convergence analysis.\footnote{
When the excess kurtosis $\mu_4 - 3$ is equal to zero, for instance when $\bZ$ follows an i.i.d.~standard normal distribution, the matrix $\Ab$ is non-identifiable in our framework.
Non-gaussian independent components with $\mu_4 = 3$ can be studied via higher-order tensor decomposition with a different contrast function, but is beyond the scope of this paper.
}
Assumption \ref{assu:distribution}(iii) generalizes the boundedness assumption $\|Z_i\|_\infty \le O(B)$ made in recent work in order to cover Gaussian mixtures and Bernoulli-Gaussians, which are typical application cases for the tensorial ICA estimation problem.
Note that we include a factor of $\sqrt{3/8}$ in the \blue{Orlicz-$\psi_2$ norm (sub-Gaussian parameter)} purely for notational simplicity in our analysis.

We target to study the convergence of tensorial ICA under certain initialization conditions.
For each initialization condition, we first analyze the convergence result for any fixed, plausible stepsizes, and then (by choosing the stepsize according to the number of observations) obtain the finite-sample error bound.
We focus in this section the warm initialization condition as any $\ub^{(0)}$ satisfying, for some integer $i\in [d]$,
\beq\label{eq:lemm_warm}
\ub^{(0)}\in \cD_1
	\quad\text{and}\quad
\left| \tan \angle\left(\ub^{(0)}, \ab_i\right) \right| \le \frac{1}{\sqrt{3}}
.
\eeq
For any fixed $\tau > 0$, we define a rescaled time variable $T_{\eta, \tau}^*$ as
\beq\label{eq:Tetatau*}
T_{\eta, \tau}^* \equiv \left\lceil
	\frac{\tau \log\left(\frac{|\mu_4 - 3|}{B^8} \cdot \eta^{-1}\right)}{- \log\left( 1 - \frac{\eta}{3} |\mu_4 - 3| \right)}
\right\rceil
. 
\eeq
Then under warm initialization condition \eqref{eq:lemm_warm}, we have the following convergence lemma.

\begin{lemma}[Convergence Result with Warm Initialization]\label{lemm:convergence_result2}
Let the dimension $d \ge 2$, let Assumption \ref{assu:distribution} hold, and let initialization $\ub^{(0)}$ satisfy condition \eqref{eq:lemm_warm} for some integer $i\in [d]$.
Then, for any fixed positive numbers $\tau, \eta$ and $\delta \in (0, e^{-1}]$ satisfying the scaling condition,
\beq\label{eq:warm_scaling}
C_{\ref{lemm:convergence_result2}, L}^* \log^8(T_{\eta, 1}^* \delta^{-1})
\cdot
\frac{B^8}{|\mu_4 - 3|}
\cdot
d \eta \log\left(\frac{|\mu_4 - 3|}{B^8}\eta^{-1}\right)
\le
\frac{1}{\tau + 1}
,\qquad
\eta
<
\min\left(
\frac{1}{|\mu_4 - 3|}
,
\frac{|\mu_4 - 3|}{B^8} e^{-1}
\right)
,
\eeq
there exists an event $\cH_{\ref{lemm:convergence_result2}, L}$ with
$
\PP(\cH_{\ref{lemm:convergence_result2}, L})
\ge
1 - \left(
6\tau + 12 + \frac{5184}{\log^5\delta^{-1}}
\right) d \delta
,
$
such that on $\cH_{\ref{lemm:convergence_result2}, L}$, iteration $\ub^{(t)}$ of Algorithm \ref{algo:ICA} satisfies 
\beq\label{eq:lemm:convergence_result-tanbdd}
\begin{aligned}
\left| \tan \angle\left(\ub^{(t)}, \ab_i\right) \right|
&\le
\left| \tan \angle\left(\ub^{(0)}, \ab_i\right) \right|
\left(  1 - \frac{\eta}{3} |\mu_4 - 3| \right)^t
\\&\quad+
\sqrt{\tau + 1} C_{\ref{lemm:convergence_result2}, L} \log^{5 / 2}\delta^{-1}
\cdot
\frac{B^4}{|\mu_4 - 3|^{1 / 2}}
\cdot
\sqrt{
d \eta \log\left(\frac{|\mu_4 - 3|}{B^8}\eta^{-1}\right)
}
,
\end{aligned}\eeq
for all $t \in [0, T_{\eta, \tau}^*]$, where $C_{\ref{lemm:convergence_result2}, L}$ and $C_{\ref{lemm:convergence_result2}, L}^*$ are positive, absolute constants.
\end{lemma}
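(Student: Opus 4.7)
The plan is to exploit the orthogonal invariance of the problem by rotating coordinates so that without loss of generality $\ab_i=\eb_1$, and then to track the components $u_j^{(t)} := \langle \ub^{(t)},\eb_j\rangle$ for $j\in[d]$. Since $\tan^2\angle(\ub^{(t)},\ab_i) = \sum_{j\ne 1}(u_j^{(t)})^2/(u_1^{(t)})^2$, the natural per-coordinate quantities to study are the ratios $r_j^{(t)} := u_j^{(t)}/u_1^{(t)}$ for $j\ne 1$: the unit-sphere projection $\Pi_1$ in \eqref{tensorSGD} cancels in these ratios, leaving the projection-free recursion
$$
r_j^{(t)} \;=\; \frac{u_j^{(t-1)} + \eta\, s\, Y^{(t)} Z_j^{(t)}}{u_1^{(t-1)} + \eta\, s\, Y^{(t)} Z_1^{(t)}}, \qquad s := \sign(\mu_4-3),\quad Y^{(t)} := \bigl((\ub^{(t-1)})^{\top}\bX^{(t)}\bigr)^3,
$$
where, in the rotated frame, the $Z_j^{(t)}$ are independent across $j\in[d]$ with kurtosis $\mu_4$ and Orlicz-$\psi_2$ norm controlled by $B$.

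Next, I would carry out the expected-drift computation. An Isserlis-type identity under Assumption \ref{assu:distribution} gives $\EE[Y^{(t)} Z_j^{(t)}\mid \cF^{(t-1)}] = 3 u_j^{(t-1)} + (\mu_4-3)(u_j^{(t-1)})^3$. Substituting into the ratio recursion and linearising about $\eta=0$ yields
$$
\EE\bigl[r_j^{(t)}\mid\cF^{(t-1)}\bigr] \;=\; r_j^{(t-1)}\Bigl(1 - \eta|\mu_4-3|\bigl((u_1^{(t-1)})^2-(u_j^{(t-1)})^2\bigr)\Bigr) + O(\eta^2),
$$
so the warm-initialization condition $(u_1^{(t-1)})^2\ge 3/4$, which forces $(u_1^{(t-1)})^2-(u_j^{(t-1)})^2\ge 1/2$, delivers the expected per-step contraction by the factor $1-\eta|\mu_4-3|/3$ appearing in \eqref{eq:lemm:convergence_result-tanbdd}, provided the iterate remains in the basin $|\tan\angle(\ub^{(t)},\ab_i)|\le 1/\sqrt 3$.

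For the stochastic part, the one-step innovation on $r_j^{(t)}$ is $\eta s(Y^{(t)} Z_j^{(t)} - \EE[\cdot])/u_1^{(t-1)}$, which under Assumption \ref{assu:distribution}(iii) is a degree-eight polynomial in sub-Gaussian variables and hence sub-Weibull of type $\psi_{1/4}$ with scale $\sim B^4$. Summing these martingale increments over a contraction-length window $\sim (\eta|\mu_4-3|)^{-1}\log(\cdot)$ and applying a sub-Weibull maximal inequality gives, uniformly over $t\in[0,T^*_{\eta,\tau}]$, a stochastic contribution of order $B^4 |\mu_4-3|^{-1/2}\sqrt{\eta\log(\cdot)}\cdot\log^{5/2}\delta^{-1}$ to each $r_j^{(t)}$; summing the squared off-diagonal contributions over $j\ne 1$ yields the $\sqrt{d\eta\log(\cdot)}$ factor in the noise floor of \eqref{eq:lemm:convergence_result-tanbdd}, and pasting $\tau+1$ successive contraction windows to cover $[0,T^*_{\eta,\tau}]$ accounts for the $\sqrt{\tau+1}$ prefactor.

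The main obstacle will be closing the loop between the drift and noise analyses: the expected-contraction argument requires the iterate to remain in $|\tan\angle|\le 1/\sqrt 3$ throughout, while the martingale concentration bound is only valid under the same hypothesis. I would resolve this by induction on $t$ on the joint event that (a) the running stochastic error has not yet exceeded a threshold strictly below the basin radius and (b) the iterate has remained in the basin through step $t-1$; the scaling condition \eqref{eq:warm_scaling} is exactly what makes the noise floor strictly smaller than $1/\sqrt 3$, so the induction closes and its good event yields the claimed $\cH_{\ref{lemm:convergence_result2}, L}$. A secondary technical challenge is preserving polylogarithmic $d$-dependence when uniformly controlling $d$ sub-Weibull martingales of degree eight, which I would handle by combining sharp moment bounds on the innovations with a union bound over coordinates and a chaining/peeling argument inside the maximal inequality.
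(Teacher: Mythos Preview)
Your approach is essentially the same as the paper's---rotate to $\ab_i=\eb_1$, study the coordinate ratios $r_j^{(t)}=u_j^{(t)}/u_1^{(t)}$, decompose into drift plus martingale noise, and close the loop by showing the iterate never leaves the basin---but two technical points in your execution would not go through as written.

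First, the one-step innovation is $(\vb^\top\bY)^3(v_1 Y_k - v_k Y_1)/v_1^2$, a \emph{degree-four} polynomial in sub-Gaussian variables, not degree eight. Hence the right Orlicz class is $\psi_{1/2}$, not $\psi_{1/4}$; the paper's martingale inequality with $\alpha=1/2$ is what produces the exponent $\tfrac{\alpha}{\alpha+2}=\tfrac15$ and hence the $\log^{5/2}\delta^{-1}$ factor appearing in \eqref{eq:lemm:convergence_result-tanbdd}. With $\psi_{1/4}$ you would get $\log^{9/2}\delta^{-1}$ and miss the stated constants.

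Second, and more importantly, your induction on the event ``iterate remains in $\{|\tan\angle|\le 1/\sqrt3\}$'' does not close. The bound you derive reads $|\tan\angle(\ub^{(t)},\eb_1)|\le |\tan\angle(\ub^{(0)},\eb_1)|\cdot(\text{contraction})+\text{noise}$, and since the warm hypothesis only gives $|\tan\angle(\ub^{(0)},\eb_1)|\le 1/\sqrt3$ at equality, the right-hand side can exceed $1/\sqrt3$ for small $t$ (contraction $\approx 1$) whenever the noise is positive---so the induction hypothesis fails immediately. The paper fixes this by introducing a strictly larger auxiliary region $\cD_{\text{warm-aux}}=\{|\tan|\le 1/\sqrt2\}$ and a stopping time $\cT_x$ for exit from it; one applies the per-coordinate bound on $\{t\le T^*_{\eta,\tau}\wedge\cT_x\}$, observes that the scaling condition forces $1/\sqrt3+\text{noise}<1/\sqrt2$, and concludes by contradiction that $\cT_x>T^*_{\eta,\tau}$ on the good event. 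This buffer is also why the contraction factor in \eqref{eq:lemm:convergence_result-tanbdd} is $1-\eta|\mu_4-3|/3$ (coming from $(v_1)^2-(v_k)^2\ge 1/3$ on $\cD_{\text{warm-aux}}$) rather than the $1/2$ you computed on $\cD_{\text{warm}}$.

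Two smaller remarks: the $\sqrt{\tau+1}$ prefactor arises simply from $T^*_{\eta,\tau}\le 3(\tau+1)|\mu_4-3|^{-1}\eta^{-1}\log(\cdot)$ and the $(T^*_{\eta,\tau})^{1/2}$ in the martingale bound over the full horizon, not from pasting windows; and the dimension dependence is handled by a plain union bound over the $d-1$ coordinates (accounting for the $d\delta$ in the probability), so no chaining or peeling is needed.
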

Lemma \ref{lemm:convergence_result2} provides, under the warm initialization assumption \eqref{eq:lemm_warm} and scaling condition $\eta = \tilde\cO(d^{-1})$, an upper bound for $\left| \tan \angle\left(\ub^{(t)}, \ab_i\right) \right|$ which is the sum of two terms:
the first term on the right hand side of \eqref{eq:lemm:convergence_result-tanbdd} decays geometrically from $\left| \tan \angle\left(\ub^{(0)}, \ab_i\right) \right|$ at rate $1 - |\mu_4 - 3| \eta / 3$, and the second term $\tilde\cO(\sqrt{d\eta})$ is induced by the noise.
To balance these two terms, when we know in advance the sample size $T$ of online data satisfying some scaling condition $T = \tilde\Theta(d)$, we choose a constant stepsize $\eta = \tilde\Theta(\log T / T)$ and obtain  a finite-sample error bound:

\begin{theorem}[Finite-Sample Error Bound with Warm Initialization]\label{theo:finite_sample2}
Let the dimension $d \ge 2$, let Assumption \ref{assu:distribution} hold, and let initialization $\ub^{(0)}$ satisfy condition \eqref{eq:lemm_warm} for some integer $i\in [d]$.
For sample size $T$ set the stepsize $\eta$ as
\beq\label{etaPickT}
\eta(T) = \frac{9 \log\left(\frac{2 |\mu_4 - 3|^2}{9 B^8} T\right)}{2 |\mu_4 - 3| T}
.
\eeq
Then, for any fixed positive numbers $T \ge 100, \epsilon \in (0, 1]$ satisfying the scaling condition 
\beq\label{eq:warm_scaling2}
C_{\ref{theo:finite_sample2}, T}^* \log^8(C_{\ref{theo:finite_sample2}, T}' \epsilon^{-1} d T)
\cdot
\frac{d \log^2 T}{T}
\le
\frac{|\mu_4 - 3|^2}{B^8}
,
\eeq
there exists an event $\cH_{\ref{theo:finite_sample2}, T}$ with $\PP(\cH_{\ref{theo:finite_sample2}, T}) \ge 1 - \epsilon$ such that on $\cH_{\ref{theo:finite_sample2}, T}$, iteration $\ub^{(t)}$ of Algorithm \ref{algo:ICA} satisfies
$$
\left| \tan \angle\left(
\ub^{(T)}, \ab_i
\right) \right|
\le
C_{\ref{theo:finite_sample2}, T} \log^{5 / 2}(C_{\ref{theo:finite_sample2}, T}' \epsilon^{-1} d)
\cdot
\frac{B^4}{|\mu_4 - 3|}
\cdot
\sqrt{
\frac{d \log^2 T}{T}
}
,
$$
where $C_{\ref{theo:finite_sample2}, T}, C_{\ref{theo:finite_sample2}, T}^*, C_{\ref{theo:finite_sample2}, T}'$ are positive, absolute constants.
\end{theorem}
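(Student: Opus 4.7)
\textbf{Proof plan for Theorem \ref{theo:finite_sample2}.} The strategy is to instantiate Lemma \ref{lemm:convergence_result2} with the particular stepsize \eqref{etaPickT}, a carefully chosen horizon parameter $\tau$, and a confidence parameter $\delta$ proportional to $\epsilon/d$ up to logarithmic factors, and then to verify that at $t=T$ the geometric-decay term becomes negligible relative to the noise term. Concretely, set $\delta = c\epsilon/d$ for a small absolute constant $c$ so that the failure probability $\bigl(6\tau+12+5184/\log^5\delta^{-1}\bigr)d\delta$ in Lemma \ref{lemm:convergence_result2} is at most $\epsilon$. This yields $\log\delta^{-1}=O(\log(\epsilon^{-1}d))$, which is exactly the factor appearing in the target bound.

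Next, I would verify that the time horizon is long enough, i.e.\ that $T\le T_{\eta,\tau}^*$ for an absolute constant $\tau$ (e.g.\ $\tau=2$). Plugging \eqref{etaPickT} into \eqref{eq:Tetatau*} gives
$$
-\log\!\left(1-\tfrac{\eta}{3}|\mu_4-3|\right)=-\log\!\left(1-\tfrac{3\log(\cdot)}{2T}\right)=\tfrac{3\log(\cdot)}{2T}(1+o(1)),
$$
while the numerator $\tau\log\bigl(|\mu_4-3|\eta^{-1}/B^8\bigr)=\tau\log\bigl(\tfrac{2|\mu_4-3|^2 T/(9B^8)}{\log(\cdot)}\bigr)$ is of the same logarithmic order as $\log(\cdot):=\log(2|\mu_4-3|^2 T/(9B^8))$. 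Thus $T_{\eta,\tau}^*=(2\tau/3)T\bigl(1+o(1)\bigr)$, which exceeds $T$ for any $\tau\ge 2$. In parallel, I would show that the hypothesized scaling \eqref{eq:warm_scaling2} implies the hypothesis \eqref{eq:warm_scaling} of Lemma \ref{lemm:convergence_result2}: substituting $\eta=\eta(T)$ converts $d\eta\log(|\mu_4-3|\eta^{-1}/B^8)$ into $O\bigl(d\log^2 T/(|\mu_4-3|T)\bigr)$, and bounding $\log^8(T_{\eta,1}^*\delta^{-1})$ by $O(\log^8(\epsilon^{-1}dT))$ makes \eqref{eq:warm_scaling} equivalent to \eqref{eq:warm_scaling2} up to absorbing an absolute constant into $C_{\ref{theo:finite_sample2},T}^*$. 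The stepsize upper bound in \eqref{eq:warm_scaling} is implied by the same scaling, since it forces $T$ to be large enough that $\eta(T)$ is small.

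With the hypotheses of Lemma \ref{lemm:convergence_result2} verified, I would evaluate the bound \eqref{eq:lemm:convergence_result-tanbdd} at $t=T$. The geometric-decay term is
$$
\bigl|\tan\angle(\ub^{(0)},\ab_i)\bigr|\bigl(1-\tfrac{\eta}{3}|\mu_4-3|\bigr)^T\le\tfrac{1}{\sqrt 3}\exp\!\left(-\tfrac{3\log(\cdot)}{2}\right)=O\!\left(\tfrac{B^{12}}{|\mu_4-3|^{3}}\cdot T^{-3/2}\right),
$$
which is strictly smaller than the noise term $\tilde O\!\bigl(B^4|\mu_4-3|^{-1}\sqrt{d\log^2 T/T}\bigr)$ for all $d\ge 1$. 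Substituting $\eta=\eta(T)$ into the noise term yields $O\!\bigl(\sqrt{\tau+1}\log^{5/2}(\epsilon^{-1}d)\cdot B^4|\mu_4-3|^{-1}\sqrt{d\log^2 T/T}\bigr)$, matching the claimed rate after folding $\sqrt{\tau+1}$ and the constants from $\log(\cdot)$ into $C_{\ref{theo:finite_sample2},T}$ and $C_{\ref{theo:finite_sample2},T}'$.

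\textbf{Main obstacle.} The calculus is elementary once the parameters are chosen, so the principal difficulty is the bookkeeping: selecting $\tau$ and $\delta$ jointly so that (i) $T_{\eta,\tau}^*\ge T$, (ii) the failure probability is $\le\epsilon$, (iii) \eqref{eq:warm_scaling2} implies \eqref{eq:warm_scaling} with the constants absorbed, and (iv) the $\log^{5/2}\delta^{-1}$ factor simplifies to $\log^{5/2}(\epsilon^{-1}d)$ without picking up any extraneous factor of $\log T$ that would worsen the stated rate. A secondary technical point is verifying that $\log(|\mu_4-3|\eta^{-1}/B^8)$ equals $\log(\text{stuff with }T)$ up to an $O(\log\log T)$ term, so that one can absorb it cleanly into the polylogarithmic prefactor rather than increasing the exponent of $\log T$.
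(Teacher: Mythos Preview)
Your proposal is correct and follows essentially the same route as the paper: instantiate Lemma~\ref{lemm:convergence_result2} with $\eta=\eta(T)$, pick $\delta\asymp\epsilon/d$ so that the failure probability collapses to $\epsilon$ and $\log\delta^{-1}=O(\log(\epsilon^{-1}d))$, verify that \eqref{eq:warm_scaling2} implies \eqref{eq:warm_scaling}, and show the geometric-decay term at $t=T$ is dominated by the noise term. The paper's only substantive difference is that it brackets $T$ by showing $T\in[T_{\eta(T),0.5}^*,\,T_{\eta(T),4}^*]$ via explicit nonasymptotic bounds (your $T_{\eta,\tau}^*\approx(2\tau/3)T$ is the right heuristic, but the rigorous lower bound they derive, $T_{\eta,\tau}^*\ge(4\tau/15)T$, loses a constant, which is why they take $\tau=4$ rather than $\tau=2$); this is a minor constant-tuning issue, not a gap in your argument.
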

Theorem \ref{theo:finite_sample2} indicates an $\tilde\cO\left( \sqrt{d/T} \right)$ finite-sample error bound for online tensorial ICA when it is warmly initialized in the sense that \eqref{eq:lemm_warm} holds for $\ub^{(0)}$ for some integer $i\in[d]$.
In the rest of this section, we prove Lemma \ref{lemm:convergence_result2} and Theorem \ref{theo:finite_sample2} for the warm initialization case.  The section is organized as follows:
\S\ref{sec:warm_key} analyzes our algorithm and provide a key lemma [Lemma \ref{lemm:fast_expo}] when it is warmly initialized.
Proof of the key Lemma \ref{lemm:fast_expo} and all proofs of secondary lemmas are deferred to \S\ref{sec:warm_lemm} and \S\ref{sec:warm_aux} in Appendix.

\pb\subsection{Key Lemma in the Warm Initialization Analysis}\label{sec:warm_key}
To simplify our problem, we set $i\in [d]$ as the integer such that $\ub^{(0)}$ satisfies condition \eqref{eq:lemm_warm} and define the rotated iteration $\{\vb^{(t)}\}_{t \ge 0}$ as
\beq\label{eq:v_def}
\vb^{(t)}
\equiv
\Pb \Ab^\top \ub^{(t)}
,
\eeq
where $\Pb \in \RR^{d\times d}$ is the permutation matrix corresponding to the cycle $(1i)$; i.e.,~$\Pb(i;1) = \Pb(1;i) = 1$, $\Pb(j;j) = 1$ for $j\ne 1,i$ and all other elements being zero.
Such a matrix, as an operator, maps the component vectors to coordinate vectors and ensures that $\pm \eb_1$ is the closest independent components pair at initialization and (with high probability) at convergence.
Furthermore, transforming to rotated observations $\bY^{(t)} = \Pb \Ab^\top \bX^{(t)}$ allows us to equivalently translate our online tensorial ICA iteration \eqref{tensorialSGD} into an analogous form:
\beq\label{eq:rescaled_update}
\vb^{(t)}
=
\Pi_1\left\{
\vb^{(t - 1)} + \eta\cdot \sign(\mu_4 - 3) \left( (\vb^{(t - 1)})^\top \bY^{(t)} \right)^3 \bY^{(t)}
\right\}
.
\eeq
%Indeed, left-multiplying both sides of \eqref{tensorialSGD} by orthogonal matrix $\Pb\Ab^\top$ gives
%$$\begin{aligned}
%\vb^{(t)}
% =
%\Pb \Ab^\top \ub^{(t)}
% &=
%%\Pb \Ab^\top \Pi_1\left\{
%%\ub^{(t - 1)} + \eta\cdot \sign(\mu_4 - 3) \left( (\ub^{(t - 1)})^\top \Ab \Pb \Pb \Ab^\top \bX^{(t)} \right)^3 \bX^{(t)}
%%\right\}
%% \\&=
%\Pi_1\left\{
%\Pb \Ab^\top \ub^{(t - 1)} + \eta\cdot \sign(\mu_4 - 3) \left( ( \Pb \Ab^\top \ub^{(t - 1)})^\top  \Pb \Ab^\top \bX^{(t)} \right)^3 \Pb \Ab^\top \bX^{(t)}
%\right\}
% \\&=
%\Pi_1\left\{
%\vb^{(t - 1)} + \eta\cdot \sign(\mu_4 - 3) \left( (\vb^{(t - 1)})^\top \bY^{(t)} \right)^3 \bY^{(t)}
%\right\}
%,
%\end{aligned}$$
%proving \eqref{eq:rescaled_update}.
It is easy to verify that the rotated iterations $\{\vb^{(t)}\}_{t \ge 0}$ and $\{\ub^{(t)}\}_{t \ge 0}$ satisfy $\ab_i^\top \ub^{(t)} = \eb_1^\top \vb^{(t)}$, and hence for all $t \ge 0$
\beq\label{eq:uv}
\tan \angle\left( \vb^{(t)}, \eb_1\right)
=
\frac{\sqrt{1 - (\eb_1^\top \vb^{(t)})^2} }{\eb_1^\top \vb^{(t)}}
=
\frac{\sqrt{1 - (\ab_i^\top \ub^{(t)})^2} }{\ab_i^\top \ub^{(t)}}
=
\tan \angle\left( \ub^{(t)}, \ab_i\right)
.
\eeq

Now, we let the warm initialization region be
\beq\label{eq:Swarm}
\cD_{\text{warm}}
 =
\left\{
\vb \in \cD_1 : v_1^2 \ge \frac34
\right\}
 =
\left\{
\vb \in \cD_1 :
\big| \tan \angle\left(\vb, \eb_1\right) \big| \le \frac{1}{\sqrt{3}}
\right\}
.
\eeq
Note that the warm initialization condition in \eqref{eq:lemm_warm} is equivalent to $\vb^{(0)}\in \cD_{\text{warm}}$.
Analogous to the warm initialization studied in the setting of principal component estimation \citep{li2018near}, the iteration we study in our warmly initialized online tensorial ICA is
\beq\label{eq:Uk}
U_k^{(t)} \equiv \frac{v_k^{(t)}}{v_1^{(t)}}
.
\eeq
To prevent iteration $U_k^{(t)}$ from diverging from the warm initialization region, we also define a larger warm-auxiliary region as
$
\cD_{\text{warm-aux}}
\equiv
\left\{\vb \in \cD_1:  v_1^2 \ge \frac23\right\}
=
\left\{ \vb \in \cD_1:  \big| \tan \angle\left(\vb, \eb_1\right) \big|  \le  \frac{1}{\sqrt{2}} \right\}
$.
Suppose the process $\{\vb^{(t)}\}$ is initialized at $\vb^{(0)} \in \cD_{\text{warm}}$, and we define the first time $\vb^{(t)}$ exits the warm-auxiliary region as
\beq\label{eq:cTx}
\cT_x
\equiv
\inf \left\{t\ge 1: \vb^{(t)}\in \cD_{\text{warm-aux}}^c\right\}
.
\eeq
We state the key lemma as

\begin{lemma}\label{lemm:fast_expo}
Let the settings in Lemma \ref{lemm:convergence_result2} hold, and fix the coordinate $k\in [2,d]$ and value $\tau > 0$.
Then for any fixed positives $\eta, \delta$ satisfying the scaling condition \eqref{eq:warm_scaling} along with the warm initialization condition $\vb^{(0)}\in \cD_{\text{warm}}$, there exists an event $\cH_{k; \ref{lemm:fast_expo}, L}$ satisfying
$
\PP(\cH_{k; \ref{lemm:fast_expo}, L})
\ge
1 - \left(6\tau + 12 + \frac{5184}{\log^5\delta^{-1}}\right)\delta
,
$
such that on event $\cH_{k; \ref{lemm:fast_expo}, L}$ the following holds:
\beq\label{eq:prop:fast_expo1}
\sup_{t\le T_{\eta, \tau}^*  \land \cT_x} \left|
 U_k^{(t)}  - U_k^{(0)} \prod_{s = 0}^{t - 1} \left[ 1 - \eta |\mu_4 - 3| \left( (v_1^{(s)})^2 - (v_k^{(s)})^2 \right) \right]
\right| 
\le 
2 C_{\ref{lemm:fast_expo}, L} \log^{5 / 2}\delta^{-1}
\cdot
B^4
\cdot
\eta (T_{\eta, \tau}^*)^{1 / 2}
,
\eeq
where $C_{\ref{lemm:fast_expo}, L}$ is a positive, absolute constant.
\end{lemma}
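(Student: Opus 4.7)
The plan is to study the per-coordinate ratio $U_k^{(t)} = v_k^{(t)}/v_1^{(t)}$, which is invariant under the normalization $\Pi_1$ and therefore has a clean one-step recursion, and to decompose this recursion into a deterministic multiplicative drift that reproduces the product on the right-hand side of \eqref{eq:prop:fast_expo1} plus a stochastic error controlled by martingale concentration. Writing $\sigma = \sign(\mu_4 - 3)$ and $g^{(t)} = ((\vb^{(t-1)})^\top \bY^{(t)})^3$, the projection cancels in the ratio so that
$$U_k^{(t)} = \frac{v_k^{(t-1)} + \eta \sigma g^{(t)} Y_k^{(t)}}{v_1^{(t-1)} + \eta \sigma g^{(t)} Y_1^{(t)}}.$$
On $\{t \le \cT_x\}$ one has $v_1^{(t-1)} \ge \sqrt{2/3}$, so the denominator is bounded away from zero and a first-order Taylor expansion produces
$$U_k^{(t)} - U_k^{(t-1)} = \frac{\eta \sigma g^{(t)}}{v_1^{(t-1)}}\bigl(Y_k^{(t)} - U_k^{(t-1)} Y_1^{(t)}\bigr) + R^{(t)},$$
with $R^{(t)} = O(\eta^2 (g^{(t)})^2)$ collecting the higher-order terms.

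Next I would compute the conditional drift. Since $\bY^{(t)} = \Pb \bZ^{(t)}$ has i.i.d.\ entries with moments $\mu_1 = 0$, $\mu_2 = 1$, $\mu_4$, a direct expansion yields, for any unit vector $\vb$ and any coordinate $j$,
$$\EE\bigl[((\vb)^\top \bY)^3 Y_j\bigr] = 3 v_j + (\mu_4 - 3) v_j^3.$$
Substituting this for $j = 1, k$ and using $v_k^{(t-1)} = U_k^{(t-1)} v_1^{(t-1)}$, the $3 v_j$ terms cancel inside the bracket and the conditional mean of the leading increment equals $-\eta |\mu_4 - 3|\bigl((v_1^{(t-1)})^2 - (v_k^{(t-1)})^2\bigr) U_k^{(t-1)}$. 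Hence
$$U_k^{(t)} = U_k^{(t-1)} \Bigl[1 - \eta |\mu_4 - 3|\bigl((v_1^{(t-1)})^2 - (v_k^{(t-1)})^2\bigr)\Bigr] + \xi^{(t)} + \rho^{(t)},$$
where $\{\xi^{(t)}\}$ is a martingale-difference sequence and $\rho^{(t)}$ is the $O(\eta^2)$ remainder. Unrolling this recursion, the left-hand side of \eqref{eq:prop:fast_expo1} equals $\sum_{r=1}^{t} W_r (\xi^{(r)} + \rho^{(r)})$, where each weight $W_r$ is an $\FCal^{(r-1)}$-measurable product of drift factors; on $\{t \le \cT_x\}$ every such factor lies in $(0,1]$, so $0 \le W_r \le 1$ and $\{W_r \xi^{(r)}\}$ remains a martingale-difference sequence with variance proxy at most that of $\xi^{(r)}$.

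The remaining step is a martingale concentration estimate for $\sum_r W_r \xi^{(r)}$. Each $\xi^{(r)}$ is (up to the $\eta / v_1^{(r-1)}$ prefactor) a mean-zero degree-$4$ polynomial in the sub-Gaussian vector $\bY^{(r)}$, hence sub-Weibull of order $1/2$ with conditional second moment $\lesssim \eta^2 B^8$. Truncating each $\xi^{(r)}$ at a polylogarithmic level, applying a Freedman/Bernstein-type inequality with predictable quadratic variation $\lesssim \eta^2 B^8 T_{\eta, \tau}^*$, and taking a union bound over $t \le T_{\eta, \tau}^*$ produces the claimed tail bound of order $C \log^{5/2}\delta^{-1} \cdot B^4 \eta (T_{\eta, \tau}^*)^{1/2}$; the deleted tails contribute a strictly lower-order term via Orlicz-$\psi_{1/2}$ moment control. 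The sum $\sum_r W_r \rho^{(r)}$ is of order $\eta^2 T^*_{\eta, \tau} B^8$ on a high-probability event and, under the scaling condition \eqref{eq:warm_scaling}, is absorbed into the main $\eta (T^*)^{1/2}$ term.

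The main obstacle is precisely this concentration step: the summands are quartic polynomials in sub-Gaussian variables, the variance proxies depend on the random past iterate $\vb^{(r-1)}$, and standard Azuma--Hoeffding alone is too crude. The argument has to combine truncation at a threshold of order $B^4 \log^2 \delta^{-1}$, a Bernstein-type bound for the truncated part, and Orlicz-$\psi_{1/2}$ tails for the excess; the final $\log^{5/2}\delta^{-1}$ exponent reflects exactly the power required to handle fourth-degree sub-Gaussian polynomials together with the time and coordinate union bounds.
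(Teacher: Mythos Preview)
Your decomposition of the one-step recursion, the drift computation, and the overall plan of martingale concentration for a quartic sub-Gaussian polynomial are all correct and match the paper's intent. But there is a genuine gap in the unrolling step. When you unroll
\[
U_k^{(t)} = a_{t-1} U_k^{(t-1)} + \xi^{(t)} + \rho^{(t)},
\qquad
a_s = 1 - \eta |\mu_4 - 3|\bigl((v_1^{(s)})^2 - (v_k^{(s)})^2\bigr),
\]
you obtain $U_k^{(t)} - U_k^{(0)}\prod_{s=0}^{t-1} a_s = \sum_{r=1}^t W_r(\xi^{(r)}+\rho^{(r)})$ with $W_r = \prod_{s=r}^{t-1} a_s$. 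Each factor $a_s$ depends on $\vb^{(s)}$ and is only $\cF_s$-measurable, so for $s \ge r$ the factor $a_s$ is \emph{not} $\cF_{r-1}$-measurable. Hence $W_r \notin \cF_{r-1}$, the sequence $\{W_r \xi^{(r)}\}$ is not a martingale-difference sequence, and the concentration argument as stated does not apply.

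The paper sidesteps exactly this issue by never unrolling to the product form. It works instead with the additive representation
\[
U_k^{(t)} = U_k^{(0)} - \eta |\mu_4 - 3| \sum_{s=0}^{t-1} \bigl((v_1^{(s)})^2 - (v_k^{(s)})^2\bigr) U_k^{(s)} + \sum_{s=1}^t e_k^{(s)} + \sum_{s=1}^t Q_{U,k}^{(s)},
\]
bounds the unweighted martingale $\sum_s e_k^{(s)}$ directly via a $\psi_{1/2}$-martingale concentration inequality (Theorem~\ref{theo:concentration}), and only afterwards converts the resulting additive bound into the product form via a discrete ``reversed Gronwall'' inequality (Lemma~\ref{lemm:grw_type}). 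That lemma is the missing ingredient: it takes $\bigl|U_k^{(t)} - U_k^{(0)} + \sum_s \beta(s) U_k^{(s)}\bigr| \le \alpha$ and outputs $\bigl|U_k^{(t)} - U_k^{(0)}\prod_s(1-\beta(s))\bigr| \le 2\alpha$ with no measurability requirement on the $\beta(s)$'s. Your approach could alternatively be repaired by an Abel-summation argument (since the weights $W_r$ are monotone in $r$ with total variation at most $1$, giving $\bigl|\sum_r W_r \xi^{(r)}\bigr| \le 2\max_n \bigl|\sum_{r\le n}\xi^{(r)}\bigr|$), but this needs to be said explicitly; the predictability claim as written is false.
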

This lemma shows that for each coordinate $k\in [2,d]$, with high probability, the \blue{dynamics} of $U_k^{(t)}$ is tightly controlled within a deterministic vessel whose center converges to zero at least exponentially fast.
As we will later see in the proofs of Lemma \ref{lemm:convergence_result2} and Theorem \ref{theo:finite_sample2}, this guarantees that with high probability $\vb^{(t)}$ will not exit the warm-auxilliary region $\cD_{\text{warm-aux}}$ and will stay within a small neighborbood of $\pm \eb_1$ after $T_{\eta, 0.5}^*$ iterates, where the rescaled time $T_{\eta, \tau}^*$ was earlier defined in \eqref{eq:Tetatau*}.

\pb\section{Estimating Single Component: Uniform Initialization Case}\label{sec:uniformsec}
It is often the case that we are unable to obtain a warm initialization for online tensorial ICA as required in \S\ref{sec:warmsec}, in which case we resort to initializing $\ub^{(0)}$ uniformly at random from the unit sphere $\cD_1$.
To analyze such a case, we define a new rescaled time variable, parameterized by $\tau > 0$, as follows: 
\beq\label{Tetatau}
T_{\eta, \tau}^o
\equiv
\left\lceil
\frac{\tau \log\left(\frac{|\mu_4 - 3|}{B^8} \cdot \eta^{-1}\right)}{- \log\left( 1 - \frac{\eta}{2d} |\mu_4 - 3| \right)}
\right\rceil
.
\eeq
Then under uniform initialization, we have the following convergence result:

\begin{lemma}[Convergence Result with Uniform Initialization]\label{lemm:convergence_result}
Let the dimension $d \ge 2$, let Assumption \ref{assu:distribution} hold, and let $\ub^{(0)}$ be uniformly sampled from the unit sphere $\cD_1$.
Then for any fixed positive numbers $d \ge 2, \tau > 0.5, \eta > 0, \delta \in (0, e^{-1}], \epsilon \in (0, 1 / 3]$ satisfying the scaling condition 
\beq\label{eq:uniform_scaling}
\begin{aligned}
&\hspace{0.4in}
d
\ge
2\sqrt{2\pi e} \log\epsilon^{-1} + 1
,\qquad
\eta
<
\min\left(
\frac{1}{|\mu_4 - 3|}
,
\frac{|\mu_4 - 3|}{B^8} e^{-1}
\right)
,\quad\text{and}
\\&
C_{\ref{lemm:convergence_result}, L}^*  \log^8(T_{\eta, 1}^o \delta^{-1})
\cdot
\frac{B^8}{|\mu_4 - 3|}
\cdot
d^2 \log^2 d
\cdot
\eta \log\left(\frac{|\mu_4 - 3|}{B^8}\eta^{-1}\right)
\le
\frac{\epsilon^2}{(\tau + 1) \log^2\epsilon^{-1}}
,
\end{aligned}\eeq
there exists a uniformly distributed random variable $\mathcal{I} \in \{1,\dots,d\}$ and an event $\cH_{\ref{lemm:convergence_result}, L}$ with
$$
\PP(\cH_{\ref{lemm:convergence_result}, L})
\ge
1 - \left(
6\tau + 27 + \frac{10368}{\log^5\delta^{-1}}
\right) d \delta - 3 \epsilon
,
$$
such that on $\cH_{\ref{lemm:convergence_result}, L}$, iteration $\ub^{(t)}$ of Algorithm \ref{algo:ICA} satisfies for $t \in [T_{\eta, 0.5}^o, T_{\eta, \tau}^o]$
\beq\label{eq:uniform_cr}
\begin{aligned}
\left| \tan \angle\left(\ub^{(t)}, \ab_\cI \right) \right|
&\le
\sqrt{d} \cdot \left(
1 - \frac{\eta}{2d}|\mu_4 - 3|
\right)^{t - T_{\eta, 0.5}^o}
\\&\quad+
\sqrt{\tau + 1} C_{\ref{lemm:convergence_result}, L} \log^{5 / 2}\delta^{-1}
\cdot
\frac{B^4}{|\mu_4 - 3|^{1 / 2}}
\cdot
\sqrt{d^3 \eta \log\left(\frac{|\mu_4 - 3|}{B^8}\eta^{-1}\right)}
,
\end{aligned}\eeq
where $C_{\ref{lemm:convergence_result}, L}, C_{\ref{lemm:convergence_result}, L}^*$ are positive, absolute constants.
\end{lemma}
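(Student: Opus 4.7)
The plan is to work in the rotated coordinates $\vb^{(t)}=\Pb\Ab^\top\ub^{(t)}$ of \eqref{eq:v_def}, choosing the permutation $\Pb$ so that index $1$ picks out the coordinate of $\Ab^\top\ub^{(0)}$ with maximal magnitude, and declaring this index to be $\cI$. Because $\Ab$ is orthogonal, $\Ab^\top\ub^{(0)}$ is again uniform on $\cD_1$, so $\cI$ is automatically uniform on $[d]$ as required. Representing $\vb^{(0)}=\g/\|\g\|$ for a standard Gaussian $\g$, standard Gaussian tail and anti-concentration inequalities would deliver, with probability at least $1-\epsilon$, the quantitative initial-geometry estimates $(v_1^{(0)})^2\gtrsim 1/(d\log^2\epsilon^{-1})$ and $(v_1^{(0)})^2-(v_k^{(0)})^2\ge 0$ for every $k\ge 2$. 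The hypothesis $d\ge 2\sqrt{2\pi e}\log\epsilon^{-1}+1$ is precisely what allows both events to be imposed simultaneously, and it also yields the deterministic bound $\sum_{k\ge 2}(U_k^{(0)})^2\le d-1$ from \eqref{eq:Uk}, which is the ultimate source of the $\sqrt d$ prefactor in \eqref{eq:uniform_cr}.

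The core of the argument is a coordinate-pair martingale analysis analogous to Lemma \ref{lemm:fast_expo}, but without assuming $\vb^{(0)}\in\cD_{\text{warm}}$. For each $k\in[2,d]$ I would repeat the Doob decomposition used to derive \eqref{eq:prop:fast_expo1}, obtaining on a high-probability event
\[
U_k^{(t)}=U_k^{(0)}\prod_{s=0}^{t-1}\bigl[1-\eta|\mu_4-3|\bigl((v_1^{(s)})^2-(v_k^{(s)})^2\bigr)\bigr]+\xi_k^{(t)},\qquad t\le T_{\eta,\tau}^o,
\]
where $\xi_k^{(t)}$ is a martingale remainder controlled by a Freedman-type inequality at the resolution provided by Assumption \ref{assu:distribution}(iii). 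The bookkeeping mirrors the warm case; the only change is that the horizon $T_{\eta,\tau}^o$ of \eqref{Tetatau} exceeds $T_{\eta,\tau}^*$ of \eqref{eq:Tetatau*} by a factor of order $d$, so the heavy-tailed one-step increments $((\vb^{(s-1)})^\top\bY^{(s)})^3\bY^{(s)}$ must be re-bounded against this longer time horizon.

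To close the product I would use a stopping-time argument to show that, on a high-probability event, $(v_1^{(s)})^2$ stays above $1/(2d)$ throughout $[0,T_{\eta,0.5}^o]$; combined with the deterministic initial ordering $(v_1^{(0)})^2\ge (v_k^{(0)})^2$ carried forward by the drift term, this forces $(v_1^{(s)})^2-(v_k^{(s)})^2\ge 1/(2d)$ and therefore replaces every factor in the product by at most $1-\eta|\mu_4-3|/(2d)$. This already shows that $\vb^{(T_{\eta,0.5}^o)}\in\cD_{\text{warm-aux}}$. On $[T_{\eta,0.5}^o,T_{\eta,\tau}^o]$ I would keep the same conservative lower bound $1/(2d)$ on $(v_1^{(s)})^2-(v_k^{(s)})^2$ (the true contraction inside $\cD_{\text{warm-aux}}$ is in fact faster, but $1/(2d)$ suffices to match the rate claimed in \eqref{eq:uniform_cr}), propagate the recursion, and then sum squares over $k\in[2,d]$: the deterministic part yields the decay term $\sqrt{d}(1-\eta|\mu_4-3|/(2d))^{t-T_{\eta,0.5}^o}$, and the $d-1$ martingale remainders combine to produce the $\tilde\cO(\sqrt{d^3\eta})$ noise term. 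The stated failure probability then follows from a union bound over the $d-1$ coordinate-pair events, the initial-geometry event, and the stopping-time/monotonicity event, matching the warm-case budget of Lemma \ref{lemm:convergence_result2} up to the additional terms that account for these new events.

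The main obstacle is the simultaneous control, over the longer horizon $T_{\eta,\tau}^o=\tilde\Theta(d/\eta)$, of the $d-1$ coordinate-pair martingales together with the lower bound on $(v_1^{(s)})^2$. In Lemma \ref{lemm:fast_expo} the quantity $v_1^{(s)}$ is essentially constant and of order one, which makes the product argument almost automatic; here $v_1^{(s)}$ begins at $\Theta(1/\sqrt d)$ and can transiently shrink under stochastic fluctuations, and if it dips below $1/\sqrt{2d}$ then the sign of $(v_1^{(s)})^2-(v_k^{(s)})^2$ can no longer be guaranteed and the whole product argument collapses. Sharpening the Freedman-type concentration to the $\log^{5/2}\delta^{-1}$ resolution used in Lemma \ref{lemm:fast_expo}, while simultaneously handling the $d$-fold increase in both the stopping time and the number of coordinate pairs, is what I expect to be the most delicate technical step.
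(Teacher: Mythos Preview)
Your proposal has a genuine gap in the cold-initialization phase. You plan to track $U_k^{(t)}=v_k^{(t)}/v_1^{(t)}$ from time $0$ and use the contraction factor $1-\eta|\mu_4-3|\bigl((v_1^{(s)})^2-(v_k^{(s)})^2\bigr)$, replacing the gap by the uniform lower bound $1/(2d)$. But that lower bound is not available initially: by definition of $\cI$ you only get $(v_1^{(0)})^2\ge (v_k^{(0)})^2$, i.e.\ gap $\ge 0$, and for the coordinate $k$ with the second-largest $|v_k^{(0)}|$ the initial gap is an order-statistics \emph{spacing}, of size $\tilde\Theta(\epsilon/d)$ with probability $1-O(\epsilon)$, not $\Theta(1/d)$. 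With such a gap the $U_k$ contraction rate is $1-\tilde O(\eta\epsilon/d)$, and running for $T_{\eta,0.5}^o=\tilde\Theta(d/\eta)$ steps yields a total contraction of only $e^{-\tilde O(\epsilon)}\approx 1$, which is useless. Your stopping-time argument that $(v_1^{(s)})^2\ge 1/(2d)$ does not rescue this: a lower bound on $v_1^2$ alone says nothing about $v_1^2-v_k^2$. The sentence ``combined with the deterministic initial ordering \ldots\ this forces $(v_1^{(s)})^2-(v_k^{(s)})^2\ge 1/(2d)$'' is where the argument breaks.

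The paper resolves this by introducing a second tracked quantity $W_k^{(t)}=\bigl((v_1^{(t)})^2-(v_k^{(t)})^2\bigr)/(v_k^{(t)})^2$ and running a two-phase analysis. In the cold phase (Lemma~\ref{lemm:hitt}) one shows that $W_k^{(t)}$ obeys a multiplicative \emph{growth} recursion $W_k^{(t)}\approx\bigl(1+2\eta|\mu_4-3|(v_1^{(t-1)})^2\bigr)W_k^{(t-1)}$, whose rate depends only on $v_1^2\ge 1/d$ and not on the gap itself. Coupled with a quantitative initial-spacing estimate (Lemma~\ref{lemm:initW}, which gives $\min_{k\ge 2}W_k^{(0)}\ge \epsilon/(8\log\epsilon^{-1}\log d)$ via the spacing between the top two order statistics of $\log\chi^2$ variables --- this is the anti-concentration result you actually need, and is where the hypothesis $d\ge 2\sqrt{2\pi e}\log\epsilon^{-1}+1$ enters), this drives each $W_k^{(t)}$ up to $2$ within $T_{\eta,0.5}^o$ steps, placing $\vb^{(t)}$ in the coordinate-wise intermediate region $\cD_{\text{mid},k}$. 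Only then does the paper switch to the $U_k$ analysis (Lemma~\ref{lemm:expo}), where the bound $(v_1^{(s)})^2-(v_k^{(s)})^2\ge 1/(2d)$ is now legitimate by definition of $\cD_{\text{mid-aux},k}$. The extra $\log^2 d$ factor in the scaling condition \eqref{eq:uniform_scaling} and the $3\epsilon$ term in the failure probability both trace back to Lemma~\ref{lemm:initW}, which your outline never invokes.
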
 
Under the uniform initialization assumption, Lemma \ref{lemm:convergence_result} shows that the term $\left| \tan \angle\left(\ub^{(t)}, \ab_\cI \right) \right|$ can be upper bounded by the sum of two summands, with
the first term geometrically decaying from $\sqrt{d}$ at rate $1-\eta|\mu_4-3| / (2d)$, and the second being the noise-induced error $\tilde\cO(\sqrt{d^3\eta})$.
The key idea behind the proof is that the scaling condition \eqref{eq:uniform_scaling} ensures that the iterate avoids the set where the unstable stationary points lie, and hence efficiently contracts to the basin of attraction of the independent component pairs.

Analogous to Theorem \ref{theo:finite_sample2}, when the sample size $T$ satisfies the scaling condition $T = \tilde\Omega(d^3)$, one can carefully choose a stepsize $\eta = \tilde\Theta(d \log T / T)$ and establish a finite-sample bound in $T$ based on Lemma \ref{lemm:convergence_result}. We formulate this fact as our second main theorem.

\begin{theorem}[Finite-Sample Error Bound with Uniform Initialization]\label{theo:finite_sample}
Let the dimension $d \ge 2$, let Assumption \ref{assu:distribution} hold, and let initialization $\ub^{(0)}$ be uniformly sampled from the unit sphere $\cD_1$.
For sample size $T \ge 100$, set the stepsize $\eta$ as
$
\eta(T)
=
\dfrac{4 d \log\left(\frac{|\mu_4 - 3|^2}{4 B^8 d} T\right)}{|\mu_4 - 3| T}
.
$
Then, for any fixed positive numbers $d \ge 2, T \ge 100, \epsilon \in (0, 1 / 4]$ satisfying the scaling condition
\beq\label{eq:uniform_scaling2}
d \ge 2\sqrt{2\pi e} \log\epsilon^{-1} + 1
,\qquad
C_{\ref{theo:finite_sample}, T}^* \log^8(C_{\ref{theo:finite_sample}, T}' \epsilon^{-1} d T)
\cdot
\frac{B^8}{|\mu_4 - 3|^2}
\cdot
\frac{d^3 \log^2 d \log^2 T}{T}
\le
\frac{\epsilon^2}{\log^2\epsilon^{-1}}
,
\eeq
there exists a uniformly distributed random variable $\mathcal{I} \in \{1,\dots,d\}$ and an event $\cH_{\ref{theo:finite_sample}, T}$ with $\PP(\cH_{\ref{theo:finite_sample}, T}) \ge 1 - 4 \epsilon$ such that on $\cH_{\ref{theo:finite_sample}, T}$, iteration $\ub^{(t)}$ of Algorithm \ref{algo:ICA} satisfies
$$
\left| \tan \angle\left(\ub^{(T)}, \ab_\cI\right) \right|
\le
C_{\ref{theo:finite_sample}, T} \log^{5 / 2}(C_{\ref{theo:finite_sample}, T}' \epsilon^{-1} d)
\cdot
\frac{B^4}{|\mu_4 - 3|}
\cdot
\sqrt{
\frac{d^4 \log^2 T}{T}
}
,
$$
where $C_{\ref{theo:finite_sample}, T}, C_{\ref{theo:finite_sample}, T}^*, C_{\ref{theo:finite_sample}, T}'$ are positive, absolute constants.
\end{theorem}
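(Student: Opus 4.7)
The plan is to specialize Lemma~\ref{lemm:convergence_result} to the chosen stepsize $\eta(T) = 4d \log(|\mu_4 - 3|^2 T/(4 B^8 d))/(|\mu_4-3| T)$ by selecting constant-order values of the time-scale parameter $\tau$ and the high-probability parameter $\delta$, in direct analogy with how Theorem~\ref{theo:finite_sample2} would be derived from Lemma~\ref{lemm:convergence_result2}. Writing $L := \log(|\mu_4 - 3|^2 T/(4 B^8 d))$ for brevity, a short calculation using $-\log(1-x) = x(1+o(1))$ shows that the rescaled horizon defined in \eqref{Tetatau} satisfies $T_{\eta,\tau}^o \approx \tau T/2$ under \eqref{eq:uniform_scaling2}. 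Hence choosing $\tau = 2$ places the target iteration count $t = T$ safely inside the admissible window $[T_{\eta,0.5}^o, T_{\eta,\tau}^o]$ in which the conclusion \eqref{eq:uniform_cr} of Lemma~\ref{lemm:convergence_result} is valid.

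I would then take $\delta$ on the order of $\epsilon/(cd)$ for an absolute constant $c>0$ chosen so that $(6\tau + 27 + 10368/\log^5\delta^{-1}) d\delta \le \epsilon$; this ensures the total failure probability in the lemma is at most $4\epsilon$, matching the theorem statement, and gives $\log\delta^{-1} \lesssim \log(d/\epsilon)$ and $\log(T_{\eta,1}^o \delta^{-1}) \lesssim \log(dT/\epsilon)$. Substituting $\eta(T)$ into the lemma's scaling condition \eqref{eq:uniform_scaling} then reduces it, after absorbing logarithms, to a condition of precisely the form \eqref{eq:uniform_scaling2}, so that Lemma~\ref{lemm:convergence_result} indeed applies on an event $\cH_{\ref{theo:finite_sample}, T} \subseteq \cH_{\ref{lemm:convergence_result}, L}$ of probability at least $1 - 4\epsilon$, with the same uniformly distributed coordinate $\mathcal{I} \in [d]$ supplied by the lemma.

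Finally, I would evaluate \eqref{eq:uniform_cr} at $t = T$ term by term. Substituting $\eta(T)$ collapses the noise factor $\sqrt{d^3 \eta \log(|\mu_4 - 3|/(B^8 \eta))}$ to an expression of order $d^2 L/\sqrt{|\mu_4-3|\,T}$, and combining with the prefactor $\sqrt{\tau+1}\,C\,\log^{5/2}\delta^{-1}\cdot B^4/|\mu_4-3|^{1/2}$ yields the claimed bound $C_{\ref{theo:finite_sample}, T}\log^{5/2}(C_{\ref{theo:finite_sample}, T}' \epsilon^{-1}d)\cdot (B^4/|\mu_4-3|)\cdot \sqrt{d^4 \log^2 T/T}$. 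For the geometric term $\sqrt{d}\,(1 - \eta|\mu_4-3|/(2d))^{T - T_{\eta,0.5}^o}$, using $T - T_{\eta,0.5}^o \ge 3T/4$ together with $\eta(T)$ produces a decay of order $\sqrt{d}\cdot (4B^8 d/(|\mu_4-3|^2 T))^{3/2}$, which is strictly of smaller order than the noise term under \eqref{eq:uniform_scaling2} and can be absorbed into the final absolute constant.

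The main obstacle is bookkeeping the polylogarithmic factors cleanly. Since $\eta(T)$ itself is $\log T$-dependent, nested logarithms such as $\log(|\mu_4-3|/(B^8\eta(T))) = L - \log L$ appear throughout both the scaling-condition verification and the noise-term substitution, and each must be shown to be dominated by $\log(dT/\epsilon)$ under \eqref{eq:uniform_scaling2}. The manipulation is mechanical but tedious, and is structurally identical to the corresponding stepsize-specialization argument that converts Lemma~\ref{lemm:convergence_result2} into Theorem~\ref{theo:finite_sample2}; essentially no new ideas beyond Lemma~\ref{lemm:convergence_result} itself are required.
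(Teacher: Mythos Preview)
Your plan is essentially the paper's proof: specialize Lemma~\ref{lemm:convergence_result} at $\eta(T)$, pick a constant $\tau$ and $\delta \asymp \epsilon/d$, verify that \eqref{eq:uniform_scaling} reduces to \eqref{eq:uniform_scaling2}, and bound both summands of \eqref{eq:uniform_cr} at $t=T$.

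One quantitative caveat: your approximation $T_{\eta,\tau}^o \approx \tau T/2$ is only leading-order, and once you replace it by rigorous two-sided bounds (the paper derives $\tau T/5 \le T_{\eta(T),\tau}^o \le (\tau+1)T/2 - 1$ from $4/(5x) \le 1/(-\log(1-x)) \le 1/x$ on $(0,1/3]$), the choice $\tau=2$ no longer guarantees $T \le T_{\eta,\tau}^o$; the paper takes $\tau=5$ and works on the window $[T_{\eta(T),1}^o+1,\,T_{\eta(T),5}^o]$. Likewise your estimate $T-T_{\eta,0.5}^o \ge 3T/4$ leans on the same loose approximation; the paper sidesteps this by using $t - T_{\eta,0.5}^o \ge T_{\eta,0.5}^o$ together with the defining relation of $T_{\eta,0.5}^o$ to bound the geometric term directly as $\sqrt{d}\cdot B^4\eta^{1/2}/|\mu_4-3|^{1/2}$, which is already dominated by the noise term. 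These are constant-level fixes; the structure of your argument is correct.
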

Theorem \ref{theo:finite_sample} achieves, under the scaling condition $T = \tilde\Omega(d^3)$, an $\tilde\cO(\sqrt{d^4 / T})$ finite-sample error bound on $\left| \tan \angle\left(\ub^{(T)}, \ab_\cI \right) \right|$ for some $\cI$ drawn uniformly at random in $[d]$.
With Theorems \ref{theo:finite_sample2} and \ref{theo:finite_sample} for warm and uniform initializations respectively, a specific choice of stepsize allows us to have the best of the two worlds.
Assuming prior knowledge of the sample size $T$, we initialize $\ub^{(0)}$ uniformly at random from the unit sphere $\cD_1$ and run Algorithm \ref{algo:ICA} in two consecutive phases, each using $T / 2$ observations:

\begin{itemize}
\item
In the first phase, we initialize $\ub^{(0)}$ uniformly at random on unit sphere $\cD_1$, pick a constant stepsize $\eta_1 = \tilde\Theta(d \log T / T)$ and update iteration $\ub^{(t)}$ via \eqref{tensorialSGD} for $T / 2$ iterates.
Theorem \ref{theo:finite_sample} guarantees with high probability that $\ub^{(T/2)}$ satisfies the warm initialization condition \eqref{eq:lemm_warm} under the scaling condition $T = \tilde\Omega(d^4)$;
 
\item
In the second phase, we warm-initialize the algorithm using the output of the first phase $\ub^{(T / 2)}$, pick a constant stepsize $\eta_2 = \tilde\Theta(\log T / T)$ and update the iteration $\ub^{(t)}$ via \eqref{tensorialSGD} for $T / 2$ iterates.
The last iterate achieves an error bound of $\tilde\cO(\sqrt{d / T})$ as indicated by Theorem \ref{theo:finite_sample2}.
\end{itemize}
This two-phase procedure yields an improved finite-sample error bound $\tilde\cO(\sqrt{d / T})$ under the uniform initialization and scaling condition $T = \tilde\Omega(d^4)$, formally stated in the following corollary.

\begin{corollary}[Improved Finite-Sample Error Bound with Uniform Initialization]\label{coro:two_phase}
Let the dimension $d \ge 2$, let Assumption \ref{assu:distribution} hold, and let initialization $\ub^{(0)}$ be uniformly sampled from the unit sphere $\cD_1$.
Set for sample size $T \ge 200$ the stepsizes as
\beq\label{stepsizes}
\eta_1(T)
=
\frac{8 d \log\left(\frac{|\mu_4 - 3|^2}{8 B^8 d}  T\right)}{|\mu_4 - 3| T}
,\qquad
\eta_2(T)
=
\frac{9 \log\left(\frac{|\mu_4 - 3|^2}{9 B^8} T\right)}{|\mu_4 - 3| T}
.\eeq
Then, for any fixed positive $\epsilon \in (0, 1 / 5]$ satisfying the scaling condition
\beq\label{eq:uniform_scaling3}
d
\ge
2\sqrt{2\pi e} \log\epsilon^{-1} + 1
,\qquad
2\max\{C_{\ref{theo:finite_sample2}, T}^*, C_{\ref{theo:finite_sample}, T}^*, C_{\ref{theo:finite_sample}, T}^2\} \log^8(C_{\ref{theo:finite_sample}, T}' \epsilon^{-1} d T)
\cdot
\frac{B^8}{|\mu_4 - 3|^2}
\cdot
\frac{d^4 \log^2 T}{T}
\le
\frac{\epsilon^2}{ \log^2\epsilon^{-1}}
,
\eeq
there exists a uniformly distributed random variable $\cI \in [d]$ and an event $\cH_{\ref{coro:two_phase}, C}$ with $\PP(\cH_{\ref{coro:two_phase}, C}) \ge 1 - 5\epsilon$ such that on the event $\cH_{\ref{coro:two_phase}, C}$, running Algorithm \ref{algo:ICA} for $T/2$ iterates with stepsize $\eta_1(T)$ followed by $T/2$ iterates with stepsize $\eta_2(T)$ outputs an $\ub^{(T)}$ satisfying
\blue{$$
\left| \tan \angle\left(\ub^{(T)}, \ab_{\cI}\right) \right|
	\le
\sqrt{2} C_{\ref{theo:finite_sample2}, T} \log^{5 / 2}(C_{\ref{theo:finite_sample2}, T}' \epsilon^{-1} d)
\cdot
\frac{B^4}{|\mu_4 - 3|}
\cdot
\sqrt{\frac{d \log^2 T}{T}}
.
$$}%
where $C_{\ref{theo:finite_sample2}, T}, C_{\ref{theo:finite_sample2}, T}',
C_{\ref{theo:finite_sample}, T}, C_{\ref{theo:finite_sample}, T}^*, C_{\ref{theo:finite_sample}, T}'$ are positive, absolute constants defined earlier in Theorems \ref{theo:finite_sample2} and \ref{theo:finite_sample}.
\end{corollary}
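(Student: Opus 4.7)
The plan is to execute the two-phase algorithm literally and chain together Theorems \ref{theo:finite_sample} and \ref{theo:finite_sample2} via a union bound, so the proof is really a careful bookkeeping exercise rather than a new piece of analysis.

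First, I would verify that the two stepsizes in \eqref{stepsizes} are precisely the stepsizes prescribed by the two finite-sample theorems when the relevant sample size is $T/2$ rather than $T$. Plugging $T/2$ into the formula \eqref{etaPickT} from Theorem \ref{theo:finite_sample2} yields $\eta(T/2) = \frac{9 \log(\frac{|\mu_4-3|^2}{9 B^8} T)}{|\mu_4 - 3| T}$, which is exactly $\eta_2(T)$; an analogous substitution in the stepsize of Theorem \ref{theo:finite_sample} gives exactly $\eta_1(T)$. This means that each of the two phases can be treated as a standalone instance of the corresponding theorem, provided its scaling condition holds.

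Next, I would apply Theorem \ref{theo:finite_sample} to phase one, with sample size $T/2$, uniform initialization and stepsize $\eta_1(T)$. The scaling condition \eqref{eq:uniform_scaling2} (with $T$ replaced by $T/2$) is implied by the corollary's scaling condition \eqref{eq:uniform_scaling3} because of the extra factor of $2$ and the inclusion of $C_{\ref{theo:finite_sample},T}^*$ in the maximum. The theorem then produces a uniformly distributed random index $\cI \in [d]$ and an event of probability at least $1 - 4\epsilon$ on which
\[
\left| \tan \angle\left(\ub^{(T/2)}, \ab_\cI\right) \right|
\le
C_{\ref{theo:finite_sample}, T} \log^{5/2}(C_{\ref{theo:finite_sample}, T}' \epsilon^{-1} d)
\cdot
\frac{B^4}{|\mu_4 - 3|}
\cdot
\sqrt{\frac{d^4 \log^2 T}{T}}.
\]
The key quantitative check is that the right-hand side is bounded by $1/\sqrt{3}$, so that $\ub^{(T/2)}$ satisfies the warm initialization condition \eqref{eq:lemm_warm} with respect to $\ab_\cI$. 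Squaring the bound gives a quantity controlled exactly by $C_{\ref{theo:finite_sample},T}^2 \log^5 \cdot \frac{B^8}{|\mu_4-3|^2} \cdot \frac{d^4 \log^2 T}{T}$, which the scaling condition \eqref{eq:uniform_scaling3} forces to be at most $\epsilon^2/\log^2\epsilon^{-1} \le 1/3$; this is the reason $C_{\ref{theo:finite_sample}, T}^2$ appears in the maximum. I consider this verification the main (and essentially the only) obstacle in the proof: making sure the three constants in the scaling condition's maximum simultaneously deliver (a) admissibility of phase one, (b) admissibility of phase two with $T/2$ samples, and (c) entrance into the warm-initialization regime \eqref{eq:lemm_warm}.

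Finally, conditioning on the phase-one success event and treating $\cI$ as fixed, I would apply Theorem \ref{theo:finite_sample2} to phase two with sample size $T/2$, warm initialization $\ub^{(T/2)}$ and stepsize $\eta_2(T)$. Its scaling condition \eqref{eq:warm_scaling2} is again implied by \eqref{eq:uniform_scaling3}, yielding an event of probability at least $1-\epsilon$ on which
\[
\left| \tan \angle\left(\ub^{(T)}, \ab_\cI\right) \right|
\le
C_{\ref{theo:finite_sample2}, T} \log^{5/2}(C_{\ref{theo:finite_sample2}, T}' \epsilon^{-1} d)
\cdot
\frac{B^4}{|\mu_4 - 3|}
\cdot
\sqrt{\frac{d \log^2(T/2)}{T/2}}.
\]
Using $\log^2(T/2)/(T/2) \le 2 \log^2 T / T$ absorbs the factor-of-two sample loss into the $\sqrt{2}$ that appears in the stated bound. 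A union bound of the two events yields the $5\epsilon$ failure probability, and since $\cI$ is uniformly distributed by Theorem \ref{theo:finite_sample}, the conclusion of the corollary follows.
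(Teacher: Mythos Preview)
Your proposal is correct and follows precisely the approach the paper outlines in the discussion preceding the corollary: apply Theorem \ref{theo:finite_sample} to the first $T/2$ iterates to land in the warm region, then apply Theorem \ref{theo:finite_sample2} to the remaining $T/2$ iterates, and combine via a union bound. You have also correctly identified the role of each constant in the maximum in \eqref{eq:uniform_scaling3}, the origin of the $\sqrt{2}$ factor, and the $4\epsilon + \epsilon = 5\epsilon$ probability accounting; the paper itself does not supply a formal proof beyond this sketch, so your writeup is in fact more detailed than the original.
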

\blue{
We make several remarks:
\begin{enumerate}[label=(\roman*)]
\item
Our dynamics-based analysis of uniformly-initialized online tensorial ICA is different from that of PCA in \cite{li2018near} (see \S\ref{sec:intro}).  In particular, we provide a coordinate-pair analysis of the algorithm to cope with the landscape of ICA.
A two-phase procedure is, however, required to achieve a rate that is sharp in dimension $d$.
\item
\cite{ge2015escaping} make use of artificial noise injection in their noisy projected SGD algorithm, which includes online tensorial independent component analysis as an application scenario. The analysis provided in the appendix of \cite{ge2015escaping}, however, is worsened by its unrealistic isotropic covariance assumption for the incurred stochastic noise, which is not satisfied by the tensorial ICA algorithm, noise-injected or not.
A straightforward extension to the generic noise satisfied by the tensorial ICA condition was claimed, but the analysis of such a case is not available in \cite{ge2015escaping}.
\item
By choosing $\epsilon$ in rate analysis above as a small positive constant, say $1/12$, we achieve a scaling condition such that if $d^4 / T$ is sufficiently small up to a polylogarithmic factor, we achieve the finite-sample convergence rate with probability no less than $7/12$.
In comparison, \cite{ge2015escaping} achieve an $d^\alpha / T^{1/4}$ error bound for some $\alpha\ge 2$, under a more stringent assumption that all independent components are almost surely bounded, which indicates a scaling condition of $d^8 / T \to 0$ at best.%
%\footnote{
% \blue{
% It is worth to remark here that the scaling condition imposed in Theorem \ref{theo:informal} leads to an error bound that depends polynomially on the inverse unsuccessful probability $1/(5\epsilon)$.
% Recall that the uniformly chosen independent component pair $\cI \equiv\argmin_{i\in [d]} \tan^2 \angle\left(\ub^{(0)}, \ab_i\right)$ is determined solely by $\ub^{(0)}$ the initialization.
% We can hence boost the successful probability by running the algorithm of the same initialization for a number of rounds logarithmically dependent on the inverse unsuccessful probability, then outputs the geometric median of such.
% Lending some helps from efficient geometric median finding algorithms \cite{minsker2015geometric,cohen2016geometric} such a polynomial dependency can be potentially be boosted to logarithmic dependency.
% Details are, however, beyond the scope of this paper.
% }
% }
\footnote{
\blue{Our scaling condition is the best known when ignoring the polylogarithmic factors.
Optimistically, if analysis in the $\epsilon^{-4}$ or $\epsilon^{-3.5}$ state-of-the-art complexity results in \cite{jin2019nonconvex,fang2019sharp} can be carried over to  Riemannian optimization, setting $\sqrt{\epsilon} = 1/d$ implies a complexity of $d^9$ or $d^8$ at best (for the ICA objective the smallest Hessian eigenvalue changes from $-1/d$ to $\Omega(1)$ so the Hessian-Lipschitz constant is $\Omega(1)$ and the stochastic gradient admits a variance of $O(d)$).
}}
The scaling condition imposed in Theorem \ref{theo:informal}, however, leads to an error bound that depends polynomially on the inverse unsuccessful probability $1/(5\epsilon)$.
\end{enumerate}
}

\pb
In the rest of this section we study the uniform initialization case and prove Theorem \ref{theo:finite_sample}.
The key idea behind our analysis is that the uniform initialization is sufficiently far from the set where the unstable stationary points lie (with high probability), and thus a delicate concentration analysis can show that  saddle-point avoidance is guaranteed throughout the entire online tensorial ICA algorithm.
Inherited from the warm-initialization analysis in \S\ref{sec:warmsec}, we recall the rotated iteration $\vb^{(t)} \equiv \Pb \Ab^\top \ub^{(t)}$ and the rotated observations $\bY^{(t)} = \Pb \Ab^\top \bX^{(t)}$, so our online tensorial ICA update rule can still be translated into \eqref{eq:rescaled_update}.
Here the permutation matrix $\Pb$ has $\Pb(\cI;1) = \Pb(1;\cI) = 1 = \Pb(j;j)$ for $j\in \{1,\cI\}^c$ and 0 elsewhere, in which $\cI \equiv\argmin_{i\in [d]} \tan^2 \angle\left(\ub^{(0)}, \ab_i\right)$.%
\red{ (also appeared previously as \eqref{eq:u0T} in Remark \ref{rema:u0T}).}
We let the coordinate-wise intermediate initialization region for each $k \in [2, d]$ be
\beq\label{Sintermediate}
\cD_{\text{mid}, k}
\equiv
\left\{ \vb\in \cD_1: v_1^2 \ge \max_{i\in [2,d]} v_i^2 \text{ and } v_1^2 \ge 3 v_k^2 \right\}
.
\eeq
In addition, we let the cold initialization region be
\beq\label{eq:cold}
\cD_{\text{cold}}
=
\left\{
\vb \in \cD^{d - 1} : 
v_1^2
\ge
\max_{i\in [2,d]} v_i^2
\right\}.
\eeq
By definition of the rotated iteration $\{\vb^{(t)}\}$ and index $\cI$, we know that $\vb^{(0)} \in \cD_{\text{cold}}$ always holds.
As we will see later in Lemmas \ref{lemm:expo}, \ref{lemm:initW} and \ref{lemm:hitt} of this section, when initialized uniformly at random on the unit sphere $\cD_1$ a gap between $(v_1^{(0)})^2$ and $\max_{2 \le k \le d} (v_k^{(0)})^2$ persists on a high-probability event ($\cH_{\ref{lemm:initW}, L}$) as the data dimension $d\to\infty$.
Moreover on a high-probability event ($\bigcap_{k\in [2,d]} \cH_{k; \ref{lemm:expo}, L}$), the iterate $\vb^{(t)}$ enters the intersection of intermediate regions $\bigcap_{k\in [2,d]} \cD_{\text{mid}, k}$ within $T_{\eta, 0.5}^o$ iterations.
After $\vb^{(t)}$ enters $\bigcap_{k\in [2,d]} \cD_{\text{mid}, k}$, on a third high-probability event ($\bigcap_{k\in [2,d]} \cH_{k; \ref{lemm:hitt}, L}$) it decays exponentially fast and stays within a $\tilde{O}(\sqrt{d^3 \eta})$-neighborhood of the independent component pair $\pm \eb_1$.

\pb\subsection{Initialization in the Intermediate Region}\label{sec:uniformsub1}
Recall the intermediate initialization region $\cD_{\text{mid}, k}$ defined in \eqref{Sintermediate} for each $k \in [2, d]$.
We also define a slightly larger coordinate-wise intermediate auxiliary region for each $k \in [2, d]$ as
%\beq\label{Saux}
$
\cD_{\text{mid-aux}, k} = \left\{ \vb\in \cD_1: v_1^2 \ge \max_{i \ge 2} v_i^2 \text{ and } v_1^2 \ge 2 v_k^2 \right\}
$.
%\eeq
When  $\vb^{(0)} \in \cD_{\text{mid}, k}$, we define the first time the iterate exits $\cD_{\text{mid-aux}, k}$ as
\beq\label{cTw}
\cT_{w, k}
=
\inf\left\{ t\ge 1: 
\vb^{(t)} \in \cD_{\text{mid-aux}, k}^c
\right\}
.
\eeq
Thus, for each $k \in [2, d]$, $\cT_{w, k}$ is a stopping time with respect to filtration $\cF_t = \sigma(\vb^{(0)}, \bY^{(1)}, \dots, \bY^{(t)})$ (we suppose that all $k$ that appear in the rest of our discussion satisfy $k \in [2, d]$, unless stated otherwise).

Our goal is to prove the following high-probability bound for each coordinate $k$.
For the analysis of the intermediate iterates, we recall the notation $U_k^{(t)} = v_k^{(t)} / v_1^{(t)}$ previously defined in \eqref{eq:Uk}.

\begin{lemma}\label{lemm:expo}
Let the settings in Lemma \ref{lemm:convergence_result} hold, and fix the coordinate $k\in [2, d]$ and value $\tau > 0$.
Then for any positive numbers $\eta, \delta$ satisfying the scaling condition \eqref{eq:uniform_scaling}, and given the coordinate-wise intermediate initialization condition $\vb^{(0)} \in \cD_{\text{mid}, k}$, there exists an event $\cH_{k; \ref{lemm:expo}, L}$ satisfying
$
\PP(\cH_{k; \ref{lemm:expo}, L})
\ge
1 - \left(
6\tau + 12 + \frac{5184}{\log^5\delta^{-1}}
\right)\delta
,
$
such that on event $\cH_{k; \ref{lemm:expo}, L}$ the following holds:
\beq\label{eq:mid_prop1}
\sup_{t\le T_{\eta, \tau}^o  \land \cT_{w, k}} \left|
 U_k^{(t)}  - U_k^{(0)} \prod_{s = 0}^{t - 1} \left[ 1 - \eta |\mu_4 - 3| \left( (v_1^{(s)})^2 - (v_k^{(s)})^2 \right) \right]
\right| 
\le 
2 C_{\ref{lemm:expo}, L} \log^{5 / 2}\delta^{-1}
\cdot
B^4
\cdot
d^{1 / 2} \eta (T_{\eta, \tau}^o)^{1 / 2}
,
\eeq
where $C_{\ref{lemm:expo}, L}$ is a positive, absolute constant.
\end{lemma}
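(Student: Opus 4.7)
The plan is to mirror the strategy used in the warm-initialization counterpart (Lemma \ref{lemm:fast_expo}) while carefully tracking the fact that $v_1^{(t)}$ is no longer guaranteed to be close to $1$; on the stopped event $t\le \cT_{w,k}$ it can be as small as $\Theta(1/\sqrt{d})$, and this is precisely what produces the extra $\sqrt{d}$ factor in \eqref{eq:mid_prop1} relative to \eqref{eq:prop:fast_expo1}.

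First I would derive a one-step recursion for $U_k^{(t)} = v_k^{(t)}/v_1^{(t)}$. Since the projection $\Pi_1$ contributes only a common normalizing scalar that cancels in the ratio, the update \eqref{eq:rescaled_update} gives
\[
U_k^{(t)}
=
\frac{v_k^{(t-1)} + \eta s\,((\vb^{(t-1)})^\top \bY^{(t)})^3\,Y_k^{(t)}}{v_1^{(t-1)} + \eta s\,((\vb^{(t-1)})^\top \bY^{(t)})^3\,Y_1^{(t)}},
\qquad s := \sign(\mu_4-3).
\]
Taylor-expanding $(1+x)^{-1}$ in the denominator around $x=0$, I would split this recursion into a deterministic conditional-mean drift, a martingale-difference increment, and higher-order stochastic remainders.

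Second I would compute $\EE[((\vb)^\top\bY)^3 Y_k \mid \vb]$ and $\EE[((\vb)^\top\bY)^3 Y_1 \mid \vb]$ using independence of the rotated coordinates $Y_i$ together with the moment assumptions $\mu_1=0$, $\mu_2=1$, $\mu_4$: only the ``all-equal'' and ``two-pair'' index patterns survive, yielding $v_k(3+(\mu_4-3)v_k^2)$ and $v_1(3+(\mu_4-3)v_1^2)$. Dividing these out, the leading multiplicative factor in the deterministic part is exactly $1 - \eta|\mu_4-3|\bigl((v_1^{(s)})^2 - (v_k^{(s)})^2\bigr)$, matching the product appearing in \eqref{eq:mid_prop1}. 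Unfolding the recursion up to time $t$ then reduces the claim to a high-probability bound on the accumulated stochastic error.

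Third I would bound the cumulative noise by a Freedman-type martingale Bernstein inequality. Each per-step increment has the form $\eta\,(v_1^{(t-1)})^{-1}\cdot[\text{cubic-in-}\bY\text{ polynomial}]$; Assumption~\ref{assu:distribution}(iii) supplies $B^4$-type bounds on the relevant eighth moments, and the standard Orlicz-$\psi_2$ truncation argument yields the polylogarithmic $\log^{5/2}\delta^{-1}$ factor. Crucially, the stopping time $\cT_{w,k}$ built into the supremum ensures $v_1^{(t)}\ge 1/\sqrt{d}$ uniformly on $\{t\le\cT_{w,k}\}$ (since $v_1^2\ge \max_i v_i^2$ on $\cD_{\text{mid-aux},k}$), so the inverse factor contributes at most $\sqrt{d}$. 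The resulting bound scales like $\eta\sqrt{d}\cdot\sqrt{T_{\eta,\tau}^o}\cdot B^4\cdot\log^{5/2}\delta^{-1}$, which is exactly the right-hand side of \eqref{eq:mid_prop1}.

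The main obstacle will be ensuring that the higher-order Taylor remainders (both from $(1+x)^{-1}$ within each step and from iterating the recursion) do not accumulate additional powers of $d$ over the long window $T_{\eta,\tau}^o = \Theta(d\log\eta^{-1}/\eta)$, which is a factor of $d$ longer than in the warm case. The scaling condition \eqref{eq:uniform_scaling} is designed precisely so that $\eta\, d\log(|\mu_4-3|\eta^{-1}/B^8)$ is polylogarithmically small, which keeps each nonlinear per-step remainder negligible even after amplification by the $\sqrt{d}$ factor arising from $v_1^{-1}$. The stated failure probability $\bigl(6\tau + 12 + 5184/\log^5\delta^{-1}\bigr)\delta$ would then be obtained by union-bounding Freedman tails over dyadic time scales within $[0, T_{\eta,\tau}^o\wedge \cT_{w,k}]$, together with the union bound needed to dominate the supremum and the truncation events for the sub-Gaussian tails.
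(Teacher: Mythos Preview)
Your proposal is correct and follows the same approach as the paper. Two execution differences are worth noting. First, the paper packages the concentration step into a single maximal Orlicz-$\psi_{1/2}$ martingale inequality (Theorem~\ref{theo:concentration}) that controls $\max_{t}\bigl|\sum_{s\le t} e_k^{(s)}\bigr|$ directly, so no dyadic time-scale union bound is needed. Second, rather than unfolding the recursion---which produces a sum $\sum_{s} e_k^{(s)}\prod_{r=s}^{t-1}\alpha_r$ whose random weights $\prod_{r=s}^{t-1}\alpha_r$ are not $\cF_{s-1}$-predictable and hence do not form a martingale---the paper first bounds the unweighted martingale $\sum_{s} e_k^{(s)}$ and the remainder $\sum_{s} Q_{U,k}^{(s)}$ in the additive representation (Lemma~\ref{lemm:representation}), and only then converts to the product form via a reversed discrete Gronwall inequality (Lemma~\ref{lemm:grw_type}). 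Your direct-unfolding route can also be made to work via Abel summation together with the telescoping identity $\sum_{s}(P_{s+1,t}-P_{s,t})=1-P_{1,t}\le 1$, and gives the same factor-of-$2$ loss; the paper's ordering simply sidesteps the predictability issue without needing that extra step.
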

From \eqref{eq:mid_prop1} in Lemma \ref{lemm:expo} we know that, for each coordinate $k \in [2, d]$ initialized in the intermediate region $\cD_{\text{mid}, k}$, with high probability the iteration $ U_k^{(t)}$ fluctuates around a deterministic curve that decays geometrically whenever $(v_1^{(t)})^2 - (v_k^{(t)})^2$ is bounded below by a positive number.

\pb\subsection{Initialization in the Cold Region}\label{sec:uniformsub2}
Recall the cold initialization region $\cD_{\text{cold}}$  defined earlier in \eqref{eq:cold}.
The iteration we study in the old initialization analysis is
\beq\label{eq:W_def}
W_k^{(t)} = \frac{ (v_1^{(t)})^2 - (v_k^{(t)})^2 }{(v_k^{(t)})^2 }
.
\eeq
Under the setting of uniform initialization on the unit sphere in Lemma \ref{lemm:convergence_result}, we have the following lemma. Note that the uniform initialization conditions for $\ub^{(0)}$ and $\vb^{(0)}$ are equivalent.
\begin{lemma}\label{lemm:initW}
Let $\vb^{(0)}$ be uniformly sampled from the unit sphere $\cD_1$ and $\epsilon$ be any fixed positive number, with dimension $d$ and $\epsilon$ satisfying
\beq\label{eq:dim_cond}
d
\ge
2\sqrt{2\pi e} \log\epsilon^{-1} + 1
.
\eeq
Then there exists an event $\cH_{\ref{lemm:initW}, L}$ with $\PP(\cH_{\ref{lemm:initW}, L}) \ge 1 - 3\epsilon$ such that on event $\cH_{\ref{lemm:initW}, L}$ the following holds:
\beq\label{eq:lemm_initW1}
\min_{2 \le k \le d} W_k^{(0)}
\ge
\frac{\epsilon}{8 \log\epsilon^{-1} \log d}
.\eeq
\end{lemma}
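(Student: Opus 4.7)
The plan is to reduce via rotational invariance to an estimate on the top two order statistics of i.i.d.\ $\chi_1^2$ variables, and then control both $g_{(1)}^2$ and the top gap $g_{(1)}^2 - g_{(2)}^2$ on high-probability sub-events.

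Since $\Pb$ and $\Ab^\top$ are orthogonal, $\vb^{(0)} = \Pb\Ab^\top\ub^{(0)}$ has the same distribution as $\tilde\Pb\,\mathbf{g}/\|\mathbf{g}\|$ with $\mathbf{g}\sim\mathcal{N}(0,\Ib_d)$ and $\tilde\Pb$ the (data-dependent) permutation that places the coordinate maximizing $g_i^2$ at position~$1$. Because $W_k^{(0)} = (v_1^2-v_k^2)/v_k^2$ is scale-invariant, this representation gives
\[
\min_{2\le k\le d} W_k^{(0)} \;=\; \frac{g_{(1)}^2}{g_{(2)}^2} - 1,
\]
where $g_{(1)}^2 \ge g_{(2)}^2 \ge \cdots$ are the order statistics of $g_1^2,\dots,g_d^2$. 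It thus suffices to prove $\PP\bigl(g_{(1)}^2 - g_{(2)}^2 < \delta_d\,g_{(2)}^2\bigr) \le 3\epsilon$ for $\delta_d = \epsilon/(8\log\epsilon^{-1}\log d)$.

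I would construct $\cH_{\ref{lemm:initW}, L}$ as the intersection of two sub-events. Set $m^\ast = 2\log(2d/\epsilon)$. By the Gaussian tail bound $\PP(g_1^2 > t)\le 2e^{-t/2}$ and a union bound, the event $\mathcal{E}_1 := \{g_{(1)}^2\le m^\ast\}$ satisfies $\PP(\mathcal{E}_1^c)\le\epsilon$. On $\mathcal{E}_1$ one has $g_{(2)}^2\le m^\ast$, so defining $\mathcal{E}_2 := \{g_{(1)}^2 - g_{(2)}^2 \ge \delta_d m^\ast\}$ one gets $\min_{k\ge 2} W_k^{(0)} \ge \delta_d m^\ast/g_{(2)}^2 \ge \delta_d$ on $\mathcal{E}_1\cap\mathcal{E}_2$. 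It remains to show $\PP(\mathcal{E}_2^c)\le 2\epsilon$. Let $f$ and $F$ denote the density and CDF of $\chi_1^2$. Using the joint density $d(d-1)f(y_1)f(y_2)F(y_2)^{d-2}$ of $(g_{(1)}^2,g_{(2)}^2)$ on $\{y_1>y_2\}$ together with $F(y+s)-F(y)\le s f(y)$ (from monotonicity of $f$),
\[
\PP\bigl(g_{(1)}^2 - g_{(2)}^2 < s\bigr) \;\le\; d(d-1)\,s\int_0^\infty f(y)^2 F(y)^{d-2}\,dy.
\]

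The heart of the argument is the estimate $\int_0^\infty f(y)^2 F(y)^{d-2}\,dy \le C/d^2$ for an absolute constant $C$ whose form naturally incorporates $\sqrt{2\pi e}$. I propose to obtain this by splitting at $y_0 \approx 2\log d - \log\log d$: for $y\le y_0$, Mills' ratio gives $1-F(y)\gtrsim e^{-y/2}/\sqrt{y}$, so $F(y)^{d-2}\le\exp(-c\sqrt{\log d})$ easily absorbs the mild singularity of $f^2 = e^{-y}/(2\pi y)$ at $y=0$; for $y\ge y_0$ one simply uses $F(y)^{d-2}\le 1$ and computes $\int_{y_0}^\infty e^{-y}/(2\pi y)\,dy \lesssim e^{-y_0}/y_0 \lesssim 1/d^2$. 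Combining, $\PP(g_{(1)}^2 - g_{(2)}^2 < \delta_d m^\ast) \le C\,\delta_d m^\ast$, and under the scaling hypothesis $d\ge 2\sqrt{2\pi e}\log\epsilon^{-1}+1$ one verifies $\delta_d m^\ast \le \epsilon/(4\log\epsilon^{-1}) + \epsilon/(4\log d) \le \epsilon/2$, which makes the right-hand side at most $2\epsilon$ for appropriately tracked $C$.

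The principal obstacle is the sharpness of the integral estimate. A naive use of $F(y)^{d-2}\le 1$ together with the decreasing envelope of $f$ produces only $\int f^2 F^{d-2}\le \sqrt{2\pi/e}/(d-1)$, losing one power of~$d$; this would permit at best $\delta_d = O(\epsilon/d)$, which is insufficient once $d$ grows beyond $O(\log\epsilon^{-1}\log d)$. Recovering the missing factor $1/d$ requires exploiting the complementary smallness of $f^2$ (for large $y$) and $F^{d-2}$ (for $y<y_0$) in two disjoint regimes, and tracking the constants so that the resulting threshold on $d$ matches the $2\sqrt{2\pi e}\log\epsilon^{-1}+1$ appearing in the lemma's scaling condition.
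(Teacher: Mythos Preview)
Your approach differs substantially from the paper's. The paper first passes to the log scale via $x-1\ge\log x$, obtaining $\min_k W_k^{(0)}\ge\log g_{(1)}^2-\log g_{(2)}^2$, and then analyzes the top spacing of the order statistics of $\log\chi_1^2$. The key device is the probability-integral transform: writing $M=F^{-1}(U)$ with $U$ uniform and $F$ the CDF of $\log\chi_1^2$, concavity of $F$ on $[0,\infty)$ yields $M-M_-\ge (U-U_-)/F'(M_-)$. The paper then exploits the \emph{exact} laws $U_{(d)}-U_{(d-1)}\sim\mathrm{Beta}(1,d)$ and $1-U_{(d-1)}\sim\mathrm{Beta}(2,d-1)$ to bound $U-U_-$ from below and $F'(M_-)\le 2(1-U_-)\log(1-U_-)^{-1}$ from above, each with clean probability estimates. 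This sidesteps entirely the integral $\int f^2F^{d-2}$ that your route requires, and the threshold $d\ge 2\sqrt{2\pi e}\log\epsilon^{-1}+1$ drops out directly from the condition $1-U_-\le 1/\sqrt{2\pi e}$ needed for the Gaussian-tail bound on $F'(M_-)$.

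Your density approach is in principle workable, but the sketch of the key estimate has a concrete gap. At your split point $y_0\approx 2\log d-\log\log d$ one has $1-F(y_0)\approx \sqrt{2/(\pi y_0)}\,e^{-y_0/2}\approx 1/(\sqrt{\pi}\,d)$, so $F(y_0)^{d-2}\approx e^{-1/\sqrt{\pi}}$ is a fixed constant, not $\exp(-c\sqrt{\log d})$; the uniform bound you claim on $[0,y_0]$ is therefore false near $y_0$. Moreover, even a uniform bound of that type could not ``absorb'' the singularity at $0$, since $\int_0^{y_0} e^{-y}/(2\pi y)\,dy$ diverges. There is a genuine tension: pushing $y_0$ down far enough to make $F(y_0)^{d-2}$ small ruins the tail estimate $\int_{y_0}^\infty f^2\le C/d^2$. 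A correct version needs a three-piece split (using $F(y)\lesssim\sqrt{y}$ near $0$) or, more cleanly, the change of variables $u=F(y)$, which gives $d(d-1)\int f^2F^{d-2}=d\,\EE\bigl[f(\max_{i\le d-1}\chi_i^2)\bigr]$ and makes the $O(1/\sqrt{\log d})$ behavior transparent. Either fix is more work than the paper's order-statistics argument, and matching the exact constant $2\sqrt{2\pi e}$ along your route is not obvious.
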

Our goal is to estimate the time $t$ when $\vb^{(t)}$ enters each coordinate-wise intermediate initialization region starting with the initialization gap given in Lemma \ref{lemm:initW}.
For each coordinate $k\in [2,d]$, we define the first time $\vb^{(t)}$ enters the coordinate-wise intermediate region $\cD_{\text{mid}, k}$ as
\beq\label{eq:Tck}
\cT_{c, k}
=
\inf \left\{
t \ge 1 : 
\vb^{(t)}
\in
\cD_{\text{mid}, k},
\right\}
\eeq
and the first time the iterate exits $\cD_{\text{cold}}$ without entering $\cD_{\text{mid}, k}$, earlier defined in \eqref{Sintermediate} and \eqref{eq:cold}, as
%\pagebreak
\beq\label{eq:T1}
\cT_1
=
\inf \left\{
t \ge 1 : 
\vb^{(t)}
\in
\cD_{\text{cold}}^c
\right\}.
\eeq
In other words, $\cT_{c, k}$ is the first time $t$ such that $W_k^{(t)} \ge 2$, and $\cT_1$ is the first time $\min_{i\in [2,d]} W_i^{(t)} < 0$.
By the definition of the rotated iteration $\vb^{(t)}$ we have $\vb^{(0)} \in \cD_{\text{cold}}$ always holds.
For each coordinate $k \in [2, d]$, if $\cT_{c, k} = 0$ then $\vb^{(0)} \in \cD_{\text{mid}, k}$ and the previous analysis directly applies for coordinate $k$.
The following lemma characterizes the opposite case, where the exponential growth of iteration $W_k^{(t)}$ helps us to determine when $\vb^{(t)}$ enters the intermediate region $\cD_{\text{mid}, k}$.

\begin{lemma}\label{lemm:hitt}
Let the settings in Lemma \ref{lemm:convergence_result} hold, and fix the coordinate $k \in [2, d]$.
Then, for any fixed positive numbers $\eta, \delta$ satisfying the scaling condition \eqref{eq:uniform_scaling} along with the coordinate-wise cold initialization condition $\vb^{(0)} \in \cD_{\text{cold}} \cap \cD_{\text{mid}, k}^c$, there exists an event $\cH_{k; \ref{lemm:hitt}, L}$ with
$
\PP(\cH_{k; \ref{lemm:hitt}, L})
\ge
1 - \left(
15 + \frac{5184}{\log^5\delta^{-1}}
\right)\delta
,
$
such that on event $\cH_{k; \ref{lemm:hitt}, L}$ the following holds:
\beq\label{Whitt}
\sup_{t \le T_{\eta, 0.5}^o \wedge \cT_{c, k} \wedge \cT_1} \left|  
 W_k^{(t)}  \prod_{s = 0}^{t - 1} \left( 1 + \eta |\mu_4 - 3| (v_1^{(s)})^2 \right)^{-1}  - W_k^{(0)}
  \right| 
\le
C_{\ref{lemm:hitt}, L} \log^{5 / 2}\delta^{-1}
\cdot
\frac{B^4}{|\mu_4 - 3|^{1 / 2}}
\cdot
d \eta^{1 / 2}
,
\eeq
where $C_{\ref{lemm:hitt}, L}$ is a positive, absolute constant.
\end{lemma}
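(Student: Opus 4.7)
The proof will follow the template of Lemmas \ref{lemm:fast_expo} and \ref{lemm:expo}, but applied to the cold-region ratio $W_k^{(t)}$ rather than the ratio $U_k^{(t)}$ of those lemmas. The plan is to derive a one-step recursion
$$W_k^{(t)} \;=\; W_k^{(t-1)}\bigl(1 + \eta|\mu_4-3|(v_1^{(t-1)})^2\bigr) + \Xi_t + R_t,$$
with $\Xi_t$ a conditional martingale difference with respect to $\cF_{t-1} = \sigma(\vb^{(0)}, \bY^{(1)},\dots,\bY^{(t)})$ and $R_t$ a higher-order remainder. Dividing through by $\prod_{s=0}^{t-1}(1+\eta|\mu_4-3|(v_1^{(s)})^2)$ turns $M_t \equiv W_k^{(t)}\prod_{s=0}^{t-1}(1+\eta|\mu_4-3|(v_1^{(s)})^2)^{-1}$ into an approximate martingale, and the stated bound reduces to a concentration estimate for $M_t - M_0 = M_t - W_k^{(0)}$.

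I would first pass to rotated coordinates $\vb^{(t)} = \Pb\Ab^\top\ub^{(t)}$ and $\bY^{(t)} = \Pb\Ab^\top\bX^{(t)}$, so that $\bY^{(t)}$ has independent entries by Assumption \ref{assu:distribution}(i), and observe that $W_k^{(t)}$ is invariant under $\Pi_1$, hence can be computed from the unnormalized updates $\tilde v_i^{(t)} = v_i^{(t-1)} + \eta \sign(\mu_4-3)(r^{(t)})^3 Y_i^{(t)}$ with $r^{(t)} \equiv (\vb^{(t-1)})^\top\bY^{(t)}$. Expanding $W_k^{(t)}+1 = (\tilde v_1^{(t)})^2/(\tilde v_k^{(t)})^2$ as a Taylor series in $\eta$, and using the moment identity $\EE[(r^{(t)})^3 Y_i^{(t)}\mid \cF_{t-1}] = (\mu_4-3)(v_i^{(t-1)})^3 + 3 v_i^{(t-1)}$, which follows from $\mu_1=0$, $\mu_2=1$ and independence of the $Y_j^{(t)}$ (Assumption \ref{assu:distribution}(i)-(ii)), produces the above recursion after algebraic simplification. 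Crucially, on $\{t \le T_{\eta,0.5}^o \wedge \cT_{c,k} \wedge \cT_1\}$ the iterate $\vb^{(s)}$ lies in $\cD_{\text{cold}}\cap\cD_{\text{mid},k}^c$, which gives $(v_k^{(s)})^2 \ge 1/(3d)$, $(v_1^{(s)})^2 \le 1$, and $W_k^{(s)} \in [0,2)$, keeping all coordinate ratios that appear in $\Xi_t$ and $R_t$ uniformly controlled.

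The concentration step then controls $M_t - M_0 = \sum_{s=1}^t(\Xi_s+R_s)\prod_{u=0}^{s-1}(1+\eta|\mu_4-3|(v_1^{(u)})^2)^{-1}$ via a two-stage argument: first truncate each coordinate of $\bY^{(s)}$ at $O(B\sqrt{\log(T_{\eta,0.5}^o\delta^{-1})})$ using Assumption \ref{assu:distribution}(iii), incurring only a union-bound failure probability of $O(\delta T_{\eta,0.5}^o)$; then apply a Freedman-type concentration to the truncated increments, whose conditional variance per step is of order $d\eta^2 B^8$ up to polylogarithmic factors and whose absolute size is $\tilde O(\eta B^4)$. Summing over $T_{\eta,0.5}^o = \tilde O(d/(\eta|\mu_4-3|))$ iterates, and accounting for the denominator product through its growth bound on the cold event, yields the claimed $\tilde O(d\eta^{1/2})$ fluctuation, with the $\log^{5/2}\delta^{-1}$ factor reflecting the combined cost of polynomial truncation and Freedman concentration.

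The principal obstacle is that $\Xi_s$ depends on the quartic polynomial $(r^{(s)})^3 Y_i^{(s)}$ in the sub-Gaussian vector $\bY^{(s)}$, so it is only $\psi_{1/2}$-tailed and no direct Azuma-Hoeffding bound applies; the truncation radius must be carefully tuned to balance the tail-failure probability against the Bernstein variance term, and this is precisely what yields the $\log^{5/2}\delta^{-1}$ factor rather than a $\log^{1/2}\delta^{-1}$ one. A secondary subtlety is the remainder $R_t$, which contains terms of the form $\eta^2 (r^{(t)})^6 (Y_i^{(t)})^2/(v_k^{(t-1)})^2$ whose denominator is only bounded by $3d$ on the stopping event; the scaling condition \eqref{eq:uniform_scaling} ensures that the cumulative contribution $\sum_s R_s/\prod(1+\cdots)$ remains dominated by $\tilde O(d\eta^{1/2})$, without which the final error would degrade to $\tilde O(d^{3/2}\eta^{1/2})$. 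Combining the truncation event, the Freedman concentration event, and the deterministic remainder bound through a final union bound produces the event $\cH_{k; \ref{lemm:hitt}, L}$ with the probability stated.
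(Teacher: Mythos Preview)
Your proposal is correct and follows essentially the same architecture as the paper: expand the ratio $W_k^{(t)}$ from the unnormalized update, extract the growth factor via the moment identity (the paper's Lemma~C.6 gives $(1+2\eta|\mu_4-3|(v_1^{(t-1)})^2)$ --- the missing factor of $2$ in the lemma statement is a typo), split the residual into a martingale difference $f_k^{(t)}$ and a second-order remainder $Q_{W,k}^{(t)}$, normalize by the product $P_t^o$ to obtain a linear representation, and bound the two pieces on the truncation/stopping event.

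The one genuine difference is the concentration tool. You propose truncating the data at level $\rB_o$ and then applying a Freedman-type inequality to the bounded-increment martingale; the paper instead applies its bespoke $\psi_{1/2}$-norm concentration (Theorem~E.1) directly to $\sum_s P_s^o f_k^{(s)}$, using the truncation event $\cT_{\rB_o,k}$ only to control the deterministic sum $\sum_s P_s^o Q_{W,k}^{(s)}$. Both routes exploit the geometric decay of $(P_s^o)^2$ to bound the summed squared norms by $O(B^8 d^2\eta/|\mu_4-3|)$, yielding the $d\eta^{1/2}$ scale. In fact your Freedman route would give a smaller $\log^{1/2}\delta^{-1}$ prefactor; the $\log^{5/2}\delta^{-1}$ in the statement is an artifact of the $\alpha/(\alpha+2)=1/5$ exponent in Theorem~E.1, not of the truncation--variance balance you describe, so that remark in your last paragraph is slightly off but harmless for the proof.
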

From Lemma \ref{lemm:hitt} we know that for each coordinate $k \in [2, d]$, if the initialization lies outside intermediate region $\cD_{\text{mid}, k}$, then, with high probability, the iterate $ W_k^{(t)} $ is controlled within an exponentially growing \blue{dynamics} for $T_{\eta, 0.5}^o$ iterates before it either enters the intermediate region $\cD_{\text{mid}, k}$ or exits the cold region $\cD_{\text{cold}}$.
As we will see later in the proof of Lemma \ref{lemm:convergence_result}, by putting all coordinates together we can show that $\vb^{(t)}$ rarely leaves the cold region $\cD_{\text{cold}}$, which implies that $\vb^{(t)}$ will enter the joint intermediate region $\cap_{k\in [2,d]} \cD_{\text{mid}, k}$ within $T_{\eta, 0.5}^o$ iterates with high probability.

\red{Limited by space in the main text, we provide the proof of the convergence Lemma \ref{lemm:convergence_result} in \S\ref{sec:uniform_lemm} and its finite-sample error Theorem \ref{theo:finite_sample} in \S\ref{sec:uniform_theo}, respectively.
}

\pb\section{Additional Related Literature}\label{sec:related}
\red{The method of stochastic approximation, or commonly referred to as stochastic gradient descent (SGD), recently gains tremendous popularity when solving large-scale stochastic optimization problems \citep{robbins1951stochastic,ruppert1988efficient,polyak1992acceleration,KUSHNER-YIN,lai2003stochastic} due to its exceptional performance handling massive (and often streaming) data.
On the theoretical side, many stochastic approximation algorithms and their variants have been proved to provide minimax-optimal estimators under mild initializations for statistical estimation problems that can be casted into convex stochastic optimization problems such as linear regression and principal component analysis.
}
The themes of ICA and tensor decomposition have been studied in numerous statistical and signal-processing literatures \citep{bach2002kernel, chen2006efficient, samworth2012independent, bonhomme2009consistent, eriksson2004identifiability, hallin2015r, HYVARINEN-KARHUNEN-OJA, hyvarinen1997fast, hyvarinen1999fast, hyvarinen2000independent, ilmonen2011semiparametrically, kollo2008multivariate, miettinen2015fourth, oja2006scatter, tichavsky2006performance, wang2017scaling, ge2017optimization}.
For a treatment from a spectral learning perspective (mainly for the deterministic scenario), we refer to the recently published monograph of \citet{janzamin2019spectral} and the bibliography therein.
Recent literature studies the ICA setting in the context of specific parametric families for independent component distributions and shows that parametric \citep{lee1999independent}, semi-parametric \citep{hastie2003independent,chen2006efficient, ilmonen2011semiparametrically} and nonparametric \citep{bach2002kernel, samarov2004nonparametric, samworth2012independent} models can be estimated via maximal likelihood estimation or minimization of mutual information between independent components.
Our work focuses on a different type of contrast function based on tensor decomposition and kurtosis maximization, and hence our methodology is quite different from this line of work.

\blue{Our work is most closely related to \cite{ge2015escaping}, which led to a general line of work on stochastic-gradient-based nonconvex optimization  \citep{dauphin2014identifying,ge2015escaping,sun2015nonconvex,sun2015complete,sun2018geometric,anandkumar2016efficient,jin2017escape,jin2019nonconvex,jin2018accelerated,lei2017non,allen2018natasha2,daneshmand2018escaping,fang2019sharp,cutkosky2019momentum,cutkosky2020momentum} as well as the Riemannian manifold \citep{zhang2016first,zhang2016riemannian,tripuraneni2018stochastic}.
A large family of nonconvex landscape has been studied in \cite{mei2018landscape}, where uniform convergence of the empirical loss to the population loss is established.
It is also related to work on recursive variance-reduced gradient methods for smooth optimization \citep{nguyen2017sarah,nguyen2017stochastic,fang2018spider,zhou2020stochastic,wang2019spiderboost,arjevani2020second}.
In particular, we note the work of \citet{ge2015escaping,jin2019nonconvex,daneshmand2018escaping,fang2019sharp}, who study the dynamics of SGD for optimizing generic functions. When applied to specific statistical estimation problems, however, the results obtained by these methods can be coarse due to their neglect of specific geometric features of the landscape.
}

The pioneering work of \citet{ge2015escaping} studied the convergence rate of SGD for minimizing a large class of nonconvex objectives defined on a generic Riemannian manifold.
Under the bounded distributional assumption, \citet{ge2015escaping} prove that SGD equipped with projection as well as a special noise-injection step can escape from all saddle points and land at an approximate local minimizer in polynomial time of relevant parameters.
Convergence rates for generic first-order gradient descent algorithms without adding noise injection are generally unknown and can be unfavorable \citep{lee2019first, du2017gradient, pemantle1990nonconvergence}.
Favorable results can be obtained under special spherical distributions for the noise or sophisticated procedures for avoidance of saddle points \citep{ge2015escaping,jin2017escape,jin2019nonconvex,sun2015nonconvex,allen2018natasha2,fang2019sharp}.

Recent years have witnessed significant progress on computational and statistical aspects of low-rank representation methods.
A number of recent papers \citep{carmon2020first,ge2017no,li2018near,bai2018subgradient,davis2018subgradient,li2018global,zhu2018dual,ma2019implicit,chen2019gradient,gilboa2019efficient,kuo2019geometry,qu2019nonconvex,tan2019online,yang2019misspecified,na2019high,chen2020spectral,lau2020short,li2020nonconvex,zhai2020complete} study the gradient-descent dynamics of of matrix factorization/completion, principal component pursuit, dictionary learning, phase retrieval, blind deconvolution, and many others, in the setting of batch or online (streaming) data.
Notably, \citet{carmon2020first,li2018near} pursue a dynamics-based analysis to study gradient descent and its cubic-regularized variant for eigenvalue problems.
Related work on efficient convergence of Oja's online PCA iteration can be found in \citet{jain2016streaming,allen2017first}.%,huang2021streaming,liang2021optimality}.
\citet{li2016online,wang2020asymptotic,wang2017scaling}, who study the online tensorial ICA method from the viewpoint of scaling limits and (stochastic) differential equation approximations.
While these results provide valuable insights into our problem, straightforward translations to nonasymptotic convergence guarantees are not available, mainly due to the differential equation approximation being a weak convergence formulation instead of a strong one. \blue{Our refined projected stochastic gradient analysis for online tensorial ICA provided in this paper is both \blue{dynamics-based and nonasymptotic}, and we are able to prove that under some mild scaling conditions, random initialization provides sufficient deviation from the set of unstable stationary points such that a vanilla online tensorial ICA algorithm can be guaranteesdto achieve an sharp convergence rate.}

\pb\section{Summary}\label{sec:summary}
We have studied the dynamics of an algorithm formulated as online stochastic approximation of (orthogonal) tensorial ICA.  Our algorithm can be viewed as a method for optimizing a nonconvex objective of excess kurtosis in a given direction.
We show that with properly chosen stepsizes and under mild scaling conditions our online tensorial ICA algorithm achieves a $\tilde{O}(\sqrt{d/T})$-convergence rate, which is superior to the best existing analysis of this problem.
Our algorithm requires no noise-injection steps or specially-designed loops for saddle-point avoidance, and our \blue{dynamics-based} approach enjoys multiple advantages over existing analyses of online stochastic approximation for tensorial ICA estimation.

We believe that our analysis can generalize to a broader class of statistical estimation problems that can be cast as nonconvex stochastic optimization problems.
Future directions include further improvements of the convergence rate and scaling conditions or justification of the impossibility (or minimax optimality) of such rates, analyzing the mini-batch stochastic approximation algorithm as well as the non-identical kurtosis case for ICA, and finally generalizing our analysis of the dynamics of stochastic online algorithms to the nonorthogonal tensor decomposition case and over-parameterized cases.

% Acknowledgements should only appear in the accepted version.
\section*{Acknowledgements}
We thank Wenlong Mou and Yuren Zhou for valuable discussions.
This work was supported in part by the Mathematical Data Science program of the Office of Naval Research under grant number N00014-18-1-2764.

\bibliographystyle{plainnat}
\bibliography{SAILreference}

\newpage\appendix
\pb\section*{Appendix}
In Appendix,
\S\ref{sec:mainproof} proves the main results in the paper.
\S\ref{sec:warm_aux} and \S\ref{sec:uniform_aux} provide all secondary lemmas and their proofs for warm and uniform initialization analysis, separately.
\S\ref{sec:reversed_gronwall}, \S\ref{sec:psi_alpha} and \S\ref{sec:concentration} provide necessary tools including a reversed version of Gronwall's inequality, preliminaries and properties of Orlicz $\psi_\alpha$-norm and a concentration inequality, all of which are theoretical building blocks of this paper.
Finally, \S\ref{sec:experiment} visualizes the tensorial ICA landscape and provides preliminary simulation results that validate our theory.

\paragraph{Notations}
Throughout this paper, we treat $B,\mu_4,\tau$ as positive constants.
We use bold upper case letters to denote matrices, bold lower case letters to denote vectors and italic letters to denote randomness.
For any matrix $\Ab$ or vector $\vb$, $\Ab^\top$ and $\vb^\top$ denote their transposes.
For any vector $\vb$, $v_k$ denotes its $k$th coordinate.
For a sequence of $\{x^{(t)}\}$ and positive $\{y^{(t)}\}$, we write $x^{(t)} = O(y^{(t)})$ if there exists a positive constant $M$ such that $|x^{(t)}| \le M y^{(t)}$,
write $x^{(t)} = \Omega(y^{(t)})$ if there exists a positive constant $M < \infty$ such that $|x^{(t)}| \ge M y^{(t)}$,
and write $x^{(t)} = \Theta(y^{(t)})$ if both $x^{(t)} = O(y^{(t)})$ and $x^{(t)} = \Omega(y^{(t)})$ hold.
We use $\tilde{O}, \tilde\Theta, \tilde\Omega$ to hide factors that are polylogarithmically dependent on dimension $d$, stepsize $\eta$, sample size $T$ and inverse unsuccessful probability $\delta^{-1}$.
We use $\lfloor x \rfloor$ to denote the floor function and $\lceil x \rceil$ to denote the ceiling function.
We let $x \wedge y = \min(x, y)$ and $x \vee y = \max(x, y)$.
For any vector $\vb$, we use $\|\vb\|$ to denote its Euclidean norm.
For any integer $n$, we define set $[n] = \{1, \dots, n\}$.
Finally, we use $\cS^c$ to denote the complement of set (or event) $\cS$.

\pb\section{Deferred Proofs of Main Results}\label{sec:mainproof}
This section includes the deferred proofs of the main results in \S\ref{sec:warmsec} and \S\ref{sec:uniformsec}.
In their order of appearance, \S\ref{sec:warm_lemm} and \S\ref{sec:warm_theo} prove in sequel Lemma \ref{lemm:convergence_result2} and Theorem \ref{theo:finite_sample2}, and \S\ref{sec:uniform_lemm} and \S\ref{sec:uniform_theo} prove the convergence Lemma \ref{lemm:convergence_result} and its finite-sample error Theorem \ref{theo:finite_sample}, respectively.
All secondary lemmas are deferred to \S\ref{sec:warm_aux} and \S\ref{sec:uniform_aux}.

\pb\subsection{Proof of Lemma \ref{lemm:convergence_result2}}\label{sec:warm_lemm}
We use the key Lemma \ref{lemm:fast_expo} to tightly estimate the \blue{dynamics} in all coordinates and thereby obtain a proof of Lemma \ref{lemm:convergence_result2}.

\begin{proof}[Proof of Lemma \ref{lemm:convergence_result2}]
We denote $\cH_{\ref{lemm:convergence_result2}, L} \equiv \bigcap_{k\in [2,d]} \cH_{k; \ref{lemm:fast_expo}, L}$ as the intersection of events $\cH_{k; \ref{lemm:fast_expo}, L}$.
Consider now the following $(d - 1)$-dimensional vector
\beq\label{eq:warm_vector}
\left(
U_k^{(t)} - U_k^{(0)} \prod_{s = 0}^{t - 1} \left[ 1 - \eta |\mu_4 - 3| \left( (v_1^{(s)})^2 - (v_k^{(s)})^2 \right) \right]
 : k\in [2,d]
\right)
.
\eeq
Using Lemma \ref{lemm:fast_expo}, on the event $\cH_{\ref{lemm:convergence_result2}, L} \cap (t \le T_{\eta, \tau}^* \wedge \cT_x)$ we bound the Euclidean norm of \eqref{eq:warm_vector} by
\beq\label{eq:lemm:convergence_result2-norm1}
\sqrt{ \sum_{k = 2}^d \left(
U_k^{(t)} - U_k^{(0)} \prod_{s = 0}^{t - 1} \left[ 1 - \eta |\mu_4 - 3| \left( (v_1^{(s)})^2 - (v_k^{(s)})^2 \right) \right]
\right)^2 }
\le
\sqrt{d}
\cdot
2 C_{\ref{lemm:fast_expo}, L} \log^{5 / 2}\delta^{-1}
\cdot
B^4
\cdot
\eta (T_{\eta, \tau}^*)^{1 / 2}
.
\eeq
Additionally, the left hand of \eqref{eq:lemm:convergence_result2-norm1} is the norm of the subtraction of two vectors and hence lower bounded by
\beq\label{eq:lemm:convergence_result2-norm2}
\begin{aligned}
\lefteqn{
\sqrt{ \sum_{k = 2}^d \left(
U_k^{(t)} - U_k^{(0)} \prod_{s = 0}^{t - 1} \left[ 1 - \eta |\mu_4 - 3| \left( (v_1^{(s)})^2 - (v_k^{(s)})^2 \right) \right]
\right)^2 }
}
\\&\ge
\sqrt{ \sum_{k = 2}^d \left( 
U_k^{(t)} \right)^2}
-
\sqrt{ \sum_{k = 2}^d \left( 
U_k^{(0)} \prod_{s = 0}^{t - 1} \left[ 1 - \eta |\mu_4 - 3| \left( (v_1^{(s)})^2 - (v_k^{(s)})^2 \right) \right]
\right)^2}
,
\end{aligned}\eeq
due to triangle inequality of Euclidean norm.
The definition of iteration $\{U_k^{(t)}\}$ in \eqref{eq:Uk} implies that $\left|\tan \angle\left( \vb^{(t)}, \eb_1\right)\right| = \sqrt{ \sum_{k = 2}^d (U_k^{(t)})^2 }$ and the definition of stopping time $\cT_x$ in \eqref{eq:cTx} implies that $(v_1^{(s)})^2 - (v_k^{(s)})^2 \ge 1/3$ holds for all $k \in [2, d]$ and $0 \le s < t$ on the event $(t \le \cT_x)$.
Combining this with \eqref{eq:lemm:convergence_result2-norm1} and \eqref{eq:lemm:convergence_result2-norm2}, we obtain on the event $\cH_{\ref{lemm:convergence_result2}, L} \cap (t \le T_{\eta, \tau}^* \wedge \cT_x)$
\beq\label{eq:lemm:convergence_result2-norm}
\begin{aligned}
\lefteqn{
\left|\tan \angle\left( \vb^{(t)}, \eb_1\right)\right|
 =
\sqrt{ \sum_{k = 2}^d \left(U_k^{(t)}\right)^2 }
}
\\&\le
\sqrt{ \sum_{k = 2}^d \left( U_k^{(0)} \right)^2}
\left(1 - \frac{\eta}{3} |\mu_4 - 3|\right)^t
+
\sqrt{d}
\cdot 2 C_{\ref{lemm:fast_expo}, L} \log^{5 / 2}\delta^{-1}
\cdot B^4
\cdot \eta (T_{\eta, \tau}^*)^{1 / 2}
\\&=
\left|\tan \angle\left( \vb^{(0)}, \eb_1\right)\right|
\left(1 - \frac{\eta}{3} |\mu_4 - 3|\right)^t
+
2 C_{\ref{lemm:fast_expo}, L} \log^{5 / 2}\delta^{-1}
\cdot B^4
\cdot (d \eta^2 T_{\eta, \tau}^*)^{1 / 2}
.
\end{aligned}\eeq
The definition of $T_{\eta, \tau}^*$ in \eqref{eq:Tetatau*} along with $-\log\left( 1 - \frac{\eta}{3} |\mu_4 - 3| \right) \ge \frac{\eta}{3} |\mu_4 - 3|$ gives
\beq\label{eq:Tetatau*_relation}
T_{\eta, \tau}^*
\le
1 + \frac{3\tau}{|\mu_4 - 3|}
\cdot \eta^{-1} \log\left(\frac{|\mu_4 - 3|}{B^8}  \eta^{-1}\right)
\le
\frac{3(\tau + 1)}{|\mu_4 - 3|}
\cdot \eta^{-1} \log\left(\frac{|\mu_4 - 3|}{B^8}  \eta^{-1}\right)
,
\eeq
where in the second inequality we use $\frac{|\mu_4 - 3|}{B^8} \eta^{-1} \ge e$ and $\frac{1}{|\mu_4-3|} \eta^{-1} \ge 1$ implied by scaling condition \eqref{eq:warm_scaling}.
Using relation \eqref{eq:Tetatau*_relation}, we find that scaling condition \eqref{eq:warm_scaling} with constant $C_{\ref{lemm:convergence_result2}, L}^* \equiv 713 C_{\ref{lemm:fast_expo}, L}^2$ indicates
\beq\label{eq:warm_drop_scaling}
\begin{aligned}
\lefteqn{
2 C_{\ref{lemm:fast_expo}, L} \log^{5 / 2}\delta^{-1}
\cdot
B^4
\cdot
(d \eta^2 T_{\eta, \tau}^*)^{1 / 2}
\le
\sqrt{
4 C_{\ref{lemm:fast_expo}, L}^2 B^8 \log^5\delta^{-1}
\cdot
\frac{3(\tau + 1)}{|\mu_4 - 3|}
\cdot
d \eta \log\left(\frac{|\mu_4 - 3|}{B^8} \eta^{-1} \right)
}}
\\&\le
\sqrt{
\frac{12}{713} (\tau + 1) \cdot
C_{\ref{lemm:convergence_result2}, L}^* 
\log^8(T_{\eta, 1}^* \delta^{-1})
\cdot \frac{B^8}{|\mu_4 - 3|}
\cdot d \eta \log\left(\frac{|\mu_4 - 3|}{B^8}\eta^{-1}\right)
}
\le
\sqrt{
\frac{12}{713}
}
,
\end{aligned}\eeq
where the elementary inequality $\log^5 \delta^{-1} \le \log^8(T_{\eta, 1}^* \delta^{-1})$ is applied due to $T_{\eta, 1}^* \ge 1$ and $\delta \le e^{-1}$. 
Viewing \eqref{eq:lemm_warm} (equivalent to $\vb^{(0)} \in \cD_{\text{warm}}$ in \eqref{eq:Swarm}) and \eqref{eq:lemm:convergence_result2-norm}, we have for each $t$ on the event $\cH_{\ref{lemm:convergence_result2}, L} \cap (\cT_x \le T_{\eta, \tau}^*) \cap (t \le \cT_x)$ that
\beq\label{eq:lemm:convergence_result2-drop1}
\left|\tan \angle\left( \vb^{(t)}, \eb_1\right)\right|
\le
\left|\tan \angle\left( \vb^{(0)}, \eb_1\right)\right|
+
\sqrt{
\frac{12}{713}
}
 <
\frac{1}{\sqrt{3}}
+
\left(
\frac{1}{\sqrt{2}} - \frac{1}{\sqrt{3}}
\right)
 = 
\frac{1}{\sqrt{2}}
,
\eeq
where in the first inequality we again apply $(\eta/3) |\mu_4-3| \le 1$ from \eqref{eq:warm_scaling}.
This further indicates that on event $\cH_{\ref{lemm:convergence_result2}, L} \cap (\cT_x \le T_{\eta, \tau}^*)$, inequality $\left|\tan \angle\left( \vb^{(\cT_x)}, \eb_1\right)\right| < 1/\sqrt{2}$ holds, which contradicts with the fact that $\vb^{(\cT_x)} \in \cD_{\text{warm-aux}}^c$ on the same event.
Therefore, we have $\cH_{\ref{lemm:convergence_result2}, L} \cap (\cT_x \le T_{\eta, \tau}^*) = \varnothing$, and equivalently
\beq\label{eq:lemm:convergence_result2-drop}
\cH_{\ref{lemm:convergence_result2}, L}
 =
\cH_{\ref{lemm:convergence_result2}, L} \cap (\cT_x > T_{\eta, \tau}^*)
.
\eeq
This implies that for all $t \in [0, T_{\eta, \tau}^*]$, \eqref{eq:lemm:convergence_result2-norm} holds on the event $\cH_{\ref{lemm:convergence_result2}, L}$.
Plugging in the inequality involving $T_{\eta, \tau}^*$ in \eqref{eq:Tetatau*_relation}, we have on the event $\cH_{\ref{lemm:convergence_result2}, L}$ that for all $t \in [0, T_{\eta, \tau}^*]$
\beq\label{eq:lemm:convergence_result2-final}
\begin{aligned}
\lefteqn{
\left|\tan \angle\left( \vb^{(t)}, \eb_1\right)\right|
\le
\left|\tan \angle\left( \vb^{(0)}, \eb_1\right)\right|
\left(1 - \frac{\eta}{3} |\mu_4 - 3|\right)^t
+
2 C_{\ref{lemm:fast_expo}, L} B^4 \log^{5 / 2}\delta^{-1}
\cdot
(d \eta^2 T_{\eta, \tau}^*)^{1 / 2}
}
\\&\le
\left|\tan \angle\left( \vb^{(0)}, \eb_1\right)\right|
\left(
1 - \frac{\eta}{3} |\mu_4 - 3|
\right)^t
\\&\quad+
\sqrt{\tau + 1} \left(2\sqrt{3}C_{\ref{lemm:fast_expo}, L}\right) \log^{5 / 2}\delta^{-1}
\cdot
\frac{B^4}{|\mu_4 - 3|^{1/2}}
\cdot
\sqrt{
d \eta \log\left(\frac{|\mu_4 - 3|}{B^8} \eta^{-1} \right)
}
.
\end{aligned}\eeq
Letting the constant $C_{\ref{lemm:convergence_result2}, L} \equiv 2\sqrt{3} C_{\ref{lemm:fast_expo}, L}$, the scaling relation \eqref{eq:uv} and the above derivation \eqref{eq:lemm:convergence_result2-final} prove that \eqref{eq:lemm:convergence_result-tanbdd} holds for all $t \in [0, T_{\eta, \tau}^*]$ on the event $\cH_{\ref{lemm:convergence_result2}, L}$.

The only left is to estimate the probability of $\cH_{\ref{lemm:convergence_result2}, L}$.
Lemma \ref{lemm:fast_expo} gives for each $k \in [2, d]$ the probability of event $\PP(\cH_{k; \ref{lemm:fast_expo}, L}) \ge 1 - \left(6\tau + 12 + \dfrac{5184}{\log^5\delta^{-1}}\right)\delta$, and hence elementary union bound calculation gives
\beq\label{eq:PP_tilde_cH}
\PP(\cH_{\ref{lemm:convergence_result2}, L})
=
\PP\left(\bigcap_{k\in[2,d]} \cH_{k; \ref{lemm:fast_expo}, L}\right)
\ge
1 - \left(
6 \tau + 12 + \frac{5184}{\log^5 \delta^{-1}}
\right) d \delta
,
\eeq
completing the whole proof of Lemma \ref{lemm:convergence_result2}.
\end{proof}

\pb\subsection{Proof of Theorem \ref{theo:finite_sample2}}\label{sec:warm_theo}
Now we turn to the proof of the finite-sample error Theorem \ref{theo:finite_sample2}.
The idea is to apply Lemma \ref{lemm:convergence_result2} with appropriate stepsize $\eta$ (as in \eqref{etaPickT}) as well as an appropriate $\tau$ to obtain the finite-sample error bound.

\begin{proof}[Proof of Theorem \ref{theo:finite_sample2}]

\begin{enumerate}[label=(\arabic*)]
\item
We first provide an upper bound on $T_{\eta(T), \tau}^*$. 
Under scaling condition \eqref{eq:warm_scaling2} with constant $C_{\ref{theo:finite_sample2}, T}^* \equiv \max\{90 C_{\ref{lemm:convergence_result2}, L}^*, 10\}$ and some constant $C_{\ref{theo:finite_sample2}, T}' > 1$ to be determined later, we have $\frac{2 |\mu_4 - 3|^2}{9 B^8} T \ge e$.
Plugging in $\eta = \eta(T)$ from \eqref{etaPickT} to relation \eqref{eq:Tetatau*_relation}, we have
\beq\label{eq:T_eta_ub}
\begin{aligned}
\lefteqn{
T_{\eta(T), \tau}^*
\le
\frac{3(\tau + 1)}{|\mu_4 - 3|}
\cdot
\eta(T)^{-1} \log\left(\frac{|\mu_4 - 3|}{B^8}  \eta(T)^{-1}\right)
}
\\&=
\frac{3(\tau + 1)}{|\mu_4 - 3|}
\cdot
\frac{2 |\mu_4 - 3| T}{9 \log\left(\frac{2 |\mu_4 - 3|^2}{9 B^8} T\right)} \log\left(
  \frac{|\mu_4 - 3|}{B^8}  \cdot \frac{2 |\mu_4 - 3| T}{9 \log\left(\frac{2 |\mu_4 - 3|^2}{9 B^8} T\right)}
\right)
\le
\frac{2 (\tau + 1)}{3} T
.
\end{aligned}\eeq

\item
Next we provide a lower bound on $T_{\eta(T), \tau}^*$. By Taylor expansion, for all $x \in (0,1/3]$ we know that
$$
\left| \log(1 - x) + x \right|
=
\left| \sum_{n=2}^\infty \frac{x^n}{n} \right|
\le 
\frac{x^2}{2}
\sum_{n=0}^\infty x^n
\le
\frac{x^2}{2} \frac{1}{1-1/3}
 =
\frac{3x^2}{4}
,
$$
and hence
\beq\label{eq:Taylor1}
\frac{1}{- \log(1 - x)}
\ge
\frac{1}{x + 3 x^2 / 4}
\ge
\frac{1}{x + x / 4}
\ge
\frac{4}{5x}
.
\eeq
From the definition of $T_{\eta, \tau}^*$ in \eqref{eq:Tetatau*}, for $\eta$ satisfying $\eta |\mu_4 - 3| / 3 \le 1 / 3$ we have
\beq\label{eq:Tetatau*_relation2}
T_{\eta, \tau}^*
\ge
\frac{\tau \log\left(\frac{|\mu_4 - 3|}{B^8} \cdot \eta^{-1}\right)}{- \log\left( 1 - \frac{\eta}{3} |\mu_4 - 3| \right)}
\ge
\frac{12 \tau }{5 |\mu_4 - 3| }
\cdot
\eta^{-1} \log\left(\frac{|\mu_4 - 3|}{B^8} \cdot \eta^{-1}\right)
.
\eeq
Under scaling condition \eqref{eq:warm_scaling2}, along with relation $\frac{B^4}{|\mu_4 - 3|} \ge \frac18$ given by Lemma \ref{lemm:B_mu4_relation} in Appendix \ref{sec:warm_aux} and $T \ge 100$, we have
\beq\label{eq:warm_scaling_verify_mid2}
\frac{\eta(T) |\mu_4 - 3|}{3}
=
\frac{3 \log\left( \frac{2 |\mu_4 - 3|^2}{9 B^8} T \right)}{2 T}
%\le
%\frac{3\left(\log T + \log(128 / 9)\right)}{2 T}
<
\frac{3 \log T}{T}
\le
\frac13
,\qquad
\frac{2 |\mu_4 - 3|^2}{9 B^8} T
\ge
e
.
\eeq
Plugging in $\eta = \eta(T)$ (as in \eqref{etaPickT}) to \eqref{eq:Tetatau*_relation2} and we obtain
\beq\label{eq:T_eta_lb}
\begin{aligned}
\lefteqn{
T_{\eta(T), \tau}^*
\ge
\frac{12 \tau}{5 |\mu_4 - 3| }
\cdot
\eta(T)^{-1} \log\left(\frac{|\mu_4 - 3|}{B^8} \cdot \eta(T)^{-1}\right)
}
\\&=
\frac{12 \tau }{5}
\cdot
\frac{2 T}{9 \log\left(\frac{2 |\mu_4 - 3|^2}{9 B^8} T\right)} \log\left(\frac{2 |\mu_4 - 3|^2 T}{9 B^8 \log\left(\frac{2 |\mu_4 - 3|^2}{9 B^8} T\right)}\right)
\\&\ge
\frac{8 \tau}{15}
\cdot
\frac{T}{\log\left(\frac{2 |\mu_4 - 3|^2}{9 B^8} T\right)}
\cdot
\frac12 \log\left(\frac{2 |\mu_4 - 3|^2}{9 B^8} T\right)
\ge
\frac{4 \tau}{15} T
,
\end{aligned}\eeq
where we use the elementary inequality $\log\left(\frac{x}{\log x}\right) \ge \frac12 \log x$ for all $x \ge e$.

\item
From \eqref{eq:T_eta_ub} and \eqref{eq:T_eta_lb} we know that $T \in [T_{\eta(T), 0.5}^*, T_{\eta(T), 4}^*]$. Here we will verify scaling condition \eqref{eq:warm_scaling} required in Lemma \ref{lemm:convergence_result2} under our setting.
By choosing
\beq\label{epsstate}
\epsilon
\equiv
\left(
36 + \frac{5184}{\log^5 \delta^{-1}}
\right) d \delta
,
\eeq
we have 
\beq\label{eq:warm_epsilon}
T_{\eta(T), 1}^* \delta^{-1}
\le
C_{\ref{theo:finite_sample2}, T}' \epsilon^{-1} d T
,
\eeq
where we define constant $C_{\ref{theo:finite_sample2}, T}' \equiv (4/3) \cdot (36 + 5184) = 6960$ and use results $T_{\eta(T), 1}^* \le 4 T / 3$, $\delta \le e^{-1}$ implied by \eqref{eq:T_eta_ub}, $\epsilon \le 1$.

Therefore for the first scaling condition in \eqref{eq:warm_scaling}, our pick of $\tau = 4$ requires
$$
22.5 C_{\ref{lemm:convergence_result2}, L}^* \log^8(T_{\eta(T), 1}^* \delta^{-1})
\cdot
\frac{d \log\left(\frac{2 |\mu_4 - 3|^2}{9 B^8} T\right)}{T} \log\left(\frac{2 |\mu_4 - 3|^2 T}{9 B^8 \log\left(\frac{2 |\mu_4 - 3|^2}{9 B^8} T\right)} \right)
\le
\frac{|\mu_4 - 3|^2}{B^8}
,
$$
while a sufficient condition for the above to hold is, due to \eqref{eq:warm_epsilon},
\beq\label{eq:warm_scaling_theo-verification1}
C_{\ref{theo:finite_sample2}, T}^* \log^8(C_{\ref{theo:finite_sample2}, T}' \epsilon^{-1} d T)
\cdot
\frac{d \log^2 T}{T}
\le
\frac{|\mu_4 - 3|^2}{B^8}
,
\eeq
which comes from \eqref{eq:warm_scaling2} and constant definition $C_{\ref{theo:finite_sample2}, T}^* \equiv 22.5 C_{\ref{lemm:convergence_result2}, L}^* \cdot 2^2 = 90 C_{\ref{lemm:convergence_result2}, L}^*$, because $T \ge 100$ and the relation \eqref{eq:B_mu4_relation} of $B, |\mu_4 - 3|$ in Appendix Lemma \ref{lemm:B_mu4_relation} imply $T \ge \frac{2 |\mu_4 - 3|^2}{9 B^8}$, and hence
\beq\label{eq:warm_scaling_useful}
1
\le
\log\left(\frac{2 |\mu_4 - 3|^2}{9 B^8} T\right)
\le
2 \log T
.
\eeq

To verify the second condition in \eqref{eq:warm_scaling}, using scaling condition \eqref{eq:warm_scaling2} in Theorem \ref{theo:finite_sample2} and \eqref{eq:warm_scaling_useful}, we have
$$
\frac{B^8}{|\mu_4 - 3|} \eta(T)
=
\frac{9 B^8}{2 |\mu_4 - 3|^2} \cdot \frac{\log\left(\frac{2 |\mu_4 - 3|^2}{9 B^8} T\right)}{T}
\le
\frac{9 B^8}{|\mu_4 - 3|^2} \cdot \frac{\log T}{T}
<
e^{-1}
,
$$
Note that we have already verified $|\mu_4 - 3| \eta(T) < 1$ in \eqref{eq:warm_scaling_verify_mid2}.
So far we have shown that, under scaling condition \eqref{eq:warm_scaling2} in Theorem \ref{theo:finite_sample2}, for stepsize $\eta = \eta(T)$ given in \eqref{etaPickT} the scaling condition \eqref{eq:warm_scaling} in Lemma \ref{lemm:convergence_result2} is guaranteed to hold, which justifies our following act on proving Theorem \ref{theo:finite_sample2} with Lemma \ref{lemm:convergence_result2}.

\item
Using warm initialization condition \eqref{eq:lemm_warm} in Lemma \ref{lemm:convergence_result2} and the definition of $T_{\eta, \tau}^*$ given in \eqref{eq:Tetatau*}, for all $t \in [T_{\eta, 0.5}^*, T_{\eta, 4}^*]$ we have
$$\begin{aligned}
\lefteqn{
\left| \tan \angle\left(
\ub^{(0)}, \ab_i
\right) \right|
\left(  1 - \frac{\eta}{3} |\mu_4 - 3| \right)^t
\le
\frac{1}{\sqrt{3}}
\cdot
\frac{B^4}{|\mu_4 - 3|^{1 / 2}} \eta^{1 / 2}
}
\\&\le
\frac{1}{\sqrt{3}} \log^{5 / 2}\delta^{-1}
\cdot
\frac{B^4}{|\mu_4 - 3|^{1 / 2}}
\cdot
\sqrt{
d \eta \log\left(\frac{|\mu_4 - 3|}{B^8}\eta^{-1}\right)
}
,
\end{aligned}$$
where the first inequality comes from $t \ge T_{\eta, 0.5}^*$ and definition of $T_{\eta, \tau}^*$ in \eqref{eq:Tetatau*}, and the second inequality is due to $\delta^{-1} \ge e$ and $\frac{|\mu_4 - 3|}{B^8}\eta^{-1} \ge e$ given by scaling condition \eqref{eq:warm_scaling}.
Therefore, on the event $\cH_{\ref{lemm:convergence_result2}, L}$ we have for all $t \in [T_{\eta, 0.5}^*, T_{\eta, 4}^*]$
\beq\label{eq:warm_lemm-theo_transit}
\left| \tan \angle\left(\ub^{(t)}, \ab_i\right) \right|
\le
3 C_{\ref{lemm:convergence_result2}, L} \log^{5 / 2}\delta^{-1}
\cdot
\frac{B^4}{|\mu_4 - 3|^{1 / 2}}
\cdot
\sqrt{
d \eta \log\left(\frac{|\mu_4 - 3|}{B^8}\eta^{-1}\right)
}
.
\eeq

To finalize our proof, we plug in $\eta = \eta(T)$ to \eqref{eq:warm_lemm-theo_transit} and conclude from $T \in [T_{\eta(T), 0.5}^*, T_{\eta(T), 4}^*]$ that there exists an event $\cH_{\ref{theo:finite_sample2}, T}$ equivalent to $\cH_{\ref{lemm:convergence_result2}, L}$ with, due to \eqref{epsstate},
$
\PP(\cH_{\ref{theo:finite_sample2}, T})
\ge
1 - \epsilon
,
$
such that on event $\cH_{\ref{theo:finite_sample2}, T}$ the following holds
\beq\label{tanbdd} \begin{aligned}
\lefteqn{
\left| \tan \angle\left(\ub^{(T)}, \ab_i\right) \right|
}
\\&\le
3 C_{\ref{lemm:convergence_result2}, L} \log^{5 / 2}\delta^{-1}
\cdot
\frac{B^4}{|\mu_4 - 3|^{1 / 2}}
\cdot
\sqrt{d \eta(T) \log\left(\frac{|\mu_4 - 3|}{B^8}\eta(T)^{-1}\right)}
\\&=
\frac{9\sqrt{2}}{2} C_{\ref{lemm:convergence_result2}, L} \log^{5 / 2}\delta^{-1}
\cdot
\frac{B^4}{|\mu_4 - 3|}
\cdot
\sqrt{\frac{d\log\left(\frac{2 |\mu_4 - 3|^2}{9 B^8} T\right)}{T} \log\left(\frac{2 |\mu_4 - 3|^2 T}{9 B^8\log\left(\frac{2 |\mu_4 - 3|^2}{9 B^8} T\right)} \right)}
\\&\le
C_{\ref{theo:finite_sample2}, T} \log^{5 / 2}(C_{\ref{theo:finite_sample2}, T}' d \epsilon^{-1})
\cdot
\frac{B^4}{|\mu_4 - 3|}
\cdot
\sqrt{\frac{d \log^2 T}{T}}
,
\end{aligned}\eeq
where in the last step we apply \eqref{eq:warm_scaling_useful}, $
\log^{5 / 2}\delta^{-1} \le \log^{5 / 2}(C_{\ref{theo:finite_sample2}, T}' d \epsilon^{-1})
$ from \eqref{epsstate}, with constants $C_{\ref{theo:finite_sample2}, T} \equiv 9\sqrt{2} C_{\ref{lemm:convergence_result2}, L}, C_{\ref{theo:finite_sample2}, T}' \equiv 6960$.
This completes the whole proof of the theorem.

\end{enumerate}

\end{proof}

\pb\subsection{Proof of Lemma \ref{lemm:convergence_result}}\label{sec:uniform_lemm}
In uniform initialization analysis, intuitively $\vb^{(t)}$ needs to enter intermediate region $\cD_{\text{mid}, k}$ first before we worry about its exit of the intermediate-auxilliary region $\cD_{\text{mid-aux}, k}$.
For each coordinate $k \in [2, d]$, we upper bound the stopping time $\cT_{c, k}$ earlier defined in \eqref{eq:cTx} using cold initialization Lemmas \ref{lemm:initW} and \ref{lemm:hitt}.
For each coordinate $k \in [2, d]$, we view the iterative process $\{U_k^{(t)} \equiv v_k^{(t)} / v_1^{(t)}\}$ as a Markov chain with $\vb$ initialized in region $\cD_{\text{mid-aux}, k}$ shifted by $\cT_{c, k}$.
We notice that process $\{U_k^{(t)} 1_{(t < \cT_1)}\}$ is bounded by 1 due to definition of $U_k^{(t)}, \cT_1$, and we have $U_k^{(t)} 1_{(t < \cT_1)} \equiv U_k^{(t)}$ for all $(t < \cT_{w, k})$.
Due to strong Markov property the intermediate initialization Lemma \ref{lemm:expo} applies to the shifted Markov chain.
\red{Careful, why strong Markov here when we dont have unboundedness?}

\begin{proof}[Proof of Lemma \ref{lemm:convergence_result}]
We let constant $C_{\ref{lemm:convergence_result}, L}^* \equiv \max\{256 C_{\ref{lemm:hitt}, L}^2, 476 C_{\ref{lemm:expo}, L}^2\}$ in scaling condition \eqref{eq:uniform_scaling} in Lemma \ref{lemm:convergence_result}.

\begin{enumerate}[label=(\arabic*)]
\item
We start by making coordinate-wise analysis for each $k \in [2, d]$.
For all $t \le T_{\eta, 0.5}^o \wedge \cT_{c, k} \wedge \cT_1$, on the event $\cH_{\ref{lemm:initW}, L} \cap \cH_{k; \ref{lemm:hitt}, L}$, since $(v_1^{(t - 1)})^2 \ge 1 / d$, by applying Lemmas \ref{lemm:initW} and \ref{lemm:hitt} we have
\beq\label{eq:W_preprocess}
\begin{aligned}
&
W_k^{(t)}
\ge
\left(
W_k^{(0)}
-
C_{\ref{lemm:hitt}, L} \log^{5 / 2}\delta^{-1}
\cdot \frac{B^4}{|\mu_4 - 3|^{1 / 2}}
\cdot d \eta^{1 / 2}
\right) \left(1 + \frac{\eta}{d} |\mu_4 - 3|\right)^t
\\&\ge
\left(
\frac{\epsilon}{8 \log\epsilon^{-1} \log d}
-
C_{\ref{lemm:hitt}, L} \log^{5 / 2}\delta^{-1}
\cdot
\frac{B^4}{|\mu_4 - 3|^{1 / 2}}
\cdot
d \eta^{1 / 2}
\right)
\cdot
\left(1 + \frac{\eta}{d} |\mu_4 - 3|\right)^t
\ge
0
,
\end{aligned}\eeq
where in the last step we used the following inequality implied by scaling condition \eqref{eq:uniform_scaling}
\beq\label{eq:uniform_scaling_sub4}
\frac{\epsilon}{8 \log\epsilon^{-1} \log d}
\ge
2 C_{\ref{lemm:hitt}, L} \log^{5 / 2}\delta^{-1}
\cdot
\frac{B^4}{|\mu_4 - 3|^{1 / 2}}
\cdot
d \eta^{1 / 2}
.
\eeq
Using the elementary inequality $-\log(1 - x) \le \log(1 + 2x)$ for all $0 \le x \le 1 / 2$, we have
\beq\label{eq:Tetatauo_power}
\begin{aligned}
\lefteqn{
\left(1 + \frac{\eta}{d} |\mu_4 - 3|\right)^{T_{\eta, 0.5}^o}
}
\\&\ge
\exp\left(
\frac12 \log\left(\frac{|\mu_4 - 3|}{B^8} \eta^{-1}\right)
\cdot
\frac{\log\left(1 + \frac{|\mu_4 - 3|}{d} \eta\right)}{-\log\left(1 - \frac{|\mu_4 - 3|}{2d} \eta\right)}
\right)
\ge
\frac{|\mu_4 - 3|^{1 / 2}}{B^4} \eta^{-1 / 2}
,
\end{aligned}\eeq
since $\frac{|\mu_4 - 3|}{2d} \eta \le 1 / 2$ under scaling condition \eqref{eq:uniform_scaling}.
Hence on the event
$$
\cH_{\ref{lemm:initW}, L}
\cap \cH_{k; \ref{lemm:hitt}, L}
\cap (\cT_1 > T_{\eta, 0.5}^o)
\cap (\cT_{c, k} > T_{\eta, 0.5}^o)
,
$$
using \eqref{eq:W_preprocess}, \eqref{eq:uniform_scaling_sub4}, \eqref{eq:Tetatauo_power} and $\delta \in (0, e^{-1}]$, we have
\beq\label{eq:W_process}
W_k^{(T_{\eta, 0.5}^o)}
\ge
C_{\ref{lemm:hitt}, L} \log^{5 / 2}\delta^{-1}
\cdot\frac{B^4}{|\mu_4 - 3|^{1 / 2}}
\cdot d \eta^{1 / 2}
\cdot \frac{|\mu_4 - 3|^{1 / 2}}{B^4} \eta^{-1 / 2}
=
C_{\ref{lemm:hitt}, L} \log^{5 / 2}\delta^{-1}
\cdot d
\ge
2
,
\eeq
which indicates $\vb^{(T_{\eta, 0.5}^o)} \in \cD_{\text{mid-aux}, k}$, i.e. event $\cH_{\ref{lemm:initW}, L} \cap \cH_{k; \ref{lemm:hitt}, L} \cap (\cT_1 > T_{\eta, 0.5}^o) \cap (\cT_{c, k} > T_{\eta, 0.5}^o) \subseteq (\cT_{c, k} \le T_{\eta, 0.5}^o)$, and hence
\beq\label{eq:cold_stopping}
\cH_{\ref{lemm:initW}, L} \cap \cH_{k; \ref{lemm:hitt}, L} \cap (\cT_1 > T_{\eta, 0.5}^o)
\subseteq
(\cT_{c, k} \le T_{\eta, 0.5}^o)
.
\eeq

\item
To study $\vb^{(t)}$ in the intermediate region, we first assume $\cT_{c, k} = 0$, that is, the initialization $\vb^{(0)} \in \cD_{\text{mid-aux}, k}$.
For all $t \le \cT_{w, k} \wedge T_{\eta, \tau}^o$, on the event $\cH_{k; \ref{lemm:expo}, L}$, since $
(v_1^{(t - 1)})^2 - (v_k^{(t - 1)})^2 \ge 1 / (2d)
$ and the following is guaranteed by scaling condition \eqref{eq:uniform_scaling}
\beq\label{eq:uniform_scaling_sub5}
\frac{|\mu_4 - 3|}{2d} \eta
\le	1
,\qquad
2 C_{\ref{lemm:expo}, L} B^4 \log^{5 / 2}\delta^{-1} \cdot d^{1 / 2} \eta (T_{\eta, \tau}^o)^{1 / 2}
\le	\sqrt{\frac{8}{476}}
,
\eeq
applying Lemma \ref{lemm:expo} we have
\beq\label{eq:U_preprocess}
\begin{aligned}
&
|U_k^{(t)}|
\le
|U_k^{(0)}|\left(1 - \frac{|\mu_4 - 3| \eta}{2d}\right)^t
+
2 C_{\ref{lemm:expo}, L} B^4 \log^{5 / 2}\delta^{-1} \cdot d^{1 / 2} \eta (T_{\eta, \tau}^o)^{1 / 2}
\\&<
\frac{1}{\sqrt{3}} \cdot 1 + \sqrt{\frac{8}{476}}
\le
\frac{1}{\sqrt{2}}
.
\end{aligned}\eeq
Now we consider uniform initialization case.
Using strong Markov property as discussed earlier, we have for all $t \in [\cT_{c, k}, \cT_{w, k} \wedge T_{\eta, \tau}^o]$
\beq\label{eq:strong_markov_U}
\left|
U_k^{(t)}
\right|
\le
\frac{1}{\sqrt{2}}
.
\eeq
On the event
$$
\cH_{k; \ref{lemm:expo}, L} 
\cap \cH_{\ref{lemm:initW}, L} 
\cap \cH_{k; \ref{lemm:hitt}, L} 
\cap (\cT_1 > T_{\eta, 0.5}^o) 
\cap (\cT_{w, k} \le T_{\eta, \tau}^o)
,
$$
we have already proven $\cT_{c, k} \le T_{\eta, \tau}^o (\tau > 0.5)$ in \eqref{eq:cold_stopping}.
Applying \eqref{eq:strong_markov_U} with $t = \cT_{w, k} = \cT_{w, k} \wedge T_{\eta, \tau}^o$, we obtain $|U_k^{(\cT_{w, k})}| \le 1 / \sqrt{2}$, which leads to contradiction with the definition of $\cT_{w, k}$ in \eqref{cTw} and indicates $\cH_{k; \ref{lemm:expo}, L} \cap \cH_{\ref{lemm:initW}, L} \cap \cH_{k; \ref{lemm:hitt}, L} \cap (\cT_1 > T_{\eta, 0.5}^o) \cap (\cT_{w, k} \le T_{\eta, \tau}^o) = \varnothing$, i.e.
\beq\label{eq:mid_stopping}
\cH_{k; \ref{lemm:expo}, L} \cap \cH_{\ref{lemm:initW}, L} \cap \cH_{k; \ref{lemm:hitt}, L} \cap (\cT_1 > T_{\eta, 0.5}^o)
\subseteq
(\cT_{w, k} > T_{\eta, \tau}^o)
.
\eeq

\item
With \eqref{eq:cold_stopping} and \eqref{eq:mid_stopping} proven, we put all coordinates $k \in [2, d]$ together and define event
$$
\cH_{\ref{lemm:convergence_result}, L}
\equiv
\left(\cap_{k\in [2,d]} \cH_{k; \ref{lemm:expo}, L}\right) 
\cap \cH_{\ref{lemm:initW}, L}
\cap \left(\cap_{k\in [2,d]} \cH_{k; \ref{lemm:hitt}, L}\right)
.
$$
On the event
$
\cH_{\ref{lemm:convergence_result}, L} \cap (\cT_1 \le T_{\eta, 0.5}^o)
,
$
for each coordinate $k \in [2, d]$ satisfying $\cT_1 \le T_{\eta, 0.5}^o \wedge \cT_{c, k}$, we apply \eqref{eq:W_preprocess} with $t = \cT_1 = T_{\eta, 0.5}^o \wedge \cT_{c, k} \wedge \cT_1$ and obtain $W_k^{(\cT_1)} \ge 0$.
Then due to the definition of $\cT_1$ in \eqref{eq:T1}, there must exist $k \in [2, d]$ such that $W_k^{(\cT_1)} < 0$ and $\cT_{c, k} < \cT_1 \le T_{\eta, 0.5}^o$.
Recall that $W_k^{(t)} < 0$ is equivalent to $|U_k^{(t)}| > 1$.
By definitions of stopping times $\cT_1, \cT_{w, k}$, we know the existence of $k \in [2, d]$ such that $\cT_{c, k} < \cT_{w, k} \le \cT_1 \le T_{\eta, 0.5}^o$ on the event $\cH_{\ref{lemm:convergence_result}, L} \cap (\cT_1 \le T_{\eta, 0.5}^o)$.
By applying \eqref{eq:strong_markov_U} with $t = \cT_{w, k} = \cT_{w, k} \wedge T_{\eta, \tau}^o$, we find $|U_k^{(\cT_{w, k})}| \le 1 / \sqrt{2}$, which contradicts with the definition of $\cT_{w, k}$ and indicates that $\cH_{\ref{lemm:convergence_result}, L} \cap (\cT_1 \le T_{\eta, 0.5}^o) = \varnothing$, i.e.
\beq\label{eq:max_stopping}
\cH_{\ref{lemm:convergence_result}, L}
\subseteq
(\cT_1 > T_{\eta, 0.5}^o)
.
\eeq
Combining \eqref{eq:cold_stopping} and \eqref{eq:mid_stopping} for each $k \in [2, d]$ along with \eqref{eq:max_stopping}, we have
\beq\label{eq:all_stopping}
\cH_{\ref{lemm:convergence_result}, L}
\subseteq
\left(\sup_{2 \le k \le d} \cT_{c, k}\le T_{\eta, 0.5}^o\right)
\cap
\left(\inf_{2 \le k \le d} \cT_{w, k} > T_{\eta, \tau}^o\right)
\cap
(\cT_1 > T_{\eta, 0.5}^o)
.
\eeq

\item
\red{Remark \ref{rema:uniform_story} interprets \eqref{eq:all_stopping} and depicts the story of convergence with uniform initialization.
}With \eqref{eq:all_stopping} ready at hand, on the event $\cH_{\ref{lemm:convergence_result}, L}$, we apply Lemma \ref{lemm:expo} for each coordinate $k \in [2, d]$ and all $t \in [T_{\eta, 0.5}^o, T_{\eta, \tau}^o]$ to obtain
\beq\label{eq:lemm:convergence_result-norm1}
\begin{aligned}
&
\sqrt{ \sum_{k = 2}^d \left(
U_k^{(t)}  - U_k^{(T_{\eta, 0.5}^o)} \prod_{s = T_{\eta, 0.5}^o}^{t - 1} \left[ 1 - \eta |\mu_4 - 3| \left( (v_1^{(s)})^2 - (v_k^{(s)})^2 \right) \right]
\right)^2 }
\\&\le
2 C_{\ref{lemm:expo}, L} \log^{5 / 2}\delta^{-1}
\cdot B^4
\cdot d \eta (T_{\eta, \tau}^o)^{1 / 2}
.
\end{aligned}\eeq
On the event $\cH_{\ref{lemm:convergence_result}, L}$, we have $\cT_{c, k} \le T_{\eta, 0.5}^o$, $(v_1^{(s)})^2 - (v_k^{(s)})^2 \ge 1 / (2d)$, $\left|U_k^{(\cT_{c, k})}\right| \le 1$ for all $k \in [2, d]$, $s \in [T_{\eta, 0.5}^o, T_{\eta, \tau}^o)$. Since the left hand of \eqref{eq:lemm:convergence_result-norm1} is the norm of two vectors subtraction, we use the triangle inequality of Euclidean norms to lower bound it as
\beq\label{eq:lemm:convergence_result-norm2}
\begin{aligned}
\lefteqn{
\sqrt{ \sum_{k = 2}^d \left(
U_k^{(t)}  - U_k^{(T_{\eta, 0.5}^o)} \prod_{s = T_{\eta, 0.5}^o}^{t - 1} \left[ 1 - \eta |\mu_4 - 3| \left( (v_1^{(s)})^2 - (v_k^{(s)})^2 \right) \right]
\right)^2 }
}
\\&\ge
\sqrt{ \sum_{k = 2}^d (U_k^{(t)})^2}
-
\sqrt{ \sum_{k = 2}^d \left(
U_k^{(T_{\eta, 0.5}^o)} \prod_{s = T_{\eta, 0.5}^o}^{t - 1} \left[ 1 - \eta |\mu_4 - 3| \left( (v_1^{(s)})^2 - (v_k^{(s)})^2 \right) \right]
\right)^2 }
\\&\ge
\left| \tan \angle \left(\vb^{(t)}, \eb_1\right) \right|
-
\sqrt{d}
\cdot \left(1 - \frac{\eta}{2d} |\mu_4 - 3|\right)^{t - T_{\eta, 0.5}^o}
.
\end{aligned}\eeq
Recall the definition of $T_{\eta, \tau}^o$ in \eqref{Tetatau}.
Scaling condition \eqref{eq:uniform_scaling} guarantees the following
\beq\label{eq:uniform_scaling_sub3}
\frac{|\mu_4 - 3|}{2 d} \eta
\le 1
,\qquad
\frac{B^8}{|\mu_4 - 3|} \eta
\le e^{-1}
.
\eeq
Since $- \log\left( 1 - \frac{\eta}{2d} |\mu_4 - 3| \right) \ge \frac{\eta}{2d} |\mu_4 - 3|$ and \eqref{eq:uniform_scaling_sub3} holds, for each positive $\tau$ we have relation
\beq\label{eq:Tetatauo_relation}
T_{\eta, \tau}^o
\le
1 + \frac{2 \tau d}{|\mu_4 - 3|} \cdot \eta^{-1} \log\left(\frac{|\mu_4 - 3|}{B^8} \eta^{-1}\right)
\le
\frac{2(\tau + 1) d}{|\mu_4 - 3|} \cdot \eta^{-1} \log\left(\frac{|\mu_4 - 3|}{B^8} \eta^{-1}\right)
.
\eeq
Combining \eqref{eq:lemm:convergence_result-norm1}, \eqref{eq:lemm:convergence_result-norm2} together and using relation \eqref{eq:Tetatauo_relation}, we have
\beq\label{eq:lemm:convergence_result-final}
\begin{aligned}
\left| \tan \angle \left(\vb^{(t)}, \eb_1\right) \right|
&\le
\sqrt{d}
\cdot
\left(1 - \frac{\eta}{2d} |\mu_4 - 3|\right)^{t - T_{\eta, 0.5}^o}
\\&\quad
+
\sqrt{\tau + 1} C_{\ref{lemm:convergence_result}, L} \log^{5 / 2}\delta^{-1}
\cdot
\frac{B^4}{|\mu_4 - 3|^{1 / 2}}
\cdot
\sqrt{d^3 \eta \log\left(\frac{|\mu_4 - 3|}{B^8}\eta^{-1}\right)}
,
\end{aligned}\eeq
where constant $C_{\ref{lemm:convergence_result}, L} \equiv 2\sqrt{2} C_{\ref{lemm:expo}, L}$.
To complete proof of Lemma \ref{lemm:convergence_result}, we provide a lower bound on probability of event $\cH_{\ref{lemm:convergence_result}, L}$ by taking union bound
$$
\PP(\cH_{\ref{lemm:convergence_result}, L})
\ge
1 - \sum_{k = 2}^d \PP(\cH_{k; \ref{lemm:expo}, L}^c) - \PP(\cH_{\ref{lemm:initW}, L}^c) - \sum_{k = 2}^d \PP(\cH_{k; \ref{lemm:hitt}, L}^c)
\ge
1 - \left(6\tau + 27 + \frac{10368}{\log^5\delta^{-1}}\right) d \delta - 3 \epsilon
.
$$
Applying the scaling relation \eqref{eq:uv} of $\{\ub^{(t)}\}_{t \ge 0}$ and $\{\vb^{(t)}\}_{t \ge 0}$ to \eqref{eq:lemm:convergence_result-final} on the event $\cH$ completes the proof of \eqref{eq:uniform_cr}, and hence Lemma \ref{lemm:convergence_result}.

\end{enumerate}

\end{proof}

\pb\subsection{Proof of Theorem \ref{theo:finite_sample}}\label{sec:uniform_theo}
Now we are ready to derive the finite-sample error bound and prove Theorem \ref{theo:finite_sample}.

\begin{proof}[Proof of Theorem \ref{theo:finite_sample}]

\begin{enumerate}[label=(\arabic*)]
\item
Analogous to the proof of Theorem \ref{theo:finite_sample2}, we sharply upper- and lower-bound the rescaled time $T_{\eta(T), \tau}^o$.
Using relation $\frac{B^4}{|\mu_4 - 3|} \ge \frac18$ in Lemma \ref{lemm:B_mu4_relation} and elementary inequality $\frac{\log T}{T} \le \frac{1}{20}$, we have
\beq\label{eq:uniform_etaT_cond}
\frac{|\mu_4 - 3|}{2d} \eta(T)
=
\frac{2}{T} \log\left(\frac{|\mu_4 - 3|^2}{4 B^8 d} T\right)
\le
\frac13
.
\eeq
Since $\frac{1}{- \log(1 - x)} \le \frac{1}{x}$ for $x \in (0, 1)$, similar to \eqref{eq:Tetatauo_relation} we have
\beq\label{eq:T_eta_ub2}
\begin{aligned}
\lefteqn{
T_{\eta(T), \tau}^o
\le
1 + \frac{2\tau d}{|\mu_4 - 3|} \eta(T)^{-1} \log\left(\frac{|\mu_4 - 3|}{B^8} \eta(T)^{-1}\right)
}
\\&=
1 + \frac{\tau}{2 \log\left(\frac{|\mu_4 - 3|^2}{4 B^8 d} T\right)} T \log\left(
\frac{|\mu_4 - 3|^2 T}{4 B^8 d \log\left(\frac{|\mu_4 - 3|^2}{4 B^8 d} T\right)}
\right)
\\&=
1 + \frac{\tau}{2} T
\cdot
\frac{\log\left(\frac{|\mu_4 - 3|^2}{4 B^8 d} T\right) - \log\left(\log\left(\frac{|\mu_4 - 3|^2}{4 B^8 d} T\right)\right)}{\log\left(\frac{|\mu_4 - 3|^2}{4 B^8 d} T\right)}
\le
\frac{\tau + 1}{2} T - 1
,
\end{aligned}\eeq
where the last inequality holds due to $\frac{|\mu_4 - 3|^2}{4B^8 d} T \ge e$ under scaling condition \eqref{eq:uniform_scaling} with constants $C_{\ref{theo:finite_sample}, T}^* \equiv 96 C_{\ref{lemm:convergence_result}, L}^*, C_{\ref{theo:finite_sample}, T}' = 10425$.

On the lower bound side, previously in \eqref{eq:Taylor1} we show that $\frac{1}{- \log(1 - x)} \ge \frac{4}{5 x}$ for all $x \in (0, 1 / 3]$, which can be applied to $T_{\eta(T), \tau}^o$ by replacing $x$ with $\frac{|\mu_4 - 3|}{2d} \eta(T)$ due to \eqref{eq:uniform_etaT_cond}.
By noticing that $T \ge 100$ and scaling condition \eqref{eq:uniform_scaling2} guarantee the following inequalities
\beq\label{eq:uniform_scaling_new2}
\frac{4 \log T}{T}
\le
\frac13
,\qquad
\frac{8 B^8}{|\mu_4 - 3|^2} \cdot \frac{d \log T}{\sqrt{T}}
\le
1
,
\eeq
we have 
\beq\label{eq:T_eta_lb2}
\begin{aligned}
&
T_{\eta(T), \tau}^o
\ge
\frac{\tau \log\left(\frac{|\mu_4 - 3|}{B^8} \eta(T)^{-1}\right)}{- \log\left(1 - \frac{|\mu_4 - 3|}{2d} \eta(T)\right)}
\ge
\frac{8 \tau d \log\left(\frac{|\mu_4 - 3|}{B^8} \eta(T)^{-1}\right)}{5 |\mu_4 - 3| \eta(T)}
\\&\ge
\frac{2 \tau T}{5 \log\left(\frac{|\mu_4 - 3|^2}{4 B^8 d} T\right)}
\cdot
\log\left(
\frac{|\mu_4 - 3|^2 T}{4 B^8 d \log\left(\frac{|\mu_4 - 3|^2}{4 B^8 d} T\right)}
\right)
\ge
\frac{\tau}{5} T
,
\end{aligned}\eeq
where in the last step we use the elementary inequality $\log\left(\frac{x}{\log x}\right) \ge \frac12 \log x$ for all $x \ge e$, since $\frac{|\mu_4 - 3|^2}{4 B^8 d} T \ge e$ is satisfied under scaling condition \eqref{eq:uniform_scaling}.

\item
From \eqref{eq:T_eta_ub2} and \eqref{eq:T_eta_lb2} we find that $T \in [T_{\eta(T), 1}^o + 1, T_{\eta(T), 5}^o]$.
By letting
$
\epsilon
=
\left(
57 + \frac{10368}{\log^5\delta^{-1}}
\right) d \delta
$
along with \eqref{eq:T_eta_ub2} we have
$$
T_{\eta(T), 1}^o \delta^{-1}
\le
C_{\ref{theo:finite_sample}, T}' \epsilon^{-1} d T
,
$$
for positive, absolute constant $C_{\ref{theo:finite_sample}, T}' \equiv 10425$, due to $T_{\eta(T), 1}^o \le T$ given by \eqref{eq:T_eta_ub2} and $\delta \le e^{-1}$ guaranteed by $\epsilon \le 1 / 4$.

The third scaling condition in \eqref{eq:uniform_scaling} with our pick $\tau = 5$ and $\eta = \eta(T)$ is satisfied by
\beq\label{eq:uniform_scaling_verify3}
\begin{aligned}
&
24 C_{\ref{lemm:convergence_result}, L}^* \log^8(C_{\ref{theo:finite_sample}, T}' \epsilon^{-1} d T)
\cdot 
\frac{B^8}{|\mu_4 - 3|^2}
\cdot
\frac{d^3 \log^2 d \log\left(\frac{|\mu_4 - 3|^2 T}{4 B^8 d}\right)}{T}
\log\left(\frac{|\mu_4 - 3|^2 T}{4 B^8 d \log\left(\frac{|\mu_4 - 3|^2 T}{4 B^8 d}\right)}\right)
\\&\le
\frac{\epsilon^2}{\log^2\epsilon^{-1}}
.
\end{aligned}\eeq
From Lemma \ref{lemm:B_mu4_relation} we have $\frac{B^4}{|\mu_4 - 3|} \ge \frac18$.
Along with scaling condition \eqref{eq:uniform_scaling2} and $C_{\ref{theo:finite_sample}, T}^* \equiv 96 C_{\ref{lemm:convergence_result}, L}^*$, we have $T / d \ge 16$ and hence
\beq\label{eq:uniform_scaling_verify_mid}
1
\le
\log\left(\frac{|\mu_4 - 3|^2 T}{4 B^8 d}\right)
\le
2 \log (T / d)
,
\eeq
implying \eqref{eq:uniform_scaling_verify3} holds under scaling condition \eqref{eq:uniform_scaling2}.

To verify the second scaling condition in \eqref{eq:uniform_scaling}, we notice that the following holds under \eqref{eq:uniform_scaling2}
$$
\frac{B^8}{|\mu_4 - 3|} \eta(T)
=
\frac{4 B^8 d \log\left(\frac{|\mu_4 - 3|^2}{4 B^8 d} T\right)}{|\mu_4 - 3|^2 T}
\le
\frac{8 B^8 d \log T}{|\mu_4 - 3|^2 T}
<
e^{-1}
,
$$
and
$$
\eta(T)
=
\frac{4 d \log\left(\frac{|\mu_4 - 3|^2}{4 B^8 d} T\right)}{|\mu_4 - 3| T}
\le
\frac{8 \log (T / d)}{|\mu_4 - 3| (T / d)}
<
\frac{1}{|\mu_4 - 3|}
,
$$
due to \eqref{eq:uniform_scaling_verify_mid} and the elementary inequality $\frac{8 \log x}{x} < 1$ for all $x \ge 100$.

Therefore, all scaling conditions required in Lemma \ref{lemm:convergence_result} are satisfied by scaling condition \eqref{eq:uniform_scaling2} in Theorem \ref{theo:finite_sample}.

\item
Using $B^8 \eta / |\mu_4 - 3| \le e^{-1}$ and $\delta \le e^{-1}$ given by \eqref{eq:uniform_scaling} and $T_{\eta, 1}^o + 1 - T_{\eta, 0.5}^o \ge T_{\eta, 0.5}^o$ following its definition \eqref{Tetatau}, for all $t \in [T_{\eta, 1}^o + 1, T_{\eta, 5}^o]$, on the event $\cH_{\ref{lemm:convergence_result}, L}$ we have 
$$\begin{aligned}
&
\sqrt{d} \cdot \left(1 - \frac{\eta}{2d}|\mu_4 - 3|\right)^{t - T_{\eta, 0.5}^o}
\le
\sqrt{d} \cdot \left(1 - \frac{\eta}{2d}|\mu_4 - 3|\right)^{T_{\eta, 0.5}^o}
\le
\frac{B^4}{|\mu_4 - 3|^{1 / 2}} \cdot \sqrt{d \eta}
\\&\le
(3 - \sqrt{6}) C_{\ref{lemm:convergence_result}, L} \log^{5 / 2}\delta^{-1}
\cdot \frac{B^4}{|\mu_4 - 3|^{1 / 2}}
\cdot \sqrt{d^3 \eta \log\left(\frac{|\mu_4 - 3|}{B^8}\eta^{-1}\right)}
.
\end{aligned}$$
Therefore, from Lemma \ref{lemm:convergence_result}, on the event $\cH_{\ref{lemm:convergence_result}, L}$ we have for all $t \in [T_{\eta, 1}^o + 1, T_{\eta, 5}^o]$ that
\beq\label{eq:uniform_lemm-theo_transit}
\left| \tan \angle\left(
\ub^{(t)}, \ab_\cI
\right) \right|
\le
3 C_{\ref{lemm:convergence_result}, L} \log^{5 / 2}\delta^{-1}
\cdot
\frac{B^4}{|\mu_4 - 3|^{1 / 2}}
\cdot
\sqrt{
d^3 \eta \log\left(\frac{|\mu_4 - 3|}{B^8}\eta^{-1}\right)
}
.
\eeq

Plugging in our choice of $\tau = 5$ and $\eta(T) = \frac{4 d \log\left(\frac{|\mu_4 - 3|^2}{4 B^8 d} T\right)}{|\mu_4 - 3| T}$ to \eqref{eq:uniform_lemm-theo_transit} and using \eqref{eq:uniform_scaling_verify_mid}, we know that there exists an event $\cH_{\ref{theo:finite_sample}, T} \equiv \cH_{\ref{lemm:convergence_result}, L}$ with
$
\PP(\cH_{\ref{theo:finite_sample}, T})
\ge
1 - 4\epsilon
$
such that on event $\cH_{\ref{theo:finite_sample}, T}$ we have
$$
\left| \tan \angle\left(
\ub^{(T)}, \ab_\cI
\right) \right|
\le
C_{\ref{theo:finite_sample}, T} \log^{5 / 2}(C_{\ref{theo:finite_sample}, T}' \epsilon^{-1} d)
\cdot
\frac{B^4}{|\mu_4 - 3|}
\cdot
\sqrt{
\frac{d^4 \log^2 T}{T}
}
,
$$
where constants $C_{\ref{theo:finite_sample}, T} \equiv 12 C_{\ref{lemm:convergence_result}, L}, C_{\ref{theo:finite_sample}, T}' \equiv 10425$.

\end{enumerate}

\end{proof}

\pb\section{Secondary Lemmas in Warm Initialization Analysis}\label{sec:warm_aux}

For notational simplicity, we denote $\vb \equiv \vb^{(t - 1)}$ and $\bY \equiv \bY^{(t)}$.
We first provide a lemma on Orlicz $\psi_2$-norm of $\vb^\top \bY$ and the relation between $B$ and $\mu_4$.

\begin{lemma}\label{lemm:B_mu4_relation}
Let Assumption \ref{assu:distribution} hold.
For each rotated observation $\bY$ and any unit vector $\vb$, we have Orlicz $\psi_2$-norm $\|\vb^\top \bY\|_{\psi_2} \le B$ and the following relation of $B$ and $\mu_4$
\beq\label{eq:B_mu4_relation}
\frac{B^4}{|\mu_4 - 3|}
\ge
\frac18
.
\eeq
\end{lemma}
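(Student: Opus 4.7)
The lemma decomposes into (a) a bound on the sub-Gaussian parameter of an arbitrary unit-projection of $\bY$, and (b) a purely deterministic relation between $B$ and the excess kurtosis $\mu_4 - 3$. Both will follow almost immediately from the MGF-style definition of the Orlicz-$\psi_2$ norm adopted in Assumption~\ref{assu:distribution}(iii), combined with the structural identity $\bY = \Pb \Ab^\top \bX = \Pb \bZ$, which holds because $\Ab \Ab^\top = \Ib$ makes $\Ab^\top \Ab = \Ib$.

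For part~(a), I would write $\vb^\top \bY = \w^\top \bZ$ with $\w := \Pb^\top \vb$, noting that $\|\w\| = 1$ since $\Pb$ is a permutation matrix. By independence of the coordinates of $\bZ$, the moment generating function factorizes and each factor is controlled by Assumption~\ref{assu:distribution}(iii):
\[
\EE\,e^{\lambda \vb^\top \bY}
 = \prod_{i=1}^{d} \EE\,e^{\lambda w_i Z_i}
 \le \prod_{i=1}^{d} e^{(3B^2/8)\,w_i^2 \lambda^2/2}
 = e^{(3B^2/8)\,\lambda^2/2},
\]
so $\|\vb^\top \bY\|_{\psi_2}^2 \le 3B^2/8 \le B^2$, giving the desired bound. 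The slack factor $\sqrt{3/8}$ absorbed into Assumption~\ref{assu:distribution}(iii) is precisely what lets the final conclusion come out as $B$ rather than some larger absolute constant.

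For part~(b), matching second-order Taylor coefficients in $\EE e^{\lambda Z_i} \le e^{(3B^2/8)\lambda^2/2}$ gives $\mu_2 \le 3B^2/8$, which together with $\mu_2 = 1$ from Assumption~\ref{assu:distribution}(ii) yields $B^2 \ge 8/3$ and hence $B^4 \ge 64/9$. For the upper half I would apply the standard sub-Gaussian moment bound $\EE Z_i^{2k} \le 2\,k!\,(2\sigma^2)^k$ (derived by tail integration from $\PP(|Z_i| > t) \le 2 e^{-t^2/(2\sigma^2)}$) with $k = 2$ and $\sigma^2 = 3B^2/8$, obtaining $\mu_4 \le 16(3B^2/8)^2 = 9B^4/4$. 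I then split on the sign of $\mu_4 - 3$: if $\mu_4 > 3$, then $|\mu_4 - 3| = \mu_4 - 3 \le 9B^4/4 \le 8B^4$; if $\mu_4 \le 3$, then $|\mu_4 - 3| \le 3 \le 8B^4$ using $B^4 \ge 64/9$. In both regimes $|\mu_4 - 3| \le 8B^4$, which rearranges to \eqref{eq:B_mu4_relation}. There is no real conceptual obstacle here; the only mild subtlety is ensuring the moment bound has a small enough constant to comfortably undercut the $8B^4$ threshold in the $\mu_4 > 3$ case, which the $16\sigma^4$ bound does with plenty of room.
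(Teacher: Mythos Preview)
Your argument is correct, with one subtlety in part~(a) worth flagging, and part~(b) takes a genuinely different route from the paper.

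For~(a), both you and the paper factorize the MGF to conclude that $\vb^\top\bY$ has sub-Gaussian parameter $\sqrt{3/8}\,B$. The paper then inserts an explicit conversion step---from the bound $\EE e^{\lambda X^2/(2\sigma^2)}\le(1-\lambda)^{-1/2}$ (Wainwright, Theorem~2.6) with $\lambda=3/4$---to arrive at $\EE e^{(\vb^\top\bY)^2/B^2}\le 2$, which is the Orlicz bound in the sense of Appendix~\ref{sec:psi_alpha}. You instead identify sub-Gaussian parameter with $\psi_2$-norm directly, following the footnote's informal convention; your intermediate claim $\|\vb^\top\bY\|_{\psi_2}^2\le 3B^2/8$ is not literally true under the paper's working definition of $\|\cdot\|_{\psi_2}$, and the conversion constant is precisely what makes the final answer land on $B$ rather than $\sqrt{3/8}\,B$. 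This is a one-line fix, not a conceptual gap.

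For~(b), your route differs from the paper's. You extract $B^2\ge 8/3$ by matching the $\lambda^2$ coefficients in the MGF inequality against $\mu_2=1$, bound $\mu_4\le 9B^4/4$ via the standard sub-Gaussian moment estimate, and split on the sign of $\mu_4-3$. The paper instead derives the coarser $\mu_4\le 4B^4$ by tail integration against the Orlicz bound already established in part~(1), pairs it with $\mu_4\ge(\EE Y_i^2)^2=1$, and finishes in one stroke via $B^4/|\mu_4-3|\ge \mu_4/(4|\mu_4-3|)\ge 1/8$, without any case split. Your approach yields sharper intermediate constants (notably $B^4\ge 64/9$); the paper's is slightly shorter and avoids the sign dichotomy.
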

With the bound on Orlicz $\psi_2$-norm given in Lemma \ref{lemm:B_mu4_relation} and $T_{\eta, 1}^*$ defined in \eqref{eq:Tetatau*}, we introduce truncation barrier parameter
\beq\label{Mdef}
\rB_*\equiv B \log^{1 / 2}(T_{\eta, 1}^* \delta^{-1})
,
\eeq
where $\delta \in (0, e^{-1}]$ is some fixed positive.
For each coordinate $k\in [2,d]$ define the first time the norm of a data observation exceeds the truncation barrier $\rB_*$ as
\beq\label{cTM}
\cT_{\rB_*, k}
=
\inf\left\{ t\ge 1: 
\left| \vb^{(t-1)}\,^\top \bY^{(t)} \right| > \rB_*
\text{ or } \left|Y_1^{(t)}\right|  > \rB_*
\text{ or } \left|Y_k^{(t)}\right|  > \rB_*
\right\}
,
\eeq

\pb\subsection{Proof of Lemma \ref{lemm:fast_expo}}\label{sec:proof,prop:fast_expo}

For each $t \ge 1$, we define random variable
\beq\label{Uinfquan}
Q_{U, k}^{(t)}
\equiv
U_k^{(t)} - U_k^{(t - 1)}
-
\sign(\mu_4 - 3) \eta
\cdot
(\vb^\top \bY)^3 v_1^{-2} \left(v_1 Y_k - v_k Y_1\right)
.
\eeq

\begin{lemma}\label{lemm:Uinfquan}
Let $\rB$ be any positive value. For each coordinate $k \in [2, d]$ and any $t \ge 1$, on the event 
\beq\label{eq:recursive_event}
\cH_{k; \ref{lemm:Uinfquan}, L}^{(t)}
\equiv
\left(
|\vb^\top \bY| \le \rB
,~
|Y_1| \le \rB
,~
|Y_k| \le \rB
,~
v_1^2 \ge 1 / d
,~
v_1^2 \ge v_k^2
\right)
,
\eeq
for stepsize $\eta \le 1 / (2 \rB^4 d^{1 / 2})$ we have $|Q_{U, k}^{(t)}| \le 4 \rB^8 \eta^2 v_1^{-2}$.
\end{lemma}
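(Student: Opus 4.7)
The plan is to exploit the scale invariance of the ratio $U_k = v_k/v_1$: since the projection $\Pi_1$ in \eqref{eq:rescaled_update} merely divides the pre-projection vector by a positive scalar, it cancels in the quotient. Writing $s = \sign(\mu_4-3)$, $g = (\vb^{\top}\bY)^3$, $\vb = \vb^{(t-1)}$ and $\bY = \bY^{(t)}$, we obtain the exact identity
$$U_k^{(t)} = \frac{v_k + \eta s g\, Y_k}{v_1 + \eta s g\, Y_1}.$$
Subtracting $U_k^{(t-1)} = v_k/v_1$ and combining fractions produces
$$U_k^{(t)} - U_k^{(t-1)} = \frac{\eta s g\, (v_1 Y_k - v_k Y_1)}{v_1\,(v_1 + \eta s g\, Y_1)}.$$

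Next I would extract the linear-in-$\eta$ piece by replacing the denominator $v_1 + \eta s g Y_1$ with $v_1$. A direct subtraction yields
$$Q_{U,k}^{(t)} = \eta s g\, (v_1 Y_k - v_k Y_1)\Big[\tfrac{1}{v_1(v_1 + \eta s g Y_1)} - \tfrac{1}{v_1^2}\Big] = -\,\frac{\eta^2 g^2\, Y_1\,(v_1 Y_k - v_k Y_1)}{v_1^2\,(v_1 + \eta s g\, Y_1)},$$
which is manifestly of order $\eta^2$.

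It remains to bound the numerator and the denominator on the event $\cH_{k;\ref{lemm:Uinfquan},L}^{(t)}$. The conditions $|\vb^{\top}\bY|,|Y_1|,|Y_k| \le \rB$ yield $g^2 \le \rB^6$ and $|Y_1| \le \rB$; the condition $v_1^2 \ge v_k^2$ gives $|v_1 Y_k - v_k Y_1| \le (|v_1|+|v_k|)\rB \le 2|v_1|\rB$. For the denominator, the stepsize assumption $\eta \le 1/(2\rB^4 d^{1/2})$ combined with $v_1^2 \ge 1/d$ implies $|\eta s g Y_1| \le \eta \rB^4 \le 1/(2\sqrt{d}) \le |v_1|/2$, so $|v_1 + \eta s g Y_1| \ge |v_1|/2$. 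Multiplying the estimates together gives
$$|Q_{U,k}^{(t)}| \;\le\; \frac{\eta^2\cdot \rB^6\cdot \rB\cdot 2|v_1|\rB}{v_1^2\cdot |v_1|/2} \;=\; \frac{4\rB^8\,\eta^2}{v_1^2},$$
as claimed.

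The only delicate step is the lower bound on $|v_1 + \eta s g Y_1|$: without the stepsize hypothesis $\eta \le 1/(2\rB^4 d^{1/2})$ paired with the coordinate lower bound $v_1^2 \ge 1/d$, the denominator could collapse or even change sign, in which case the second-order remainder term would not satisfy any $O(\eta^2)$ bound. Everything else is routine algebra, so the heart of the argument is simply verifying that the two hypotheses on $\eta$ and $v_1^2$ combine to keep $|v_1 + \eta s g Y_1|$ at least $|v_1|/2$.
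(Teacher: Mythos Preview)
Your proof is correct and follows essentially the same approach as the paper: both compute $U_k^{(t)}-U_k^{(t-1)}$ from the update rule, isolate the linear term, and bound the remainder using $|\eta s g Y_1| \le |v_1|/2$. The only cosmetic difference is that the paper writes the remainder as $\eta_S\big[(1+\eta_S g Y_1/v_1)^{-1}-1\big]\cdot g\,v_1^{-2}(v_1 Y_k - v_k Y_1)$ and invokes $|(1+x)^{-1}-1|\le 2|x|$ for $|x|\le 1/2$, whereas you compute the exact quotient and bound the denominator directly; the arithmetic and the final estimate are identical.
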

 
\begin{lemma}\label{lemm:infiU}
Let Assumption \ref{assu:distribution} hold.
For each coordinate $k \in [2, d]$ and any $t \ge 1$, we have
\beq\label{UinfE}
\EE\left[ \left.
\sign(\mu_4 - 3) \eta
\cdot
(\vb^\top \bY)^3 v_1^{-2} \left(v_1 Y_k - v_k Y_1\right)
\right| \cF_{t - 1} \right] 
 =
- \eta |\mu_4 - 3| \cdot (v_1^2 - v_k^2) U_k^{(t - 1)}
.
\eeq
\end{lemma}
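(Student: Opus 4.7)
\textbf{Proof proposal for Lemma \ref{lemm:infiU}.}
The plan is a direct moment computation, exploiting the fact that conditional on $\cF_{t-1}$ the vector $\vb = \vb^{(t-1)}$ is deterministic, while $\bY = \bY^{(t)} = \Pb \Ab^\top \bX^{(t)} = \Pb \bZ^{(t)}$ is simply a permutation of the independent components $\bZ^{(t)}$. Hence $Y_1,\dots,Y_d$ are independent with $\EE Y_i = 0$, $\EE Y_i^2 = 1$, $\EE Y_i^4 = \mu_4$ for all $i$, by Assumption \ref{assu:distribution}(i)--(ii). The key identity I would establish is
\[
\EE\bigl[(\vb^\top \bY)^3 Y_m\bigr] = 3 v_m + (\mu_4 - 3) v_m^3, \qquad m \in \{1,k\},
\]
after which the lemma reduces to a one-line algebraic simplification.

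To prove this identity I would expand
\[
(\vb^\top \bY)^3 Y_m = \sum_{i,j,l} v_i v_j v_l\, Y_i Y_j Y_l Y_m
\]
and classify the nonvanishing contributions of $\EE[Y_i Y_j Y_l Y_m]$ based on how the multiset $\{i,j,l,m\}$ decomposes. Because the $Y$'s are independent and mean zero, any surviving term must pair up every index, so either \emph{(a)} all four indices equal $m$, giving $v_m^3 \mu_4$, or \emph{(b)} $m$ is paired with exactly one of $i,j,l$ while the remaining two match each other at some index $\ne m$, giving a total of $3 v_m \sum_{j\ne m} v_j^2 \cdot \EE[Y_m^2]\EE[Y_j^2] = 3 v_m(1 - v_m^2)$ upon using $\|\vb\|=1$ and $\mu_2 = 1$. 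All other configurations (three-of-a-kind, singleton index) involve an unpaired $Y$ and vanish; in particular the unknown third moment $\mu_3$ never appears, which is the only subtle combinatorial point in the argument.

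Substituting this identity into $v_1 \EE[(\vb^\top \bY)^3 Y_k] - v_k \EE[(\vb^\top \bY)^3 Y_1]$, the degree-one terms $3 v_1 v_k$ cancel and I am left with
\[
(\mu_4 - 3)\bigl( v_1 v_k^3 - v_1^3 v_k \bigr) = -(\mu_4 - 3) v_1 v_k (v_1^2 - v_k^2).
\]
Multiplying by $\sign(\mu_4-3)\eta v_1^{-2}$ converts the factor $\sign(\mu_4 - 3)(\mu_4 - 3)$ into $|\mu_4 - 3|$ and folds $v_k/v_1$ into $U_k^{(t-1)}$ (as defined in \eqref{eq:Uk}), yielding the claimed expression $-\eta|\mu_4-3|(v_1^2 - v_k^2) U_k^{(t-1)}$. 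The whole proof is thus purely a bookkeeping exercise; the only item requiring care is the combinatorial enumeration of index patterns in Step (b), which I would present as a short case analysis to make clear why $\mu_3$ drops out.
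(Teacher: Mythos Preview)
Your proposal is correct and follows exactly the same approach as the paper: establish the identity $\EE[(\vb^\top\bY)^3 Y_m] = (\mu_4-3)v_m^3 + 3v_m$ (which is the paper's \eqref{eq:vY3Y}), then substitute and simplify. You in fact supply more detail than the paper does, since the paper simply asserts \eqref{eq:vY3Y} without the index-pattern enumeration.
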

For each $k \in [2, d]$ and $t \ge 1$, at the $t$-th iteration we define
\beq\label{ekt}
e_k^{(t)}
\equiv 
\sign(\mu_4 - 3) \eta
\cdot
(\vb^\top \bY)^3 v_1^{-2} \left(v_1 Y_k - v_k Y_1\right)
+
\eta |\mu_4 - 3|\cdot ( v_1^2 - v_k^2 ) U_k^{(t-1)}
\eeq
which, indexed by $t$, forms a sequence of martingale differences with respect to $\cF_{t - 1}$.

\begin{lemma}\label{lemm:representation}
For each coordinate $k \in [2, d]$ and any $t \ge 1$, $U_k^{(t)}$ has linear representation
\beq\label{Urep}
U_k^{(t)} 
 = 
U_k^{(0)}
-
\eta |\mu_4 - 3| \sum_{s = 0}^{t - 1} \left( (v_1^{(s)})^2 - (v_k^{(s)})^2 \right) U_k^{(s)}
+
\sum_{s = 1}^t Q_{U, k}^{(s)}
+
\sum_{s = 1}^t e_k^{(s)}
.
\eeq
\end{lemma}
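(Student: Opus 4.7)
The proof is essentially a one-step algebraic verification: we just need to add the definitions of $Q_{U,k}^{(t)}$ in \eqref{Uinfquan} and $e_k^{(t)}$ in \eqref{ekt}, observe that the stochastic term $\sign(\mu_4 - 3)\eta \cdot (\vb^\top \bY)^3 v_1^{-2}(v_1 Y_k - v_k Y_1)$ cancels, and then telescope over $s = 1, \dots, t$.

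More concretely, the plan is as follows. First, I would unfold the definition
$$
Q_{U,k}^{(s)} + e_k^{(s)}
=
\left( U_k^{(s)} - U_k^{(s-1)} \right)
+
\eta |\mu_4 - 3|\, \left( (v_1^{(s-1)})^2 - (v_k^{(s-1)})^2 \right) U_k^{(s-1)},
$$
where the two copies of $\sign(\mu_4-3)\eta (\vb^\top \bY)^3 v_1^{-2}(v_1 Y_k - v_k Y_1)$ (one with a minus sign from $Q_{U,k}^{(s)}$ and one with a plus sign from $e_k^{(s)}$) cancel exactly. Rearranging gives the per-step increment
$$
U_k^{(s)} - U_k^{(s-1)}
=
- \eta |\mu_4 - 3|\, \left( (v_1^{(s-1)})^2 - (v_k^{(s-1)})^2 \right) U_k^{(s-1)}
+
Q_{U,k}^{(s)} + e_k^{(s)}.
$$

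Summing this identity from $s=1$ to $s=t$, the left-hand side telescopes to $U_k^{(t)} - U_k^{(0)}$, while re-indexing the drift term (with $s-1$ running over $0,\dots,t-1$) recovers the claimed expression
$$
U_k^{(t)}
=
U_k^{(0)}
-
\eta |\mu_4 - 3| \sum_{s=0}^{t-1} \left((v_1^{(s)})^2 - (v_k^{(s)})^2 \right) U_k^{(s)}
+
\sum_{s=1}^{t} Q_{U,k}^{(s)}
+
\sum_{s=1}^{t} e_k^{(s)}.
$$
There is no substantive obstacle here: the lemma is purely a bookkeeping statement that packages the one-step update of $U_k^{(t)}$ into a drift term (the exponential-decay driver $-\eta|\mu_4-3|((v_1)^2 - (v_k)^2) U_k$), a higher-order remainder $Q_{U,k}^{(s)}$ (controlled in Lemma \ref{lemm:Uinfquan}), and a martingale-difference noise $e_k^{(s)}$ (whose conditional mean vanishes by Lemma \ref{lemm:infiU}). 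All the real work—bounding the remainder term and concentrating the martingale sum—is done in the surrounding lemmas; the representation itself follows immediately from a cancellation and a telescoping sum.
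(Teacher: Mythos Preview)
Your proposal is correct and matches the paper's proof essentially line for line: the paper also adds the definitions \eqref{Uinfquan} and \eqref{ekt} to obtain the one-step increment identity (displayed as \eqref{Ugeom}) and then telescopes over $s=1,\dots,t$.
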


\begin{lemma}\label{lemm:1order_concentration}
Let Assumption \ref{assu:distribution} hold and initialization $\vb^{(0)} \in \cD_{\text{warm}}$.
Let $\delta \in (0, e^{-1}]$ and $\tau$ be any fixed positive.
For each coordinate $k\in [2,d]$, there exists an event $\cH_{k; \ref{lemm:1order_concentration}, L}$ satisfying
$$
\PP(\cH_{k; \ref{lemm:1order_concentration}, L})
\ge
1 - \left(
6 + \frac{5184}{\log^5\delta^{-1}}
\right)\delta
,
$$
such that on the event $\cH_{k; \ref{lemm:1order_concentration}, L}$ the following concentration result holds
$$
\max_{1 \le t \le T_{\eta, \tau}^* \wedge \cT_x} \left|
\sum_{s = 1}^t e_k^{(s)}
\right|
\le
C_{\ref{lemm:1order_concentration}, L} \log^{5 / 2}\delta^{-1}
\cdot
B^4
\cdot
\eta (T_{\eta, \tau}^*)^{1 / 2}
,
$$
where $C_{\ref{lemm:1order_concentration}, L}$ is a positive, absolute constant.
\end{lemma}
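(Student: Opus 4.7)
The plan is to prove this martingale concentration bound by a standard truncate-then-apply-maximal-inequality argument, exploiting the fact that $\{e_k^{(s)}\}$ is already a martingale difference sequence with respect to $\{\cF_{s-1}\}$ thanks to Lemma \ref{lemm:infiU}. Direct application of a sub-Gaussian maximal inequality is inappropriate here because $e_k^{(s)}$ involves the cubic factor $(\vb^\top \bY)^3$ and consequently has only $\psi_{2/3}$-type tails; truncation is required to restore uniform boundedness of the increments.

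First I would introduce truncated increments $\tilde e_k^{(s)} \equiv e_k^{(s)} \mathbb{1}\!\left(s \le \cT_{\rB_*, k} \wedge \cT_x\right)$, where $\cT_{\rB_*,k}$ is the truncation stopping time from \eqref{cTM} with barrier $\rB_* = B \log^{1/2}(T_{\eta,1}^* \delta^{-1})$ as in \eqref{Mdef}. Since the events $\{s \le \cT_{\rB_*,k}\}$ and $\{s \le \cT_x\}$ are both $\cF_{s-1}$-measurable (they involve only data through step $s-1$), the product $\tilde e_k^{(s)}$ remains a martingale difference. I would then establish a uniform bound $|\tilde e_k^{(s)}| \le C \eta \rB_*^4 = C \eta B^4 \log^2(T_{\eta,1}^*\delta^{-1})$ as follows: on $\{s \le \cT_x\}$ the iterate $\vb^{(s-1)}$ lies in $\cD_{\text{warm-aux}}$ so $v_1^{(s-1),2} \ge 2/3$ and $|U_k^{(s-1)}| \le 1/\sqrt{2}$; on $\{s \le \cT_{\rB_*,k}\}$ the three quantities $|\vb^{(s-1)\top}\bY^{(s)}|$, $|Y_1^{(s)}|$, $|Y_k^{(s)}|$ are each bounded by $\rB_*$. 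Combined with $|\mu_4 - 3| \le 8 B^4$ from Lemma \ref{lemm:B_mu4_relation}, this yields the claimed uniform bound on both terms of $e_k^{(s)}$ in definition \eqref{ekt}.

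Next I would invoke a maximal Azuma-Hoeffding inequality on the bounded martingale difference sequence $\{\tilde e_k^{(s)}\}$ to obtain, on an event of probability at least $1 - 2\delta$, that $\max_{1 \le t \le T_{\eta,\tau}^*} |\sum_{s=1}^t \tilde e_k^{(s)}| \le C \eta \rB_*^4 \sqrt{T_{\eta,\tau}^* \log \delta^{-1}}$. Substituting $\rB_*$ and using the scaling condition \eqref{eq:warm_scaling} to ensure $\log(T_{\eta,1}^* \delta^{-1}) = O(\log \delta^{-1})$ consolidates $\log^2(T_{\eta,1}^* \delta^{-1}) \cdot \sqrt{\log \delta^{-1}}$ into the target factor $\log^{5/2}\delta^{-1}$. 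Separately, I would bound $\PP(\cT_{\rB_*,k} \le T_{\eta,\tau}^*)$ by applying the sub-Gaussian tail $\PP(|X| > \rB_*) \le 2\exp(-\rB_*^2 / B^2) \le 2\delta / T_{\eta,1}^*$ (Lemma \ref{lemm:B_mu4_relation}) to each of the three relevant quantities at each of the $T_{\eta,\tau}^* \lesssim T_{\eta,1}^*$ steps, contributing the additive $O(\delta)$ probability term. On the intersection of these two events we have $\tilde e_k^{(s)} = e_k^{(s)}$ throughout $s \le T_{\eta,\tau}^* \wedge \cT_x$, transferring the concentration bound to the un-truncated sum.

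The main technical obstacle will be the careful bookkeeping of log factors: the target $\log^{5/2}\delta^{-1}$ requires precise tuning between the truncation barrier (contributing $\log^2$ via $\rB_*^4$) and the maximal-inequality tail (contributing $\log^{1/2}$), while the scaling condition \eqref{eq:warm_scaling} must absorb any additional $\log T_{\eta,1}^*$ that arises. A secondary subtlety is explaining the unusual $5184/\log^5\delta^{-1}$ term in the probability bound, which presumably arises from a Bernstein-type refinement applied to the martingale concentration (trading variance against the bounded-increments term) or from the constants associated with the particular maximal inequality being invoked (see the concentration tools in Appendix \S\ref{sec:concentration}). Verifying that $\cT_{\rB_*,k}$ and $\cT_x$ are indeed stopping times at the correct filtration level so that the martingale property is preserved under truncation is routine but must be done explicitly.
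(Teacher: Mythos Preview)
Your approach differs from the paper's and contains a genuine gap. The paper does \emph{not} truncate via $\cT_{\rB_*,k}$ in this lemma; instead it bounds the Orlicz $\psi_{1/2}$-norm of each increment directly. Using Lemmas \ref{lemm:B_mu4_relation}, \ref{lemm:Orlicz_norm_property} and \ref{lemm:psi1/2} one obtains, on the event $\{t\le \cT_x\}$ (where $|v_1|^{-1}\le \sqrt{3/2}$ and $|v_k/v_1|\le 1/\sqrt{2}$),
\[
\bigl\| e_k^{(t)} \mathbb{1}_{(t\le \cT_x)} \bigr\|_{\psi_{1/2}} \le 3\,C'_{\ref{lemm:psi1/2},L}\, B^4 \eta,
\]
a bound that is \emph{independent of $T_{\eta,\tau}^*$}. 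The paper then applies Theorem~\ref{theo:concentration} with $\alpha=1/2$, whose tail exponent $(z^2/\Sigma)^{\alpha/(\alpha+2)}=(z^2/\Sigma)^{1/5}$ is precisely what produces the $\log^{5/2}\delta^{-1}$ factor and the $5184/\log^5\delta^{-1}$ prefactor (the latter being $2\cdot 6^4\cdot 64/32$ after the two-sided union). No scaling condition is invoked anywhere.

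Your truncation route cannot reach the stated bound. With barrier $\rB_*=B\log^{1/2}(T_{\eta,1}^*\delta^{-1})$ the Azuma--Hoeffding step yields a bound proportional to $\eta\,\rB_*^4\sqrt{T_{\eta,\tau}^*\log\delta^{-1}} = B^4\eta\sqrt{T_{\eta,\tau}^*}\cdot \log^2(T_{\eta,1}^*\delta^{-1})\log^{1/2}\delta^{-1}$, so you need $\log^2(T_{\eta,1}^*\delta^{-1})\le C\log^2\delta^{-1}$. Your claim that the scaling condition \eqref{eq:warm_scaling} gives $\log(T_{\eta,1}^*\delta^{-1})=O(\log\delta^{-1})$ is false: take $\delta=e^{-1}$ fixed and let $\eta\to 0$; then $T_{\eta,1}^*\to\infty$ while $\log\delta^{-1}=1$. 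More importantly, Lemma~\ref{lemm:1order_concentration} as stated assumes \emph{only} Assumption~\ref{assu:distribution} and $\vb^{(0)}\in\cD_{\text{warm}}$, not \eqref{eq:warm_scaling}, so you are not entitled to invoke it. There is also a measurability slip: the event $\{s\le \cT_{\rB_*,k}\}$ is indeed $\cF_{s-1}$-measurable, but it constrains $\bY^{(1)},\dots,\bY^{(s-1)}$, not $\bY^{(s)}$; hence it does not yield the uniform bound $|\tilde e_k^{(s)}|\le C\eta\rB_*^4$ that you assert. Fixing this requires a recentering step that introduces additional bias terms, which you do not address.
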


\begin{lemma}\label{lemm:tail_probability}
Let $\delta \in (0, e^{-1}]$ and $\tau$ be any fixed positive.
For each coordinate $k \in [2, d]$, we have
\beq\label{tailcTM}
\PP\left(
\cT_{\rB_*, k}
\le
T_{\eta, \tau}^*
\right)
\le
6 (\tau + 1) \delta
.
\eeq
\end{lemma}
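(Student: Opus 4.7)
The plan is to bound each of the three tail-exceedance events entering the definition \eqref{cTM} of $\cT_{\rB_*, k}$ by a sub-Gaussian Chernoff bound at each step, and then union-bound over $s \in [1, T_{\eta, \tau}^*]$. First, I would verify sub-Gaussianity of the three relevant random variables at each step $s$: since $\bY^{(s)} = \Pb \Ab^\top \bX^{(s)}$ is, in distribution, a permutation of $\bZ^{(s)}$, both $Y_1^{(s)}$ and $Y_k^{(s)}$ have Orlicz $\psi_2$-norm at most $\sqrt{3/8}\,B \le B$ by Assumption \ref{assu:distribution}(iii). For the inner-product term $\vb^{(s-1)\top}\bY^{(s)}$, the vector $\vb^{(s-1)}$ is unit-norm and $\cF_{s-1}$-measurable while $\bY^{(s)}$ is independent of $\cF_{s-1}$ with independent sub-Gaussian coordinates, so by Lemma \ref{lemm:B_mu4_relation} its conditional $\psi_2$-norm given $\cF_{s-1}$ is again bounded by $B$.

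Second, I would apply the sub-Gaussian tail inequality to each of these three variables with $t = \rB_* = B\log^{1/2}(T_{\eta, 1}^*\delta^{-1})$ from \eqref{Mdef}. Using the Orlicz-$\psi_2$ formulation combined with Chernoff yields $\PP(|Z| > \rB_*) \le 2\exp(-c\,\log(T_{\eta, 1}^*\delta^{-1}))$ for a constant $c$ determined by the precise $\psi_2$ convention; calibrating this constant consistently with the definition of $\rB_*$ (equivalently, invoking the sharper Orlicz tail $\PP(|Z|>t)\le 2\exp(-t^2/B^2)$ available for sub-Gaussian sums of independent coordinates) gives a per-event, per-step bound of at most $2\delta/T_{\eta, 1}^*$. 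This holds conditionally on $\cF_{s-1}$ for $\vb^{(s-1)\top}\bY^{(s)}$, and unconditionally for $Y_1^{(s)}$ and $Y_k^{(s)}$; taking outer expectations in the conditional case preserves the same bound.

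Third, from the definition \eqref{eq:Tetatau*} one has $T_{\eta, \tau}^* \le (\tau+1)\,T_{\eta, 1}^*$, since $\lceil \tau x\rceil \le \tau x + 1 \le (\tau+1)x$ whenever $x \ge 1$, and $T_{\eta, 1}^* \ge 1$ under the scaling condition on $\eta$. The union bound over the three bad events and the at most $T_{\eta, \tau}^*$ time indices then gives
\[
\PP\bigl(\cT_{\rB_*, k} \le T_{\eta, \tau}^*\bigr)
\;\le\;
3 \cdot T_{\eta, \tau}^* \cdot \frac{2\delta}{T_{\eta, 1}^*}
\;\le\;
6(\tau+1)\delta,
\]
which is the claimed estimate. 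The only delicate point is matching the numerical constant inside the sub-Gaussian tail to the prefactor in $\rB_*$; aside from this convention-level bookkeeping, the lemma follows from routine Chernoff-plus-union-bound reasoning and is by far the simplest concentration ingredient in \S\ref{sec:warm_aux}.
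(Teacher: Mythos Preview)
Your proposal is correct and follows essentially the same route as the paper: per-step sub-Gaussian tail bounds of $2\delta/T_{\eta,1}^*$ for each of the three events, a union bound over $t\le T_{\eta,\tau}^*$, and the inequality $T_{\eta,\tau}^*\le(\tau+1)T_{\eta,1}^*$. The only point where you hedge---matching the constant in the tail bound to the prefactor in $\rB_*$---is handled in the paper without ambiguity by applying Markov's inequality directly to $\exp(|\vb^\top\bY|^2/B^2)$, whose expectation is at most $2$ by the Orlicz-$\psi_2$ bound in Lemma~\ref{lemm:B_mu4_relation}; this yields exactly $\PP(|\vb^\top\bY|>\rB_*)\le 2\exp(-\rB_*^2/B^2)=2\delta/T_{\eta,1}^*$.
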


\pb

With the above secondary lemmas at hand, we are now ready to prove Lemma \ref{lemm:fast_expo}.

\begin{proof}[Proof of Lemma \ref{lemm:fast_expo}]
We recall the definition of stopping time $\cT_x$ in \eqref{eq:cTx}.
Because stepsize $\eta \le 1 / (2 \rB_*^4 d^{1 / 2})$ holds under scaling condition \eqref{eq:warm_scaling}, on the event $(t \le T_{\eta, \tau}^* \wedge \cT_{x}) \cap (\cT_{\rB_*, k} > T_{\eta, \tau}^*) \subseteq \cH_{k; \ref{lemm:Uinfquan}, L}^{(t)}$, we have $v_1^{-2} \le \frac32$, and applying Lemma \ref{lemm:Uinfquan} gives
$$
|Q_{U, k}^{(t)}|
\le
4 \rB_*^8 \eta^2 v_1^{-2}
\le
6 B^8 \eta^2 \log^4(T_{\eta, 1}^* \delta^{-1})
.
$$
We define event $\cH_{k; \ref{lemm:fast_expo}, L} \equiv (\cT_{\rB_*, k} > T_{\eta, \tau}^*) \cap \cH_{k; \ref{lemm:1order_concentration}, L}$.
Applying Lemmas \ref{lemm:representation} and \ref{lemm:1order_concentration}, on the event $\cH_{k; \ref{lemm:fast_expo}, L}$ for all $t \le T_{\eta, \tau}^* \wedge \cT_{x}$ we have
\beq\label{prf_warm_prop_1}
\begin{aligned}
\lefteqn{
\left|
U_k^{(t)} - U_k^{(0)} + \eta |\mu_4 - 3| \sum_{s=0}^{t-1} \left( (v_1^{(s)})^2 - (v_k^{(s)})^2 \right) U_k^{(s)}
\right|
}
\\&\le
T_{\eta, \tau}^*
\cdot 6 B^8 \eta^2 \log^4(T_{\eta, 1}^* \delta^{-1})
+
C_{\ref{lemm:1order_concentration}, L} \log^{5 / 2}\delta^{-1}
\cdot B^4
\cdot \eta (T_{\eta, \tau}^*)^{1 / 2}
.
\end{aligned}\eeq
Scaling condition \eqref{eq:warm_scaling} and definition of $T_{\eta, \tau}^*$ in \eqref{eq:Tetatau*} imply that
\beq\label{eq:warm_scaling_sub1}
B^4 \eta (T_{\eta, \tau}^*)^{1 / 2} \log^4(T_{\eta, 1}^* \delta^{-1})
\le
\log^{5 / 2}\delta^{-1}
,
\eeq
Combining \eqref{prf_warm_prop_1} and \eqref{eq:warm_scaling_sub1}, we have
$$
\left|
U_k^{(t)} - U_k^{(0)} + \eta |\mu_4 - 3| \sum_{s=0}^{t-1} \left( (v_1^{(s)})^2 - (v_k^{(s)})^2 \right) U_k^{(s)}
\right|
\le
C_{\ref{lemm:fast_expo}, L} \log^{5 / 2}\delta^{-1}
\cdot B^4
\cdot \eta (T_{\eta, \tau}^*)^{1 / 2}
,
$$
where constant $C_{\ref{lemm:fast_expo}, L} \equiv C_{\ref{lemm:1order_concentration}, L} + 6$.
Scaling condition \eqref{eq:warm_scaling} implies $\eta |\mu_4 - 3| ((v_1^{(s)})^2 - (v_k^{(s)})^2) \in [0, 1)$ for all $s < T_{\eta, \tau}^* \wedge \cT_x$.
Using reversed Gronwall Lemma \ref{lemm:grw_type}, on the event $\cH_{k; \ref{lemm:fast_expo}, L}$ we have the following holds for all $t \le T_{\eta, \tau}^* \wedge \cT_x$
$$
\left|
U_k^{(t)}  - U_k^{(0)} \prod_{s = 0}^{t - 1} \left[
  1 - \eta |\mu_4 - 3|  \left( (v_1^{(s)})^2 - (v_k^{(s)})^2 \right)
\right]
\right|
\le
2 C_{\ref{lemm:fast_expo}, L} \log^{5 / 2}\delta^{-1}
\cdot B^4
\cdot \eta (T_{\eta, \tau}^*)^{1 / 2}
.
$$
To complete proof of Lemma \ref{lemm:fast_expo}, we apply Lemmas \ref{lemm:1order_concentration} and \ref{lemm:tail_probability} and take union bound to obtain
$$
\PP(\cH_{k; \ref{lemm:fast_expo}, L})
\ge
1 - \PP(\cT_{\rB_*, k} \le T_{\eta, \tau}^*) - \PP(\cH_{k; \ref{lemm:1order_concentration}, L}^c)
\ge
1 - \left(6\tau + 12 + \frac{5184}{\log^5\delta^{-1}}\right)\delta
.
$$
\end{proof}

\pb\subsection{Proof of Secondary Lemmas}

\begin{proof}[Proof of Lemma \ref{lemm:B_mu4_relation}]

\begin{enumerate}[label=(\arabic*)]
\item
Because random vector $\bY$ is a permutation of $\bZ$, from Assumption \ref{assu:distribution} we know that each $Y_i$ is sub-Gaussian with parameter $\sqrt{3/8} B$.
By independence of $Y_i$, for any unit vector $\vb$ and all $\lambda \in \RR$ we have
$$
\EE \exp\left(
\lambda \vb^\top \bY
\right)
=
\prod_{i = 1}^d \EE\exp\left(
\lambda v_i Y_i
\right)
\le
\prod_{i = 1}^d \exp\left(
\frac{\lambda^2 v_i^2}{2} \cdot \left(\sqrt{\frac{3}{8}} B\right)^2
\right)
=
\exp\left(
\frac{\lambda^2}{2} \cdot \left(\sqrt{\frac{3}{8}} B\right)^2
\right)
,
$$
which implies that $\vb^\top \bY$ is also sub-Gaussian with parameter $\sqrt{3/8} B$.
Theorem 2.6 in \citet{wainwright2019high} shows that any sub-Gaussian random variable $X$ with parameter $\sigma$ satisfies
$
\EE\exp\left(\frac{\lambda X^2}{2 \sigma^2}\right)
\le
\frac{1}{\sqrt{1 - \lambda}}
$
for all $\lambda \in [0, 1)$.
By choosing $\lambda = 3/4$, $\sigma = \sqrt{3 / 8} B$ and $X = \vb^\top \bY$, we have
\beq\label{eq:psi_2_norm}
\EE\exp\left(\frac{(\vb^\top \bY)^2}{B^2}\right)\le 2
\quad\text{i.e.}\quad
\|\vb^\top \bY\|_{\psi_2}\le B
.
\eeq
We refer the readers to  \S\ref{sec:psi_alpha} in Appendix for more details on Orlicz $\psi_2$-norm.

\item
Applying Markov's inequality to \eqref{eq:psi_2_norm} with $\vb = \eb_i$ gives
$$
\PP(Y_i^2 \ge \lambda)
=
\PP\left(
\exp\left(\frac{Y_i^2}{B^2}\right)
\ge
\exp\left(\frac{\lambda}{B^2}\right)
\right)
\le
\exp\left(- \frac{\lambda}{B^2}\right)
\EE\exp\left(\frac{Y_i^2}{B^2}\right)
\le
2 \exp\left(- \frac{\lambda}{B^2}\right)
.
$$
Hence we have
$$
\mu_4 = \EE Y_i^4
=
\int_0^\infty \PP(Y_i^2 \ge \lambda) \cdot 2\lambda \ud \lambda
\le
4 \int_0^\infty \lambda \exp\left(- \frac{\lambda}{B^2}\right) \ud \lambda
=
4 B^4
,
$$
and hence
$
\dfrac{B^4}{|\mu_4-3|}
\ge
\dfrac{\mu_4}{4|\mu_4 - 3|}
\ge
\dfrac18
$
holds due to $\mu_4 = \EE Y_i^4 \ge (\EE Y_i^2)^2 = 1$.
\end{enumerate}
\end{proof}

\pb
\begin{proof}[Proof of Lemma \ref{lemm:Uinfquan}]
Let $\eta_S \equiv \eta\cdot \sign(\mu_4-3)$ for notational simplicity.
We recall definitions of iteration $U_k^{(t)}$ in \eqref{eq:Uk} and random variable $Q_{U, k}^{(t)}$ in \eqref{Uinfquan}.
Using update formula \eqref{eq:rescaled_update}, we have
$$\begin{aligned}
&
U_k^{(t)} - U_k^{(t - 1)}
= 
\frac{v_k + \eta_S \cdot (\vb^\top \bY)^3 Y_k}{v_1 + \eta_S \cdot (\vb^\top \bY)^3 Y_1}
-
\frac{v_k}{v_1}
= 
\frac{
\left(v_k + \eta_S \cdot (\vb^\top \bY)^3 Y_k\right) v_1
-
\left(v_1 + \eta_S \cdot (\vb^\top \bY)^3 Y_1 \right) v_k
}{
\left(v_1 + \eta_S \cdot (\vb^\top \bY)^3 Y_1 \right) v_1
}
\\&= 
\eta_S
\cdot \left(1 + \eta_S \cdot (\vb^\top \bY)^3 \frac{Y_1}{v_1} \right)^{-1}
\cdot (\vb^\top \bY)^3 v_1^{-2} \left( v_1 Y_k - v_k Y_1\right)
.
\end{aligned}$$
Along with \eqref{Uinfquan}, we obtain
$$
Q_{U, k}^{(t)}
=
\eta_S
\cdot \left[ \left(1 + \eta_S \cdot  (\vb^\top \bY)^3 \frac{Y_1}{v_1} \right)^{-1} - 1 \right]
\cdot (\vb^\top \bY)^3 v_1^{-2} \left( v_1 Y_k - v_k Y_1\right)
.
$$
For any $|x| \le \frac12$, summation of geometric series gives
$$
\left|(1+x)^{-1} - 1\right|
= |x| \left| \sum_{k=0}^\infty (-x)^k \right|
\le 2|x|
.
$$
On the event $\cH_{k; \ref{lemm:Uinfquan}, L}^{(t)}$ defined earlier in \eqref{eq:recursive_event}, since $\left| \eta_S \cdot (\vb^\top \bY)^3 Y_1 / v_1 \right| \le B^4 d^{1 / 2} \eta \le 1 / 2$, we have
$$\begin{aligned}
&
|Q_{U, k}^{(t)}|
\le
\eta\cdot \left|\left(1 + \eta_S \cdot  (\vb^\top \bY)^3 \frac{Y_1}{v_1} \right)^{-1} - 1\right|
\cdot |\vb^\top \bY|^3 v_1^{-2} | v_1 Y_k - v_k Y_1|
\\&\le
\eta\cdot 2 \left|\eta_S \cdot (\vb^\top \bY)^3 \frac{Y_1}{v_1}\right|
\cdot |\vb^\top \bY|^3 v_1^{-2} | v_1 Y_k - v_k Y_1|
\\&=
2\eta^2 v_1^{-2} |Y_1| |\vb^\top \bY|^6 \left|Y_k - \frac{v_k}{v_1} Y_1\right|
\le
4 \rB^8 \eta^2 v_1^{-2}
.
\end{aligned}$$
\end{proof}

\pb
\begin{proof}[Proof of Lemma \ref{lemm:infiU}]
Under Assumption \ref{assu:distribution}, for all $k \in [d]$ we have
\beq\label{eq:vY3Y}
\EE\left[ \left.
(\vb^\top \bY)^3 Y_k
\right| \cF_{t - 1} \right]
=
\mu_4 v_k^3 + 3 v_k \sum_{i = 1, i \ne k}^d v_i^2
=
(\mu_4 - 3) v_k^3 + 3 v_k
.
\eeq
Recall that $U_k^{(t - 1)} = v_k / v_1$, then
$$\begin{aligned}
&\quad
\EE\left[ \left.
\sign(\mu_4 - 3) \eta
\cdot
(\vb^\top \bY)^3 v_1^{-2} (v_1 Y_k - v_k Y_1)
\right| \cF_{t - 1} \right]
\\&=
\sign(\mu_4 - 3) \eta
\cdot
v_1^{-2} \left(
(\mu_4 - 3) v_1 v_k^3 + 3 v_1 v_k
-
(\mu_4 - 3) v_k v_1^3 - 3 v_k v_1
\right)
\\&=
- \eta |\mu_4 - 3| \cdot (v_1^2 - v_k^2) U_k^{(t - 1)}
.
\end{aligned}$$
\end{proof}

\pb

\begin{proof}[Proof of Lemma \ref{lemm:representation}]
From definitions \eqref{Uinfquan} and \eqref{ekt}, by applying \eqref{UinfE} in Lemma \ref{lemm:infiU},
\beq\label{Ugeom}
U_k^{(s)} - U_k^{(s - 1)}
=
 - \eta |\mu_4 - 3|\cdot \left( (v_1^{(s - 1)})^2 - (v_k^{(s - 1)})^2 \right) U_k^{(s - 1)}
+ Q_{U, k}^{(s)} + e_k^{(s)}
.
\eeq
Iteratively applying \eqref{Ugeom} for $s = 1, \dots, t$ gives \eqref{Urep}.
\end{proof}

\pb

\begin{proof}[Proof of Lemma \ref{lemm:1order_concentration}]
Under Assumption \ref{assu:distribution}, we apply \eqref{eq:triangle_inequality} along with Lemmas \ref{lemm:B_mu4_relation} and \ref{lemm:Orlicz_norm_property} to obtain
$$\begin{aligned}
&
\left\|\eta \cdot v_1^{-2} (\vb^\top \bY)^3 (v_1 Y_k - v_k Y_1)\right\|_{\psi_{1/2}}
\le
\eta v_1^{-2}
\cdot \|(\vb^\top \bY)^2\|_{\psi_1}
\cdot \|(\vb^\top \bY)(v_1 Y_k - v_k Y_1)\|_{\psi_1}
\\&\le
\eta v_1^{-2}
\cdot \|\vb^\top \bY\|_{\psi_2}^3
\cdot \|v_1 Y_k - v_k Y_1\|_{\psi_2}
\\&\le
\eta |v_1|^{-1}
\cdot \|\vb^\top \bY\|_{\psi_2}^3
\cdot (\|Y_k\|_{\psi_2} + |v_k / v_1| \|Y_1\|_{\psi_2})
\le
B^4 \eta\cdot |v_1|^{-1} (1 + |v_k / v_1|)
.
\end{aligned}$$
Recall the definition of martingale difference sequence $\{e_k^{(t)}\}_{t \ge 1}$ in \eqref{ekt}.
By applying Lemma \ref{lemm:psi1/2}, we have
\beq\label{eq:e_psi1/2_shared_bound}
\begin{aligned}
&
\left\|e_k^{(t)}\right\|_{\psi_{1/2}}
=
\left\|
\eta \cdot v_1^{-2} (\vb^\top \bY)^3 (v_1 Y_k - v_k Y_1)
-
\EE \left[
\eta \cdot v_1^{-2} (\vb^\top \bY)^3 (v_1 Y_k - v_k Y_1)
\right]
\right\|_{\psi_{1/2}}
\\&\le
C_{\ref{lemm:psi1/2}, L}' 
\left\|\eta \cdot v_1^{-2} (\vb^\top \bY)^3 (v_1 Y_k - v_k Y_1)\right\|_{\psi_{1/2}}
\le
C_{\ref{lemm:psi1/2}, L}' B^4 \eta
\cdot
|v_1|^{-1} (1 + |v_k / v_1|)
.
\end{aligned}\eeq
Because initialization $\vb^{(0)} \in \cD_{\text{warm}}$, on the event $(t \le \cT_x)$ for $\cT_x$ earlier defined in \eqref{eq:cTx}, we have $|v_1|^{-1} \le \sqrt{3} / \sqrt{2}$, $|v_k / v_1| \le 1 / \sqrt{2}$, and then
$
\left\|e_k^{(t)}\right\|_{\psi_{1/2}}
\le
3 C_{\ref{lemm:psi1/2}, L}' B^4 \eta
.
$
Since $1_{(t \le \cT_x)} \in \cF_{t - 1}$,we know that $\{e_k^{(t)} 1_{(t \le \cT_x)}\}$ forms a martingale difference sequence with respect to $\cF_{t - 1}$.
Additionally, because $\|e_k^{(t)} 1_{(t \le \cT_x)}\|_{\psi_{1/2}} \le \|e_k^{(t)}\|_{\psi_{1/2}}$, we have
$$
\left\|e_k^{(t)} 1_{(t \le \cT_x)}\right\|_{\psi_{1/2}}
\le
3 C_{\ref{lemm:psi1/2}, L}' B^4 \eta
. 
$$
With the bound given above, we apply Theorem \ref{theo:concentration} with $\alpha = 1 / 2$ and obtain for all $\delta \in (0, e^{-1}]$,
$$\begin{aligned}
\lefteqn{
\PP\left(
\max_{1 \le t \le T_{\eta, \tau}^* \wedge \cT_x} \left|\sum_{s = 1}^t e_k^{(s)}\right|
\ge
C_{\ref{lemm:1order_concentration}, L} B^4 \eta (T_{\eta, \tau}^*)^{1 / 2} \log^{5 / 2}\delta^{-1}
\right)
}
\\&=
\PP\left(
\max_{1 \le t \le T_{\eta, \tau}^*} \left|\sum_{s = 1}^t e_k^{(s)} 1_{(s \le \cT_x)}\right|
\ge
C_{\ref{lemm:1order_concentration}, L} B^4 \eta (T_{\eta, \tau}^*)^{1 / 2} \log^{5 / 2}\delta^{-1}
\right)
\\&\le
2 \left[
3 + 6^4 \frac{64 \cdot T_{\eta, \tau}^* \cdot 9 C_{\ref{lemm:psi1/2}, L}'^2 B^8 \eta^2}{C_{\ref{lemm:1order_concentration}, L}^2 B^8 \eta^2 T_{\eta, \tau}^* \log^5\delta^{-1}}
\right]
\exp\left\{ - \left(
\frac{C_{\ref{lemm:1order_concentration}, L}^2 B^8 \eta^2 T_{\eta, \tau}^* \log^5\delta^{-1}}{32 \cdot T_{\eta, \tau}^* \cdot 9 C_{\ref{lemm:psi1/2}, L}'^2 B^8 \eta^2}
\right)^{\frac15} \right\}
\\&=
\left(
6 + \frac{5184}{\log^5\delta^{-1}}
\right)\delta
,
\end{aligned}$$
where constant $C_{\ref{lemm:1order_concentration}, L} = 12\sqrt{2} C_{\ref{lemm:psi1/2}, L}'$.
\end{proof}

\pb
\begin{proof}[Proof of Lemma \ref{lemm:tail_probability}]
Recall definition of $\rB_*$ in \eqref{Mdef}, $\cT_{\rB_*, k}$ in \eqref{cTM} and $T_{\eta, \tau}^*$ in \eqref{eq:Tetatau*}. Using Markov inequality and Lemma \ref{lemm:B_mu4_relation}, we have
$$
\PP(|\vb^\top \bY| > \rB_*)
=
\PP\left(
\frac{|\vb^\top \bY|^2}{B^2}
>
\frac{\rB_*^2}{B^2}
\right)
\le
\exp\left(
-\frac{\rB_*^2}{B^2}
\right)
\cdot
\EE \exp\left(
\frac{|\vb^\top \bY|^2}{B^2}
\right)
\le
\frac{2\delta}{T_{\eta, 1}^*}
.
$$
Similarly we also have
$$
\PP(|Y_1| > \rB_*)
\le
\frac{2\delta}{T_{\eta, 1}^*}
,\qquad
\PP(|Y_k| > \rB_*)
\le
\frac{2\delta}{T_{\eta, 1}^*}
.
$$
Taking union bound,
$$\begin{aligned}
&
\PP\left( \cT_{\rB_*, k} \le T_{\eta, \tau}^* \right)
\le
\sum_{t = 1}^{T_{\eta, \tau}^*} \left(
\PP(|\vb^{(t - 1) \top} \bY^{(t)}| > \rB_*)
+\PP(|Y_1^{(t)}| > \rB_*)
+\PP(|Y_k^{(t)}| > \rB_*)
\right)
\\&\le
3T_{\eta, \tau}^*
\cdot\frac{2\delta}{T_{\eta, 1}^*}
\le
6 (\tau + 1) \delta
,
\end{aligned}$$
where we use the elementary inequality $\lceil \tau x \rceil \le (\tau + 1) \lceil x \rceil$ for all $\tau, x \ge 0$ to obtain $T_{\eta, \tau}^* / T_{\eta, 1}^* \le \tau + 1$.
\end{proof}

\pb\section{Secondary Lemmas in Uniform Initialization Analysis}\label{sec:uniform_aux}

For notational simplicity, we denote $\vb \equiv \vb^{(t - 1)}$, $\bY \equiv \bY^{(t)}$.
Recall that we bounded the Orlicz $\psi_2$-norm of each $Y_i$ and $\vb^\top \bY$ by $B$ using Lemma \ref{lemm:B_mu4_relation} in Appendix \ref{sec:warm_aux} and introduced truncation barrier $\rB_*$ in \eqref{Mdef} under warm initialization condition, based on rescaled time $T_{\eta, \tau}^*$.
Under uniform initialization condition, we consider a different rescaled time $T_{\eta, \tau}^o$ defined in \eqref{Tetatau}.
Accordingly, we introduce a slightly larger truncation barrier based on $T_{\eta, \tau}^o$
\beq\label{Mdef2}
\rB_o
\equiv
B \log^{1 / 2}(T_{\eta, 1}^o \delta^{-1})
,
\eeq
where $\delta \in (0, e^{-1}]$ is some fixed positive.
For each coordinate $k\in [2,d]$ define the first time the norm of a data observation exceeds the truncation barrier $\rB_o$ as
\beq\label{cTM2}
\cT_{\rB_o, k}
=
\inf\left\{ t\ge 1: 
|\vb^{(t-1)\top} \bY^{(t)}| > \rB_o
\text{ or }
|Y_1^{(t)}|  > \rB_o
\text{ or }
|Y_k^{(t)}|  > \rB_o
\right\}
.
\eeq

\pb\subsection{Proof of Lemma \ref{lemm:expo}}

Proof of Lemma \ref{lemm:expo} shares Lemma \ref{lemm:Uinfquan}, \ref{lemm:infiU} and \ref{lemm:representation} with proof of Lemma \ref{lemm:fast_expo}.
With the new truncation barrier $\rB_o$ given in \eqref{Mdef2} and the new rescaled time $T_{\eta, \tau}^o$ earlier defined in \eqref{Tetatau}, we plug in $\rB = \rB_o$ in Lemma \ref{lemm:Uinfquan} and introduce a new concentration lemma and a new tail probability lemma.

\begin{lemma}\label{lemm:1order_concentration2}
For any fixed coordinate $k \in [2, d]$, let Assumption \ref{assu:distribution} hold and initialization $\vb^{(0)} \in \cD_{\text{mid}, k}$. Let $\delta, \tau$ be any fixed positives. Then there exists an event $\cH_{k; \ref{lemm:1order_concentration2}, L}$ satisfying
$$
\PP(\cH_{k; \ref{lemm:1order_concentration2}, L})
\ge
1 - \left(
6 + \frac{5184}{\log^5\delta^{-1}}
\right)\delta
,
$$
such that on event $\cH_{k; \ref{lemm:1order_concentration2}, L}$ the following concentration result holds
$$
\max_{1 \le t \le T_{\eta, \tau}^o \wedge \cT_{w, k}} \left|
\sum_{s = 1}^t e_k^{(s)}
\right|
\le
C_{\ref{lemm:1order_concentration2}, L} \log^{5 / 2}\delta^{-1}
\cdot
B^4
\cdot
d^{1 / 2} \eta (T_{\eta, \tau}^o)^{1 / 2}
,
$$
where $C_{\ref{lemm:1order_concentration2}, L}$ is a positive, absolute constant.
\end{lemma}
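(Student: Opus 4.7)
The plan is to mirror the structure of the proof of Lemma \ref{lemm:1order_concentration}, tracking where the weaker control on $v_1^{-2}$ (compared to the warm case) introduces the extra $\sqrt d$ factor. First, I would obtain a pointwise Orlicz $\psi_{1/2}$-norm bound on the martingale differences $\{e_k^{(t)}\}$ defined in \eqref{ekt}. Using the product property $\|XY\|_{\psi_{1/2}} \le \|X\|_{\psi_1}\|Y\|_{\psi_1}$ from Lemma \ref{lemm:Orlicz_norm_property}, the sub-exponential squared-sub-Gaussian relation $\|X^2\|_{\psi_1} \le \|X\|_{\psi_2}^2$, and the sub-Gaussian bound $\|\vb^\top \bY\|_{\psi_2} \le B$ from Lemma \ref{lemm:B_mu4_relation}, I would conclude (exactly as in \eqref{eq:e_psi1/2_shared_bound}) that
$$
\|e_k^{(t)}\|_{\psi_{1/2}}
\le
C_{\ref{lemm:psi1/2}, L}' B^4 \eta \cdot |v_1|^{-1}\bigl(1 + |v_k/v_1|\bigr).
$$

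The key change from the warm-initialization argument is the bound on $|v_1|^{-1}$. By the definition of $\cT_{w,k}$ in \eqref{cTw}, on the event $(t \le \cT_{w,k})$ the iterate $\vb^{(t-1)}$ lies in $\cD_{\text{mid-aux}, k}$, so $v_1^2 \ge \max_{i\ge 2} v_i^2$ and $v_1^2 \ge 2 v_k^2$. The first inequality (coupled with $\|\vb^{(t-1)}\|=1$) yields $v_1^2 \ge 1/d$, hence $|v_1|^{-1} \le \sqrt{d}$; the second yields $|v_k/v_1| \le 1/\sqrt{2}$. Combining these on $(t \le \cT_{w,k})$ gives
$$
\|e_k^{(t)} \mathbf{1}_{(t \le \cT_{w,k})}\|_{\psi_{1/2}}
\le
\bigl(1 + 1/\sqrt{2}\bigr) C_{\ref{lemm:psi1/2}, L}' B^4 \eta \sqrt{d}.
$$
Since $\mathbf{1}_{(t \le \cT_{w,k})} \in \cF_{t-1}$, the sequence $\{e_k^{(t)} \mathbf{1}_{(t \le \cT_{w,k})}\}$ remains a martingale difference sequence with respect to $\cF_{t-1}$.

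With this uniform Orlicz $\psi_{1/2}$-norm bound in hand, I would invoke the $\psi_\alpha$ martingale concentration inequality (Theorem \ref{theo:concentration}) with $\alpha = 1/2$ and $T_{\eta,\tau}^o$ summands, choosing the deviation threshold to be $C_{\ref{lemm:1order_concentration2}, L} \log^{5/2}\delta^{-1} \cdot B^4 \cdot d^{1/2}\eta (T_{\eta,\tau}^o)^{1/2}$ for an appropriate absolute constant $C_{\ref{lemm:1order_concentration2}, L}$. Carrying out the same arithmetic as in the proof of Lemma \ref{lemm:1order_concentration} — substituting the new pointwise bound with its extra $\sqrt d$ factor — the tail probability becomes $\bigl(6 + 5184/\log^5 \delta^{-1}\bigr)\delta$, and the event $\cH_{k;\ref{lemm:1order_concentration2}, L}$ is defined as the complement of this failure event.

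I do not expect any serious obstacle: the only substantive change from the warm case is the replacement of the bound $|v_1|^{-1} \le \sqrt{3/2}$ by $|v_1|^{-1} \le \sqrt d$ on the relevant stopping region, and the substitution of $T_{\eta,\tau}^*$ by $T_{\eta,\tau}^o$ in the concentration step. The only care needed is verifying that $\cD_{\text{mid-aux}, k}$ indeed delivers both $v_1^2 \ge 1/d$ and $|v_k/v_1| \le 1/\sqrt 2$, which follows directly from its definition given below \eqref{Sintermediate}, and that the stopping-time truncation preserves the martingale property so Theorem \ref{theo:concentration} applies verbatim.
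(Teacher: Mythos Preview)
Your proposal is correct and follows essentially the same approach as the paper's proof: derive the shared $\psi_{1/2}$-norm bound \eqref{eq:e_psi1/2_shared_bound}, replace the warm-region estimates by $|v_1|^{-1}\le \sqrt{d}$ and $|v_k/v_1|\le 1/\sqrt{2}$ on $(t\le \cT_{w,k})$, observe that the stopped sequence remains a martingale difference sequence, and apply Theorem \ref{theo:concentration} with $\alpha=1/2$ over $T_{\eta,\tau}^o$ summands. The paper simply rounds your factor $(1+1/\sqrt{2})$ up to $2$ and takes $C_{\ref{lemm:1order_concentration2}, L}=8\sqrt{2}\,C_{\ref{lemm:psi1/2}, L}'$, but otherwise the arguments coincide.
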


\begin{lemma}\label{lemm:tail_probability2}
Let $\tau$ be any fixed positive. For each coordinate $k \in [2, d]$, we have
\beq\label{tailcTM2}
\PP\left(
\cT_{\rB_o, k}
\le
T_{\eta, \tau}^o
\right)
\le
6 (\tau + 1) \delta
.
\eeq
\end{lemma}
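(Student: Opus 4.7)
The plan is to mirror the proof of Lemma \ref{lemm:tail_probability} almost verbatim, with the substitutions $T_{\eta,\tau}^* \mapsto T_{\eta,\tau}^o$ and $\rB_* \mapsto \rB_o$. The truncation barrier $\rB_o = B\log^{1/2}(T_{\eta,1}^o\delta^{-1})$ has been calibrated precisely so that Markov's inequality on $\exp(|\cdot|^2/B^2)$ produces a per-step tail probability of order $\delta/T_{\eta,1}^o$; the rest is then a union bound.

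Concretely, first I would invoke Lemma \ref{lemm:B_mu4_relation} to get $\|\vb^\top \bY\|_{\psi_2} \le B$ for the unit vector $\vb = \vb^{(t-1)}$, which via the equivalent exponential moment definition gives $\EE\exp(|\vb^\top \bY|^2/B^2) \le 2$. Markov's inequality then yields
$$
\PP(|\vb^{(t-1)\top}\bY^{(t)}| > \rB_o) \le \exp(-\rB_o^2/B^2)\cdot 2 = \frac{2\delta}{T_{\eta,1}^o}.
$$
The very same computation, applied to $\vb = \eb_1$ and $\vb = \eb_k$ (valid since each $Y_i$ is a permutation of a $Z_j$ and therefore inherits the same Orlicz-$\psi_2$ bound from Assumption \ref{assu:distribution}(iii) and Lemma \ref{lemm:B_mu4_relation}), gives the analogous bounds for $|Y_1^{(t)}|$ and $|Y_k^{(t)}|$.

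Next I would union-bound over the three ``bad'' events at each time step and then over all $t \in [1, T_{\eta,\tau}^o]$:
$$
\PP(\cT_{\rB_o,k} \le T_{\eta,\tau}^o)
\le \sum_{t=1}^{T_{\eta,\tau}^o}\!\Big(\PP(|\vb^{(t-1)\top}\bY^{(t)}|>\rB_o)+\PP(|Y_1^{(t)}|>\rB_o)+\PP(|Y_k^{(t)}|>\rB_o)\Big)
\le 3T_{\eta,\tau}^o \cdot \frac{2\delta}{T_{\eta,1}^o}.
$$
Finally, the elementary inequality $\lceil \tau x\rceil \le (\tau+1)\lceil x\rceil$ for all $\tau, x \ge 0$, applied to $x = \log(|\mu_4-3|\eta^{-1}/B^8)/[-\log(1 - \eta|\mu_4-3|/(2d))]$ inside the definition \eqref{Tetatau} of $T_{\eta,\tau}^o$, gives $T_{\eta,\tau}^o/T_{\eta,1}^o \le \tau+1$, delivering the bound $6(\tau+1)\delta$ as claimed.

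There is essentially no obstacle: the key observation is that the truncation level $\rB_o$ has been set up so that the per-step failure probability is exactly $2\delta/T_{\eta,1}^o$, balancing neatly against the time horizon. The only thing to double-check is that the ceiling inequality $T_{\eta,\tau}^o \le (\tau+1) T_{\eta,1}^o$ remains valid under the current definition of $T_{\eta,\tau}^o$ (which it does, since the denominator $-\log(1-\eta|\mu_4-3|/(2d))$ no longer depends on $\tau$, and the numerator scales linearly in $\tau$), so the same bookkeeping as in Lemma \ref{lemm:tail_probability} carries through without change.
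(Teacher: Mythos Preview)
Your proposal is correct and follows essentially the same approach as the paper's own proof: Markov's inequality on the sub-Gaussian exponential moment from Lemma~\ref{lemm:B_mu4_relation} to get the per-step bound $2\delta/T_{\eta,1}^o$, a union bound over the three events and over $t\le T_{\eta,\tau}^o$, and the ceiling inequality $\lceil\tau x\rceil\le(\tau+1)\lceil x\rceil$ to conclude $T_{\eta,\tau}^o/T_{\eta,1}^o\le\tau+1$.
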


\pb

With the above secondary lemmas at hand, we are now ready for the proof of Lemma \ref{lemm:expo}.

\begin{proof}[Proof of Lemma \ref{lemm:expo}]
Recall the definition of stopping time $\cT_{w, k}$ in \eqref{cTw} and $\cT_{\rB_o, k}$ in \eqref{cTM2}.
Since scaling condition \eqref{eq:uniform_scaling} implies stepsize $\eta \le 1 / (2 \rB_o^4 d^{1 / 2})$, we can apply Lemma \ref{lemm:Uinfquan} with $\rB = \rB_o$.
On the event $(t \le T_{\eta, \tau}^o \wedge \cT_{w, k}) \cap (\cT_{\rB_o, k} > T_{\eta, \tau}^o)$, we have $v_1^{-2} \le d$ and hence
$$
\left|
Q_{U, k}^{(t)}
\right|
=
4 \rB_o^8 \eta^2 v_1^{-2}
\le
4 B^8 d \eta^2 \log^4(T_{\eta, 1}^o \delta^{-1})
.
$$
We define event $\cH_{k; \ref{lemm:expo}, L} := (\cT_{\rB_o, k} > T_{\eta, \tau}^o) \cap \cH_{k; \ref{lemm:1order_concentration2}, L}$, then on the event $\cH_{k; \ref{lemm:expo}, L}$, by applying Lemma \ref{lemm:representation} and \ref{lemm:1order_concentration2}, for all $t \le T_{\eta, \tau}^o \wedge \cT_{w, k}$ we have
\beq\label{prf_mid_prop_1}
\begin{aligned}
\lefteqn{
\left|
U_k^{(t)} - U_k^{(0)} + \eta |\mu_4 - 3| \sum_{s=0}^{t-1} \left( (v_1^{(s)})^2 - (v_k^{(s)})^2 \right) U_k^{(s)}
\right|
}
\\&\le
T_{\eta, \tau}^o \cdot 4 B^8 d \eta^2 \log^4(T_{\eta, 1}^o \delta^{-1})
+
C_{\ref{lemm:1order_concentration}, L} \log^{5 / 2}\delta^{-1}
\cdot
B^4
\cdot
d^{1 / 2} \eta (T_{\eta, \tau}^o)^{1 / 2}
.
\end{aligned}\eeq
Scaling condition \eqref{eq:uniform_scaling} in Lemma \ref{lemm:convergence_result} implies that 
\beq\label{eq:uniform_scaling_mid1}
B^4 d^{1 / 2} \eta (T_{\eta, \tau}^o)^{1 / 2} \log^4(T_{\eta, 1}^o \delta^{-1})
\le
\log^{5 / 2}\delta^{-1}
.
\eeq
Together with \eqref{prf_mid_prop_1}, on the event $\cH_{k; \ref{lemm:expo}, L}$ we have
$$
\left|
U_k^{(t)} - U_k^{(0)} + \eta |\mu_4 - 3| \sum_{s=0}^{t-1} \left( (v_1^{(s)})^2 - (v_k^{(s)})^2 \right) U_k^{(s)}
\right|
\le
C_{\ref{lemm:expo}, L} \log^{5 / 2}\delta^{-1}
\cdot B^4
\cdot d^{1 / 2} \eta (T_{\eta, \tau}^o)^{1 / 2}
,
$$
where positive constant $C_{\ref{lemm:expo}, L} = C_{\ref{lemm:1order_concentration}, L} + 4$.
Scaling condition \eqref{eq:uniform_scaling} implies $\eta |\mu_4 - 3| ((v_1^{(s)})^2 - (v_k^{(s)})^2) \in [0, 1)$ for all $s < T_{\eta, \tau}^o \wedge \cT_{w, k}$.
From the reversed Gronwall Lemma \ref{lemm:grw_type}, on the event $\cH_{k; \ref{lemm:expo}, L}$ the following holds for all $t \le T_{\eta, \tau}^o \wedge \cT_{w, k}$
$$
\left|
U_k^{(t)}  - U_k^{(0)} \prod_{s = 0}^{t - 1} \left[  1 - \eta |\mu_4 - 3|  \left( (v_1^{(s)})^2 - (v_k^{(s)})^2 \right)  \right]
\right|
\le
2 C_{\ref{lemm:expo}, L} \log^{5 / 2}\delta^{-1}
\cdot B^4
\cdot d^{1 / 2} \eta (T_{\eta, \tau}^o)^{1 / 2}
.
$$
To obtain a lower bound on the probability of event $\cH_{k; \ref{lemm:expo}, L}$, we combine Lemmas \ref{lemm:1order_concentration2}, \ref{lemm:tail_probability2} and take union bound,
$$
\PP(\cH_{k; \ref{lemm:expo}, L})
\ge
1 - \left(6\tau + 12 + \frac{5184}{\log^5\delta^{-1}}\right)\delta
$$
\end{proof}

\pb\subsection{Proof of Lemma \ref{lemm:initW}}\label{sec:proof,lemm:initW}

Lemma \ref{lemm:initW} provides a quantitative characterization of the uniform initialization on $\cD_1$.
As a probabilistic fact, the uniform distribution $\vb^{(0)}$ in $\cD_1$ is equal in distribution to $\|\boldsymbol{\chi}\|^{-1} \boldsymbol{\chi}$ with $\boldsymbol{\chi} \sim N(0, \Ib)$.
In addition, we have
\beq\label{init_spacing}
\min_{2 \le k \le d} W_k^{(0)}
\ge
\min_{2 \le k \le d} \log\left(
\frac{(v_1^{(0)})^2}{(v_k^{(0)})^2}
\right)
=
\log(v_1^{(0)})^2
-
\max_{2 \le k \le d} \log(v_k^{(0)})^2
,
\eeq
due to definition of $W_k^{(t)}$ in \eqref{eq:W_def} and the elementary inequality $x - 1 \ge \log x$ for all $x > 0$.
Intuitively, this means that $\min_{2 \le k \le d} W_k^{(0)}$ is lower bounded by the spacing between the largest and second largest order statistics of $d$ i.i.d. logarithmic chi-squared distributions.

We provide an elementary probabilistic Lemma \ref{lemm:pure_prob} to show the spacing on the right hand of \eqref{init_spacing} $W_k^{(0)} = \Omega(\log^{-1} d)$  with high probability.%
\footnote{In retrospect, a similar lemma characterizing the lower bound of spacings was achieved using a different method in \citet{bai2018subgradient}.}

\begin{lemma}\label{lemm:pure_prob}
Let $\chi_i^2$ ($1 \le i \le n$) be squares of i.i.d.~standard normal variables, and denote their order statistics as $\chi_{(1)}^2 \le \dots \le \chi_{(n)}^2$.
Then for any $\epsilon \in (0,1/3)$, when $n \ge 2\sqrt{2\pi e} \log\epsilon^{-1} + 1$, we have with probability at least $1 - 3\epsilon$
\beq\label{eq:pure_prob}
\log\chi^2_{(n)}
-
\log \chi^2_{(n - 1)}
\ge
\frac{\epsilon}{8 \log\epsilon^{-1} \log n}
.\eeq
\end{lemma}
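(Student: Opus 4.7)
The plan is to pass through the probability integral transform, reducing the spacing question to two cleanly separable pieces: an inverse-density integral and the exact distribution of the ratio of the two smallest uniform order statistics. Set $Y_i := \log\chi_i^2$ with density $g(y) = (2\pi)^{-1/2} \sqrt{e^y}\, e^{-e^y/2}$ and cdf $F_Y$, let $U_i := F_Y(Y_i)$ (i.i.d.\ uniform on $[0,1]$), and put $V_i := 1-U_i$, so $V_{(1)} = 1 - U_{(n)}$ and $V_{(2)} = 1 - U_{(n-1)}$. The inverse function theorem gives $(F_Y^{-1})'(u) = 1/g(F_Y^{-1}(u))$, and the substitution $v = 1-u$ then yields
$$
\log\chi^2_{(n)} - \log\chi^2_{(n-1)} \;=\; \int_{V_{(1)}}^{V_{(2)}} \frac{dv}{g(F_Y^{-1}(1-v))}.
$$

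The key pointwise inequality comes from Mills's ratio $1-\Phi(z) \ge z\phi(z)/(z^2+1)$ applied with $z = \sqrt{e^y}$: combined with the explicit $g$ and $\PP(Y>y) = \PP(\chi_1^2 > e^y) \ge \tfrac{2\sqrt{e^y}}{(e^y+1)\sqrt{2\pi}}e^{-e^y/2}$, it yields
$$
g(y) \;\le\; \tfrac{1}{2}(e^y+1)\,\PP(Y>y).
$$
For $v \in [V_{(1)}, V_{(2)}]$ and $y = F_Y^{-1}(1-v)$, we have $v = \PP(Y>y)$ and $e^y \le \chi^2_{(n)}$, so the integrand above is at least $2/((\chi^2_{(n)}+1)v)$, and integrating gives
$$
\log\chi^2_{(n)} - \log\chi^2_{(n-1)} \;\ge\; \frac{2}{\chi^2_{(n)}+1}\,\log\frac{V_{(2)}}{V_{(1)}}.
$$

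It remains to control the two factors on the right with high probability. First, the Chernoff bound $\PP(\chi_1^2 > x) \le 2e^{-x/2}$ plus a union bound give $\chi^2_{(n)} \le L := 2\log(2n/\epsilon)$ with probability at least $1-\epsilon$. Second, since $V_{(1)} \mid V_{(2)}$ is uniform on $[0,V_{(2)}]$ (immediate from the joint density $n(n-1)(1-v_2)^{n-2}$ of $(V_{(1)},V_{(2)})$), the ratio $V_{(1)}/V_{(2)}$ is uniform on $[0,1]$, hence
$$
\PP\!\left(\log(V_{(2)}/V_{(1)}) \ge \epsilon\right) = \PP\!\left(V_{(1)}/V_{(2)} \le e^{-\epsilon}\right) = e^{-\epsilon} \ge 1-\epsilon.
$$
Intersecting these two events (probability $\ge 1-2\epsilon$, comfortably within the allowed $1-3\epsilon$) yields $\log\chi^2_{(n)}-\log\chi^2_{(n-1)} \ge 2\epsilon/(2\log(2n/\epsilon)+1)$.

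The final step is purely constant-chasing, which is the main obstacle to landing the factor $1/8$ exactly. The hypothesis $n \ge 2\sqrt{2\pi e}\log\epsilon^{-1}+1$ combined with $\epsilon < 1/3$ forces $\log n \ge 1$ and $\log\epsilon^{-1} \ge 1$; under these constraints the elementary inequality $2\log(2n/\epsilon)+1 \le 8\log n\log\epsilon^{-1}$ holds (the minimum of $8ab - 2a - 2b - (2\log 2 + 1)$ over $a,b \ge 1$ is attained at $a=b=1$ and is nonnegative), from which the claimed $\epsilon/(8\log\epsilon^{-1}\log n)$ follows. The probabilistic ingredients—Mills, Chernoff, and the uniform distribution of the order-statistic ratio—are all classical, and the role of the hypothesis on $n$ is precisely to absorb the additive $\log 2$ and $+1$ into the multiplicative denominator $8\log n\log\epsilon^{-1}$.
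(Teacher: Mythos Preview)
Your proof is correct and in fact delivers slightly more than claimed (probability $\ge 1-2\epsilon$ and lower bound $\epsilon/(4\log\epsilon^{-1}\log n)$). It takes a genuinely different route from the paper's.

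The paper also passes through the probability integral transform, but then uses a one-point concavity bound $U-U_- \le F'(M_-)(M-M_-)$ to reduce to controlling $U-U_-$ from below and $1-U_-$ from above; these are handled via the exact Beta laws $U-U_-\sim\mathrm{Beta}(1,n)$ and $1-U_-\sim\mathrm{Beta}(2,n-1)$, with $F'(M_-)$ bounded through the same Mills-type Gaussian tail estimates you use. Your decomposition is different: you keep the exact integral $\int_{V_{(1)}}^{V_{(2)}} dv/g(F_Y^{-1}(1-v))$, bound the integrand pointwise via Mills to extract the factor $\log(V_{(2)}/V_{(1)})$, and then exploit the classical fact that $V_{(1)}/V_{(2)}$ is exactly $\mathrm{Uniform}[0,1]$ together with a Chernoff-plus-union bound on $\chi^2_{(n)}$. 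The uniform-ratio observation is the cleanest step in your argument and replaces the paper's explicit Beta-tail integrals; conversely, the paper's concavity inequality avoids your integral manipulation and ties the bound directly to the derivative at the lower endpoint. Both routes rely on the same Gaussian tail input and arrive at the same order; yours is arguably more elementary on the order-statistics side, while the paper's makes the role of the density at $M_-$ (rather than the maximum $\chi^2_{(n)}$) more transparent.
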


We can apply Lemma \ref{lemm:pure_prob} and prove Lemma \ref{lemm:initW}.

\begin{proof}[Proof of Lemma \ref{lemm:initW}]
Following assumptions in Theorem \ref{theo:finite_sample}, we know that $\vb^{(0)}$ has the same distribution as $\|\boldsymbol{\chi}\|^{-1} \boldsymbol{\chi}$ where $\boldsymbol{\chi} \sim N(0, \Ib)$.
Under scaling condition \eqref{eq:uniform_scaling} in Lemma \ref{lemm:convergence_result}, Lemma \ref{lemm:initW} is straightforward if we apply Lemma \ref{lemm:pure_prob} with $n = d$ and use \eqref{init_spacing}.
\end{proof}

\pb\subsection{Proof of Lemma \ref{lemm:hitt}}

Let $k \in [2, d]$ be any fixed coordinate. For each $t \ge 1$, we define random variable
\beq\label{Winfquan}
Q_{W, k}^{(t)}
=
W_k^{(t)} - W_k^{(t - 1)}
+
\sign(\mu_4 - 3) \eta
\cdot
2 (\vb^\top \bY)^3 v_1 v_k^{-3} (v_1 Y_k - v_k Y_1)
.
\eeq

\begin{lemma}\label{lemm:Winfquan}
For each coordinate $k \in [2, d]$ and any $t \ge 1$, on the event
\beq\label{eq:recursive_event2}
\cH_{k; \ref{lemm:Winfquan}, L}^{(t)}
\equiv
\left(
|\vb^\top \bY| \le \rB_o
,~
|Y_1| \le \rB_o
,~
|Y_k| \le \rB_o
,~
v_1^2 < 3 v_k^2
,~
v_1^2 \ge \max_{i\in [2,d]} v_i^2
\right)
,
\eeq
under condition
\beq\label{eq:cold_weak_scaling}
12 \rB^8 d \eta^2 \log^4(T_{\eta, 1}^o \delta^{-1})
\le
1
,
\eeq
we have
$$
|Q_{W, k}^{(t)}|
\le
C_{\ref{lemm:Winfquan}, L} \rB_o^8 d \eta^2
,
$$
where $C_{\ref{lemm:Winfquan}, L}$ is a positive, absolute constant.
\end{lemma}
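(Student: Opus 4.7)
The plan is to write $W_k^{(t)}$ explicitly in terms of the one-step perturbation and identify $Q_{W,k}^{(t)}$ as the $O(\eta^2)$ remainder of a Taylor-type expansion. Setting $a \equiv (\vb^\top\bY)^3$ and $\Delta_i \equiv \sign(\mu_4-3)\eta\, a Y_i$ for $i\in\{1,k\}$, the common normalizer $\|\vb + \sign(\mu_4-3)\eta\, a \bY\|$ cancels in the ratio $(v_1^{(t)}/v_k^{(t)})^2$, yielding
$$
W_k^{(t)} = \frac{(v_1+\Delta_1)^2}{(v_k+\Delta_k)^2} - 1.
$$
This cancellation is crucial because it reduces the problem to an elementary algebraic manipulation in $\eta$, entirely bypassing the (harder-to-control) $\|\tilde\vb\|$ denominator.

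Subtracting $W_k^{(t-1)} = v_1^2/v_k^2 - 1$ and combining fractions produces
$$
W_k^{(t)} - W_k^{(t-1)} = \frac{2 v_1 v_k (v_k \Delta_1 - v_1 \Delta_k) + v_k^2 \Delta_1^2 - v_1^2 \Delta_k^2}{v_k^2 (v_k + \Delta_k)^2}.
$$
Keeping only the $O(\eta)$ term $2v_1 v_k(v_k\Delta_1 - v_1\Delta_k)$ in the numerator and the leading denominator $v_k^4$ recovers exactly $-2\sign(\mu_4-3)\eta(\vb^\top\bY)^3 v_1 v_k^{-3}(v_1 Y_k - v_k Y_1)$, which is precisely the term added back in the definition of $Q_{W,k}^{(t)}$. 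Hence $Q_{W,k}^{(t)}$ collects the pure $O(\eta^2)$ remainder, and an explicit rearrangement gives
$$
Q_{W,k}^{(t)} = \frac{-A\, \Delta_k (2v_k + \Delta_k) + B\, v_k}{v_k^3 (v_k + \Delta_k)^2},
$$
where $A \equiv 2v_1(v_k\Delta_1 - v_1 \Delta_k)$ and $B \equiv v_k^2 \Delta_1^2 - v_1^2 \Delta_k^2$.

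To bound this remainder on $\cH_{k;\ref{lemm:Winfquan},L}^{(t)}$, I would invoke the event-driven inequalities $|v_i|\le 1$, $v_1^2\ge 1/d$ combined with $v_1^2 < 3v_k^2$ (so $|v_k|\ge(3d)^{-1/2}$ and $|v_1|\le\sqrt{3}|v_k|$), and $|\Delta_i|\le\eta\rB_o^4$ from the $\rB_o$-truncation on $|\vb^\top\bY|$ and $|Y_i|$. The scaling hypothesis \eqref{eq:cold_weak_scaling} then forces $|\Delta_k/v_k|\le\eta\rB_o^4\sqrt{3d}\le 1/2$, so $(v_k+\Delta_k)^2\ge v_k^2/4$. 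Crucially, using $|v_1|\le\sqrt{3}|v_k|$ I factor $v_k^2$ out of both $|A|\le C_1 v_k^2 \eta\rB_o^4$ and $|B|\le C_2 v_k^2 \eta^2\rB_o^8$; the numerator then scales like $|v_k|^3 \eta^2 \rB_o^8$ while the denominator exceeds $|v_k|^5/4$, giving $|Q_{W,k}^{(t)}|\le C\eta^2\rB_o^8/v_k^2 \le C' \eta^2\rB_o^8 d$, matching the claim.

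I expect the main obstacle to be the bookkeeping of powers of $|v_k|$. A naive bound using $|v_k|\ge(3d)^{-1/2}$ applied directly to the raw formula would leave $|v_k|^{-4}$ in the estimate and produce a spurious $d^2$ scaling. The resolution is to exploit the constraint $v_1^2<3v_k^2$ built into the event definition to pull $v_k^2$ out of both $A$ and $B$ \emph{before} dividing, gaining two extra powers of $|v_k|$ in the numerator that cancel against the $|v_k|^5$ denominator and leaving the single factor $|v_k|^{-2}$ that produces the advertised $d$ (rather than $d^2$) dependence.
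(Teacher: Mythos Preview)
Your proposal is correct and follows essentially the same route as the paper: compute $W_k^{(t)}-W_k^{(t-1)}$ from the update, subtract the linear-in-$\eta$ term appearing in the definition of $Q_{W,k}^{(t)}$, and bound the $O(\eta^2)$ remainder using $|v_1/v_k|<\sqrt 3$, $v_k^2\ge 1/(3d)$, and the scaling condition to ensure $|\Delta_k/v_k|\le 1/2$. The only cosmetic difference is that the paper organizes the remainder as two additive pieces via the factor $(1+\eta_S(\vb^\top\bY)^3 Y_k/v_k)^{-2}$ and the elementary bounds $|(1+x)^{-2}-1|\le 6|x|$, $(1+x)^{-2}\le 4$, whereas you keep everything as a single rational expression; the controlling inequalities and the resulting $d$-dependence are identical.
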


\begin{lemma}\label{lemm:infiW}
Let Assumption \ref{assu:distribution} hold.
For each coordinate $k \in [2, d]$ and any $t \ge 1$, we have
\beq\label{WinfE}
\EE\left[ \left.
\sign(\mu_4 - 3) \eta
\cdot
2 (\vb^\top \bY)^3 v_1 v_k^{-3} (v_1 Y_k - v_k Y_1)
\right| \cF_{t - 1} \right]
=
- 2 \eta |\mu_4 - 3| v_1^2 W_k
.
\eeq
\end{lemma}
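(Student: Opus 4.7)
The plan is to carry out an elementary conditional-expectation calculation that directly parallels the one used in Lemma \ref{lemm:infiU}, then rearrange the result into the form involving $W_k$. Since $\vb = \vb^{(t-1)}$ is $\cF_{t-1}$-measurable while $\bY = \bY^{(t)}$ is independent of $\cF_{t-1}$, I can pull out the factors involving $\vb$ and reduce the problem to computing $\EE[(\vb^\top\bY)^3 Y_j \mid \cF_{t-1}]$ for $j \in \{1,k\}$.

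The first step is to recall equation \eqref{eq:vY3Y}, already established in the proof of Lemma \ref{lemm:infiU}: under Assumption \ref{assu:distribution} (independent coordinates with matching first, second and fourth moments, with $\mu_1=0$, $\mu_2=1$, $\mu_4 \neq 3$), one has
$$
\EE\bigl[ (\vb^\top \bY)^3 Y_j \,\bigm|\, \cF_{t-1} \bigr]
\;=\; (\mu_4 - 3) v_j^3 + 3 v_j, \qquad j \in [d].
$$
This follows from expanding $(\vb^\top\bY)^3 = \sum_{i_1,i_2,i_3} v_{i_1} v_{i_2} v_{i_3} Y_{i_1} Y_{i_2} Y_{i_3}$, multiplying by $Y_j$, and using independence together with $\EE Y_i = 0$, $\EE Y_i^2 = 1$, $\EE Y_i^4 = \mu_4$.

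The second step is the linear combination:
$$
\EE\bigl[ (\vb^\top\bY)^3 (v_1 Y_k - v_k Y_1) \,\bigm|\, \cF_{t-1} \bigr]
\;=\; v_1\bigl[(\mu_4-3)v_k^3 + 3 v_k\bigr] - v_k\bigl[(\mu_4-3)v_1^3 + 3 v_1\bigr].
$$
The $3v_1 v_k$ terms cancel exactly, leaving $(\mu_4 - 3)\bigl(v_1 v_k^3 - v_k v_1^3\bigr) = -(\mu_4-3)\, v_1 v_k (v_1^2 - v_k^2)$.

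The final step multiplies by $\sign(\mu_4-3)\,\eta \cdot 2 v_1 v_k^{-3}$, converts $\sign(\mu_4-3)(\mu_4-3)$ into $|\mu_4-3|$, and recognizes $v_1^2 v_k^{-2}(v_1^2-v_k^2)/v_k^2 \cdot v_k^2 = v_1^2 \cdot \frac{v_1^2 - v_k^2}{v_k^2} = v_1^2 W_k$ by definition \eqref{eq:W_def} of $W_k$. This yields $-2\eta|\mu_4-3|\, v_1^2 W_k$ as claimed. There is essentially no obstacle: the only thing to watch is the sign bookkeeping when combining $\sign(\mu_4-3)$ with $(\mu_4-3)$ and with the minus sign produced by the cross-term cancellation; the computation is otherwise a two-line analog of Lemma \ref{lemm:infiU}.
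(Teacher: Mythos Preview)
Your proposal is correct and follows essentially the same approach as the paper's proof: both invoke \eqref{eq:vY3Y} to evaluate $\EE[(\vb^\top\bY)^3 Y_j \mid \cF_{t-1}]$, form the combination $v_1[\cdot] - v_k[\cdot]$, observe the cancellation of the $3 v_1 v_k$ terms, and then rewrite the remaining factor as $-2\eta|\mu_4-3|\,v_1^2 W_k$ using the definition of $W_k$.
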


For each $k \in [2, d]$ and $t \ge 1$, at the $t$-th iterate we let 
\beq\label{Kfn}
f_k^{(t)}
=
-
\sign(\mu_4 - 3) \eta
\cdot
2 (\vb^\top \bY)^3 v_1 v_k^{-3} (v_1 Y_k - v_k Y_1)
-
2 \eta |\mu_4 - 3| v_1^2 W_k
.
\eeq
which, indexed by $t$, forms a sequence of martingale differences with respect to $\cF_{t - 1}$.
Combining \eqref{Winfquan} and \eqref{WinfE} together we have
\beq\label{Wgeom}
W_k^{(t)}
=
\left(
1 + 2 \eta |\mu_4 - 3| (v_1^{(t - 1)})^2
\right) W_k^{(t - 1)}
+
Q_{W, k}^{(t)}
+
f_k^{(t)}
.
\eeq
By letting
\beq\label{eq:Pot}
P_t^o
=
\prod_{s = 0}^{t-1} \left(
1 + 2 \eta |\mu_4 - 3| (v_1^{(s)})^2
\right)^{-1}
\in
\cF_{t - 1}
,
\eeq
we conclude the following lemma.

\begin{lemma}\label{lemm:Ugeom}
For each coordinate $k \in [2, d]$ and any $t \ge 1$, the iteration $W_k^{(t)}$ can be represented linearly as
\beq\label{Wrep}
P_t^o W_k^{(t)}
=
W_k^{(0)} + \sum_{s = 1}^t P_s^o Q_{W, k}^{(s)} + \sum_{s = 1}^t P_s^o f_k^{(s)}
.
\eeq
\end{lemma}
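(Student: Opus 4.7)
The plan is to prove Lemma \ref{lemm:Ugeom} by recognizing that the multiplier $P_t^o$ is engineered precisely to turn the affine recursion \eqref{Wgeom} into a telescoping identity, in direct analogy with the proof of Lemma \ref{lemm:representation}. First I would verify the one-step telescoping relation
$$
P_t^o W_k^{(t)} - P_{t-1}^o W_k^{(t-1)}
=
P_t^o Q_{W,k}^{(t)} + P_t^o f_k^{(t)},
$$
which follows by multiplying both sides of \eqref{Wgeom} by $P_t^o$ and using the product definition \eqref{eq:Pot} to write
$$
P_t^o = P_{t-1}^o \cdot \left(1 + 2\eta|\mu_4-3|(v_1^{(t-1)})^2\right)^{-1},
$$
so that the leading term $P_t^o \cdot (1 + 2\eta|\mu_4-3|(v_1^{(t-1)})^2) W_k^{(t-1)}$ collapses to $P_{t-1}^o W_k^{(t-1)}$.

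Next I would sum the telescoping identity over $s = 1, 2, \ldots, t$, obtaining
$$
P_t^o W_k^{(t)} - P_0^o W_k^{(0)}
=
\sum_{s=1}^t P_s^o Q_{W,k}^{(s)} + \sum_{s=1}^t P_s^o f_k^{(s)}.
$$
Since $P_0^o = 1$ by the empty-product convention in \eqref{eq:Pot}, this immediately yields the desired representation \eqref{Wrep}.

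There is essentially no hard step here: the lemma is a bookkeeping identity, and the only thing to check is that the definition of $P_t^o$ is indexed correctly so that the multiplicative factor $1 + 2\eta|\mu_4-3|(v_1^{(t-1)})^2$ appearing in \eqref{Wgeom} is exactly the one canceled when passing from $P_t^o$ to $P_{t-1}^o$. The $\cF_{t-1}$-measurability of $P_t^o$ noted in \eqref{eq:Pot} is not needed for the algebraic identity itself, but will be relevant later when Lemma \ref{lemm:Ugeom} is combined with concentration bounds on the martingale difference sequence $\{P_s^o f_k^{(s)}\}$ in the proof of Lemma \ref{lemm:hitt}.
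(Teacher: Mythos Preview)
Your proposal is correct and follows essentially the same approach as the paper: multiply \eqref{Wgeom} by $P_s^o$, use the recursive structure of \eqref{eq:Pot} to obtain the one-step telescoping identity $P_s^o W_k^{(s)} - P_{s-1}^o W_k^{(s-1)} = P_s^o(Q_{W,k}^{(s)} + f_k^{(s)})$, and sum over $s=1,\dots,t$ with $P_0^o=1$.
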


\begin{lemma}\label{lemm:sum_f_concentration}
Let Assumption \ref{assu:distribution} hold and initialization $\vb^{(0)} \in \cD_{\text{cold}} \cap \cD_{\text{mid}, k}^c$. Let $\delta$ be a fixed positive. For each coordinate $k \in [2, d]$, there exists an event $\cH_{k; \ref{lemm:sum_f_concentration}, L}$ satisfying
$$
\PP(\cH_{k; \ref{lemm:sum_f_concentration}, L})
\ge
1 - \left(
6 + \frac{5184}{\log^5\delta^{-1}}
\right) \delta
$$
such that on event $\cH_{k; \ref{lemm:sum_f_concentration}, L}$ the following concentration result holds
$$
\max_{1 \le t \le T_{\eta, 0.5}^o \wedge \cT_{c, k} \wedge \cT_1} \left|
\sum_{s = 1}^t P_s^o f_k^{(s)}
\right|
\le
C_{\ref{lemm:sum_f_concentration}, L} \log^{5 / 2}\delta^{-1}
\cdot
\frac{B^4}{|\mu_4 - 3|^{1 / 2}}
\cdot
d \eta^{1 / 2}
$$
where $C_{\ref{lemm:sum_f_concentration}, L}$ is a positive, absolute constant.
\end{lemma}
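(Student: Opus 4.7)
The plan is to apply the $\psi_\alpha$-martingale concentration (Theorem \ref{theo:concentration}) with $\alpha=1/2$, in close analogy with the proofs of Lemmas \ref{lemm:1order_concentration} and \ref{lemm:1order_concentration2}. First note that $P_s^o\in\cF_{s-1}$ and $P_s^o\le 1$ (each factor in \eqref{eq:Pot} is at most one), so $\{P_s^o f_k^{(s)}\}_{s\ge 1}$ is a martingale difference sequence with respect to $\{\cF_{s-1}\}$; since the indicator $1_{(s\le \cT_{c,k}\wedge \cT_1)}$ is also $\cF_{s-1}$-measurable, the stopped sequence $\{P_s^o f_k^{(s)}1_{(s\le \cT_{c,k}\wedge \cT_1)}\}$ remains a martingale difference and coincides with $\{P_s^o f_k^{(s)}\}$ for every $s\le T_{\eta,0.5}^o\wedge \cT_{c,k}\wedge \cT_1$.

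The key step will be a sharp Orlicz-$\psi_{1/2}$ bound on $f_k^{(t)}$ restricted to the stopping event $(t\le \cT_{c,k}\wedge \cT_1)$. On this event $\vb^{(t-1)}\in \cD_{\text{cold}}\cap \cD_{\text{mid},k}^c$, so $v_1^2\ge \max_i v_i^2\ge 1/d$ and $v_1^2<3v_k^2$, hence $v_k^2>1/(3d)$ and $|v_1|<\sqrt{3}|v_k|$. A naive bound $|v_1 v_k^{-3}|\le 3\sqrt{3}\,d$ would lose a factor of $\sqrt{d}$; to avoid this we redistribute
\[
v_1 v_k^{-3}(v_1 Y_k-v_k Y_1)=(v_1^2 v_k^{-3})Y_k-(v_1 v_k^{-2})Y_1,
\]
where $|v_1^2 v_k^{-3}|\le 3/|v_k|\le 3\sqrt{3d}$ and $|v_1 v_k^{-2}|\le \sqrt{3}/|v_k|\le 3\sqrt{d}$. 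Combining this with $\|\vb^\top \bY\|_{\psi_2}\le B$ and $\|Y_i\|_{\psi_2}\le B$ from Lemma \ref{lemm:B_mu4_relation}, and invoking the product rule from Lemma \ref{lemm:Orlicz_norm_property} (specifically $\|XY\|_{\psi_{1/2}}\le C\|X\|_{\psi_{2/3}}\|Y\|_{\psi_2}$ together with $\|(\vb^\top \bY)^3\|_{\psi_{2/3}}\le B^3$), we obtain
\[
\bigl\|2\eta\,(\vb^\top \bY)^3\, v_1 v_k^{-3}(v_1 Y_k-v_k Y_1)\bigr\|_{\psi_{1/2}}\le C\, B^4\eta\sqrt{d}.
\]
The conditional-mean drift $2\eta|\mu_4-3|v_1^2 W_k$ is $\cF_{t-1}$-measurable and bounded by $4\eta|\mu_4-3|$ on the stopping region (since $v_1^2\le 1$ and $0\le W_k<2$ there), which by $B^4/|\mu_4-3|\ge 1/8$ is also $O(B^4\eta\sqrt{d})$ and therefore dominated by the previous bound. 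A conditional centering step via Lemma \ref{lemm:psi1/2} then yields
\[
\bigl\|f_k^{(t)}1_{(t\le \cT_{c,k}\wedge \cT_1)}\bigr\|_{\psi_{1/2}}\le C'\, B^4\eta\sqrt{d},
\]
and the same bound controls $\|P_s^o f_k^{(s)}1_{(s\le \cT_{c,k}\wedge \cT_1)}\|_{\psi_{1/2}}$ since $P_s^o\le 1$.

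We then apply Theorem \ref{theo:concentration} with $\alpha=1/2$, mimicking the calculation from the proof of Lemma \ref{lemm:1order_concentration} verbatim, to the stopped martingale over the horizon $[1,T_{\eta,0.5}^o]$. This produces an event $\cH_{k;\ref{lemm:sum_f_concentration},L}$ of probability at least $1-(6+5184/\log^5\delta^{-1})\delta$ on which
\[
\max_{1\le t\le T_{\eta,0.5}^o\wedge \cT_{c,k}\wedge \cT_1}\Bigl|\sum_{s=1}^t P_s^o f_k^{(s)}\Bigr|\le C''\log^{5/2}\delta^{-1}\cdot B^4\eta\sqrt{d}\cdot \sqrt{T_{\eta,0.5}^o}.
\]
Finally, the upper bound \eqref{eq:Tetatauo_relation} gives $\sqrt{T_{\eta,0.5}^o}\le C\sqrt{d/(\eta|\mu_4-3|)\cdot\log(|\mu_4-3|/(B^8\eta))}$; absorbing the residual $\sqrt{\log(|\mu_4-3|/(B^8\eta))}$ factor into $\log^{5/2}\delta^{-1}$ under the scaling condition \eqref{eq:uniform_scaling} produces the claimed bound $C_{\ref{lemm:sum_f_concentration},L}\log^{5/2}\delta^{-1}\cdot B^4/|\mu_4-3|^{1/2}\cdot d\,\eta^{1/2}$. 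The principal obstacle is earning $\sqrt{d}$ rather than $d$ inside the $\psi_{1/2}$-norm of the stochastic term; the redistribution trick above is what makes this possible and crucially exploits the constraint $v_1^2<3v_k^2$ available throughout $\cD_{\text{cold}}\cap \cD_{\text{mid},k}^c$, without which the final bound in Lemma \ref{lemm:hitt} (and hence the whole uniform-initialization analysis) would be off by a factor of $\sqrt{d}$.
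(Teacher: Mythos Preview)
Your overall strategy---bound the $\psi_{1/2}$-norm of the stopped increments and then invoke Theorem \ref{theo:concentration} with $\alpha=1/2$---matches the paper's, and your observation that on $\cD_{\text{cold}}\cap\cD_{\text{mid},k}^c$ one has $|v_1/v_k|\le\sqrt 3$ and $v_k^2\ge 1/(3d)$, hence $\|f_k^{(t)}1_{(t\le\cT_{c,k}\wedge\cT_1)}\|_{\psi_{1/2}}\le C B^4\eta\sqrt d$, is exactly right and is the same $\sqrt d$ (not $d$) saving that the paper needs.

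The gap is in how you treat the weights $P_s^o$. You bound $P_s^o\le 1$ and then sum $T_{\eta,0.5}^o$ identical terms, getting $\sum_s\|P_s^o f_k^{(s)}\|_{\psi_{1/2}}^2\le T_{\eta,0.5}^o\cdot C^2B^8\eta^2 d$. Plugging in $T_{\eta,0.5}^o\asymp \frac{d}{|\mu_4-3|\eta}\log\bigl(\tfrac{|\mu_4-3|}{B^8}\eta^{-1}\bigr)$ leaves you with the claimed $d\eta^{1/2}$ rate times an extra factor $\sqrt{\log\bigl(\tfrac{|\mu_4-3|}{B^8}\eta^{-1}\bigr)}$. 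That factor depends on $\eta$, not on $\delta$, and nothing in the scaling condition \eqref{eq:uniform_scaling} bounds it in terms of $\log\delta^{-1}$ (indeed \eqref{eq:uniform_scaling} only forces $\tfrac{|\mu_4-3|}{B^8}\eta^{-1}>e$, giving no upper bound). So the ``absorbing'' step fails and your argument does not deliver the statement as written.

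The paper avoids this by \emph{not} discarding the decay in $P_s^o$. On the stopping event $v_1^2\ge 1/d$, so each factor in \eqref{eq:Pot} is at most $(1+2\eta|\mu_4-3|/d)^{-1}$, giving $P_s^o\le(1+2\eta|\mu_4-3|/d)^{-s}$. One then sums the geometric series
\[
\sum_{s\ge 1}(P_s^o)^2\|f_k^{(s)}\|_{\psi_{1/2}}^2
\;\le\;
C^2B^8\eta^2 d\sum_{s\ge 1}\Bigl(1+\tfrac{2\eta|\mu_4-3|}{d}\Bigr)^{-2s}
\;\le\;
C'^2\,\frac{B^8}{|\mu_4-3|}\,d^2\eta,
\]
which is horizon-free and loses no logarithm in $\eta$. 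Feeding this into Theorem \ref{theo:concentration} yields exactly $C\log^{5/2}\delta^{-1}\cdot B^4|\mu_4-3|^{-1/2}\cdot d\,\eta^{1/2}$. The only change you need is to keep the geometric weights when you form $\sum_s\|\cdot\|_{\psi_{1/2}}^2$ rather than replacing $P_s^o$ by $1$.
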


\pb

Along with the tail probability Lemma \ref{lemm:tail_probability2}, we are ready to present the proof of Lemma \ref{lemm:hitt}.

\begin{proof}[Proof of Lemma \ref{lemm:hitt}]
Recall the definition of truncation barrier $\rB_o$ in \eqref{Mdef2}, stopping times $\cT_{\rB_o, k}$ in \eqref{cTM2}, $\cT_{c, k}$ in \eqref{eq:Tck} and $\cT_1$ in \eqref{eq:T1}. 
We notice that \eqref{eq:cold_weak_scaling} holds under scaling condition \eqref{eq:uniform_scaling}, and event $(t \le T_{\eta, 0.5}^o \wedge \cT_{c, k} \wedge \cT_1) \cap (\cT_{\rB_o, k} > T_{\eta, 0.5}^o) \subseteq \cH_{k; \ref{lemm:Winfquan}, L}^{(t)}$. 
We define event $\cH_{k; \ref{lemm:hitt}, L} \equiv (\cT_{\rB_o, k} > T_{\eta, 0.5}^o) \cap \cH_{k; \ref{lemm:sum_f_concentration}, L}$, then on event $\cH_{k; \ref{lemm:hitt}, L}$, by applying Lemma \ref{lemm:Winfquan} for all $t \le T_{\eta, 0.5}^o \wedge \cT_{c, k} \wedge \cT_1$ we obtain
$$
v_1^2
\ge
\frac{1}{d}
,\qquad
|Q_{W, k}^{(t)}|
\le
C_{\ref{lemm:Winfquan}, L} B^8 d \eta^2 \log^4(T_{\eta, 1}^o \delta^{-1})
.
$$
Scaling condition \eqref{eq:uniform_scaling} also guarantees
\beq\label{eq:cold_scaling_sub2}
\frac{2 |\mu_4 - 3|}{d} \eta
<
1
,\qquad
\frac{B^4}{|\mu_4 - 3|^{1 / 2}}
\cdot
d \eta ^{1 / 2} \log^4(T_{\eta, 1}^o \delta^{-1})
\le
\log^{5 / 2}\delta^{-1}
,
\eeq
and hence by summation of geometric series, on event $\cH_{k; \ref{lemm:hitt}, L}$ we have for all $t \le T_{\eta, 0.5}^o \wedge \cT_{c, k} \wedge \cT_1$
$$\begin{aligned}
&
\left|\sum_{s = 1}^t P_s^o Q_{W, k}^{(s)}\right|
\\&\le
\sum_{s = 1}^t \left(1 + \frac{2 |\mu_4 - 3|}{d} \eta\right)^{-s} \cdot |Q_{W, k}^{(s)}|
\le
\frac{\left(1 + \frac{2 |\mu_4 - 3|}{d} \eta\right)^{-1}}{1 - \left(1 + \frac{2 |\mu_4 - 3|}{d} \eta\right)^{-1}}
\cdot
C_{\ref{lemm:Winfquan}, L} B^8 d \eta^2 \log^4(T_{\eta, 1}^o \delta^{-1})
\\&=
\frac{C_{\ref{lemm:Winfquan}, L}}{2}
\cdot \frac{B^8}{|\mu_4 - 3|}
\cdot d^2 \eta \log^4(T_{\eta, 1}^o \delta^{-1})
\le
\frac{C_{\ref{lemm:Winfquan}, L}}{2} \log^{5 / 2}\delta^{-1}
\cdot \frac{B^4}{|\mu_4 - 3|^{1 / 2}}
\cdot d \eta^{1 / 2}
.
\end{aligned}$$
Combining with Lemmas \ref{lemm:Ugeom} and \ref{lemm:sum_f_concentration}, on event $\cH_{k; \ref{lemm:hitt}, L}$ we know that for all $t \le T_{\eta, 0.5}^o \wedge \cT_{c, k} \wedge \cT_1$ the following holds for constant $C_{\ref{lemm:hitt}, L} \equiv C_{\ref{lemm:sum_f_concentration}, L} + C_{\ref{lemm:Winfquan}, L} / 2$
$$
\left|  
W_k^{(t)}  \prod_{s = 0}^{t - 1} \left( 1 + \eta |\mu_4 - 3| (v_1^{(s)})^2 \right)^{-1}  - W_k^{(0)}
\right|
\le
C_{\ref{lemm:hitt}, L} \log^{5 / 2}\delta^{-1}
\cdot \frac{B^4}{|\mu_4 - 3|^{1 / 2}}
\cdot d \eta^{1 / 2}
.
$$
We verify the remaining claims in Lemma \ref{lemm:hitt} by applying Lemma \ref{lemm:tail_probability2} with $\tau = 0.5$, Lemma \ref{lemm:sum_f_concentration} and taking union bound
$$
\PP(\cH_{k; \ref{lemm:hitt}, L})
\ge
1 - \PP(\cT_{\rB_o, k} > T_{\eta, 0.5}^o) - \PP(\cH_{k; \ref{lemm:sum_f_concentration}, L}^c)
\ge
1 - \left(15 + \frac{5184}{\log^5\delta^{-1}}\right) \delta
.
$$
.
\end{proof}

\pb\subsection{Proof of Secondary Lemmas}

\begin{proof}[Proof of Lemma \ref{lemm:1order_concentration2}]
$\{e_k^{(t)}\}_{t \ge 1}$ earlier defined in \eqref{ekt} forms a martingale difference sequence with respect to $\cF_{t - 1}$.
Recall \eqref{eq:e_psi1/2_shared_bound} in proof of Lemma \ref{lemm:1order_concentration}, we have derived the following Orlicz $\psi_{1/2}$-norm bound under Assumption \ref{assu:distribution}
$$
\left\|e_k^{(t)}\right\|_{\psi_{1/2}}
\le
C_{\ref{lemm:psi1/2}, L}' \eta B^4
\cdot
|v_1|^{-1} (1 + |v_k / v_1|)
.
$$
Because initialization $\vb^{(0)} \in \cD_{\text{mid}, k}$, on the event $(t \le \cT_{w, k})$, where $\cT_{w, k}$ is earlier defined in \eqref{cTw}, we have $|v_1|^{-1} \le d^{1 / 2}$, $|v_k / v_1| \le 1 / \sqrt{2}$, and then
$$
\left\|e_k^{(t)}\right\|_{\psi_{1/2}}
\le
2 C_{\ref{lemm:psi1/2}, L}' B^4 d^{1 / 2} \eta
.
$$
Because $1_{(t \le \cT_{w, k})} \in \cF_{t - 1}$, we know that $\{e_k^{(t)} 1_{(t \le \cT_{w, k})}\}$ forms a martingale difference sequence with respect to $\cF_{t - 1}$, and $\|e_k^{(t)} 1_{(t \le \cT_{w, k})}\|_{\psi_{1/2}} \le \|e_k^{(t)}\|_{\psi_{1/2}} \le 2 C_{\ref{lemm:psi1/2}, L}' B^4 d^{1 / 2} \eta$.
Hence we can apply Theorem \ref{theo:concentration} with $\alpha = 1 / 2$ and obtain for all $\delta \in (0, e^{-1}]$,
$$\begin{aligned}
&\lefteqn{
\PP\left(
\max_{1 \le t \le T_{\eta, \tau}^o \wedge \cT_{w, k}} \left|\sum_{s = 1}^t e_k^{(s)}\right|
\ge
C_{\ref{lemm:1order_concentration2}, L} B^4 d^{1 / 2} \eta (T_{\eta, \tau}^o)^{1 / 2} \log^{5 / 2}\delta^{-1}
\right)
}\\&=
\PP\left(
\max_{1 \le t \le T_{\eta, \tau}^o} \left|\sum_{s = 1}^t e_k^{(s)} 1_{(s \le \cT_{w, k})}\right|
\ge
C_{\ref{lemm:1order_concentration2}, L} B^4 d^{1 / 2} \eta (T_{\eta, \tau}^o)^{1 / 2} \log^{5 / 2}\delta^{-1}
\right)
\\&\le
2 \left[
3 + 6^4 \frac{64 \cdot T_{\eta, \tau}^o \cdot 4 C_{\ref{lemm:psi1/2}, L}'^2 B^8 d \eta^2}{C_{\ref{lemm:1order_concentration2}, L}^2 B^8 d \eta^2 T_{\eta, \tau}^o \log^5\delta^{-1}}
\right]
\exp\left\{
- \left(
\frac{C_{\ref{lemm:1order_concentration2}, L}^2 B^8 d \eta^2 T_{\eta, \tau}^o \log^5\delta^{-1}}{32 \cdot T_{\eta, \tau}^o \cdot 4 C_{\ref{lemm:psi1/2}, L}'^2 B^8 d \eta^2}
\right)^{\frac15}
\right\}
\\&=
\left(6 + \frac{5184}{\log^5\delta^{-1}}\right)\delta
,
\end{aligned}$$
where constant $C_{\ref{lemm:1order_concentration2}, L} \equiv 8\sqrt{2} C_{\ref{lemm:psi1/2}, L}'$.
\end{proof}

\pb
\begin{proof}[Proof of Lemma \ref{lemm:tail_probability2}]
Applying Markov inequality and Lemma \ref{lemm:B_mu4_relation} gives
$$
\PP(|\vb^\top \bY| > \rB_o)
=
\PP\left(\frac{|\vb^\top \bY|^2}{B^2} > \frac{\rB_o^2}{B^2}\right)
\le
\exp\left(-\frac{\rB_o^2}{B^2}\right)
\cdot
\EE\exp\left(\frac{|\vb^\top \bY|^2}{B^2}\right)
\le
\frac{2\delta}{T_{\eta, 1}^o}
.
$$
With the same procedure we also obtain
$$
\PP(|Y_1^{(t)}| > \rB_o)
\le
\frac{2\delta}{T_{\eta, 1}^o}
,\qquad
\PP(|Y_k^{(t)}| > \rB_o)
\le
\frac{2\delta}{T_{\eta, 1}^o}
.
$$
Taking union bound,
$$\begin{aligned}
&
\PP\left( \cT_{\rB_o, k} \le T_{\eta, \tau}^o \right)
\le
\sum_{t = 1}^{T_{\eta, \tau}^o} \left(
\PP(|\vb^{(t - 1) \top} \bY^{(t)}| > \rB_o) + \PP(|Y_1^{(t)}| > \rB_o) + \PP(|Y_k^{(t)}| > \rB_o)
\right)
\\&\le
3 T_{\eta, \tau}^o \cdot \frac{2\delta}{T_{\eta, 1}^o}
\le
6(\tau + 1) \delta
,
\end{aligned}$$
due to $T_{\eta, \tau}^o / T_{\eta, 1}^o \le \tau + 1$, which comes from its definition in \eqref{Tetatau} and the elementary inequality $
\lceil \tau x \rceil \le (\tau + 1) \lceil x \rceil
$ for all $\tau, x \ge 0$.
\end{proof}

\pb
\begin{proof}[Proof of Lemma \ref{lemm:pure_prob}]
Let $F(x)$ be the cumulative distribution function of $\log(\chi^2)$, where $\chi^2$ denote the chi-squared distribution, then
\beq\label{eq:F_def}
F(x) = 2\Phi\left( e^{x/2} \right) - 1
,\qquad
F'(x) = \phi\left( e^{x/2} \right) e^{x/2}
.
\eeq
where $\Phi(x)$ and $\phi(x)$ are cumulative distribution function and probability density function of standard Gaussian random variables.
Let $(U_i, 1\le i\le n$) be i.i.d.~samples from $\mathrm{Uniform}(0, 1)$, whose largest and second largest order statistics $U_{(n)}, U_{(n - 1)}$ are denoted as $U, U_-$ for notational simplicity.
We also define $M \equiv F^{-1}(U)$, $M_- \equiv F^{-1}(U_-)$.
Since $F(x)$ is concave for $x \ge 0$, under condition
\beq\label{prob_cond0}
M_- \ge 0
,\quad\text{i.e.}\quad
U_- \ge 2\Phi(1) - 1
,
\eeq
by concavity we have
\beq\label{prob_main}
U - U_-
= F(M) - F(M_-) \le F'(M_-) (M - M_-)
.
\eeq
We denote the inverse functions of $F, \Phi$ by $F^{-1}$, $\Phi^{-1}$, then from \eqref{eq:F_def} we have
\beq\label{eq:F-1_Phi-1}
F^{-1}(y) = 2 \log \Phi^{-1} \left(\frac{y + 1}{2}\right)
.
\eeq

\begin{enumerate}[label=(\arabic*)]
\item
Taking the derivative for an inverse function gives
$$
[\Phi^{-1}]'(y)
= \frac{1}{\phi(\Phi^{-1}(y))}
= \sqrt{2\pi}\exp\left(\frac{\Phi^{-1}(y)^2}{2}\right)
.
$$
Applying chain rule of derivative to \eqref{eq:F-1_Phi-1}, we have
\beq\label{prob_chainrule}
[F^{-1}]'(y)
= \frac{[\Phi^{-1}]'\left(\frac{y + 1}{2}\right)}{\Phi^{-1}\left(\frac{y + 1}{2}\right)}
= \frac{\sqrt{2\pi}\exp\left(\Phi^{-1}\left(\frac{y + 1}{2}\right)^2 / 2\right)}{\Phi^{-1}\left(\frac{y + 1}{2}\right)}
.
\eeq

\item
The following bound on tail probability of standard gaussian random variable is folklore:
\beq\label{normal_tail}
\frac{x}{x^2 + 1} \cdot \frac{1}{\sqrt{2\pi}} \exp\left(- \frac{x^2}{2}\right)
\le
1 - \Phi(x)
\le
\frac{1}{x} \cdot \frac{1}{\sqrt{2\pi}} \exp\left(- \frac{x^2}{2}\right)
.
\eeq
We define
$$
z_1 \equiv \frac{x^2 + 1}{x} \exp\left(\frac{x^2}{2}\right)
,\qquad
z_2 \equiv x \exp\left(\frac{x^2}{2}\right)
,
$$
then for all $x \ge 1$, if $z_1 \ge \exp(x^2 / 2)$, i.e.~$x \le \sqrt{2 \log z_1}$, we have
$$
\exp\left(\frac{x^2}{2}\right)
\ge \frac{z_1}{2x}
\ge \frac{z_1}{2 \sqrt{2 \log z_1}}
,
$$
which implies that
\beq\label{prob_x_lbound}
x
\ge \sqrt{2 \log z_1 - \log \log z_1 - 3 \log 2}
.
\eeq
If $z_2 \ge \exp(x^2 / 2)$, we have
\beq\label{prob_x_ubound}
x
\le \sqrt{2 \log z_2}
.
\eeq
For all $y \in [1 - 1 / \sqrt{2 \pi e}, 1)$, we can find the solution $x \ge 1$ of
$
\frac{y + 1}{2}
= 1 - \frac{1}{\sqrt{2\pi} z_1}
.
$
By applying \eqref{normal_tail} and \eqref{prob_x_lbound} with this solution $x$, we obtain
\beq\label{prob_Phi-1_lbound}
\Phi^{-1}\left(\frac{y + 1}{2}\right)
\ge x
\ge \sqrt{2 \log \frac{\sqrt{2}}{\sqrt{\pi}(1 - y)} - \log\log \frac{\sqrt{2}}{\sqrt{\pi} (1 - y)} - 3 \log 2}
.
\eeq
We can find solution $x \ge 1$ to
$$
\frac{y + 1}{2}
= 1 - \frac{1}{\sqrt{2\pi} z_2}
.
$$
By applying \eqref{normal_tail} and \eqref{prob_x_ubound} with this solution $x$, we obtain
\beq\label{prob_Phi-1_ubound}
\Phi^{-1}\left(\frac{y + 1}{2}\right)
\le x
\le \sqrt{2 \log \frac{\sqrt{2}}{\sqrt{\pi}(1 - y)}}
.
\eeq
Applying the lower bound \eqref{prob_Phi-1_lbound} and upper bound \eqref{prob_Phi-1_ubound} of $\Phi^{-1}\left( \frac{y + 1}{2} \right)$ to \eqref{prob_chainrule}, we have for all $y \in [1 - 1 / \sqrt{2 \pi e}, 1)$
$$
[F^{-1}]'(y)
\ge \sqrt{2\pi}
\cdot \frac{1}{\sqrt{2 \log \frac{\sqrt{2}}{\sqrt{\pi}(1 - y)}}}
\cdot \frac{\frac{\sqrt{2}}{\sqrt{\pi}(1 - y)}}{\sqrt{8\log \frac{\sqrt{2}}{\sqrt{\pi} (1 - y)}}}
\ge \frac{1}{2 (1 - y) \log(1 - y)^{-1}}
.
$$
Replacing $y$ with $U_-$ and $F^{-1}(y)$ by $M_-$ we have under condition
\beq\label{prob_cond1}
U_- \ge 1 - \frac{1}{\sqrt{2\pi e}}
\eeq
we have
\beq\label{prob_F'M_-}
F'(M_-)
= \frac{1}{[F^{-1}]'(U_-)}
\le 2 (1 - U_-) \log (1 - U_-)^{-1}
\eeq

\item
It is standard from order statistics that $U - U_- \sim \mathrm{Beta}(1, n)$ and $1 - U_- \sim \mathrm{Beta}(2, n - 1)$.
Therefore, for any $\epsilon \in (0, 1 / 3]$ and $n \ge 2$,
$$
\PP\left(U - U_-\le \frac{\epsilon}{n}\right)
\le \int_0^{\frac{\epsilon}{n}} n (1 - x)^{n - 1} \ud x
= 1 - \left(1 - \frac{\epsilon}{n}\right)^n
\le 1 - \left(\frac13\right)^\epsilon
\le (\log 3) \epsilon
$$
where we used elementary inequalities $(1 - 1 / x)^x \ge 1 / 3$ when $x \ge 6$, and $1 - 3^{-x} \le (\log 3) x$ for all $x \in (0, 1 / 3]$.
In addition, for any $\epsilon \in (0, 1 / 3]$ and $n \ge 2 \sqrt{2 \pi e} \log\epsilon^{-1} + 1$ we have
$$\begin{aligned}
&
\PP\left(
1 - U_-\ge \frac{2\log\epsilon^{-1}}{n - 1}
\right)
=
\int_{\frac{2\log\epsilon^{-1}}{n - 1}}^1 n(n - 1) x (1 - x)^{n - 2} \ud x
\\&=
2\log\epsilon^{-1} \frac{n}{n - 1} \left(1 - \frac{2\log\epsilon^{-1}}{n - 1}\right)^{n - 1} 
+ 
\left(1 - \frac{2\log\epsilon^{-1}}{n - 1}\right)^n
\\&\le
\left(2\log\epsilon^{-1} + 1\right) \cdot \left(1 - \frac{2\log\epsilon^{-1}}{n - 1}\right)^{n - 1}
\le
(2\log\epsilon^{-1} + 1) \cdot \epsilon^2
\le
\frac{2 \log 3 + 1}{3} \epsilon
,
\end{aligned}$$
where we used elementary inequalities $(1 - 1/x)^x \le 1 / e$ for all $x \ge \sqrt{2\pi e}$, and $(2\log\epsilon^{-1} + 1) \epsilon \le (2 \log 3 + 1) / 3$ for all $\epsilon \in (0, 1 / 3]$.
Taking union bound, we have
\beq\label{prob_diff_bound}
\PP\left(
U - U_-\ge\frac{\epsilon}{n},
1 - U_-\le\frac{2\log\epsilon^{-1}}{n - 1}
\right)
\ge
1 - 3\epsilon
.
\eeq

\item
Notice that when $\epsilon \in (0, 1 / 3]$ and $n \ge 2\sqrt{2\pi e} \log\epsilon^{-1} + 1$, conditions \eqref{prob_cond0} and \eqref{prob_cond1} hold automatically when \eqref{prob_diff_bound} holds.
Combining \eqref{prob_main}, \eqref{prob_F'M_-} and \eqref{prob_diff_bound}, for all $\epsilon\in (0, 1 / 3]$, $n \ge 2\sqrt{2\pi e} \log\epsilon^{-1} + 1$ we have with probability at least $1 - 3\epsilon$ that
$$
M - M_-
\ge \frac{U - U_-}{F'(M_-)}
\ge \frac{U - U_-}{2(1 - U_-) \log(1 - U_-)^{-1}}
\ge \frac{(n - 1) \epsilon}{4 n \log\epsilon^{-1} \log\frac{n - 1}{2\log\epsilon^{-1}}}
\ge \frac{\epsilon}{8 \log\epsilon^{-1} \log n}
.
$$
Seeing that $(M,M_-)$ is equidistributed as $(\log \chi_{(n)}^2, \log \chi_{(n - 1)}^2)$, for all $\epsilon \in \left(0, 1 / 3\right]$ and $n\ge 2\sqrt{2\pi e} \log\epsilon^{-1} + 1$ we have \eqref{eq:pure_prob} holds with probability $\ge 1 - 3\epsilon$.
\end{enumerate}
\end{proof}

\pb

\begin{proof}[Proof of Lemma \ref{lemm:Winfquan}]
Recall that we define $\eta_S = \eta \cdot \sign(\mu_4 - 3)$.
For any fixed coordinate $k \in [2, d]$, we have
$$\begin{aligned}
&
W_k^{(t)} - W_k^{(t - 1)}
=
\frac{\left(
v_1 + \eta_S (\vb^\top \bY)^3 Y_1
\right)^2 - \left(
v_k + \eta_S (\vb^\top \bY)^3 Y_k
\right)^2}{\left(
v_k + \eta_S (\vb^\top \bY)^3 Y_k
\right)^2}
-
\frac{v_1^2 - v_k^2}{v_k^2}
\\&=
\frac{
2 \eta_S (\vb^\top \bY)^3 v_1 v_k (v_k Y_1 - v_1 Y_k)
+
\eta_S^2 (\vb^\top \bY)^6 (v_k^2 Y_1^2 - v_1^2 Y_k^2)
}{
v_k^2 \left(v_k + \eta_S (\vb^\top \bY)^3 Y_k\right)^2
}
\\&=
\eta_S \cdot \left(
1 + \eta_S (\vb^\top \bY)^3 \frac{Y_k}{v_k}
\right)^{-2} \cdot v_k^{-4} \left(
2 (\vb^\top \bY)^3 v_1 v_k (v_k Y_1 - v_1 Y_k)
+
\eta_S (\vb^\top \bY)^6 (v_k^2 Y_1^2 - v_1^2 Y_k^2)
\right)
.
\end{aligned}$$
Combining with \eqref{Winfquan}, this implies
$$\begin{aligned}
&
Q_{W, k}^{(t)}
=
2 \eta_S \cdot \left[ \left(1 + \eta_S (\vb^\top \bY)^3 \frac{Y_k}{v_k}\right)^{-2} - 1\right]
\cdot (\vb^\top \bY)^3 v_1 v_k^{-2} \left(Y_1 - \frac{v_1}{v_k} Y_k\right)
\\&\hspace{1in}
+
\eta_S^2 \cdot \left(1 + \eta_S (\vb^\top \bY)^3 \frac{Y_k}{v_k}\right)^{-2}
\cdot (\vb^\top \bY)^6 v_k^{-2} \left(Y_1^2 - \frac{v_1^2}{v_k^2} Y_k^2\right)
.
\end{aligned}$$
Taylor series give for all $|x| \le \frac12$ that 
$$
\left|(1 + x)^{-2} - 1\right|
= |x| \left|\sum_{i = 0}^\infty (-1)^i (i + 2) x^i\right|
\le 6 |x|
,\qquad
(1 + x)^{-2} \le 4
.
$$
On event $\cH_{k; \ref{lemm:Winfquan}, L}^{(t)}$, since $|v_1 / v_k| < \sqrt{3}$ and $|v_k|^2 \ge 1 / (3d)$, \eqref{eq:cold_weak_scaling} implies $\left| \eta_S (\vb^\top \bY)^3 \frac{Y_k}{v_k} \right| \le \frac12$, and hence we have
$$
|Q_{W, k}^{(t)}|
\le
2 \eta \cdot 6 \sqrt{3} \rB_o^4 d^{1 / 2} \eta \cdot (3 + 3 \sqrt{3}) \rB_o^4 d^{1 / 2}
+
\eta^2 \cdot 4 \cdot 12 \rB_o^8 d
\le
C_{\ref{lemm:Winfquan}, L} \rB_o^8 \eta^2 d
$$
where constant $C_{\ref{lemm:Winfquan}, L} = 156 + 36\sqrt{3}$.
\end{proof}

\pb

\begin{proof}[Proof of Lemma \ref{lemm:infiW}]
Recall that $W_k^{(t - 1)} = (v_1^2 - v_k^2) / v_k^2$.
Under Assumption \ref{assu:distribution}, using \eqref{eq:vY3Y} we have
$$\begin{aligned}
&\quad
\EE\left[ \left.
\sign(\mu_4 - 3) \eta
\cdot
2 v_1 v_k^{-3} (\vb^\top \bY)^3 (v_1 Y_k - v_k Y_1)
\right| \cF_{t - 1} \right]
\\&=
2 \sign(\mu_4 - 3) \eta
\cdot
v_1 v_k^{-3} \left(
(\mu_4 - 3) v_1 v_k^3
+
3 v_1 v_k
-
(\mu_4 - 3) v_k v_1^3
-
3 v_k v_1
\right)
\\&=
- 2\eta |\mu_4 - 3| \cdot v_1^2 W_k
\end{aligned}$$
\end{proof}

\pb

\begin{proof}[Proof of Lemma \ref{lemm:Ugeom}]
From \eqref{Wgeom} and \eqref{eq:Pot}, we have
\beq\label{eq:Wgeom2}
P^o_s W_k^{(s)} - P^o_{s - 1} W_k^{(s - 1)}
=
P^o_s \left(Q_{W, k}^{(s)} + f_k^{(s)}\right)
\eeq
We iteratively apply \eqref{eq:Wgeom2} for $s = 1, \dots, t$ and obtain \eqref{Wrep}.
\end{proof}

\pb

\begin{proof}[Proof of Lemma \ref{lemm:sum_f_concentration}]
$\{f_k^{(t)}\}_{t \ge 1}$, earlier defined in \eqref{Kfn}, forms a martingale difference sequence with respect to $\cF_{t - 1}$.
Similar to techniques in proof of Lemma \ref{lemm:1order_concentration}, we apply Lemma \ref{lemm:Orlicz_norm_property} three times, then use \eqref{eq:triangle_inequality} and \eqref{eq:psi_2_norm}, in order to obtain the following bound on Orlicz $\psi_{1/2}$-norm
$$\begin{aligned}
&
\left\|\eta \cdot 2 (\vb^\top \bY)^3 v_1 v_k^{-3} (v_k Y_1 - v_1 Y_k)\right\|_{\psi_{1/2}}
\le
2\eta |v_1| |v_k|^{-2}
\cdot \left\| (\vb^\top \bY)^2 \right\|_{\psi_1}
\cdot \left\| (\vb^\top \bY)\left(Y_1 - \frac{v_1}{v_k} Y_k\right) \right\|_{\psi_1}
\\&\le
2 \eta |v_1| |v_k|^{-2}
\cdot \left\| \vb^\top \bY \right\|_{\psi_2}^3
\cdot \left\| Y_1 - \frac{v_1}{v_k} Y_k \right\|_{\psi_2}
\le
6 \eta d^{1 / 2}
\cdot
\left\| \vb^\top \bY \right\|_{\psi_2}^3
\cdot (\|Y_1\|_{\psi_2} + \sqrt{3} \|Y_k\|)
\\&\le
6(1 + \sqrt{3}) B^4 d^{1 / 2} \eta
\end{aligned}$$
where we use the fact that $|v_1 / v_k| \le \sqrt{3}$ and $v_k^2 \ge 1 / (3d)$ on the event $(t \le \cT_{c, k} \wedge \cT_1)$.
Then by applying Lemma \ref{lemm:psi1/2} we further derive
$$\begin{aligned}
&
\|f_k^{(t)}\|_{\psi_{1/2}}
=
\left\|
\eta \cdot 2 (\vb^\top \bY)^3 v_1 v_k^{-3} (v_k Y_1 - v_1 Y_k)
-
\EE \left[\eta \cdot 2 (\vb^\top \bY)^3 v_1 v_k^{-3} (v_k Y_1 - v_1 Y_k)\right]
\right\|_{\psi_{1/2}}
\\&\le
C_{\ref{lemm:psi1/2}, L}' \left\|\eta \cdot 2 (\vb^\top \bY)^3 v_1 v_k^{-3} (v_k Y_1 - v_1 Y_k)\right\|_{\psi_{1/2}}
\le
6(1 + \sqrt{3}) C_{\ref{lemm:psi1/2}, L}' B^4 d^{1 / 2} \eta
\end{aligned}$$
Because $1_{(t \le \cT_{c, k} \wedge \cT_1)} \in \cF_{t - 1}$, we know that $\left\{f_k^{(t)} 1_{(t \le \cT_{c, k} \wedge \cT_1)}\right\}$ forms a martingale difference sequence with respect to $\cF_{t - 1}$, and $\|f_k^{(t)} 1_{(t \le \cT_{c, k} \wedge \cT_1)}\|_{\psi_{1/2}} \le \|f_k^{(t)}\|_{\psi_{1/2}}$.
Since $v_1^2 \ge 1 / d$ on the event $(t \le \cT_{c, k} \wedge \cT_1)$ and $2 \eta |\mu_4 - 3| / d < 1$ under scaling condition \eqref{eq:uniform_scaling}, by summation of geometric series, for all $t \ge 1$ we have
$$\begin{aligned}
&
\sum_{s = 1}^t (P_s^o)^2 \left\|f_k^{(s)} 1_{(t \le \cT_{c, k} \wedge \cT_1)}\right\|_{\psi_{1/2}}^2
\le
\sum_{s = 1}^t \left(1 + \frac{2 \eta |\mu_4 - 3|}{d}\right)^{-2s}
\cdot 36 (1 + \sqrt{3})^2 C_{\ref{lemm:psi1/2}, L}'^2 B^8 d \eta^2
\\&\le
\frac{\left(1 + \frac{2 \eta |\mu_4 - 3|}{d}\right)^{-2}}{1 - \left(1 + \frac{2 \eta |\mu_4 - 3|}{d}\right)^{-2}}
\cdot 36 (1 + \sqrt{3})^2 C_{\ref{lemm:psi1/2}, L}'^2 B^8 d \eta^2
=
18 (1 + \sqrt{3})^2 C_{\ref{lemm:psi1/2}, L}'^2
\cdot \frac{B^8}{|\mu_4 - 3|}
\cdot d^2 \eta
\end{aligned}$$
With the bound given above, we apply Theorem \ref{theo:concentration} with $\alpha = 1 / 2$ as
$$\begin{aligned}
&\quad
\PP \left(
\max_{1 \le t \le T_{\eta, 0.5}^o \wedge \cT_{c, k} \wedge \cT_1} \left|\sum_{s = 1}^t P_s^o f_k^{(s)}\right|
\ge 
C_{\ref{lemm:sum_f_concentration}, L} \log^{5 / 2}\delta^{-1} \cdot \frac{B^4}{|\mu_4 - 3|^{1 / 2}} \cdot d \eta^{1 / 2}
\right)
\\&=
\PP \left(
\max_{1 \le t \le T_{\eta, 0.5}^o} \left|\sum_{s = 1}^t P_s^o f_k^{(s)} 1_{(s \le \cT_{c, k} \wedge \cT_1)}\right|
\ge 
C_{\ref{lemm:sum_f_concentration}, L} \log^{5 / 2}\delta^{-1} \cdot \frac{B^4}{|\mu_4 - 3|^{1 / 2}} \cdot d \eta^{1 / 2}
\right)
\\&\le
2 \left[
3 + 6^4 \frac{64 \cdot 18 (1 + \sqrt{3})^2 C_{\ref{lemm:psi1/2}, L}'^2 B^8|\mu_4 - 3|^{-1} d^2 \eta}{C_{\ref{lemm:sum_f_concentration}, L}^2 \log^5\delta^{-1} B^8 |\mu_4 - 3|^{-1} d^2 \eta}
\right]
\\&\hspace{2in}\cdot
\exp\left\{ - \left(
\frac{C_{\ref{lemm:sum_f_concentration}, L}^2 \log^5\delta^{-1} B^8 |\mu_4 - 3|^{-1} d^2 \eta}{32 \cdot 18 (1 + \sqrt{3})^2 C_{\ref{lemm:psi1/2}, L}'^2 B^8|\mu_4 - 3|^{-1} d^2 \eta}
\right)^{\frac15} \right\}
\\&=
\left(6 + \frac{5184}{\log^5\delta^{-1}}\right) \delta
\end{aligned}$$
where constant $C_{\ref{lemm:sum_f_concentration}, L} = 24(1 + \sqrt{3}) C_{\ref{lemm:psi1/2}, L}'$.
\end{proof}

\pb\section{A Reversed Gronwall's Inequality}\label{sec:reversed_gronwall}

We present a discrete generalization of Gronwall's inequality, which is sharper than a straightforward application of Gronwall's inequality.
Such sharp estimation plays a key role in our analysis.
Although elementary, the lemma seems not recorded in relevant literature:

\begin{lemma}\label{lemm:grw_type}
If for all $t=1, \dots, T$, $\beta(t) \in [0,1)$ and if $u(t)$ satisfies that for some positive constant $\alpha$
\beq\label{Gronwallstar}
\left|
 u(t) - u(0) + \sum_{0\le s < t} \beta(s) u(s)
\right|
\le
\alpha
\eeq
then for all $t=1, \dots, T$
\beq\label{Gronwall_final}
\left|
 u(t) - u(0) \prod_{0\le s < t} ( 1 -\beta(s) )
\right|
\leq
2\alpha
-
\alpha \prod_{0\le s< t}  \left( 1 -\beta(s) \right)
\leq
2\alpha
.
\eeq
\end{lemma}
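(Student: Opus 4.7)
The plan is to recast the hypothesis as a one-step linear recursion with a bounded additive perturbation, solve that recursion explicitly via Abel summation, and then telescope a single geometric sum to obtain the bound $2\alpha - \alpha P(t)$, where $P(t) := \prod_{0 \le s < t}(1-\beta(s))$. The first step is to introduce
$$e(t) := u(t) - u(0) + \sum_{0 \le s < t}\beta(s)\,u(s),$$
which by hypothesis \eqref{Gronwallstar} satisfies $e(0) = 0$ and $|e(t)| \le \alpha$ for $t \ge 1$; subtracting $e(t-1)$ from $e(t)$ immediately produces the clean one-step recursion
$$u(t) = (1-\beta(t-1))\,u(t-1) + \bigl(e(t) - e(t-1)\bigr).$$

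Next I would set $r(t) := u(t) - u(0) P(t)$; then $r(t)$ obeys the same recursion with $r(0) = 0$, so iterating yields the closed form
$$r(t) = \sum_{k=1}^{t}\bigl(e(k)-e(k-1)\bigr)\,\frac{P(t)}{P(k)}.$$
From here a summation-by-parts (Abel summation), using $e(0)=0$ together with the identity $\frac{P(t)}{P(k)} - \frac{P(t)}{P(k+1)} = -\beta(k)\frac{P(t)}{P(k+1)}$ implied by $P(k+1) = (1-\beta(k))P(k)$, collapses this to
$$r(t) = e(t) - \sum_{k=1}^{t-1} e(k)\,\beta(k)\,\frac{P(t)}{P(k+1)}.$$
Invoking $|e(k)| \le \alpha$ and telescoping the resulting sum as $\sum_{k=1}^{t-1}\beta(k)\frac{P(t)}{P(k+1)} = 1 - \frac{P(t)}{P(1)} \le 1 - P(t)$ then delivers $|r(t)| \le \alpha + \alpha(1 - P(t)) = 2\alpha - \alpha P(t)$; the second inequality in \eqref{Gronwall_final} follows immediately from $P(t) \in [0,1]$.

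The only delicate point is the Abel-summation and telescoping step: one must keep the indices straight so that the ratios $P(t)/P(k)$ collapse cleanly, and one must note that the assumption $\beta(s) \in [0,1)$ is precisely what keeps all the factors $P(t)/P(k+1)$ nonnegative, so the triangle inequality applied to the Abel-summed expression does not introduce any additional multiplicative constants. It is this sign control that distinguishes the present reversed-Gronwall bound — where the slack is purely additive ($2\alpha - \alpha P(t)$) — from a naive Gronwall-type estimate, which would produce an exponential prefactor of order $\exp(\sum_s \beta(s))$ and would be far too weak for the applications in Lemmas \ref{lemm:fast_expo} and \ref{lemm:expo}.
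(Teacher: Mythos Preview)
Your proof is correct and rests on the same telescoping identity as the paper's, namely
\[
\sum_{s} \beta(s)\,\frac{P(t)}{P(s+1)} \;=\; 1 - P(t),
\]
but the organization differs in a way worth noting. The paper introduces the auxiliary sequence $v(t) = P(t)^{-1}\bigl(u(0) - \sum_{s<t}\beta(s)u(s)\bigr)$, bounds its increment one-sidedly using the hypothesis, telescopes, and then repeats the whole argument for $-u$ to get the lower bound. Your route---defining the error $e(t)$, extracting the exact one-step recursion $u(t) = (1-\beta(t-1))u(t-1) + (e(t)-e(t-1))$, iterating to a closed form for $r(t)$, and applying Abel summation---is cleaner in that it handles both sides simultaneously via the triangle inequality and makes the role of the sign condition $\beta(s)\in[0,1)$ (namely, nonnegativity of the weights $P(t)/P(k+1)$ so the triangle inequality is lossless) more transparent. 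Your intermediate bound $\alpha + \alpha\bigl(1 - P(t)/P(1)\bigr)$ is in fact slightly sharper than the paper's before you weaken it to $2\alpha - \alpha P(t)$; this is harmless since the final inequality is what is used downstream.
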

Note unlike analogous Gronwall-type results, here we pose no assumption on the sign of $u(t)$.

\begin{proof}[Proof of Lemma \ref{lemm:grw_type}]
Define for all $t=1, \dots, T$
\beq\label{def_auxv}
v(t)
 =
\prod_{0\le s<t} \left( 1 -\beta(s)  \right)^{-1}
\left(
u(0)
-
\sum_{0\le s<t} \beta(s) u(s)
\right)
.
\eeq
We have for $s=0,\dots,t-1$
$$\begin{aligned}
    &
v(s+1) - v(s)
%    \equiv
%A_{s+1} \left( B_{s+1} - B_s \right) + B_s \left( A_{s+1} - A_s \right)
% \\&
 =
- \prod_{0\le r<s+1} \left( 1 -\beta(r)  \right)^{-1} \cdot  \beta(s) u(s)
\\&\quad+
\left(
 u(0) - \sum_{0\le r<s} \beta(r) u(r)
\right)
\left(
\prod_{0\le r<s+1} \left( 1 -\beta(r)  \right)^{-1} - \prod_{0\le r<s} \left( 1 -\beta(r)  \right)^{-1}
\right)
\\&=
-\prod_{0\le r<s+1} \left( 1 -\beta(r)  \right)^{-1} \cdot  \beta(s) u(s)
+
\left(
 u(0) - \sum_{0\le r<s} \beta(r) u(r)
\right)
\cdot \left(
  \prod_{0\le r<s+1} \left( 1 -\beta(r)  \right)^{-1}  \cdot \beta(s)
\right)
\\&=
-\left(
 u(s) - u(0) + \sum_{0\le r<s} \beta(r) u(r)
\right)
\cdot  \beta(s) \prod_{0\le r<s+1} \left( 1 -\beta(r)  \right)^{-1} 
.
\end{aligned}$$
Since $\beta(s)\in [0,1)$, and the product is nonnegative, the use of the lower side of \eqref{Gronwallstar} upper-estimates the difference of $v(s)$
\beq\label{Gronlower}
v(s+1) - v(s)
% \le
%\left|
% u(s) - u(0) + \sum_{0\le r<s} \beta(r) u(r)
%\right|
%\cdot  \beta(s) \prod_{0\le r<s+1} \left( 1 -\beta(r)  \right)^{-1} 
\le
\alpha \beta(s)  \prod_{0\le r<s+1} \left( 1 -\beta(r)  \right)^{-1} 
\eeq
Since $v(0) = u(0)$, telescoping the above inequality for $s=0,\dots,t-1$ gives
$$
v(t)
=
v(0) + \sum_{0\le s<t} v(s+1) - v(s)
\le
u(0) + \alpha \sum_{0\le s<t} \beta(s)  \prod_{0\le r<s+1} \left( 1 -\beta(r)  \right)^{-1} 
$$
and hence from the definition of $v(t)$ in \eqref{def_auxv}
$$\begin{aligned}
&
u(0) - \sum_{0\le s<t} \beta(s) u(s)
=
\prod_{0\le s<t} \left( 1 -\beta(s)  \right)  v(t)
\\&\le
\prod_{0\le s<t} \left( 1 -\beta(s)  \right)  \left(
 u(0) + \alpha \sum_{0\le s<t} \beta(s)  \prod_{0\le r<s+1} \left( 1 -\beta(r)  \right)^{-1} 
\right)
\\&=
u(0) \prod_{0\le s<t} \left( 1 -\beta(s)  \right)
+
\alpha \sum_{0\le s<t} \beta(s)  \prod_{s+1\le r<t} \left( 1 -\beta(r)  \right)
.
\end{aligned}$$
Taking the above result into the upper side of \eqref{Gronwallstar} gives
$$
u(t)
\le
\alpha
+
u(0) - \sum_{0\le s < t} \beta(s) u(s)
\le
\alpha
+
u(0) \prod_{0\le s<t} \left( 1 -\beta(s)  \right)
+
\alpha \sum_{0\le s<t} \beta(s)  \prod_{s+1\le r<t} \left( 1 -\beta(r)  \right)
,
$$
which further reduces to
$$\begin{aligned}
u(t) - u(0) \prod_{0\le s<t} \left( 1 -\beta(s)  \right)
&\le
\alpha
+
\alpha \sum_{0\le s<t} \beta(s)  \prod_{s+1\le r<t} \left( 1 -\beta(r)  \right)
.
\end{aligned}$$
That is, for all $t=0,1,\dots,T$,
$$\begin{aligned}
&
u(t) - u(0) \prod_{0\le s<t} \left( 1 -\beta(s)  \right)
\le
\alpha
+
\alpha \sum_{0\le s<t} \beta(s)  \prod_{s+1\le r<t} \left( 1 -\beta(r)  \right)
\\&=
\alpha
+
\alpha \sum_{0\le s<t} \left(1 - (1-\beta(s)) \right)  \prod_{s+1\le r<t} \left( 1 -\beta(r)  \right)
\\&=
\alpha
+
\alpha \sum_{0\le s<t} \left[
\prod_{s+1\le r<t} \left( 1 -\beta(r)  \right)
-
\prod_{s\le r<t} \left( 1 -\beta(r)  \right)
\right]
\\&=
2\alpha
-
\alpha \prod_{0\le r<t} \left( 1 -\beta(r)  \right)
\le
2\alpha
,
\end{aligned}$$
which proves the upper side of \eqref{Gronwall_final}.
For the lower side, applying the same inequality to $-u$ in the place of $u$ gives the desired result.
\end{proof}

\pb\section{Orlicz $\psi$-Norm}\label{sec:psi_alpha}

In this section, we recap Orlicz $\psi$-norm and relevant properties.
Many of the contributions can be found in standard texts \cite{wainwright2019high,VAN-WELLNER}, and we derive its (old and new) conclusions for our use in online tensorial ICA analysis.
We begin with its definition

\begin{definition}[Orlicz $\psi$-norm]\label{defi:orlicz}
For a continuous, monotonically increasing and convex function $\psi(x)$ defined for all $x \ge 0$ satisfying $\psi(0) = 0$ and $\lim_{x \to \infty} \psi(x) = \infty$, we define the Orlicz $\psi$-norm for a random variable $X$ as
$$
\|X\|_\psi
\equiv
\inf\left\{ K > 0: \EE \psi(|X / K|) \le 1 \right\}
.
$$
\end{definition}

% \begin{proof}[Proof of Lemma \ref{eq:triangle_inequality}]
% We denote $K_1 \equiv \|X\|_\psi$ and $K_2 \equiv \|Y\|_\psi$.
% Because $\psi(x)$ is monotonically increasing and convex, we have
% $$
% \psi \left( \left|
% \frac{X + Y}{K_1 + K_2}
% \right| \right)
% \le
% \psi \left(
% \frac{K_1}{K_1 + K_2} \cdot \left| \frac{X}{K_1} \right|
% +
% \frac{K_2}{K_1 + K_2} \cdot \left| \frac{Y}{K_2} \right|
% \right)
% \le
% \frac{K_1}{K_1 + K_2} \cdot \psi \left( \left| \frac{X}{K_1} \right| \right)
% +
% \frac{K_2}{K_1 + K_2} \cdot \psi \left( \left| \frac{Y}{K_2} \right| \right)
% ,
% $$
% which implies that
% $$
% \EE\psi \left( \left|
% \frac{X + Y}{K_1 + K_2}
% \right| \right)
% \le
% 1
% ,\quad\text{i.e.}\quad
% \|X + Y\|_\psi
% \le
% K_1 + K_2
% =
% \|X\|_\psi + \|Y\|_\psi
% .
% $$
% \end{proof}

When $\psi(x)$ is monotonically increasing and convex for $x \ge 0$, the Orlicz $\psi$-norm satisfies triangle inequality, i.e.~for any random variables $X$ and $Y$ we have % with $\|X\|_\psi < \infty, \|Y\|_\psi < \infty$,
$\|X + Y\|_\psi \le \|X\|_\psi + \|Y\|_\psi$.
The central case of interest in this paper is the Orlicz $\psi_\alpha$-norm defined for a random variable $X$ as
$$
\|X\|_{\psi_\alpha}
\equiv
\inf\left\{ K > 0: \EE \exp\left( \frac{|X|^\alpha}{K^\alpha} \right) \le 2 \right\}
.
$$
In above we adopt the function $\psi_\alpha(x) \equiv \exp(x^\alpha) - 1$.
When $\alpha \ge 1$, $\psi_\alpha(x)$ is monotonically increasing and convex for $x \ge 0$, and consequently $\psi_\alpha$-norm satisfies triangle inequality \citep{VAN-WELLNER}
\beq\label{eq:triangle_inequality}
\|X + Y\|_{\psi_\alpha} \le \|X\|_{\psi_\alpha} + \|Y\|_{\psi_\alpha}
.
\eeq
We state a multiplicative property of the Orlicz $\psi_\alpha$-norm which extends a standard result (e.g.~Proposition D.3 in \cite{vu2013minimax}):

\begin{lemma}[Multiplicative property]\label{lemm:Orlicz_norm_property}
Let $X$ and $Y$ be random variables then for some $\alpha \ge 1$
$$
\|XY\|_{\psi_{\alpha / 2}} \le \|X\|_{\psi_\alpha} \|Y\|_{\psi_\alpha}
.
$$
\end{lemma}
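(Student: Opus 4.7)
The plan is to reduce the multiplicative claim to a standard application of Young's inequality in the exponent, followed by Cauchy--Schwarz for expectations. Write $K_X \equiv \|X\|_{\psi_\alpha}$ and $K_Y \equiv \|Y\|_{\psi_\alpha}$. The two degenerate cases ($K_X = 0$, which forces $X = 0$ a.s.\ and hence $XY = 0$ a.s., and $K_X = \infty$, in which case the bound is vacuous) are handled separately, so we may assume $K_X, K_Y \in (0, \infty)$. By continuity of $K \mapsto \EE\exp(|X|^\alpha / K^\alpha)$ (via monotone convergence as $K$ decreases to the infimum), the defining inequality $\EE\exp(|X|^\alpha / K_X^\alpha) \le 2$ is attained at $K = K_X$, and similarly for $Y$.

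The key algebraic observation is that $|XY|^{\alpha/2} = |X|^{\alpha/2}\cdot|Y|^{\alpha/2}$, so Young's inequality $ab \le \tfrac12 a^2 + \tfrac12 b^2$ applied with $a = |X|^{\alpha/2}/K_X^{\alpha/2}$ and $b = |Y|^{\alpha/2}/K_Y^{\alpha/2}$ yields the pointwise bound
\[
\frac{|XY|^{\alpha/2}}{(K_X K_Y)^{\alpha/2}}
\;\le\;
\frac{1}{2}\cdot\frac{|X|^\alpha}{K_X^\alpha}
+
\frac{1}{2}\cdot\frac{|Y|^\alpha}{K_Y^\alpha}.
\]
Exponentiating and taking expectations, then applying Cauchy--Schwarz to split the exponential of a sum, one obtains
\[
\EE\exp\!\left(\frac{|XY|^{\alpha/2}}{(K_X K_Y)^{\alpha/2}}\right)
\;\le\;
\sqrt{\EE\exp\!\left(\frac{|X|^\alpha}{K_X^\alpha}\right)\cdot\EE\exp\!\left(\frac{|Y|^\alpha}{K_Y^\alpha}\right)}
\;\le\;
\sqrt{2\cdot 2} \;=\; 2.
\]
By the definition of the $\psi_{\alpha/2}$-norm as an infimum, this immediately gives $\|XY\|_{\psi_{\alpha/2}} \le K_X K_Y$, which is the desired bound.

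There is no real obstacle: the only minor points to verify are the attainment of the infimum at $K_X$ (by monotone convergence, since $K \mapsto \exp(|X|^\alpha / K^\alpha)$ is monotone in $K$) and the validity of the $\psi_{\alpha/2}$-norm as an infimum (which does not actually require $\alpha/2 \ge 1$; the definition via $\EE\psi(\cdot) \le 1$ in Definition~\ref{defi:orlicz} of the paper uses the convention with the ``$\le 1$'' bound rather than ``$\le 2$,'' but the two conventions differ only by a constant factor and either works here after rescaling). The hypothesis $\alpha \ge 1$ in the statement is consistent with the rest of the paper's use of $\psi_\alpha$-norms and ensures $\alpha/2 \ge 1/2$, which is all that is needed for the conclusion.
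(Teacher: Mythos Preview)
Your proof is correct. Both arguments rest on the same elementary inequality $2ab \le a^2 + b^2$, but you and the paper deploy it differently. The paper normalizes $A = X/\|X\|_{\psi_\alpha}$, $B = Y/\|Y\|_{\psi_\alpha}$, bounds $|AB| \le \tfrac14(|A|+|B|)^2$, uses the identity $\|Z^2\|_{\psi_{\alpha/2}} = \|Z\|_{\psi_\alpha}^2$, and then invokes the triangle inequality $\||A|+|B|\|_{\psi_\alpha} \le \|A\|_{\psi_\alpha} + \|B\|_{\psi_\alpha}$; that last step is where the hypothesis $\alpha \ge 1$ actually enters, since the triangle inequality for $\psi_\alpha$ requires convexity of $\psi_\alpha$. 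Your route applies Young's inequality directly in the exponent and then Cauchy--Schwarz on the expectations, bypassing the triangle inequality entirely. As a result your argument is slightly more elementary and, notably, valid for every $\alpha > 0$, not just $\alpha \ge 1$. (Your aside about the ``$\le 1$'' vs.\ ``$\le 2$'' convention is a non-issue here: the paper's $\psi_\alpha$-norm is defined via $\EE\exp(|X|^\alpha/K^\alpha) \le 2$, which is exactly $\EE\psi_\alpha(|X|/K) \le 1$ with $\psi_\alpha(x) = e^{x^\alpha}-1$, so no rescaling is needed.)
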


\begin{proof}[Proof of Lemma \ref{lemm:Orlicz_norm_property}]
The inequality is trivial when 
(i)
$\|X\|_{\psi_\alpha} = 0$ or $\|Y\|_{\psi_\alpha} = 0$ since one of the variables is 0 a.s., or
(ii)
either $X$ or $Y$ has an infinite $\psi_\alpha$-norm.
Otherwise let $A \equiv X / \|X\|_{\psi_\alpha}$ and $B \equiv Y / \|Y\|_{\psi_\alpha}$ then $\|A\|_{\psi_\alpha} = \|B\|_{\psi_\alpha} = 1$.
Using $|AB| \le \frac14 (|A| + |B|)^2$ and triangle inequality in \eqref{eq:triangle_inequality} we have
$$
\|AB\|_{\psi_{\alpha / 2}}
\le
\left\|\frac14 (|A| + |B|)^2\right\|_{\psi_{\alpha / 2}}
=
\frac14 \left\||A| + |B|\right\|_{\psi_\alpha}^2
\le
\frac14 \left(\|A\|_{\psi_\alpha} + \|B\|_{\psi_\alpha}\right)^2
=
1
.
$$
Multiplying both sides of the inequality by the positive $\|X\|_{\psi_\alpha} \|Y\|_{\psi_\alpha}$ gives the desired result.
\end{proof}

In the case of $\alpha\in (0,1)$, $\psi_\alpha(x)$ is no longer convex;
in fact it is only convex after a modification of the function in a neighborhood of $x=0$.
Strictly speaking, $\psi_\alpha$-``norm'' in this case is not a norm (defined in a Banach space) in the sense that the triangle inequality \eqref{eq:triangle_inequality} does not hold for this set of $\alpha$'s.
For the purpose of our applications, a generalized triangle inequality holds for the $\alpha = 1/2$ case in the following Lemma \ref{lemm:psi1/2}:

\begin{lemma}\label{lemm:psi1/2}
For any random variables $X, Y$ we have the following inequalities for Orlicz $\psi_{1/2}$-norm
\beq\label{gentri}
\|X + Y\|_{\psi_{1/2}}
\le
C_{\ref{lemm:psi1/2}, L} (\|X\|_{\psi_{1/2}} + \|Y\|_{\psi_{1/2}})
\quad\text{and}\quad
\|\EE X\|_{\psi_{1/2}}
\le
C_{\ref{lemm:psi1/2}, L} \|X\|_{\psi_{1/2}}
\eeq
with $C_{\ref{lemm:psi1/2}, L} \equiv 1.3937$.
In addition
\beq\label{gentritwo}
\|X - \EE X\|_{\psi_{1/2}}
\le
C_{\ref{lemm:psi1/2}, L}' \|X\|_{\psi_{1/2}}
,
\eeq
where $C_{\ref{lemm:psi1/2}, L}' \equiv 3.3359$.
\end{lemma}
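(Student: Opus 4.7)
The obstacle to applying the standard triangle inequality for Orlicz norms is that $\psi_{1/2}(x) = e^{\sqrt{x}} - 1$ is only convex on $[1, \infty)$---its second derivative $e^{\sqrt{x}}(\sqrt{x} - 1)/(4 x^{3/2})$ changes sign at $x = 1$---so $\|\cdot\|_{\psi_{1/2}}$ is not a genuine norm. My plan is to introduce an auxiliary \emph{convex} Young function $\hat\psi : [0, \infty) \to [0, \infty)$ satisfying $\hat\psi(0) = 0$, $\hat\psi \le \psi_{1/2}$ pointwise, and with supremal gap $c_0 := \sup_{x \ge 0}(\psi_{1/2}(x) - \hat\psi(x))$ small enough that $c_0 \le 1 - 1/C$ with $C = 1.3937$. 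The natural concrete choice is the largest convex minorant of $\psi_{1/2}$ passing through the origin: $\hat\psi$ is linear on an initial segment $[0, x^*]$ with slope $s = \psi_{1/2}'(x^*)$ and coincides with $\psi_{1/2}$ on $[x^*, \infty)$, where the tangency point $x^*$ is determined by $\psi_{1/2}(x^*) = x^* \psi_{1/2}'(x^*)$, which reduces to the single scalar equation $e^{\sqrt{x^*}}(2 - \sqrt{x^*}) = 2$. Once this $\hat\psi$ is in hand, the Orlicz $\hat\psi$-norm is a bona fide norm: convexity together with $\hat\psi(0) = 0$ yields $\hat\psi(\alpha y) \le \alpha \hat\psi(y)$ for $\alpha \in [0,1]$, and the standard averaging argument with $\alpha = \|X\|_{\hat\psi}/(\|X\|_{\hat\psi} + \|Y\|_{\hat\psi})$ delivers $\|X+Y\|_{\hat\psi} \le \|X\|_{\hat\psi} + \|Y\|_{\hat\psi}$.

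The sandwich $\hat\psi(x) \le \psi_{1/2}(x) \le \hat\psi(x) + c_0$ then translates into the norm equivalence $\|Z\|_{\hat\psi} \le \|Z\|_{\psi_{1/2}} \le C \|Z\|_{\hat\psi}$ for every random variable $Z$: the first half is immediate from $\hat\psi \le \psi_{1/2}$, and the second follows because, whenever $K \ge C\|Z\|_{\hat\psi}$, $\EE\psi_{1/2}(|Z|/K) \le \EE\hat\psi(|Z|/K) + c_0 \le \|Z\|_{\hat\psi}/K + c_0 \le 1$ (the scaling step using $\hat\psi(\alpha y) \le \alpha\hat\psi(y)$ again). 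Chaining these with the $\hat\psi$-triangle inequality yields the first half of \eqref{gentri}: $\|X+Y\|_{\psi_{1/2}} \le C\|X+Y\|_{\hat\psi} \le C(\|X\|_{\hat\psi} + \|Y\|_{\hat\psi}) \le C(\|X\|_{\psi_{1/2}} + \|Y\|_{\psi_{1/2}})$. The centering inequality follows from Jensen applied to the convex $\hat\psi$: $\hat\psi(|\EE X|/K) \le \EE\hat\psi(|X|/K) \le 1$ for $K = \|X\|_{\hat\psi}$ gives $\|\EE X\|_{\hat\psi} \le \|X\|_{\hat\psi}$, and the norm equivalence upgrades this to $\|\EE X\|_{\psi_{1/2}} \le C\|X\|_{\hat\psi} \le C\|X\|_{\psi_{1/2}}$. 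Finally, \eqref{gentritwo} is an immediate consequence of the two halves of \eqref{gentri}: $\|X - \EE X\|_{\psi_{1/2}} \le C(\|X\|_{\psi_{1/2}} + \|\EE X\|_{\psi_{1/2}}) \le C(1 + C)\|X\|_{\psi_{1/2}}$, and numerically $C(1 + C) = 1.3937 \times 2.3937 \approx 3.3359 = C'$.

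The main technical burden is the quantitative verification that $c_0 \le 1 - 1/1.3937 \approx 0.2825$. All other steps are routine manipulations of Young functions. Concretely, this reduces to solving two coupled scalar transcendental equations: $e^{u^*}(2 - u^*) = 2$ for the tangency parameter $u^* = \sqrt{x^*}$, and $e^{u_c} = 2 s u_c$ with $s = 1/(u^*(2 - u^*))$ for the interior maximizer $u_c = \sqrt{x_c}$ of the gap $\psi_{1/2}(x) - s x$ on $[0, x^*]$, followed by numerically bounding $c_0 = \psi_{1/2}(x_c) - s x_c$. These are smooth one-dimensional problems handled by elementary estimates, and the stated $C = 1.3937$ is chosen with a modest margin above the value actually produced by this construction.
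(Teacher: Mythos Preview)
Your proposal is correct and follows the same overall strategy as the paper: both replace $\psi_{1/2}$ by its greatest convex minorant through the origin (linear on $[0,x^*]$, coinciding with $\psi_{1/2}$ beyond $x^*$, with $x^*$ determined by the same tangency equation $e^{u^*}(2-u^*)=2$), bound the uniform gap $c_0$ numerically (the paper gets $c_0\le 0.2666$), use the genuine triangle inequality and Jensen for the convex auxiliary norm, and then chain the two parts of \eqref{gentri} to obtain \eqref{gentritwo}.

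The one substantive difference is in the conversion step from the auxiliary norm back to $\|\cdot\|_{\psi_{1/2}}$. You exploit the sublinearity $\hat\psi(\alpha y)\le \alpha\hat\psi(y)$ of a convex Young function to get the clean norm comparison $\|Z\|_{\psi_{1/2}}\le C\|Z\|_{\hat\psi}$ whenever $c_0\le 1-1/C$; the paper instead applies Jensen to the concave map $z\mapsto z^{\log_{2+c_0}2}$, obtaining $C=(\log_2(2+c_0))^2$. Your route is more elementary and actually yields the sharper constant $C\ge 1/(1-c_0)\approx 1.3635$, so the stated $C=1.3937$ indeed sits comfortably above it, as you note. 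Both conversions are valid; yours avoids the slightly ad hoc power-function Jensen step at the cost of no additional machinery.
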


\begin{proof}[Proof of Lemma \ref{lemm:psi1/2}]
We first rule out some simple cases.
Analogously, we only need to prove \eqref{gentri} and \eqref{gentritwo} under the setting that both $\|X\|_{\psi_\alpha}$ and $\|Y\|_{\psi_\alpha}$ belongs to the two-point set $\{0,+\infty\}$.
\red{We do have ``uniqueness'' $\|X\|_{\psi_\alpha} = 0$ then $X = 0$ a.s. Short proof?}%
\red{By definition, $\|X\|_{\psi_\alpha} = 0$ implies $\EE \exp(|X|^\alpha / K^\alpha) \le 2$ for all $K > 0$.
Applying Jensen's inequality, we obtain $\exp(\EE |X|^\alpha / K^\alpha) \le \EE \exp(|X|^\alpha / K^\alpha) \le 2$, i.e. $\EE |X|^\alpha \le K^\alpha \log 2$ for all $K > 0$, which implies $\EE |X|^\alpha = 0$ and $X = 0$ a.s.
}%
Recall that when $\alpha \in (0,1)$, $\psi_\alpha(x)$ does not satisfy convexity when $x$ is around 0.
Let the modified Orcliz-$\alpha$ function be
$$
\tilde\psi_\alpha(x)
:=
\left\{\begin{array}{ll}
\exp(x^\alpha) - 1								&	x \ge x_\alpha
\\
\frac{x}{x_\alpha}\left( \exp(x_\alpha^\alpha) - 1 \right)	&	x \in [0, x_\alpha)
\end{array}\right.
$$
for some appropriate $x_\alpha > 0$, so as to make the function convex.
Here $x_\alpha$ is chosen such that the tangent line of function $\psi_\alpha$ at $x_\alpha$ passes through origin, i.e.
$$
\alpha x_\alpha^{\alpha - 1} \exp(x_\alpha^\alpha)
= \frac{\exp(x_\alpha^\alpha) - 1}{x_\alpha}
.
$$
Simplifying it gives us a transcendental equation $(1 - \alpha x_\alpha^\alpha) \exp(x_\alpha^\alpha) = 1$ which (admits no an analytic solution but) can be solved numerically.
When $\alpha = 1 / 2$, we have $x_{1 / 2} = 2.5396$.
Some numerical calculation yields
\beq\label{eq:psi_tildepsi}
0 \le \psi_{1/2} (x) - \tilde{\psi}_{1/2} (x) \le 0.2666
.
\eeq
From \eqref{eq:psi_tildepsi} we immediately have
$$
\EE\tilde\psi_{1/2}(|X|) \le 1
	\quad\Longrightarrow\quad
\EE\psi_{1/2}(|X|) \le 1.2666
	\quad\text{i.e.~}
\EE\exp(|X|^{1/2}) \le 2.2666
$$

\begin{enumerate}[label=(\arabic*)]
\item
Let $K_1, K_2$ denote the $\psi_{1/2}$ norms of $X$ and $Y$, then $\EE \psi_{1/2} (|X / K_1|) \le 1$ and $\EE \psi_{1/2} (|Y / K_2|) \le 1$.
Based on \eqref{eq:psi_tildepsi} we have $\EE \tilde\psi_{1/2} (|X / K_1|)\le 1$ and $\EE \tilde\psi_{1/2} (|Y / K_2|) \le 1$.
Applying triangle inequality \eqref{eq:triangle_inequality} to Orlicz $\tilde\psi_{1/2}$-norm, we have
$$
\EE \tilde\psi_{1/2}\left(\left| \frac{X + Y}{K_1 + K_2}\right|\right)
\le 1
\quad\Longrightarrow\quad
\EE \psi_{1/2} \left(\left| \frac{X + Y}{K_1 + K_2}\right|\right)
\le 1.2666
.
$$
where we applied \eqref{eq:psi_tildepsi}.
Now, applying Jensen's inequality to concave function $f(z) = z^{\log_{2.2666}2}$ gives that, for constant $C_{\ref{lemm:psi1/2}, L} \equiv (\log_2 2.2666)^2 = 1.3937$,
$$\begin{aligned}
&
\EE \psi_{1/2} (|(X + Y) / (C_{\ref{lemm:psi1/2}, L} (K_1 + K_2))|)
=\EE \exp(|(X + Y) / (K_1 + K_2)|^{1 / 2})^{\log_{2.2666} 2} - 1
\\&\le
\left(\EE \exp(|(X + Y) / (K_1 + K_2)|^{1 / 2})\right)^{\log_{2.2666} 2} - 1
\le 1
,
\end{aligned}$$
which implies the first conclusion of \eqref{gentri}:
$$
\|X + Y\|_{\psi_{1/2}}
\le C_{\ref{lemm:psi1/2}, L} (\|X\|_{\psi_{1/2}} + \|Y\|_{\psi_{1/2}})
.
$$

\item
Let $K$ denote the $\psi_{1/2}$ norm of $X$, then $\EE \psi_{1/2}(|X / K|) \le 1$.
Based on \eqref{eq:psi_tildepsi} we have $\EE \tilde\psi_{1/2}(|X / K|) \le 1$.
Because $\tilde\psi_{1/2}$ is a convex function, we can apply Jensen's inequality as
$$
\tilde\psi_{1/2} (|\EE X / K|)
\le
\tilde\psi_{1/2} (\EE |X / K|)
\le
\EE \tilde\psi_{1/2} (|X / K|)
\le
1
$$
Combining with \eqref{eq:psi_tildepsi}, it holds that $\psi_{1/2}(|\EE X / K|) \le 1.2666$ which is equivalent to \\
$\psi_{1/2}(|\EE X / (C_{\ref{lemm:psi1/2}, L} K)|) \le 1$, where $C_{\ref{lemm:psi1/2}, L} = (\log_2 2.2666)^2 = 1.3937$.
Noticing that \\
$\EE \psi_{1/2}(\EE X / (C K)) = \psi_{1/2}(\EE X / (C K))$, it holds that $\|\EE X\|_{\psi_{1/2}}\le C_{\ref{lemm:psi1/2}, L} \|X\|_{\psi_{1/2}}$ which concludes \eqref{gentri}.

\item
To conclude \eqref{gentritwo}, we have
$$
\|X - \EE X\|_{\psi_{1/2}}
\le
C_{\ref{lemm:psi1/2}, L} (\|X\|_{\psi_{1/2}} + \|\EE X\|_{\psi_{1/2}})
\le
C_{\ref{lemm:psi1/2}, L} (1 + C_{\ref{lemm:psi1/2}, L}) \|X\|_{\psi_{1/2}}
=
C_{\ref{lemm:psi1/2}, L}' \|X\|_{\psi_{1/2}}
$$
where positive constant $C_{\ref{lemm:psi1/2}, L}' \equiv C_{\ref{lemm:psi1/2}, L} (1 + C_{\ref{lemm:psi1/2}, L}) = 3.3359$.
\end{enumerate}
\end{proof}

\pb\section{A Concentration Inequality for Martingales with Weak Exponential-type Tails}\label{sec:concentration}

We prove a novel concentration inequality for 1-dimensional supermartingale difference sequence with finite $\psi_\alpha$-norms that plays an important role in our analysis:

\begin{theorem}\label{theo:concentration}
Let $\alpha \in (0, \infty)$ be given.
Assume that $(\u_i: i\ge 1)$ is a sequence of supermartingale differences with respect to $\cF_i$, i.e.~$\EE[\u_i \mid \cF_{i-1}] \le 0$, and it satisfies $\| \u_i \|_{\psi_\alpha} < \infty$ for each $i=1,\dots,N$.
Then for an arbitrary $N\ge 1$ and $z > 0$,
\beq\label{Sbound}
\PP\left( 
\max_{1\le n\le N} \sum_{i=1}^n \u_i   \ge z
\right)
\le
\left[
3 + \left( \frac{3}{\alpha} \right)^{ \frac{2}{\alpha}}
\frac{64 \sum_{i=1}^N \|\u_i\|_{\psi_\alpha}^2}{z^2}
\right]
\exp\left\{ - 
\left(
\frac{z^2}{32 \sum_{i=1}^N \|\u_i\|_{\psi_\alpha}^2}
\right)^{\frac{\alpha}{\alpha+2}} \right\}
.
\eeq
\end{theorem}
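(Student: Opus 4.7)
The plan is to prove the inequality via a truncation argument, applying the Azuma--Hoeffding maximal inequality to a truncated and centered version of the supermartingale while controlling the truncation residuals with the $\psi_\alpha$-tail estimate. Write $K_i := \|\u_i\|_{\psi_\alpha}$, fix a dimensionless scale $c > 0$ to be optimized at the end, and introduce the coordinate-dependent truncation levels $M_i := c K_i$. From $\EE \exp(|\u_i|^\alpha/K_i^\alpha) \le 2$ and Markov's inequality one obtains the basic tail estimate $\PP(|\u_i|>M_i) \le 2 e^{-c^\alpha}$, and integration by parts against this tail bound also yields moment estimates of the form $\EE[\u_i^2 \mathbf{1}_{|\u_i|>M_i}] \le C_\alpha K_i^2 c^{2-\alpha} e^{-c^\alpha}$ with $C_\alpha$ of order $(3/\alpha)^{2/\alpha}$.

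Next I would define the centered truncated increments $\tilde{\u}_i := \u_i \mathbf{1}_{|\u_i|\le M_i} - \EE[\u_i \mathbf{1}_{|\u_i|\le M_i}\mid \cF_{i-1}]$, which form a martingale difference sequence with $|\tilde{\u}_i|\le 2M_i$ almost surely. Using the supermartingale hypothesis $\EE[\u_i\mid\cF_{i-1}]\le 0$ together with the identity $\u_i = \u_i\mathbf{1}_{|\u_i|\le M_i} + \u_i\mathbf{1}_{|\u_i|>M_i}$, I obtain the one-sided decomposition
\begin{equation*}
\sum_{i=1}^n \u_i \;\le\; \sum_{i=1}^n \tilde{\u}_i \;+\; \sum_{i=1}^n \EE[|\u_i|\mathbf{1}_{|\u_i|>M_i}\mid\cF_{i-1}] \;+\; \sum_{i=1}^n |\u_i|\mathbf{1}_{|\u_i|>M_i}.
\end{equation*}
A union bound over the three pieces (each exceeding $z/3$) splits the target probability. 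Azuma--Hoeffding, via Doob's submartingale maximal inequality applied to the exponential transform of the bounded martingale $\sum\tilde{\u}_i$, gives $\PP(\max_n\sum \tilde{\u}_i \ge z/3) \le \exp(-z^2/(72\, c^2 \sum_i K_i^2))$.

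For the other two (non-negative) residual pieces I would deliberately avoid the naive union bound $2Ne^{-c^\alpha}$ on $\{\max_i|\u_i|>M_i\}$, which would introduce an unwanted $N$-factor; instead, Doob's $L^2$ maximal inequality applied to the non-negative submartingales $\sum_{i\le n}|\u_i|\mathbf{1}_{|\u_i|>M_i}$ and $\sum_{i\le n}\EE[|\u_i|\mathbf{1}_{|\u_i|>M_i}\mid\cF_{i-1}]$, combined with the second-moment estimate above, gives a Chebyshev-type bound of order $(\sum_i K_i^2 /z^2)\cdot c^{2-\alpha}e^{-c^\alpha}$ for each. Finally I would balance the two exponents by choosing $c^{\alpha+2}:=z^2/(32\sum_i K_i^2)$; this makes $z^2/(c^2\sum_i K_i^2)$ and $c^\alpha$ both equal to $(z^2/(32\sum_i K_i^2))^{\alpha/(\alpha+2)}$ and, upon substituting this $c$ back into the polynomial prefactors coming from the Markov steps, yields the claimed form $[3 + (3/\alpha)^{2/\alpha}\cdot 64\sum_i K_i^2/z^2]\exp\bigl(-(z^2/(32\sum_i K_i^2))^{\alpha/(\alpha+2)}\bigr)$, with the additive $3$ absorbing the three-way union bound.

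The main technical obstacle will be obtaining the polynomial prefactor \emph{independent of $N$}: the straightforward truncation-plus-union-bound route contributes a factor of $N$ via $\PP(\exists i:|\u_i|>M_i)\le 2Ne^{-c^\alpha}$, and replacing that $N$ by the $\psi_\alpha$-variance sum $\sum_i K_i^2$ requires routing the excess and drift terms through $L^2$-type (second-moment) maximal inequalities rather than through tail-event union bounds. Carefully carrying the $\alpha$-dependent Gamma-function constants arising from $\EE|\u_i|^p \lesssim (p/\alpha)^{p/\alpha} K_i^p$ through these Markov/Doob steps---so that they combine cleanly into the exact $(3/\alpha)^{2/\alpha}$ constant advertised in the theorem, and so that the Azuma constant in the denominator of the exponent tightens from $72$ to $32$ after the optimization in $c$---will be the fiddliest part of the write-up.
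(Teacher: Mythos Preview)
Your overall plan---truncate at level $cK_i$, apply maximal Azuma--Hoeffding to the bounded centered part, and control the tail part in $L^2$ via Doob---is exactly the paper's strategy (attributed to Lesigne--Merlev\`ede and Fan--Grama--Liu, with Laib's maximal Azuma inequality for the bounded piece). The balancing step $c^{\alpha+2}\asymp z^2/\sum K_i^2$ is also the same.

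However, your three-way decomposition has a genuine gap at precisely the point you flagged as delicate. The two residual processes
\[
\sum_{i\le n}|\u_i|\mathbf 1_{|\u_i|>M_i}
\qquad\text{and}\qquad
\sum_{i\le n}\EE\bigl[|\u_i|\mathbf 1_{|\u_i|>M_i}\mid\cF_{i-1}\bigr]
\]
are non-negative and non-decreasing, so Doob's maximal inequality is vacuous (the maximum is the terminal value), and a Chebyshev bound requires controlling $\EE\bigl[(\sum_i |\u_i|\mathbf 1_{|\u_i|>M_i})^2\bigr]$. Since the summands are uncentered and non-negative, the cross terms do \emph{not} vanish and are not dominated by $\sum_i K_i^2$; you would be forced back to $\sum_i K_i$ (via $L^1$ Markov) or to an $N$-dependent bound---exactly the trap you said you wanted to avoid. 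The paper sidesteps this by using only \emph{two} pieces, both genuine martingale differences:
\[
\u_i' := \u_i\mathbf 1_{|\u_i|\le M_i}-\EE[\u_i\mathbf 1_{|\u_i|\le M_i}\mid\cF_{i-1}],
\qquad
\u_i'' := \u_i\mathbf 1_{|\u_i|> M_i}-\EE[\u_i\mathbf 1_{|\u_i|> M_i}\mid\cF_{i-1}],
\]
so that $\u_i=\u_i'+\u_i''+\EE[\u_i\mid\cF_{i-1}]$ and the supermartingale drift is $\le 0$ and simply discarded. Now $T_n''=\sum_{i\le n}\u_i''$ is a martingale, orthogonality gives $\EE[(T_N'')^2]=\sum_i\EE[(\u_i'')^2]$, and Doob's inequality applied to the submartingale $(T_n'')^2$ yields $\PP(\max_n T_n''\ge z)\le z^{-2}\sum_i\EE[(\u_i'')^2]$ with no $N$ factor. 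This two-way split at level $z+z=2z$ (followed by the substitution $z\mapsto z/2$) is also what produces the constant $32$ in the exponent; a three-way split at $z/3$ cannot do better than $72$, and no amount of optimization in $c$ will convert $72$ into $32$.
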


\begin{proof}[Proof of Theorem \ref{theo:concentration}]
To prove Theorem \ref{theo:concentration}, we will use a maxima version of the classical Azuma-Hoeffding's inequality proposed by \citet{laib1999exponential} for bounded martingale differences, and then apply an argument of \citet{lesigne2001large} and \citet{fan2012large} to truncate the tail and analyze the bounded and unbounded pieces separately.

\begin{enumerate}[label=(\arabic*)]
\item\label{rescaling}
First of all, for the sake of simplicity and with no loss of generality, throughout the following proof of Theorem \ref{theo:concentration} we shall pose the following extra condition
\beq\label{WLOGassu}
\sum_{i=1}^N \|\u_i\|_{\psi_\alpha}^2 = 1
.
\eeq
In other words, under the additional \eqref{WLOGassu} condition proving \eqref{Sbound} reduces to showing
\beq\label{Sbound_prime}
\PP\left( 
\max_{1\le n\le N} \sum_{i=1}^n \u_i   \ge z
\right)
\le
\left[
3 + \left( \frac{3}{\alpha} \right)^{ \frac{2}{\alpha}}
\frac{64 }{z^2}
\right]
\exp\left\{ - 
\left(
\frac{z^2}{32}
\right)^{\frac{\alpha}{\alpha+2}} \right\}
.
\eeq
This can be made more clear from the following rescaling argument: one can put in the left of \eqref{Sbound_prime} $\u_i / \left( \sum_{i=1}^N \|\u_i\|_{\psi_\alpha}^2 \right)^{1/2}$ in the place of $\u_i$, and $z / \left( \sum_{i=1}^N \|\u_i\|_{\psi_\alpha}^2 \right)^{1/2}$ in the place of $z$, the left hand of \eqref{Sbound} is just
$$
\PP\left( 
\max_{1\le n\le N} \sum_{i=1}^n \frac{ \u_i}{
  \left( \sum_{i=1}^N \|\u_i\|_{\psi_\alpha}^2 \right)^{1/2}
  }  \ge  \frac{z}{ 
   \left( \sum_{i=1}^N \|\u_i\|_{\psi_\alpha}^2 \right)^{1/2}
   }
\right)
$$
which, by \eqref{Sbound_prime}, is upper-bounded by
$$
\le
\left[
3 + \left( \frac{3}{\alpha} \right)^{ \frac{2}{\alpha}}
\frac{64 \sum_{i=1}^N \|\u_i\|_{\psi_\alpha}^2}{z^2}
\right]
\exp\left\{ - 
\left(
\frac{z^2}{32 \sum_{i=1}^N \|\u_i\|_{\psi_\alpha}^2  }
\right)^{\frac{\alpha}{\alpha+2}} \right\}
,
$$
proving \eqref{Sbound}.

\item
We apply a truncation argument used in \citet{lesigne2001large} and later in \citet{fan2012large}.
Let $\cM > 0$ be arbitrary, and we define
\beq\label{Xp}
\u_i'
=
\u_i 1_{\{ |\u_i| \le \cM \|\u_i\|_{\psi_\alpha} \}}
-
\EE\left(\u_i 1_{\{ |\u_i| \le \cM \|\u_i\|_{\psi_\alpha} \}} \mid \cF_{i-1}\right)
,
\eeq
\beq\label{Xpp}
\u_i''
=
\u_i 1_{\{ |\u_i| > \cM \|\u_i\|_{\psi_\alpha} \}}
-
\EE\left(\u_i 1_{\{ |\u_i| > \cM \|\u_i\|_{\psi_\alpha} \}} \mid \cF_{i-1}\right)
,
\eeq
$$
T_n'
=
\sum_{i=1}^n \u_i'
,\qquad
T_n''
=
\sum_{i=1}^n \u_i''
,\qquad
T_n'''
 =
\sum_{i=1}^n  \EE\left(  \u_i \mid \cF_{i-1} \right)
.
$$
Since $\u_i$ is $\cF_i$-measurable, $u _i'$ and $u _i''$ are two martingale difference sequences with respect to $\cF_i$, and let $T_n$ be defined as
\beq\label{Tndef}
T_n
=
\sum_{i=1}^n \u_i
\quad\text{and hence}\quad
T_n = T_n' + T_n'' + T_n'''
.
\eeq
Since $\u_i$ are supermartingale differences we have that $T_n'''$ is $\cF_{n - 1}$-measurable with $T_n''' \le T_0''' = 0,\ a.s.$, and hence for any $z > 0$,
\beq\label{Sk_sep}
\begin{aligned}
\PP\left(
\max_{1\le n\le N} T_n \ge 2z
\right)
&\le
\PP\left(
\max_{1\le n\le N} T_n' + T_n''' \ge z
\right)
+
\PP\left(
\max_{1\le n\le N} T_n'' \ge z
\right)
\\ &\le
\PP\left(
\max_{1\le n\le N} T_n'  \ge z
\right)
+
\PP\left(
\max_{1\le n\le N} T_n'' \ge z
\right)
\end{aligned}\eeq
In the following, we analyze the tail bounds for $T_n'$ and $T_n''$ separately \citep{lesigne2001large,fan2012large}.

\item
To obtain the first bound, we recap Laib's inequality as follows:
\begin{lemma}\label{lemm:laib}\citep{laib1999exponential}
Let $(\w_i: 1\le i\le N)$ be a real-valued martingale difference sequence with respect to some filtration $\cF_i$, i.e.~$\EE[\w_i \mid \cF_{i-1}]  = 0, a.s.$, and the essential norm $\|\w_i\|_\infty$ is finite.
Then for an arbitrary $N\ge 1$ and $z > 0$,
\beq\label{laib}
\PP\left(
\max_{n\le N}   \sum_{i=1}^n \w_i   \ge  z
\right)
\le
\exp\left\{ - \frac{z^2}{2 \sum_{i=1}^N \|\w_i\|_\infty^2 } \right\}
.
\eeq
\end{lemma}

\eqref{laib} generalizes the folklore Azuma-Hoeffding's inequality, where the latter can be concluded from
$$
\max_{n\le N}   \sum_{i=1}^n \w_i   
\ge 
\sum_{i=1}^N \w_i 
.
$$
The proof of Lemma \ref{lemm:laib} is given in \citet{laib1999exponential}.

Recall our extra condition \eqref{WLOGassu}, then from the definition of $\u_i'$ in \eqref{Xp} that $|u _i' | \le 2 \cM \|\u_i\|_{\psi_\alpha}$, the desired bound follows immediately from Laib's inequality in Lemma \ref{lemm:laib} by setting $\w_i = \u_i'$:
\beq\label{Sp_ineq}
\PP\left( \max_{1\le n\le N} T_n' \ge z \right)
=
\PP\left( \max_{1\le n\le N}  \sum_{i=1}^n \u_i' \ge z \right)
\le
\exp\left\{ - \frac{z^2}{8 \cM^2} \right\}
\eeq

To obtain the tail bound of $T_n''$ we only need to show
\beq\label{Xpp_sq}
\EE (\u_i'')^2
\le
(6 \cM^2 + 8 \cB^2) \|\u_i\|_{\psi_\alpha}^2 \exp\left\{
-\cM^{\alpha}
\right\}
,
\eeq
where
\beq\label{beta}
\cB \equiv \left( \frac{3}{\alpha} \right)^{ \frac{1}{\alpha}}
,
\eeq
from which, Doob's martingale inequality implies immediately that
\beq\label{Spp_ineq}
\PP\left(
\max_{1\le n\le N} T_n''  \ge z
\right)
  \le
\frac{1}{z^2} \sum_{i=1}^N \EE (\u_i'')^2
\le
\frac{6 \cM^2 + 8 \cB^2}{z^2} \exp\left\{
-\cM^{\alpha}
\right\}
.
\eeq

To prove \eqref{Xpp_sq}, first recall from the definition of $\u_i''$ in \eqref{Xpp} that
$$
\u_i''
=
\u_i 1_{\{ |\u_i| > \cM \|\u_i\|_{\psi_\alpha} \}}
-
\EE\left(\u_i 1_{\{ |\u_i| > \cM \|\u_i\|_{\psi_\alpha} \}} \mid \cF_{i-1}\right)
.
$$
Recall from the property of conditional expectation that for any random variable $W$ and a $\sigma$-algebra $\cG \subseteq \cF$
$$
\EE \left[ W - \EE( W \mid \cG) \right]^2
=
\EE W^2 
-
\EE\left[ \EE\left( W\mid \cG \right) \right]^2
\le
\EE W^2 
= 
\int_0^\infty 2y \PP(|W| > y) dy
$$
where the last equality is due to a simple application of Fubini's Theorem for nonnegative random variable $|W|$.
Plugging in $W = \u_i 1_{\{ |\u_i| > \cM \|\u_i\|_{\psi_\alpha} \}}$ and $\cG = \cF_{i-1}$ we have
\beq\label{Xppmoment}
\begin{aligned}
&
\EE (\u_i'')^2
=
\EE \left[
\u_i 1_{\{ |\u_i| > \cM \|\u_i\|_{\psi_\alpha} \}}
-
\EE\left(\u_i 1_{\{ |\u_i| > \cM \|\u_i\|_{\psi_\alpha} \}} \mid \cF_{i-1}\right)
\right]^2
\\&\le
\int_0^\infty 2y \PP( |\u_i| 1_{|\u_i| > \cM \|\u_i\|_{\psi_\alpha}} > y ) dy
\\&=
\int_0^{\cM \|\u_i\|_{\psi_\alpha}} 2y dy\cdot \PP( |\u_i|  > \cM \|\u_i\|_{\psi_\alpha} )
+
\int_{\cM \|\u_i\|_{\psi_\alpha}}^\infty 2y \PP( |\u_i| > y ) dy
\\&=
\cM^2 \|\u_i\|_{\psi_\alpha}^2 \PP( |\u_i| > \cM \|\u_i\|_{\psi_\alpha} )
+
\int_\cM^\infty 2 t \|\u_i\|_{\psi_\alpha} \PP( |\u_i| > t \|\u_i\|_{\psi_\alpha} )  ~ \|\u_i\|_{\psi_\alpha} dt
\\&\le
2 \cM^2 \|\u_i\|_{\psi_\alpha}^2 \exp\left\{ -\cM^{\alpha} \right\} + 4 \|\u_i\|_{\psi_\alpha}^2 \int_\cM^\infty t \exp\{ - t^{\alpha} \} ~ dt
,
\end{aligned}\eeq
where the last inequality is due to Markov's inequality that for all $z > 0$
\beq\label{Xp_ineq}
\PP(|\u_i | / \|\u_i\|_{\psi_\alpha} \ge z)
\le 
\exp\{ - z^{\alpha} \} \EE \exp\{ |\u_i | ^{\alpha} / \|\u_i\|_{\psi_\alpha}^\alpha \}
\le
2 \exp\{  - z^{\alpha} \}
.
\eeq
It can be shown from basic calculus that the function $g(t) = t^3 \exp\{ - t^{\alpha} \}$ is decreasing in $[\cB, +\infty)$ and is increasing in $[0, \cB]$, where $\cB$ was earlier defined in \eqref{beta} \citep{fan2012large}.
If $\cM \in [\cB, \infty)$ we have
\beq\label{ugebeta}
\begin{aligned}
&
\int_\cM^\infty t\exp\left\{ -t^{\alpha} \right\} ~dt
=
\int_\cM^\infty t^{-2} t^3\exp\left\{ -t^{\alpha} \right\} ~dt
\le
\int_\cM^\infty t^{-2}  ~dt \cdot \cM^3 \exp\left\{ -\cM^{\alpha} \right\}
\\&=
\cM^{-1} \cdot \cM^3 \exp\left\{ -\cM^{\alpha} \right\}
=
\cM^2 \exp\left\{ -\cM^{\alpha} \right\}
.
\end{aligned}\eeq
If $\cM \in (0, \cB)$, we have by setting $\cM$ as $\cB$ in above
\beq\label{ulebeta}
\begin{aligned}
&
\int_\cM^\infty t \exp\left\{ - t^{\alpha} \right\} ~dt
=
\int_\cM^\cB t \exp\left\{ - t^{\alpha} \right\} ~dt
+
\int_\cB^\infty t \exp\left\{ - t^{\alpha} \right\} ~dt
\\ &\le
\int_\cM^\cB ~dt \cdot \cB \exp\left\{ - \cM^{\alpha} \right\} 
+
\cB^2\exp\left\{ -\cB^{\alpha} \right\}
\\ &\le
(\cB - \cM) \cB \exp\left\{ - \cM^{\alpha} \right\} 
+
\cB^2\exp\left\{ -\cM^{\alpha} \right\}
\le
2 \cB^2 \exp\left\{ - \cM^{\alpha} \right\}
.
\end{aligned}\eeq
Combining \eqref{Xppmoment} with the two above displays \eqref{ugebeta} and \eqref{ulebeta} we obtain
$$\begin{aligned}
&
\EE (\u_i'')^2
\le
2 \cM^2 \|\u_i\|_{\psi_\alpha}^2 \exp\left\{ -\cM^{\alpha} \right\}
+
4 \|\u_i\|_{\psi_\alpha}^2 \int_\cM^\infty t \exp\{ - t^{\alpha} \} ~ dt
\\&\le
(6 \cM^2 + 8 \cB^2) \|\u_i\|_{\psi_\alpha}^2 \exp\left\{-\cM^{\alpha}\right\}
,
\end{aligned}$$
completing the proof of \eqref{Xpp_sq} and hence \eqref{Spp_ineq}.

\item
Putting the pieces together:
combining \eqref{Sk_sep}, \eqref{Sp_ineq} and \eqref{Spp_ineq} we obtain for an arbitrary $u\in (0,\infty)$ that
\beq\label{Sbound1}
\begin{aligned}
&
\PP\left(\max_{1\le n\le N} T_n \ge 2z\right)
\le
\PP\left(\max_{1\le n\le N} T_n'  \ge z\right)	
+
\PP\left(\max_{1\le n\le N} T_n'' \ge z\right)
\\ &\le
\exp\left\{ - \frac{z^2}{8\cM^2} \right\}
+
\frac{6 \cM^2 + 8 \cB^2}{z^2} \exp\left\{-\cM^{\alpha}\right\}
.
\end{aligned}\eeq

We choose $\cM$ as, by making the exponents equal in above,
$$
\cM = \left(\frac{z^2}{8} \right)^{\frac{1}{\alpha+2}} 
\quad\text{such that}\quad
\frac{z^2}{8\cM^2} = \cM^{\alpha}
=\left(\frac{z^2}{8} \right)^{
\frac{\alpha}{\alpha+2}
} 
.
$$
Plugging this $\cM$ back into \eqref{Sbound1} we obtain
\beq\label{Sbound2}
\begin{aligned}
&
\PP\left(\max_{1\le n\le N} T_n \ge 2z\right)
\le
\exp\left\{ - \left(\frac{z^2}{8} \right)^{\frac{\alpha}{\alpha+2} } \right\}
+
\frac{6 \cM^2 + 8 \cB^2}{z^2}
\exp\left\{ - \left(\frac{z^2}{8} \right)^{\frac{\alpha}{\alpha+2} } \right\}
\\ &\le
\left[
1
+
\left( \frac{1}{8}  \right)^{ \frac{2}{\alpha+2} } 
\frac{6}{z^{ \frac{2\alpha}{\alpha+2} } }
+
\left( \frac{3}{\alpha} \right)^{ \frac{2}{\alpha} }  \frac{8}{z^2}
\right]
\exp\left\{ - \left(\frac{z^2}{8} \right)^{\frac{\alpha}{\alpha+2} } \right\}
\end{aligned}\eeq
where we plugged in the expression of $\cB$ in \eqref{beta}.
We can further simplify the square-bracket prefactor in the last line of \eqref{Sbound2} which can be tightly bounded by
$$\begin{aligned}
&
1
+
\left( \frac{1}{8}  \right)^{ \frac{2}{\alpha+2} } 
\frac{6}{z^{ \frac{2\alpha}{\alpha+2} } }
+
\left( \frac{3}{\alpha} \right)^{ \frac{2}{\alpha} }  \frac{8}{z^2}
\le
1
+ \frac{6 \cdot \frac{2}{\alpha+2} }{(8)^{\frac{2}{\alpha+2}}} + \frac{6 \cdot \frac{\alpha}{\alpha+2} }{ (8)^{\frac{2}{\alpha+2}}  z^2 } 
+
\left( \frac{3}{\alpha} \right)^{ \frac{2}{\alpha} }  \frac{8}{z^2}
\\&\le
3
+
\left(
\frac{0.75\cdot \frac{\alpha}{\alpha+2} }{(8)^{\frac{2}{\alpha+2}}  }
+
\left( \frac{3}{\alpha} \right)^{ \frac{2}{\alpha} } 
\right) \frac{8}{z^2}
\le
3
+
\left(
0.75
+
\left( \frac{3}{\alpha} \right)^{ \frac{2}{\alpha} } 
\right) \frac{8}{z^2}
\le
3
+
\left( \frac{3}{\alpha} \right)^{ \frac{2}{\alpha} }  \frac{16}{z^2}
.
\end{aligned}$$
where we used an implication of Jensen's inequality: for $\gamma = \alpha/(\alpha+2) \in (0,1)$ one has $x^\gamma \le 1-\gamma + \gamma x$ for all $x \ge 0$ (where the equality holds for $x = 1$), as well as a few elementary algebraic inequalities, including $\gamma 8^{-\gamma} < 0.177$, $(1 - \gamma) 8^{-\gamma} < 1$, $\left( 3 / \alpha \right)^{ 2 /\alpha}  > 0.78$ for all $\alpha > 0$ and $0 < \gamma = 2 / (\alpha + 2) < 1$.
Thus, \eqref{Sbound_prime} is concluded by noticing the relation \eqref{Tndef} and setting $z/2$ in the place of $z$, which hence proves Theorem \ref{theo:concentration} via the argument in \eqref{rescaling} in our proof.
\end{enumerate}
\end{proof}

%FIGURE BEGIN
\begin{figure}[!tb]
\centering
\includegraphics[width=3.1in]{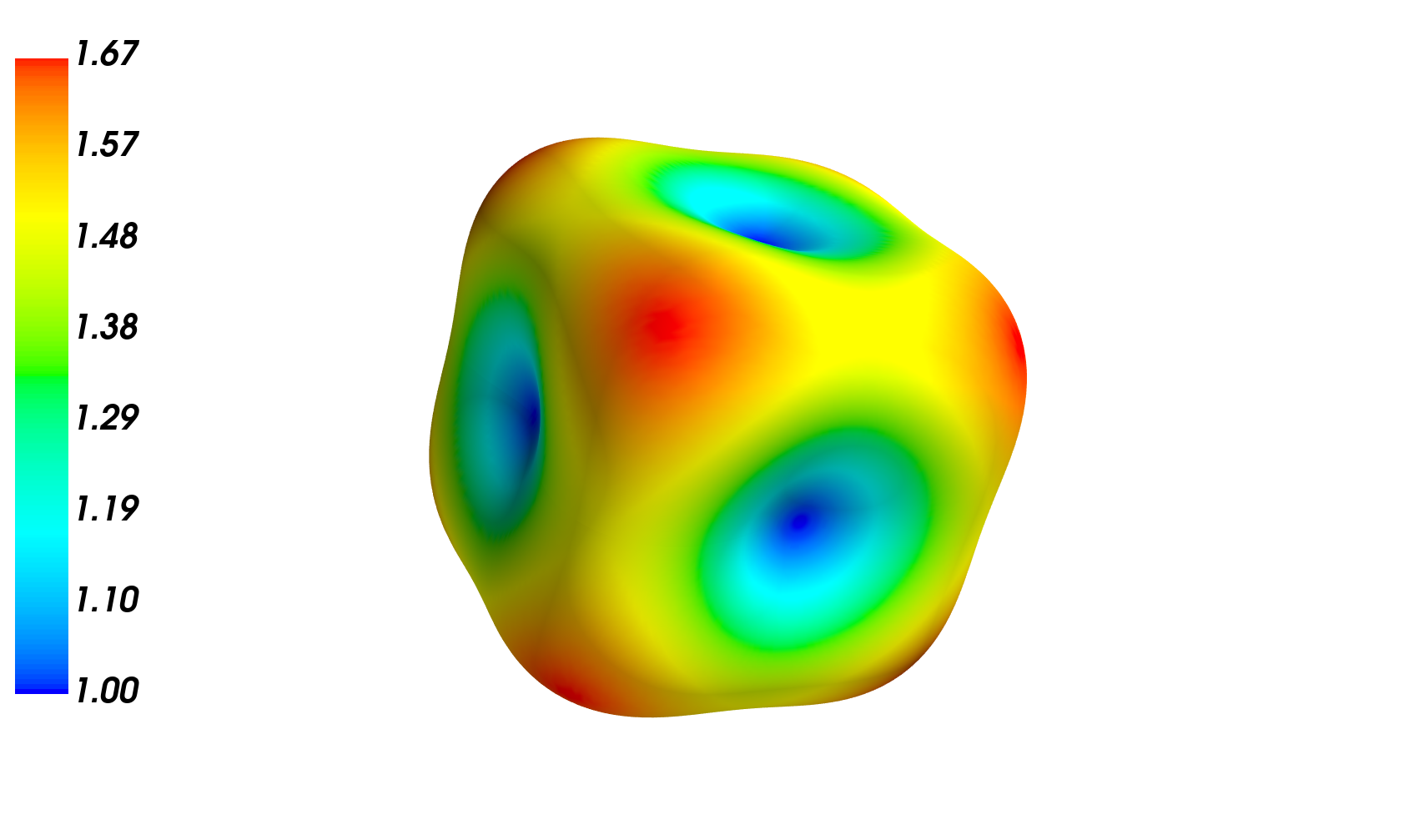}
\includegraphics[width=2.8in]{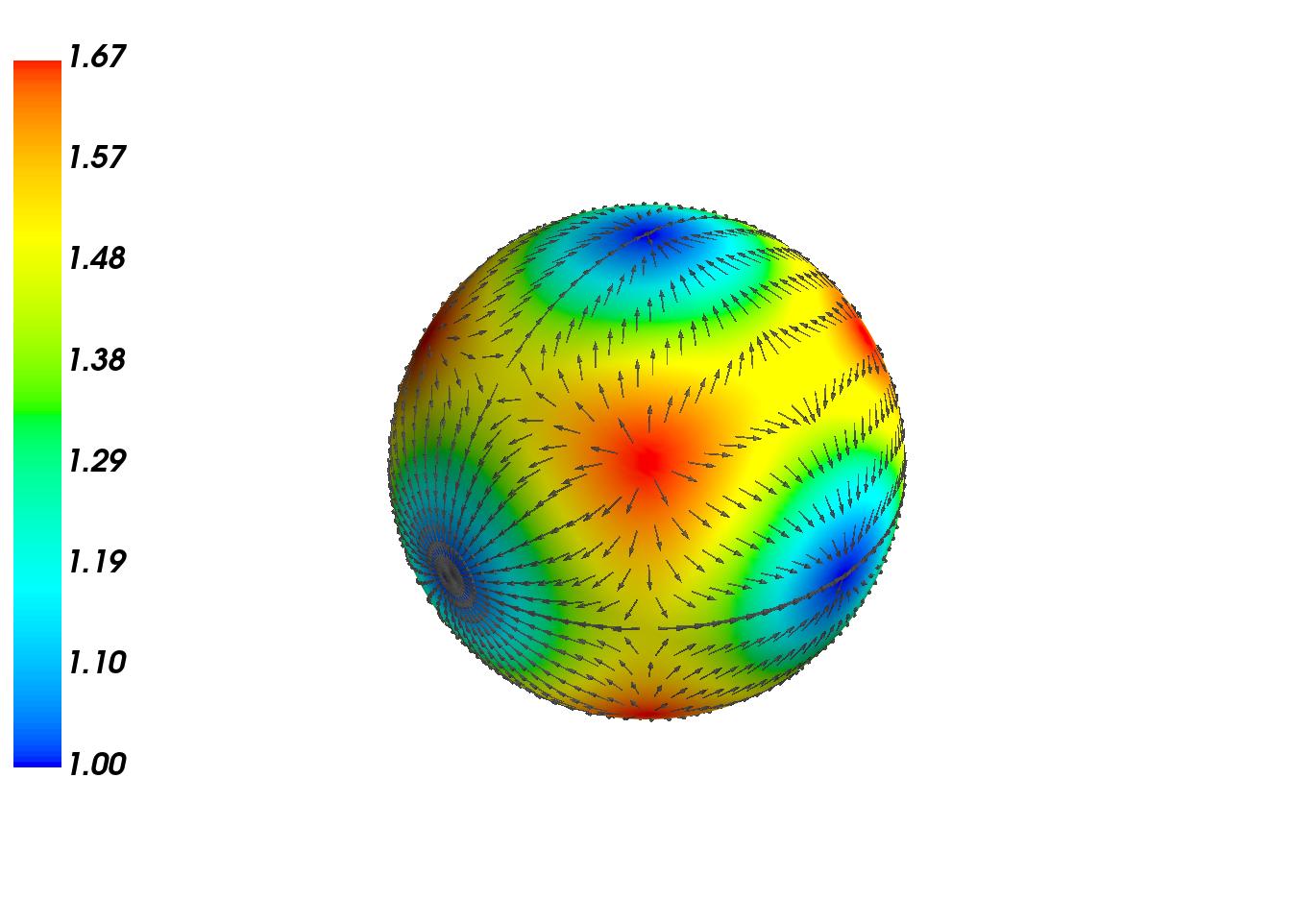}
\caption{Heatmap visualization of the tensorial ICA objective function.
Left: a graph plot where the radius denotes the corresponding function value (a global offset of 2 is added for illustration purposes). 
Right: a quiver plot with arrows denoting the negative gradient direction. 
%One observes from the visualization that the tensorial ICA objective has $2d$ local minimizers and unstable stationary points that exponentially grow in $d$.
}
\label{fig:1}
\end{figure}
%FIGURE END

\pb\section{Visualization and Numerical Experiments for Online Tensorial ICA}\label{sec:experiment}

\begin{figure}
\centering
\hspace{-.5in}
\begin{minipage}[t]{0.47\textwidth}
\includegraphics[width=\textwidth]{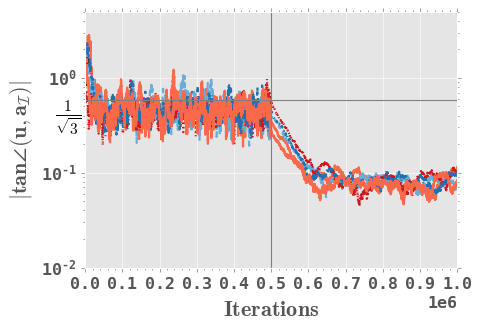}
\label{fig:y equals x}
\end{minipage}
\qquad
%\hfill
\begin{minipage}[t]{0.47\textwidth}
\includegraphics[width=\textwidth]{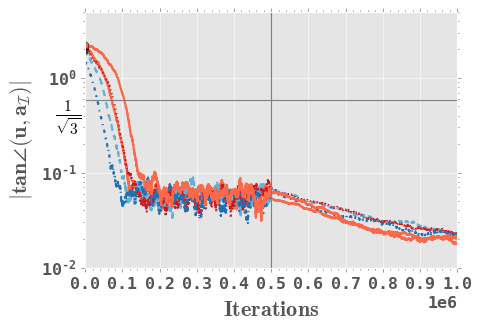}
\label{fig:five over x}
\end{minipage}
\caption{Convergence of our two-phase online tensorial ICA algorithm implemented under two settings.
Left: each independent component being a mixture-Gaussian distribution with $\mu_4 = 2.5$.
Right: each component being a Gaussian-Bernoulli distribution with $\mu_4 = 6$.
Both settings are plotted in the semilog-scale, where the horizontal axis is the iteration number, and the vertical axis is the distance to the closest independent component in metric $\min_{i \in [d]} \left|\tan\angle(\ub^{(T)},\ab_i)\right|$.
}
\label{Fig_1}
\end{figure}

In this section, we provide visualization and conduct a group of experiments in support of our theoretical results. 
In \S\ref{sec:exp_two_phase} we illustrate the two-phase convergence behavior of our algorithm on two most commonly seen ICA problem distributions: Mixture Gaussian and Gaussian-Bernoulli. 
In \S\ref{sec:exp_convergence} we empirically validate our sharp theoretical convergence rate (mainly in both $d$ and $T$) in our Corollary \ref{coro:two_phase}.

\subsection{Two-Phase Algorithm for Online Tensorial ICA}\label{sec:exp_two_phase}

In this subsection, we present an initial experimental result that demonstrates the two-phase convergence behavior of our Algorithm~\ref{algo:ICA}.
We arbitrarily set dimension as $d = 20$ and total sample size as $T = 1\times 10^6$, and the mixing matrix $\Ab$ randomly drawn from the Haar measure over the $d$-dimensional orthogonal group.
%For simplicity, we consider the typical case where $\bX \equiv \bZ$, indicating that the orthonormal matrix $\Ab$ we are looking for is in fact $\bI_d$.
%According to Assumption~\ref{assu:distribution}(i), (ii) in the paper, each independent entries of $\bZ$, denoted by $Z_i$, satisfies
%$
%\EE Z_i = 0, \EE Z_i^2 = 1, \EE Z_i^4 = \mu_4 \neq 3
%.$
%In addition, Assumption~\ref{assu:distribution}(iii) is a generalization of the common boundedness assumption $\|Z_i\|_\infty \leq O(B)$, that can be satisfied by choosing the distribution of $Z_i$ in mixture Gaussian or Gaussian-Bernoulli distributions, which are the typical models for ICA.
We conduct experiments on two separate instances of the one-dimensional distribution $Z_i$ of independent component in ICA: mixture Gaussian with $\mu_4 < 3$, and Gaussian-Bernoulli with $\mu_4 > 3$, to validate the convergence theory of our two-phase algorithm.

\begin{enumerate}[label=(\alph*)]
\item\textbf{Mixture Gaussian}
We adopt $Z = \delta Y_1 + (1-\delta) Y_2$ as the mixture Gaussian component of Gaussian variables $Y_1, Y_2$ with 
$$
Y_1 \sim N\left(-\frac{1}{\sqrt{2}}, \frac12\right)
,
\qquad
Y_2 \sim N\left(\frac{1}{\sqrt{2}}, \frac12\right)
,
\qquad
\delta \sim \text{Bernoulli}(1/2)
\quad
\text{independently}
.
$$
It is easy to verify that the distribution of $Z_i$ satisfies Assumption~\ref{assu:distribution} with $\mu_4$ = 2.5.
%Hence, the iteration formula could be written as 
%\begin{equation}\label{eq:upd_mix}
%\vb^+
%=
%\Pi \left\{\vb - \eta \left(\vb^\top \Zb \right)^3 \cdot \Zb \right\}
%\end{equation}
%where superscript $+$ represents the next iterated value and $\Pi$ represents the normalization operator.
%
\item
\textbf{Gaussian-Bernoulli}
where we adopt $Z = \delta Y$ with
$$
Y \sim N(0, 2)
, 
\qquad
\delta \sim \text{Bernoulli}(1/2)
\quad
\text{independently}
.
$$
It is straightforward to verify that $\mu_4 = 6$ in this case.
% that the distribution of $Z$ in this setting satisfies Assumption~\ref{assu:distribution}(i), (ii), (iii) with 
%. Hence, the iteration formula could be written as
%\begin{equation}\label{eq:upd_gb}
%\vb^+
%=
%\Pi \left\{\vb + \eta \left(\vb^\top \Zb \right)^3 \cdot \Zb \right\}
%\end{equation}
\end{enumerate}
The algorithm is initialized at $\ub_0$ uniformly drawn from the unit sphere $\cD_1$, and we run our two-phase algorithm scheduled as in Corollary \ref{coro:two_phase}. 
That is, in the first half of the training when the number of iterates $\le T/2$ we choose stepsize as $\frac{8d}{|\mu_4 - 3|T}$ and in the second half choose stepsize as $\frac{9}{|\mu_4 - 3|T}$, both omitting logarithmic factors.
We measure the convergence by the tangent of the angle between $\ub^{(T)}$ and its closest independent component $\ab_i$.
The resulting 5 independent runs are shown in Figure~\ref{Fig_1}.
(The horizontal line represents $y = \frac{1}{\sqrt{3}}$ which is the barrier of the warm region, and the vertical line is $x = T/2$ which is the separation between two phases.)
Our result in Figure~\ref{Fig_1} exemplifies consistency with the main theoretical result of our paper, especially the algorithm's two-phase demonstration.
Regardless of the initialization all 10 independent runs share similar trajectories in terms of our measure: in Phase I the algorithm decays fast until oscillating into the warm region (the absolute tangent value below $\frac{1}{\sqrt{3}}$), and in Phase II the algorithm continue to converge linearly until reaching the desired accuracy.

\pb\subsection{Validating the Finite-Sample Convergence Rate}\label{sec:exp_convergence}
In this subsection, we validate our main convergence rate result, Corollary~\ref{coro:two_phase}, via simulations for a range of values $d$ and $T$ satisfying the scaling condition
\beq\label{eq1}
d\ge 2\sqrt{2\pi e} \log\epsilon^{-1} + 1
\quad\text{and} \quad
\frac{d^4}{T}\le C\epsilon^2
,
\eeq
so with probability $\ge 1-O(\epsilon)$ the following $\tilde{\mathcal{O}}(\sqrt{d/T})$-convergence rate result holds for all sufficiently large $d$ and $T$:
\beq\label{eq2}
|\tan\angle (\ub^{(T)}, \ab_{\mathcal{I}})|\le C\sqrt{\frac{d}{T}}
,
\eeq
where $C$ includes a polylogarithmic factor in $d, T, \epsilon$.

We continue to generate random data $\bZ$ as we did in the Gaussian-Bernoulli distributions case in~\S\ref{sec:exp_two_phase}, with parameter value of either $d$ or $T$ allowed to vary in each run while freezing the other.
By ergodicity we may empirically estimate $\EE|\tan \angle (\ub^{(T)}, \ab_{\mathcal{I}})|$ by averaging the last few iterates of $|\tan\angle (\ub^{(T)}, \ab_{\mathcal{I}})|$
---
the final $0.6T$ iterates are taken.
Then we scatter-plot the empirical $\EE|\tan\angle (\ub^{(T)}, \ab_{\mathcal{I}})|$ under a range of parameters in the figure, gauged by the theoretical rate of~\eqref{eq2}.
We first run for $T/2$ epochs with the stepsize propotional to $8d/T$ and run for $T/2$ epochs with the stepsize propotional to $9/T$, as shown in \S\ref{sec:exp_two_phase} and in Corollary~\ref{coro:two_phase}, forgoing the logarithmic factor.
The range of values one takes is $d\in \{7, 12, 20, 33, 54\}$, $T\in \{2\times 10^2, 2\times 10^3, 1\times 10^4, 5\times 10^4, 2\times 10^5, 1\times 10^6\}$.
The horizontal axis is the iteration number $T$ and the vertical axis is $|\tan\angle (\ub^{(T)}, \ab_{\mathcal{I}})|$, both are plotted in log-log scale.
We discuss the simulation results to illustrate the dependency as follows:

\begin{figure}[!tb]
\centering
\includegraphics[width=0.48\textwidth]{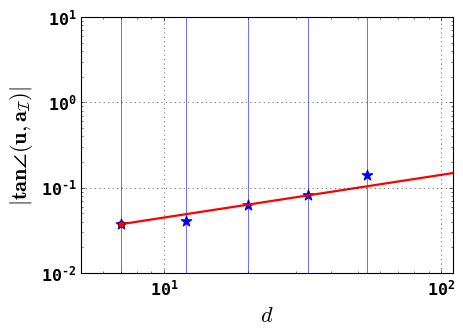}
\hfill
\includegraphics[width=0.48\textwidth]{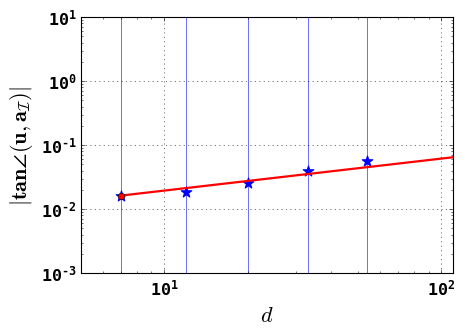}
\caption{The dependency of the convergence rate on $d$ for given $T$. Left: $T = 2 \times 10^5$. Right: $T = 1 \times 10^6$.}
\label{fig:3}
\end{figure}

\begin{figure}[!tb]
\centering
\includegraphics[width=0.48\textwidth]{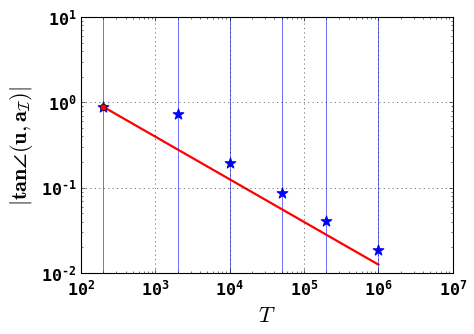}
\hfill
\includegraphics[width=0.48\textwidth]{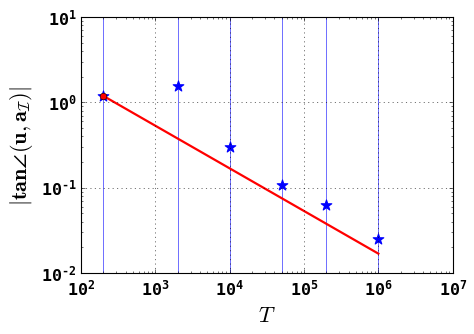}
\caption{The dependency of the convergence rate on $T$ for given $d$. Left: $d = 12$. Right: $d = 20$.}
\label{fig:4}
\end{figure}

\begin{enumerate}[label=(\alph*)]
\item
Figure~\ref{fig:3} shows the dependence of the convergence rate on $d$ for a given $T$, where the gauging red line passes through $d = 7$ and has a slope of $1/2$, representing the theoretical convergence dependency of $\tilde{O}(d^{1/2})$ in $d$.
%Based on Figure~\ref{fig:3}, the following phenomenon is valid for $d \geq 7$.
We observe that smaller $d$s aligns well with the scaling condition (when $T = 2 \times 10^5$ and $d \leq 33$, and when $T = 1 \times 10^6$ and $d \leq 54$) and achieve the dependency of $\tilde{O}(d^{1/2})$ in $d$;
\item
Figure~\ref{fig:4} shows the dependence of the convergence rate on $T$ for a given $d$, where the gauging red line has a slope of $-1/2$, representing the theoretical convergence dependency of $\tilde{O}(T^{-1/2})$ in $T$.
%Based on Figure~\ref{fig:4}, the following phenomenon is valid for $T \leq 1 \times 10^6$.
We observe that larger $T$s aligns well with the scaling condition and achieve the dependency of $\tilde{O}(T^{-1/2})$ in $T$.
\end{enumerate}
Hence the Figures~\ref{fig:3} and~\ref{fig:4} together validate our $\tilde{O}(\sqrt{d/T})$-convergence rate of the online tensor ICA algorithm.

\tableofcontents
\end{document}